\newtheorem{theorem}{Theorem}
\newtheorem{lemma}{Lemma}
\providecommand{\customgenericname}{}
\newcommand{\newcustomtheorem}[2]{%
  \newenvironment{#1}[1]
  {%
   \renewcommand\customgenericname{#2}%
   \renewcommand\theinnercustomgeneric{##1}%
   \innercustomgeneric
  }
  {\endinnercustomgeneric}
}
\theoremstyle{definition}
\DeclareMathOperator*{\argmax}{arg\,max}
\newcommand{\tr}[1]{{#1}^\intercal}
\algnewcommand{\LineComment}[1]{\Statex \hskip\ALG@thistlm \(\backslash \backslash\,\) \textcolor{gray}{#1}}
\algnewcommand{\IndentLineComment}[1]{\Statex \hskip\ALG@tlm \(\backslash \backslash\,\) \textcolor{gray}{#1}}
\newcommand{\appendixref}[2]{\@ifundefined{r@#1}{#2}{\ref{#1}}}
\newif\ifappendix
\newcommand{\cmark}{\ding{51}}
\newcommand{\xmark}{\ding{55}}
\newcites{supp}{References}
\title{Risk-Aware Transfer in Reinforcement Learning \\
using Successor Features}
\author{%
  Michael Gimelfarb\thanks{Affiliate to Vector Institute, Toronto, Canada.} \\
   University of Toronto \\
   \texttt{mike.gimelfarb@mail.utoronto.ca} \\
   \And
   Andr\'e Barreto \\
   DeepMind \\
   \texttt{andrebarreto@google.com} \\
   \AND
   Scott Sanner\footnotemark[1] \\
   University of Toronto \\
   \texttt{ssanner@mie.utoronto.ca} \\
   \And
   Chi-Guhn Lee \\
   University of Toronto \\
   \texttt{cglee@mie.utoronto.ca}
}
\begin{document}

\maketitle

\begin{abstract}
Sample efficiency and risk-awareness are central to the development of practical reinforcement learning (RL) for complex decision-making. The former can be addressed by transfer learning and the latter by optimizing some utility function of the return. However, the problem of transferring skills in a risk-aware manner is not well-understood. In this paper, we address the problem of risk-aware policy transfer  between tasks in a common domain that differ only in their reward functions, in which risk is measured by the variance of reward streams. Our approach begins by extending the idea of generalized policy improvement to maximize entropic utilities, thus extending policy improvement via dynamic programming to sets of policies \emph{and} levels of risk-aversion. Next, we extend the idea of successor features (SF), a value function representation that decouples the environment dynamics from the rewards, to capture the variance of returns. Our resulting risk-aware successor features (RaSF) integrate seamlessly within the RL framework, inherit the superior task generalization ability of SFs, and incorporate risk-awareness into the decision-making. Experiments on a discrete navigation domain and control of a simulated robotic arm demonstrate the ability of RaSFs to outperform alternative methods including SFs, when taking the risk of the learned policies into account. 
\end{abstract}

\section{Introduction}

\emph{Reinforcement learning} (RL) is a general framework for solving sequential decision-making problems, in which an agent interacts with an environment and receives continuous feedback in the form of rewards. However, many classical algorithms in RL do not explicitly address the need for \emph{safety}, making them unreliable and difficult to deploy in many real-world applications \citep{dulac2019challenges}. One reason for this is the relative \emph{sample inefficiency} of model-free RL algorithms, which often require millions of costly or dangerous interactions with the environment or fail to converge altogether \citep{van2018deep,yu2018towards}. Transfer learning addresses these problems by incorporating prior knowledge or skills \citep{lazaric2012transfer,taylor2009transfer}. Despite this, using the expected return as a measure of optimality could still lead to undesirable behavior such as excessive risk-taking, since low-probability catastrophic outcomes with negative reward and high variance could be underrepresented \citep{moldovan2012}. For this reason, risk-awareness is becoming an important aspect in the design of practical RL \citep{garcia2015comprehensive}. Thus, an ultimate goal of developing reliable systems should be to ensure that they are both sample efficient and risk-aware.


We take a step in this direction by studying the problem of risk-aware policy transfer between tasks with different goals. 
A powerful way to tackle this problem in the risk-neutral setting is the \emph{GPI/GPE} framework, of which \emph{successor features} (SF) are a notable example \citep{barreto2017successor}. Here, \emph{generalized policy improvement} (GPI) provides a theoretical framework for transferring policies with monotone improvement guarantees, while \emph{generalized policy evaluation} (GPE) facilitates the efficient evaluation of policies on novel tasks and is a key component in satisfying the assumptions of GPI in practice. Together, GPI/GPE provide strong transfer benefits in novel task instances even before any direct interaction with them has taken place, a phenomenon we call \emph{task generalization}. The key to the superb generalization of GPI/GPE lies in their ability to directly exploit the structure of the task space, taking advantage of subtle differences and commonalities between task goals to transfer skills seamlessly in a \emph{composable} manner. This property could be an effective way of tackling problems in \emph{offline RL} \citep{levine2020offline}, such as the transfer of skills learned in a simulator to a real-world environment. However in many cases, such as helicopter flight control for example \citep{gehring2013smart}, making one wrong decision could lead to catastrophic outcomes. Hence, being risk-aware could offer one way to avoid worst-case outcomes when transferring skills in real-world settings.


\paragraph{Contributions.} We contribute a novel successor feature framework for transferring policies with the goal of maximizing the entropic utility of return in episodic (Section \ref{subsec:entropic_maximization}) and discounted (Section \appendixref{subsec:discounted_setting}{A.2}) MDPs. Intuitively, the entropic utility encourages agents to follow policies with predictable and controllable returns characterized by low variance, thus providing a natural way to incorporate risk-awareness. Furthermore, while our theoretical framework could be extended to other classes of utility functions, the entropic utility has many favorable mathematical properties \citep{follmer2002convex,kupper2009representation} that we exploit directly in this work to achieve \emph{optimal} transfer (Lemma \ref{lem:entropic_properties}, Theorem \ref{thm:gpi_aux} and \ref{thm:gpi}). We first show that \emph{risk-neutral} policy evaluation can break the optimality guarantees of GPI with respect to the entropic utility, even when the source policies optimize entropic utilities (Section \ref{subsec:counterexample}). We then show that by incorporating \emph{risk-sensitive} policy evaluation into GPE, the strong theoretical guarantees of GPI carry through to the risk-aware setting (Section \ref{subsec:gpi}). Next, we derive a form of risk-aware GPE based on the mean-variance approximation, in which the sufficient statistics of the return distribution can be computed directly (Section \ref{subsec:gpe}) or by leveraging recent developments in distributional RL \citep{bellemare2017distributional}. 

Our resulting approach, which we call \emph{Risk-Aware Successor Features} (RaSF), exploits the task structure to achieve task generalization, where emphasis is placed on avoiding high volatility of returns. Our approach is also complementary to other advances in successor features, including feature learning \citep{barreto2018transfer}, universal approximation \citep{borsa2018universal}, exploration \citep{janz2018successor}, and non-stationary reward preferences \citep{barreto2020fast}. Empirical evaluations on discrete navigation and continuous robot control domains (Section \ref{sec:experiments}) demonstrate the ability of RaSFs to better manage the trade-off between return and risk and avoid catastrophic outcomes, while providing excellent generalization on novel tasks in the same domain.

\paragraph{Related Work.}
The entropic and mean-variance objectives are popular ways of incorporating risk-awareness in RL \citep{bisi2020risk,fei2020risk,jain2021variance,mannor2013algorithmic,mao2019variance,nass2019entropic,shen2014risk, tamar2012policy,whiteson2021mean}. However, transferring learned skills between tasks while taking risk into account is a difficult problem. One way to implement risk-aware transfer is to learn a critic \citep{srinivasan2020learning} or teacher \citep{turchetta2020safe} that can guide an agent toward safer behaviors on future tasks. The risk-aware transfer of a policy from a simulator to a real-world setting has also been studied in the area of robotics \citep{held2017probabilistically}. Another approach for reusing policies is the \emph{probabilistic policy reuse} of \citet{garcia2019}, but requires strong assumptions on the task space. \emph{Hierarchical RL} (HRL) is another related approach that relies on hierarchical abstractions, enabling an agent to decompose tasks into a hierarchy of sub-tasks, and facilitating the transfer of temporally-extended skills from sub-tasks to the parent task. The \emph{CISR} approach of \citet{mankowitz2016situational} is the first to investigate safety explicitly within HRL, followed up by work on \emph{safe options} \citep{jain2021safe,mankowitz2018learning}. However, none of the existing work takes advantage of the compositional structure of task rewards to transfer skills while optimizing the variance-adjusted return, which is the problem we tackle in this paper (see Table \ref{table:literature}).

\begin{table}[!tb]
\centering
\small
    \begin{tabular}{c|c|c|c} \toprule
         & Transfers Skills & Exploits Task Structure & Risk-Sensitive \\ \midrule
         RL \citep{bisi2020risk,fei2020risk, jain2021variance,mannor2013algorithmic,mao2019variance,nass2019entropic,shen2014risk, tamar2012policy,whiteson2021mean} & \xmark & \xmark & \cmark \\ \midrule
         Transfer \citep{garcia2019,held2017probabilistically,jain2021safe,mankowitz2018learning,mankowitz2016situational,srinivasan2020learning,turchetta2020safe} & \cmark & \xmark & \cmark \\ \midrule
         Successor Features \citep{barreto2017successor,barreto2018transfer,barreto2020fast,borsa2018universal} & \cmark & \cmark & \xmark \\ \midrule
         \textbf{RaSF (Ours)} & \cmark & \cmark & \cmark \\
        \bottomrule
    \end{tabular}
\normalsize
\caption{Comparison of RaSF with relevant work in transfer learning and risk-aware RL.}
\label{table:literature}
\end{table}

\section{Preliminaries}

\subsection{Markov Decision Process}

Sequential decision-making in this paper follows the \emph{Markov decision process} (MDP), defined as a four-tuple $\langle \mathcal{S}, \mathcal{A}, r, P\rangle$: $\mathcal{S}$ is a set of states; $\mathcal{A}$ is a finite set of actions; $r : \mathcal{S} \times \mathcal{A} \times \mathcal{S} \to \mathbb{R}$ is a bounded reward function, where $r(s,a,s')$ is the immediate reward received upon transitioning to state $s'$ after taking action $a$ in state $s$; and $P : \mathcal{S} \times \mathcal{A} \times \mathcal{S} \to [0, \infty)$ is the transition function for state dynamics, where $P(s' | s, a)$ is the probability of transitioning to state $s'$ immediately after taking action $a$ in state $s$.

In the episodic MDP setting, decisions are made over a horizon $\mathcal{T} = \lbrace 0, 1, \dots T \rbrace$ where $T \in \mathbb{N}$. We define a \emph{stochastic Markov policy} as a mapping $\pi : \mathcal{S} \times \mathcal{T} \to \mathscr{P}(\mathcal{A})$, where $\mathscr{P}(\mathcal{A})$ denotes the set of all probability distributions over $\mathcal{A}$. Similarly, a \emph{deterministic Markov policy} is a mapping $\pi : \mathcal{S} \times \mathcal{T} \to \mathcal{A}$. In the risk-neutral setting, the goal is to find a policy $\pi$ that maximizes the expected sum of future rewards after initially taking action $a$ in state $s$,
\begin{equation*}
    Q_h^\pi(s,a) = \mathbb{E}_{s_{t+1}\sim P(\cdot | s_t, a_t)}\left[\sum_{t=h}^T r(s_t, a_t, s_{t+1}) \,\Big|\, s_h = s,\, a_h = a,\, a_t \sim \pi_t(s_t) \right].
\end{equation*}
In this case, it is possible to show that a deterministic Markov policy $\pi^*$ is optimal \citep{puterman2014markov}. The theoretical framework in this paper also allows for time-dependent reward or transition functions.

\subsection{Entropic Utility Maximization}
\label{subsec:entropic_maximization}

We incorporate risk-awareness into the decision-making by maximizing the \emph{entropic utility} $U_\beta$ of the cumulative reward, defined for a fixed $\beta \in \mathbb{R}$ as
\begin{equation}
\label{eqn:entropic}
    U_\beta[R] = \frac{1}{\beta} \log\mathbb{E}\left[e^{\beta R}\right],
\end{equation}
for real-valued random variables $R$ on a bounded support $\Omega \subset \mathbb{R}$. An important property of the entropic utility is the Taylor expansion $U_\beta[R] = \mathbb{E}[R] + \frac{\beta}{2} \mathrm{Var}[R] + O(\beta^2)$. Interpreting the risk as return variance, $\beta$ can now be interpreted as the risk aversion of the agent: choosing $\beta < 0$ ($\beta > 0$) leads to \emph{risk-averse} (\emph{risk-seeking}) behavior, while $\beta = 0$ is \emph{risk-neutral}, e.g. $U_0[R] = \mathbb{E}[R]$. 

Specializing (\ref{eqn:entropic}) to the MDP setting, the goal is to maximize
\begin{equation}
\label{eqn:entropic_return}
    \mathcal{Q}_{h,\beta}^\pi(s,a) = U_\beta\left[\sum_{t=h}^T r(s_t, \pi_t(s_t), s_{t+1})\right]
\end{equation}
over all policies starting from $s_h = s$ and $a_h = a$. As in the risk-neutral setting, it is possible to show that a deterministic Markov policy is optimal \citep{bauerle2014more}. Furthermore, $\mathcal{Q}_{h,\beta}^\pi$ can be computed iteratively through time using the \emph{Bellman equation} \citep{dowson2020multistage,osogami2012robustness}:
\begin{equation}
\label{eqn:entropic_bellman}
    \begin{aligned}
        \mathcal{Q}_{h,\beta}^\pi(s,a) 
        &= U_\beta\left[r(s,a,s') + \mathcal{Q}_{h+1,\beta}^\pi(s',\pi_{h+1}(s')) \right] \\
        &= \frac{1}{\beta}\log \mathbb{E}_{s'\sim P(\cdot | s, a)}\left[\exp{\left\lbrace\beta\left( r(s,a,s') + \mathcal{Q}_{h+1,\beta}^\pi(s',\pi_{h+1}(s')) \right) \right\rbrace} \right],
    \end{aligned}
\end{equation}
starting with $\mathcal{Q}_{T+1,\beta}^\pi(s,a) = 0$. In fact, (\ref{eqn:entropic}) is the \emph{only} utility function that has this equivalence and other key properties (Lemma \ref{lem:entropic_properties}) while also satisfying \emph{time consistency} that ensures the learned risk-aware behaviors remain consistent across time \citep{kupper2009representation}. In this paper, we use (\ref{eqn:entropic_bellman}) to establish a general GPI framework for risk-aware transfer learning with provable guarantees, and leverage approximations of (\ref{eqn:entropic_return}) to learn portable policy representations. 

In {reinforcement learning}, the Bellman equation is not applied directly since it suffers from the curse of dimensionality when $\mathcal{S}$ is high-dimensional or continuous, and since neither the dynamics nor the reward function are often known. Instead, the agent interacts with the environment using a stochastic exploration policy ${\pi}^e$, collects trajectories $\lbrace (s_t, a_t, s_{t+1}, r_{t+1}) \rbrace_{t=0}^{T-1}$, and updates $\mathcal{Q}_{h,\beta}(s_t, a_t)$ via sample approximations $\hat{U}_\beta \approx {U}_{\beta}$ \citep{shen2014risk}. Our goal is to ameliorate the relative sample-inefficiency of RL through transfer learning that we aim to generalize to the risk-aware setting.

\subsection{Transfer Learning}

We now formalize the general \emph{transfer learning} problem. Let $\mathcal{M}$ be the set of all MDPs with shared transition function $P$ but different (bounded) reward functions. A fixed set of source tasks $M_1, \dots M_n \in \mathcal{M}$ is instantiated, and their corresponding optimal policies $\pi_1, \dots \pi_n$ are estimated. Our main goal is to transfer these resulting source policies to a new target task $M_{n+1} \in \mathcal{M}$, to obtain a policy $\pi_{n+1}^*$ whose utility is better than one learned from scratch using only a fixed number of samples from $M_{n+1}$. We refer to this outcome as \emph{positive transfer}.

As discussed earlier, a standard way to implement transfer learning is the {GPI/GPE} framework of \citet{barreto2017successor}. 
The core mechanism that enables positive transfer in the risk-neutral setting is called \emph{generalized policy improvement} (GPI). Specifically, the set of source policies $\pi_1, \dots \pi_n$ are evaluated on the target task $M_{n+1}$ to obtain corresponding values $Q_{n+1}^{\pi_1}, \dots Q_{n+1}^{\pi_n}$. Given a mechanism that can perform this policy evaluation step efficiently with some small error $\varepsilon$ --- namely successor features discussed and extended in Section \ref{subsec:gpe} --- an agent then selects actions in a greedy manner by following policy $\pi(s) \in \argmax_a \max_{j=1\dots n} Q_{n+1}^{\pi_j}(s,a)$ in state $s$. The policy $\pi$ corresponds to a strict policy improvement operator, and thus fulfills our requirements for positive transfer.

\section{Risk-Aware Transfer Learning}

An obvious challenge of applying GPI in the risk-aware setting is that transferring optimal risk-neutral source policies does not guarantee risk-aware optimality in the target task. A much stronger observation is that, even if the source policies $\pi_j$ are risk-aware, performing the policy evaluation step in a risk-neutral way can still break the risk-awareness of GPI. This makes the extension of GPI to the risk-aware setting a non-trivial problem.

\subsection{A Motivating Example}
\label{subsec:counterexample}

\begin{wrapfigure}{r}{0.38196601125\linewidth}
    \centering
    \includegraphics[width=0.333\linewidth]{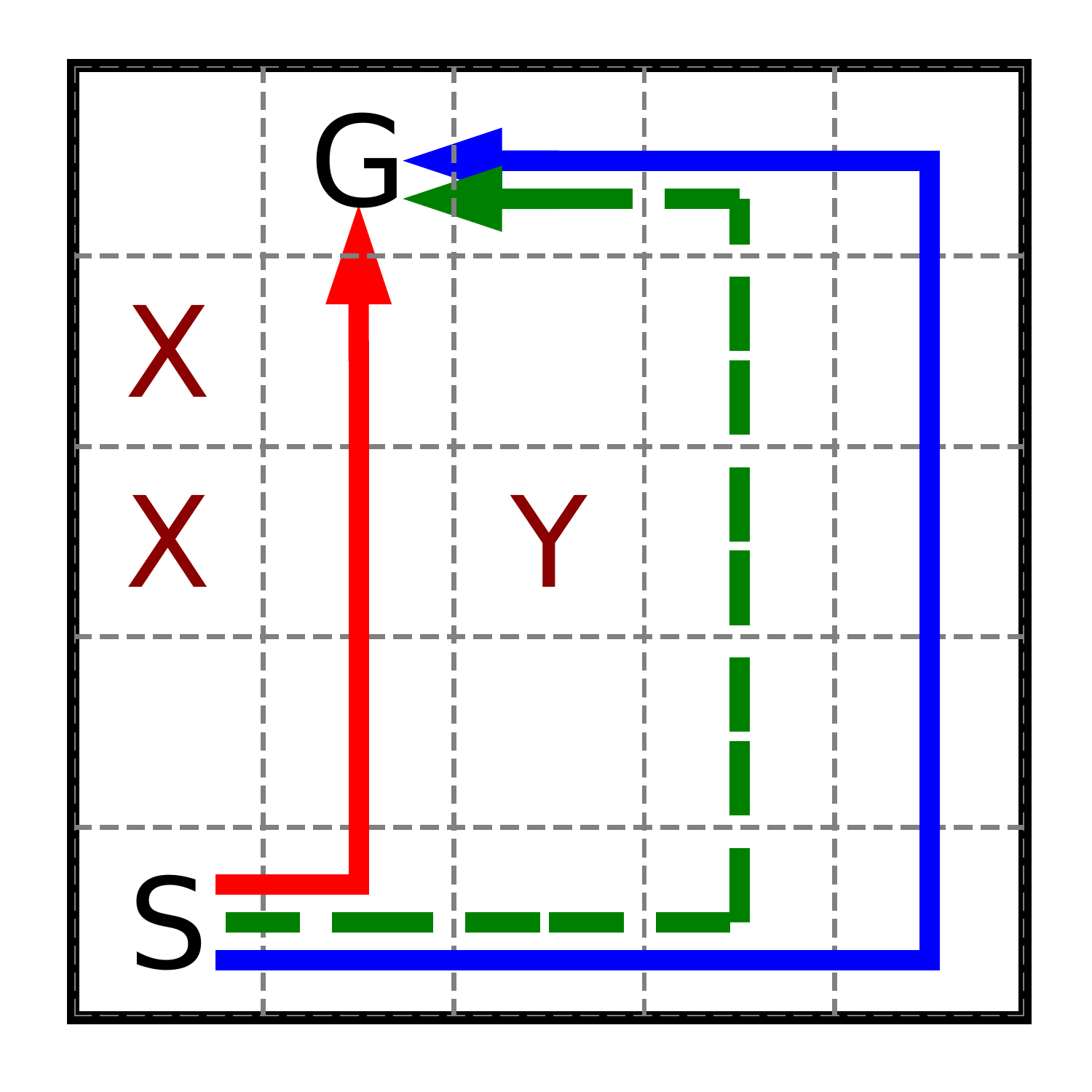}%
    \includegraphics[width=0.333\linewidth]{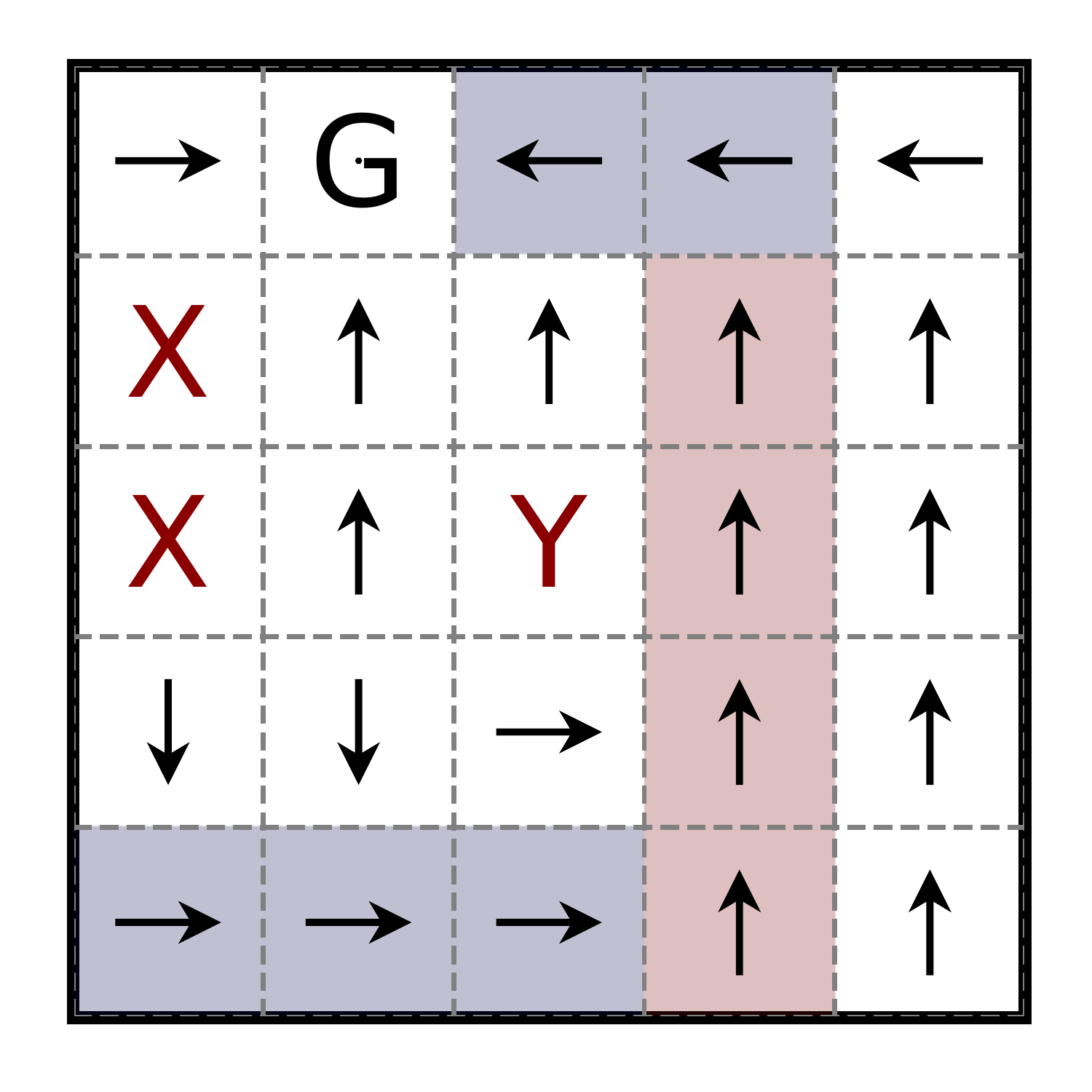}%
    \includegraphics[width=0.333\linewidth]{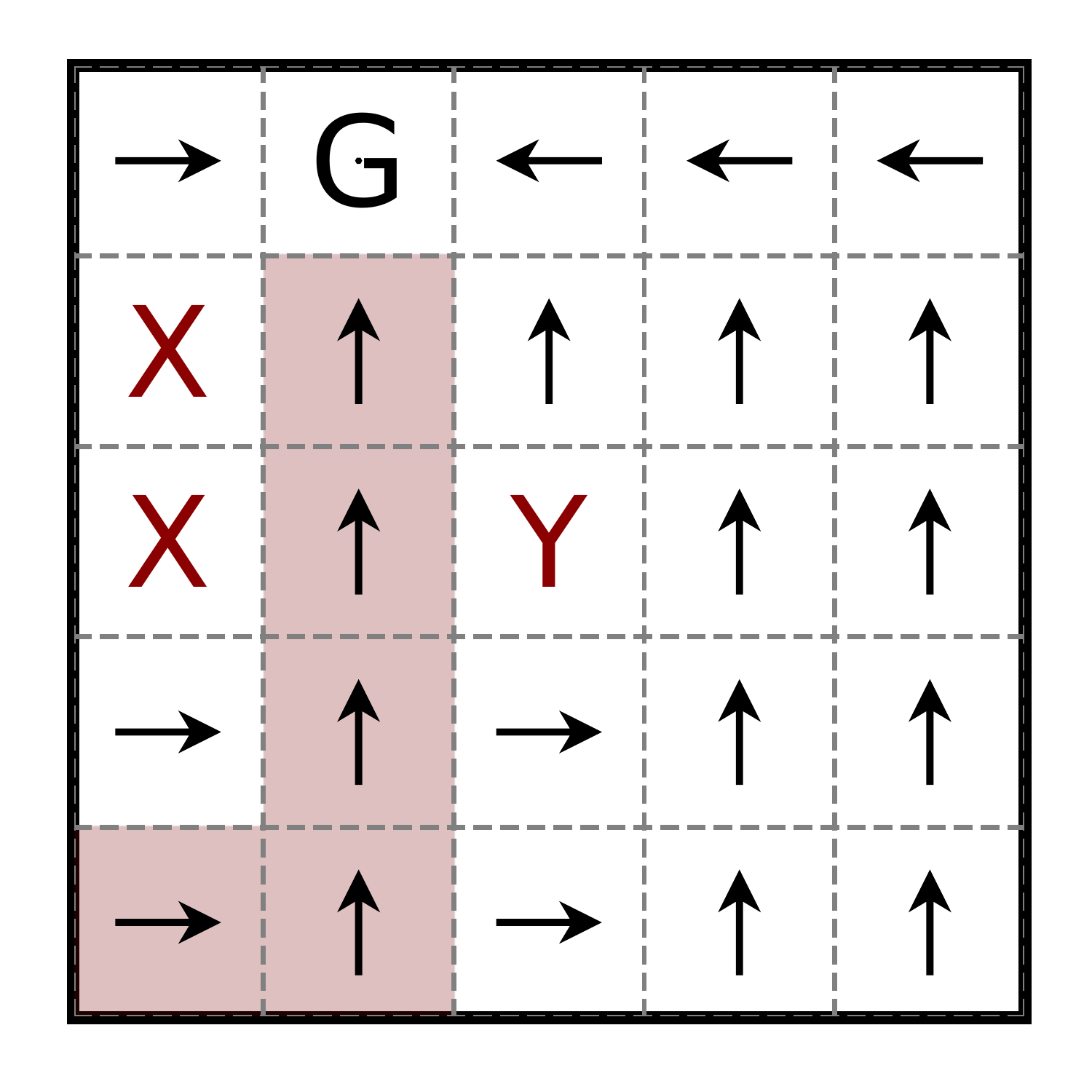}
    \includegraphics[width=0.999\linewidth]{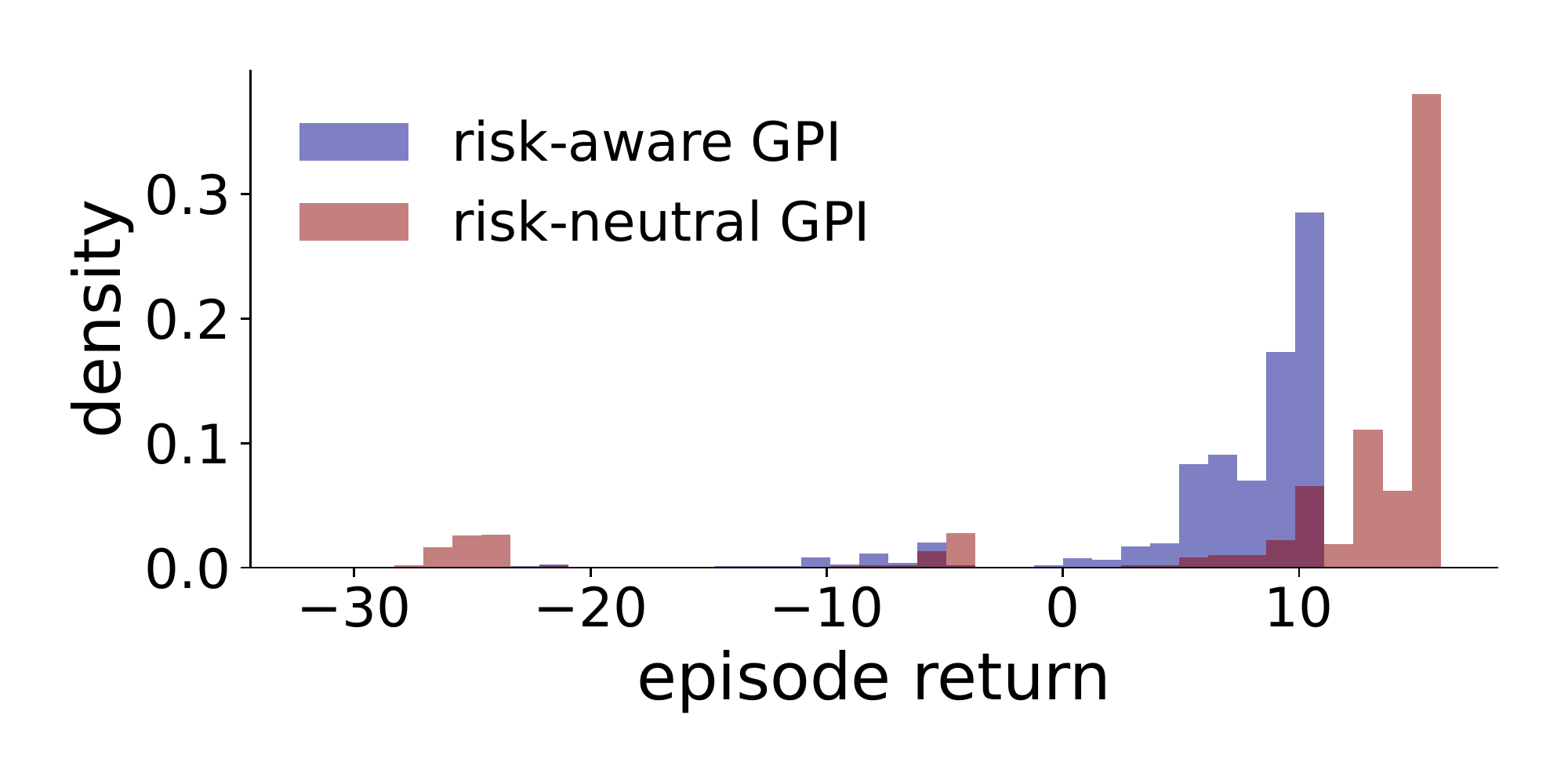}
    \caption{Comparing risk-aware and risk-neutral GPI. The risk-aware (for $\beta=-0.1$) GPI policy is shown in the top-middle plot, while the risk-neutral (for $\beta=0$) GPI policy is shown in the top-right plot.}
    \label{fig:counterexample}
\end{wrapfigure}
To see this, consider the MDP shown in Figure \ref{fig:counterexample}, which involves navigating from an initial state `S' to a goal state `G' in a grid with stochastic transitions. Traps of two types (\textcolor{red}{X, Y}) are placed in fixed cells, which upon entry terminate the episode with fixed costs, summarized compactly as pairs $(c_1, c_2)$. We define two source tasks with costs $(20,20)$ and $(0, 0)$ and a target task with costs $(20,0)$. The optimal policies for $\beta = -0.1$ induce three different trajectories: safe (\textcolor{blue}{blue}) and hazardous routes (\textcolor{red}{red}) for the source tasks, and a relatively safe route (\textcolor{green}{green}) for the target task.

We compute the GPI policies $\pi$ with risk-averse ($\beta=-0.1$) and risk-neutral ($\beta = 0$) policy evaluation, as shown in the top row on the two rightmost plots in Figure \ref{fig:counterexample}. The bottom plot shows the distribution of returns collected over $5,\!000$ test runs by acting according to the two GPI policies. Risk-averse policy evaluation results in an optimal risk-averse GPI policy corresponding to the green trajectory, whereas risk-neutral policy evaluation does not, even though both source policies are optimal on their corresponding tasks. Interestingly, the risk-averse GPI policy is a non-trivial stitching of the two source policies.

\subsection{Risk-Aware Generalized Policy Improvement}
\label{subsec:gpi}

Motivated by this example, we conjecture that the risk-awareness of the policy $\pi$ is primarily dependent on the way in which the source policies are evaluated in target task instances. In this section, we describe theoretical results that generalize the concept of generalized policy iteration to the problem of maximizing the entropic utility of returns. 

We begin by summarizing key properties necessary for establishing convergence of risk-aware GPI in the following lemma. 

\begin{lemma}
\label{lem:entropic_properties}
    Let $\beta \in \mathbb{R}$ and $X, Y$ be arbitrary random variables on $\Omega$. Then:
    \begin{enumerate}[label=(\arabic*),noitemsep]
        \item (monotonicity) if $\mathbb{P}(X \geq Y) = 1$ then $U_\beta[X] \geq U_\beta[Y]$
        \item (cash invariance) $U_\beta[X + c] = U_\beta[X] + c$ for every $c \in \mathbb{R}$
        \item (convexity) if $\beta < 0$ ($\beta > 0$) then $U_\beta$ is a concave (convex) function
        \item (non-expansion) for $f, g : \Omega \to \Omega$, it follows that 
        \begin{equation*}
            \left|U_\beta[f(X)] - U_\beta[g(X)]\right| \leq \sup_{P \in \mathscr{P}_X(\Omega)}\mathbb{E}_{P}|f(X)-g(X)|,
        \end{equation*}
        where $\mathscr{P}_X(\Omega)$ is the set of all probability distributions on $\Omega$ that are absolutely continuous w.r.t. the true distribution of $X$.
    \end{enumerate}
\end{lemma}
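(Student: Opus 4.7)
The plan is to handle the four properties essentially independently, since each only requires manipulating the explicit formula $U_\beta[X] = \beta^{-1}\log \mathbb{E}[e^{\beta X}]$ (with the standard $\beta=0$ reading as the expectation, justified by a continuity argument). For \textbf{(1) monotonicity}, I would split on the sign of $\beta$: if $\beta>0$, then $x\mapsto e^{\beta x}$ is increasing, so $X\geq Y$ a.s.\ yields $\mathbb{E}[e^{\beta X}]\geq \mathbb{E}[e^{\beta Y}]$, and since $\log$ is increasing and $1/\beta>0$ the inequality is preserved. If $\beta<0$, the exponential is decreasing, producing the reverse inequality in the expectations, but dividing by the negative $\beta$ flips it back, giving $U_\beta[X]\geq U_\beta[Y]$. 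For \textbf{(2) cash invariance}, just pull the constant out: $\mathbb{E}[e^{\beta(X+c)}] = e^{\beta c}\mathbb{E}[e^{\beta X}]$, then $\log$ and divide by $\beta$.

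For \textbf{(3) convexity}, the right tool is H\"older's inequality. Given $\lambda\in[0,1]$ and $X,Y$, apply H\"older with conjugate exponents $1/\lambda$ and $1/(1-\lambda)$ to $e^{\beta\lambda X}$ and $e^{\beta(1-\lambda)Y}$:
\begin{equation*}
\mathbb{E}\bigl[e^{\beta(\lambda X + (1-\lambda) Y)}\bigr]
= \mathbb{E}\bigl[(e^{\beta X})^\lambda (e^{\beta Y})^{1-\lambda}\bigr]
\leq \bigl(\mathbb{E}[e^{\beta X}]\bigr)^\lambda \bigl(\mathbb{E}[e^{\beta Y}]\bigr)^{1-\lambda}.
\end{equation*}
Taking $\log$ gives $\log\mathbb{E}[e^{\beta(\lambda X+(1-\lambda)Y)}]\leq \lambda\log\mathbb{E}[e^{\beta X}] + (1-\lambda)\log\mathbb{E}[e^{\beta Y}]$. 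Dividing by $\beta>0$ preserves the direction, which is exactly convexity of $U_\beta$; dividing by $\beta<0$ reverses it, giving concavity. (Equivalently, this is just the well-known convexity of the cumulant generating function $\beta\mapsto\log\mathbb{E}[e^{\beta X}]$ specialized as a functional of the random variable.)

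The main obstacle is \textbf{(4) non-expansion}, for which the explicit computation is unwieldy; the clean route is through the Donsker–Varadhan variational representation. For $\beta>0$, letting $\mu_X$ denote the law of $X$ on $\Omega$, one has the dual form
\begin{equation*}
U_\beta[h(X)] \;=\; \sup_{P\in \mathscr{P}_X(\Omega)} \left\{\mathbb{E}_P[h(X)] - \tfrac{1}{\beta} D_{\mathrm{KL}}(P\,\|\,\mu_X)\right\},
\end{equation*}
with the inf replacing the sup when $\beta<0$ (absorbing the sign into $1/\beta$). Writing $U_\beta[f(X)]-U_\beta[g(X)]$ as a difference of two suprema over the same constraint set, and using the elementary bound $\sup_P A(P) - \sup_P B(P) \leq \sup_P(A(P)-B(P))$, the entropy penalty cancels, leaving $\sup_P \mathbb{E}_P[f(X)-g(X)] \leq \sup_P \mathbb{E}_P|f(X)-g(X)|$. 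Swapping the roles of $f$ and $g$ gives the reverse bound, so the absolute value is controlled as claimed; the $\beta<0$ case is symmetric once the $\sup$/$\inf$ swap is handled. The only delicate point is verifying that in the case $\beta<0$ the sign flip in the variational formula still produces the same clean inequality after taking absolute values, but this falls out cleanly once the formula is written with $1/\beta$ kept as a signed quantity.
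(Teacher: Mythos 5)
Your proof is correct and, for the only nontrivial part (the non-expansion property (4)), it takes essentially the same route as the paper: the Donsker--Varadhan/convex-duality representation of $U_\beta$ followed by the elementary bound $|\sup_P A(P)-\sup_P B(P)|\leq\sup_P|A(P)-B(P)|$. The only differences are cosmetic: you give self-contained elementary arguments for (1)--(3) where the paper simply cites F\"ollmer and Schied, and you are slightly more careful than the paper in noting that the variational formula becomes an infimum when $\beta<0$ (which does not affect the final bound).
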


A proof is provided in Appendix
\appendixref{subsec:proofs_episodic}{B.2}. Properties (1)-(3) characterize \emph{concave utilities} \citep{follmer2002convex}, which intuitively can be seen as minimal requirements for rational decision-making: (1) a lottery that pays off more than another in every possible state of the world should always be preferred; (2) adding cash to a position should not increase its risk; and (3) the overall utility can be improved by diversifying across different risks. Property (4) is a derivative of the first three, and thus the theoretical results in this work could be extended further to the broad class of iterated concave utilities \citep{ruszczynski2010risk}.

We can now state the first main result of the paper.

\begin{theorem}[\bfseries GPI for Entropic Utility]
\label{thm:gpi_aux}
    Let $\pi_1, \dots \pi_n$ be arbitrary deterministic Markov policies with utilities $\tilde{\mathcal{Q}}_{h,\beta}^{\pi_1}, \dots \tilde{\mathcal{Q}}_{h,\beta}^{\pi_n}$ evaluated in an arbitrary task $M$, such that $|\tilde{\mathcal{Q}}_{h,\beta}^{\pi_i}(s,a) - \mathcal{Q}_{h,\beta}^{\pi_i}(s,a)| \leq \varepsilon$ for all $s \in \mathcal{S}$, $a \in \mathcal{A}$, $i = 1 \dots n$ and $h \in \mathcal{T}$. Define
    \begin{equation}
    \label{eqn:gpi_policy_aux}
        \pi_h(s) \in \argmax_{a \in \mathcal{A}} \max_{i=1\dots n} \tilde{\mathcal{Q}}_{h,\beta}^{\pi_i}(s,a), \quad \forall s \in \mathcal{S}.
    \end{equation}
    Then,
    \begin{equation*}
        \mathcal{Q}_{h,\beta}^{\pi}(s, a) \geq  \max_{i}\mathcal{Q}_{h,\beta}^{\pi_i}(s, a) - 2 ({T - h + 1}) \varepsilon, \quad h \leq T.
    \end{equation*}
\end{theorem}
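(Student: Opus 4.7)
The plan is to prove the bound by backward induction on $h$, running from $h = T$ down to $h = 0$. At each step the Bellman equation (\ref{eqn:entropic_bellman}) expresses $\mathcal{Q}_{h,\beta}^\pi$ as $U_\beta$ applied to a random quantity involving $\mathcal{Q}_{h+1,\beta}^\pi$, so the structural properties of $U_\beta$ from Lemma \ref{lem:entropic_properties} --- specifically monotonicity (1) and cash invariance (2) --- let me propagate a pointwise lower bound on $\mathcal{Q}_{h+1,\beta}^\pi$ through $U_\beta$ and extract accumulated additive errors as deterministic constants. The base case $h = T$ is immediate because $\mathcal{Q}_{T,\beta}^{\pi'}(s,a) = U_\beta[r(s,a,s')]$ is policy-independent, so the left side equals $\max_i \mathcal{Q}_{T,\beta}^{\pi_i}(s,a)$ and the claimed $2\varepsilon$ slack is trivial.

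For the inductive step, I would fix $s$, $a$, and an arbitrary source index $i$, and let $a_i$ denote the action $\pi_i$ prescribes at time $h+1$ in state $s'$. First, applying (\ref{eqn:entropic_bellman}), monotonicity to the inductive hypothesis at $h+1$, and cash invariance to pull out the constant $-2(T-h)\varepsilon$ yields
\[
\mathcal{Q}_{h,\beta}^\pi(s,a) \geq U_\beta\left[r(s,a,s') + \max_j \mathcal{Q}_{h+1,\beta}^{\pi_j}(s',\pi_{h+1}(s'))\right] - 2(T-h)\varepsilon.
\]
Second, since by construction $\pi_{h+1}(s')$ is a joint argmax of $\tilde{\mathcal{Q}}_{h+1,\beta}^{\pi_j}(s',\cdot)$ over $j$ and the action, a three-line chain using the $\varepsilon$-accuracy hypothesis gives
\[
\max_j \mathcal{Q}_{h+1,\beta}^{\pi_j}(s',\pi_{h+1}(s')) \geq \max_j \tilde{\mathcal{Q}}_{h+1,\beta}^{\pi_j}(s',\pi_{h+1}(s')) - \varepsilon \geq \tilde{\mathcal{Q}}_{h+1,\beta}^{\pi_i}(s',a_i) - \varepsilon \geq \mathcal{Q}_{h+1,\beta}^{\pi_i}(s',a_i) - 2\varepsilon.
\]
Substituting back, invoking monotonicity and cash invariance once more to absorb the extra $-2\varepsilon$, and recognizing the resulting $U_\beta$-expression as the Bellman expansion of $\mathcal{Q}_{h,\beta}^{\pi_i}(s,a)$, I obtain $\mathcal{Q}_{h,\beta}^\pi(s,a) \geq \mathcal{Q}_{h,\beta}^{\pi_i}(s,a) - 2(T-h+1)\varepsilon$; maximizing over $i$ closes the induction.

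The main subtlety is bookkeeping two distinct sources of error that together produce the factor of $2$: a fresh $2\varepsilon$ at the current stage, incurred when trading the greedy choice under $\tilde{\mathcal{Q}}$ for the action $\pi_i$ would take (one $\varepsilon$ in each direction between $\tilde{\mathcal{Q}}$ and $\mathcal{Q}$), plus the inherited $2(T-h)\varepsilon$ passed up from the inductive hypothesis. Cash invariance is essential because these errors must survive unchanged through $U_\beta$, and monotonicity is what allows the pointwise lower bound to be pushed under $U_\beta$ without loss. Non-expansion (property (4)) is not needed in this proof, since only deterministic perturbations pass through $U_\beta$; it would become relevant only if the approximation error inside the argument of $U_\beta$ were itself stochastic.
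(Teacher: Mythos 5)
Your proof is correct and follows essentially the same route as the paper's: backward induction using monotonicity and cash invariance of $U_\beta$ to propagate the bound through the Bellman equation, with the factor of $2$ arising exactly as you describe (one $\varepsilon$ in each direction when exchanging $\tilde{\mathcal{Q}}$ and $\mathcal{Q}$ around the greedy selection) and with the same final recognition of the Bellman expansion of $\mathcal{Q}_{h,\beta}^{\pi_i}$. The only cosmetic difference is that the paper anchors the induction at $h = T+1$ where all values vanish, rather than at $h = T$; your remark that non-expansion is not needed here also matches the paper's proof.
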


Thus, evaluating the risk of source policies using $U_\beta$ provides monotone improvement guarantees for GPI, and thus satisfies our definition of positive transfer. Another significant property of the bound in Theorem \ref{thm:gpi_aux}, and the one in Theorem \ref{thm:gpi} below, is the linear separation between the optimal utility and the approximation error $\varepsilon$. Knowing how the optimality of $\pi$ explicitly depends on $\varepsilon$ and how errors are propagated throughout the transfer learning process is critical for developing reliable RL with predictable behavior, and highlights a key advantage of making GPI risk-aware. The additive dependence of the optimality on the approximation error in Theorem \ref{thm:gpi_aux} further explains why SFs are robust to approximation errors. This becomes particularly advantageous in our setting, since estimating utilities $U_\beta$ accurately with GPE is more complicated than the risk-neutral setting.

A stronger result for GPI can be derived when the source policies $\pi_1, \pi_2 \dots \pi_n$ are $\varepsilon$-optimal, and policy evaluation is once again performed using $U_\beta$. In this case, the optimality of GPI is determined by the similarity $\delta_r$ between the source and target task instances. 

\begin{theorem}
\label{thm:gpi}
    Let $\mathcal{Q}_{h,\beta}^{\pi_i^*}$ be the utilities of optimal Markov policies $\pi_i^*$ from task $M_i$ but evaluated in task $M$ with reward function $r(s,a,s')$. Furthermore, let $\tilde{\mathcal{Q}}_{h,\beta}^{\pi_i^*}$ be such that $|\tilde{\mathcal{Q}}_{h,\beta}^{\pi_i^*}(s, a) - \mathcal{Q}_{h,\beta}^{\pi_i^*}(s,a) | < \varepsilon$ for all $s \in \mathcal{S}, \, a \in \mathcal{A}$, $h \in \mathcal{T}$ and $i = 1 \dots n$, and let $\pi$ be the corresponding policy in (\ref{eqn:gpi_policy_aux}). Finally, let $\delta_r = \min_{i=1\dots n} \sup_{s,a,s'}| r(s,a,s') - r_i(s,a,s')|$. Then,
    \begin{equation*}
    \begin{aligned}
        \left|\mathcal{Q}_{h,\beta}^{\pi}(s,a) - \mathcal{Q}_{h,\beta}^*(s,a) \right| \leq 2 (T - h + 1) (\delta_r + \varepsilon), \quad h \leq T.
    \end{aligned}
    \end{equation*}
\end{theorem}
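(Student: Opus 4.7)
Proof proposal: The plan is to reduce the theorem to Theorem \ref{thm:gpi_aux} by first establishing a Lipschitz-type stability bound for the entropic value function with respect to uniform perturbations of the reward, and then using this stability to show that the best source-optimal policy, when evaluated in the target task $M$, is within $2(T-h+1)\delta_r$ of the true optimum $\mathcal{Q}_{h,\beta}^*$.

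\textbf{Reward-Lipschitz lemma.} The first step is the auxiliary claim that for any deterministic Markov policy $\pi$ and any two MDPs $M, M'$ sharing the transition kernel $P$ but with rewards $r, r'$ satisfying $\sup_{s,a,s'}|r(s,a,s')-r'(s,a,s')| \leq \delta$, the entropic values obey $|\mathcal{Q}_{h,\beta}^\pi(s,a;M) - \mathcal{Q}_{h,\beta}^\pi(s,a;M')| \leq (T-h+1)\delta$ uniformly in $(s,a)$. I would prove this by backward induction, starting from $\mathcal{Q}_{T+1,\beta}^\pi \equiv 0$. At step $h$, I apply the Bellman recursion (\ref{eqn:entropic_bellman}) and then invoke the non-expansion property (Lemma \ref{lem:entropic_properties}(4)) with $f(s') = r(s,a,s') + \mathcal{Q}_{h+1,\beta}^\pi(s',\pi_{h+1}(s');M)$ and the analogous $g$ for $M'$; the triangle inequality and the inductive hypothesis then bound the integrand uniformly by $\delta + (T-h)\delta$, so the supremum-expectation on the right-hand side collapses to $(T-h+1)\delta$.

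\textbf{Chaining the inequalities.} Let $i^\star$ attain the minimum in the definition of $\delta_r$. Applying the lemma to $\pi_{i^\star}^*$ between tasks $M$ and $M_{i^\star}$, and using the optimality identity $\mathcal{Q}_{h,\beta}^{\pi_{i^\star}^*}(s,a;M_{i^\star}) = \mathcal{Q}_{h,\beta}^*(s,a;M_{i^\star})$, yields $\mathcal{Q}_{h,\beta}^{\pi_{i^\star}^*}(s,a;M) \geq \mathcal{Q}_{h,\beta}^*(s,a;M_{i^\star}) - (T-h+1)\delta_r$. A second application of the lemma to the optimal target policy $\pi^*_M$, combined with $\mathcal{Q}_{h,\beta}^*(s,a;M_{i^\star}) \geq \mathcal{Q}_{h,\beta}^{\pi^*_M}(s,a;M_{i^\star})$, gives $\mathcal{Q}_{h,\beta}^*(s,a;M_{i^\star}) \geq \mathcal{Q}_{h,\beta}^*(s,a;M) - (T-h+1)\delta_r$. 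Combining the two yields $\max_i \mathcal{Q}_{h,\beta}^{\pi_i^*}(s,a;M) \geq \mathcal{Q}_{h,\beta}^*(s,a;M) - 2(T-h+1)\delta_r$. Feeding this into Theorem \ref{thm:gpi_aux} applied to $\pi_1^*,\dots,\pi_n^*$ with $\varepsilon$-accurate evaluations gives $\mathcal{Q}_{h,\beta}^{\pi}(s,a;M) \geq \mathcal{Q}_{h,\beta}^*(s,a;M) - 2(T-h+1)(\delta_r + \varepsilon)$; the reverse direction is immediate from $\mathcal{Q}_{h,\beta}^{\pi} \leq \mathcal{Q}_{h,\beta}^*$, so the absolute-value bound follows.

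\textbf{Main obstacle.} The delicate piece is the reward-Lipschitz lemma, because the non-expansion property in Lemma \ref{lem:entropic_properties}(4) dominates by a supremum over absolutely-continuous measures rather than expectation under the true transition. I must therefore verify that the pointwise bound $|f(s') - g(s')| \leq (T-h+1)\delta$ holds \emph{uniformly in $s'$} before taking the supremum, so that the envelope collapses to the same constant regardless of the measure chosen. Once that pointwise-in-$s'$ inductive hypothesis is established, both the induction step and the subsequent chaining of the three inequalities in Step 2 are purely algebraic triangle-inequality manipulations.
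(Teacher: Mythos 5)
Your proof is correct and follows essentially the same route as the paper's: a backward induction using the non-expansion property of $U_\beta$ to obtain a $(T-h+1)\delta_r$ reward-perturbation bound, a triangle inequality passing through $\mathcal{Q}_{h,\beta}^*(\cdot\,;M_{i^\star})$ that produces the factor of $2$, and then an appeal to Theorem \ref{thm:gpi_aux}. The only cosmetic difference is that the paper proves the optimal-value comparison between tasks (its Lemma \ref{lem:aux1}) by a separate induction on the Bellman optimality equations, whereas you derive it from your single fixed-policy Lipschitz lemma by evaluating the target-optimal policy in $M_{i^\star}$ --- a slightly more economical packaging of the same estimates.
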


These results are proved in Appendix \appendixref{subsec:proofs_episodic}{B.2} for the episodic setting and in Appendix \appendixref{subsec:proofs_discounted}{B.3} for the discounted setting. Finally, while not required in this work, the above results could be extended to more general settings in risk-averse control \citep{ruszczynski2010risk}, though practical implementation of GPE in these settings remains an open problem.

\subsection{Risk-Aware Generalized Policy Evaluation}
\label{subsec:gpe}

Following \citet{barreto2017successor}, let $\bm{\phi} : \mathcal{S} \times \mathcal{A} \times \mathcal{S}' \to \mathbb{R}^d$ be a bounded and task-independent feature map, and consider the following linear representation of rewards,
\begin{equation*}
    r(s,a,s') = \tr{\bm{\phi}(s,a,s')} \mathbf{w}, \quad \forall s, a, s',    
\end{equation*}
where $\mathbf{w} \in \mathbb{R}^d$ is a task-dependent reward parameter. The \emph{risk-neutral} return becomes:
\begin{align}
    Q_h^\pi(s,a) 
    &= \mathbb{E}_P\left[\sum_{t=h}^T \tr{\bm{\phi}(s_t, a_t, s_{t+1})} \mathbf{w} \,\Big|\, s_h = s, \, a_h = a,\, a_t\sim\pi_t(s_t)\right] \nonumber \\
    \label{eqn:gpe_linear}
    &= \mathbb{E}_P\tr{\left[\sum_{t=h}^T \bm{\phi}(s_t, a_t, s_{t+1}) \,\Big|\, s_h = s,\, a_h = a,\, a_t \sim \pi_t(s_t)\right]} \mathbf{w} = \tr{\bm{\psi}_{h}^\pi(s,a)} \mathbf{w},
\end{align}
where $\bm{\psi}_{h}^\pi(s,a)$ are the \emph{successor features} (SFs) associated with the policy ${\pi}$. The linear dependence of the return on $\mathbf{w}$ allows for instantaneous policy evaluation in novel tasks with arbitrary reward preferences $\mathbf{w}$, making it a particular --- and perhaps the canonical --- instantiation of {GPE}. More critically, $\bm{\psi}_{h}^\pi$ can be seen as a \emph{task-independent} and highly portable linear feature representation of policies, and is the key to the generalization ability of SFs on novel task instances.

The concept of GPE can be generalized to incorporate distributions of the return. Repeating the above derivation for the entropic utility (\ref{eqn:entropic_return}), we have:
\begin{equation}
\label{eqn:utility_linear}
    \mathcal{Q}_{h,\beta}^\pi(s,a) = U_\beta\left[\sum_{t=h}^T r(s_t, \pi_t(s_t), s_{t+1})\right] = U_\beta\left[\tr{\Psi_{h}^\pi(s,a)}\mathbf{w} \right],
\end{equation}
corresponding to the random vector $\Psi_h^\pi(s,a) = \sum_{t=h}^T \bm{\phi}_t$ of unrealized feature returns. Thus, we have transformed the problem of estimating the utility of returns into the problem of estimating the distribution of $\tr{\Psi_h^\pi(s,a)}\mathbf{w}$. The key question now is how to estimate this distribution for fast GPE.

A natural way to do this is by applying a second-order Taylor expansion for $U_\beta$, since it allows us to precompute and cache the necessary moments of the return distribution:
\begin{align}
\label{eqn:mvsf}
    U_\beta\left[\tr{\Psi_{h}^\pi(s,a)}\mathbf{w} \right] 
    &= \mathbb{E}_P[\tr{\Psi_{h}^\pi(s,a)}\mathbf{w} ] + \frac{\beta}{2}\mathrm{Var}_P[\tr{\Psi_{h}^\pi(s,a)}\mathbf{w}] + O(\beta^2) \nonumber \\
    &\approx \tr{\bm{\psi}_{h}^\pi(s,a)}\mathbf{w} + \frac{\beta}{2} \tr{\mathbf{w}}\mathrm{Var}_P[\Psi_{h}^\pi(s,a)]\mathbf{w} = \mathcal{\tilde{Q}}_{h,\beta}^\pi(s,a),
\end{align}
in which $\mathrm{Var}_P[\Psi_{h}^\pi(s,a)] = \Sigma_h^\pi(s,a)$ is interpreted as a covariance matrix for SFs. In the context of Theorems \ref{thm:gpi_aux} and \ref{thm:gpi}, the term $\varepsilon$ encapsulates the errors in the approximation of $\bm{\psi}_h^\pi$ and $\Sigma_h^\pi$, plus the terms contained in $O(\beta^2)$ above. However, the main advantage of (\ref{eqn:mvsf}) is that, like (\ref{eqn:gpe_linear}), it is also analytic in $\mathbf{w}$ and allows for \emph{instantaneous policy evaluation} with arbitrary reward preferences $\mathbf{w}$. Now, $\bm{\psi}_h^\pi$ and $\Sigma_h^\pi$ together provide task-independent and portable representations of policies, while also accounting for exogenous risk. This is the key to preserving the task generalization ability of SFs in the risk-aware setting, and (\ref{eqn:mvsf}) can now be seen as a particular instantiation of GPE. We call this overall approach \emph{Risk-aware Successor Features} (RaSF). 

The simplest approaches for estimating $\Sigma_h^\pi$ in the exact (e.g. tabular) Q-learning setting are based on dynamic programming \citep{sherstan2018comparing,tamar2016learning}, which would allow the overall approach to be easily integrated into existing SF implementations. In particular, the covariance satisfies the Bellman equation
\begin{equation}
    \label{eqn:covariance_Bellman}
    \Sigma_h^\pi(s,a) = \mathbb{E}_{s' \sim P(\cdot | s, a)}\left[\bm{\delta}_h \tr{\bm{\delta}_h} +  \Sigma_{h+1}^\pi(s', \pi_{h+1}(s')) \,|\, s_h = s,\, a_h = a \right],
\end{equation}
where $\bm{\delta}_h$ are the Bellman residuals of $\bm{\psi}_h^\pi(s,a)$. The approximation $\tilde{\bm{\psi}}_h^\pi$ is known to converge to the true value ${\bm{\psi}}_h^\pi$, and a similar result also holds for updating the covariance based on (\ref{eqn:covariance_Bellman}).

\begin{theorem}[\bfseries Convergence of Covariance]
\label{thm:sigma_convergence}
    Let $\|\cdot\|$ be a matrix-compatible norm, and suppose there exists $\varepsilon : \mathcal{S} \times \mathcal{A} \times \mathcal{T} \to [0, \infty)$ such that $\|\tilde{\bm{\psi}}_h^\pi(s,a) - {\bm{\psi}}_h^\pi(s,a) \|^2 \leq \varepsilon_h(s,a)$ and $\|\mathbb{E}_{s'\sim P(\cdot | s, a)}[\tilde{\bm{\delta}}_h \tr{(\tilde{\bm{\psi}}_h^\pi(s',\pi_{h+1}(s')) - {\bm{\psi}}_h^\pi(s',\pi_{h+1}(s')))}]\| \leq \varepsilon_h(s,a)$. Then,
    \begin{equation*}
        \left\| {\Sigma}_h^\pi(s,a) - \mathbb{E}_{s' \sim P(\cdot |s,a)}\left[\tilde{\bm{\delta}}_h \tr{\tilde{\bm{\delta}}_h} + \tilde{\Sigma}_{h+1}^\pi(s',\pi_{h+1}(s')) \right] \right\| \leq 3 \varepsilon_h(s,a).
    \end{equation*}
\end{theorem}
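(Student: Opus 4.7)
The plan is to compare the right-hand side of the bound against the exact Bellman identity (\ref{eqn:covariance_Bellman}) for $\Sigma_h^\pi(s,a)$, interpreting the $\tilde{\Sigma}_{h+1}^\pi$ on the right-hand side as the $\Sigma_{h+1}^\pi$ term produced by that recursion (so the statement reads as a one-step consistency check, or equivalently as the inductive step in a backward recursion with base case $\Sigma_{T+1}^\pi=\tilde{\Sigma}_{T+1}^\pi=0$). Introduce the pointwise feature error $\Delta_h(s,a):=\tilde{\bm{\psi}}_h^\pi(s,a)-\bm{\psi}_h^\pi(s,a)$, and observe that the sample Bellman residuals decompose as
\begin{equation*}
\bm{\delta}_h \;=\; \tilde{\bm{\delta}}_h + \Delta_h(s,a) - \Delta_{h+1}(s',\pi_{h+1}(s')).
\end{equation*}
Expanding outer products via $\bm{\delta}_h\tr{\bm{\delta}_h} - \tilde{\bm{\delta}}_h\tr{\tilde{\bm{\delta}}_h} = \tilde{\bm{\delta}}_h\tr{(\bm{\delta}_h-\tilde{\bm{\delta}}_h)} + (\bm{\delta}_h-\tilde{\bm{\delta}}_h)\tr{\bm{\delta}_h}$ and simplifying under $\mathbb{E}_{s'}$ using the martingale identity $\mathbb{E}_{s'}[\bm{\delta}_h]=0$, the quantity whose norm we must bound reduces to a sum of four matrix terms: the two cross products $\mathbb{E}_{s'}[\tilde{\bm{\delta}}_h\tr{\Delta_{h+1}}]$ and its transpose, a local quadratic $-\Delta_h(s,a)\tr{\Delta_h(s,a)}$, and a next-state quadratic $\mathbb{E}_{s'}[\Delta_{h+1}\tr{\Delta_{h+1}}]$.

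Next I would bound each contribution using the two hypotheses. The two cross-product terms are controlled directly by hypothesis~2 (which I read as carrying a subscript typo $\bm{\psi}_h\to\bm{\psi}_{h+1}$), contributing $\varepsilon_h(s,a)$ each. The local quadratic is bounded by $\|\Delta_h(s,a)\|^2\le\varepsilon_h(s,a)$ using hypothesis~1 and the compatible-norm inequality $\|\mathbf{u}\tr{\mathbf{v}}\|\le\|\mathbf{u}\|\|\mathbf{v}\|$. The next-state quadratic is either absorbed into one of the cross-term bounds (by rewriting $\Delta_{h+1}=\tilde{\bm{\delta}}_h-\bm{\delta}_h+\Delta_h(s,a)$ and re-applying the martingale property to cancel against $\bm{\delta}_h$) or merged into the $\tilde{\Sigma}_{h+1}^\pi$ term under the natural inductive reading of the statement. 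Aggregating three distinct $\varepsilon_h(s,a)$ charges yields the announced bound.

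The main obstacle I expect is bookkeeping rather than any single deep estimate: the expansion produces several matrix terms that mix the $(s,a)$-local error $\Delta_h(s,a)$, which is fixed under $\mathbb{E}_{s'}$, with the random next-state error $\Delta_{h+1}(s',\pi_{h+1}(s'))$, and each must be paired with the sharpest applicable hypothesis so that the error is charged $\varepsilon_h(s,a)$ rather than a looser $\varepsilon_{h+1}$-average. Exploiting $\mathbb{E}_{s'}[\bm{\delta}_h]=0$ is what lets the mixed $\Delta_h$--$\Delta_{h+1}$ cross contributions collapse, and properly accounting for the $\mathbb{E}_{s'}[\Delta_{h+1}\tr{\Delta_{h+1}}]$ matrix is the subtlest step --- it is the one place where the precise interpretation of $\tilde{\Sigma}_{h+1}^\pi$ vs.\ $\Sigma_{h+1}^\pi$ on the right-hand side matters for pinning down the constant.
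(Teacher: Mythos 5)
Your decomposition is the paper's argument in all essentials: the same perturbation of the covariance Bellman equation, the same reduction to two cross terms plus the rank-one local quadratic $-\Delta_h(s,a)\tr{\Delta_h(s,a)}$, the same use of the martingale property of $\Psi_{h+1}^\pi - \bm{\psi}_{h+1}^\pi$ (the paper isolates this as a separate lemma) to kill the uncontrolled cross contribution, and the same $2\varepsilon_h + \varepsilon_h$ accounting against the two hypotheses (whose index you correctly read as $h+1$). The paper's only cosmetic difference is that it works forward from the definition of $\Sigma_h^\pi$ by splitting $\Psi_h^\pi - \bm{\psi}_h^\pi = (\Psi_h^\pi - \tilde{\bm{\psi}}_h^\pi) + (\tilde{\bm{\psi}}_h^\pi - \bm{\psi}_h^\pi)$ rather than perturbing the exact recursion, but the resulting three error terms are identical.

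The one point you leave unresolved — the next-state quadratic $\mathbb{E}_{s'}[\Delta_{h+1}\tr{\Delta_{h+1}}]$ — is where you must commit, and only your second option survives. The first option (rewriting $\Delta_{h+1} = \tilde{\bm{\delta}}_h - \bm{\delta}_h + \Delta_h(s,a)$) does not work: it produces the term $\mathbb{E}_{s'}[\bm{\delta}_h\tr{\Delta_{h+1}}]$, which is \emph{not} annihilated by $\mathbb{E}_{s'}[\bm{\delta}_h]=0$ because $\Delta_{h+1}$ depends on $s'$, and neither hypothesis controls a cross term involving the \emph{true} residual. The correct resolution, and the one the paper uses, is to read $\tilde{\Sigma}_{h+1}^\pi(s',a')$ as the second moment of $\Psi_{h+1}^\pi(s',a')$ about the \emph{approximate} mean $\tilde{\bm{\psi}}_{h+1}^\pi(s',a')$, i.e.
\begin{equation*}
\tilde{\Sigma}_{h+1}^\pi = \mathbb{E}\left[(\Psi_{h+1}^\pi - \tilde{\bm{\psi}}_{h+1}^\pi)\tr{(\Psi_{h+1}^\pi - \tilde{\bm{\psi}}_{h+1}^\pi)}\right] = \Sigma_{h+1}^\pi + \Delta_{h+1}\tr{\Delta_{h+1}},
\end{equation*}
so that the next-state quadratic cancels exactly rather than being estimated. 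Under your primary reading ($\tilde{\Sigma}_{h+1}^\pi = \Sigma_{h+1}^\pi$), that positive semidefinite term would only be bounded by $\mathbb{E}_{s'}[\varepsilon_{h+1}(s',\pi_{h+1}(s'))]$, and the stated constant $3\varepsilon_h(s,a)$ would not follow. With that single interpretive choice fixed, your proof closes and coincides with the paper's.
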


A proof can be found in Appendix \appendixref{subsec:proofs_covariance}{B.4}. Appendix \appendixref{subsec:algorithmic_total_reward}{A.1} describes how $\tilde{\bm{\psi}}_h^\pi$ and $\tilde{\Sigma}_h^\pi$ can be learned online from environment interactions, while Appendix \appendixref{subsec:parametric}{A.4} discusses further generalizations of (\ref{eqn:mvsf}). There are, however, several limitations of estimating $\Sigma_h^\pi$ in this way. First, obtaining accurate estimates of $\Sigma_h^\pi$ requires accurate estimates of $\bm{\psi}_h^\pi$ (thus estimating one quantity on top of another), making this approach difficult to apply with deep function approximation. This claim is further substantiated by Theorem \ref{thm:sigma_convergence} and preliminary experiments. A second issue that occurs is \emph{double sampling}, when the same transitions are used to update the mean and covariance, resulting in accumulation of bias in the latter \citep{baird1995residual,zhu2020borrowing}. Our experiments on the reacher domain avoid these issues by leveraging distributional RL to approximate the mean and variance in (\ref{eqn:mvsf}), while maintaining computational efficiency.

\section{Experiments}
\label{sec:experiments}

To evaluate the performance of RaSF, we revisit the benchmark domains in \citet{barreto2017successor}, adapted for learning and evaluating risk-aware behaviors. Further details can be found in Appendix \appendixref{sec:experimental_detail}{C}. 

\paragraph{Four-Room.}
\begin{wrapfigure}{r}{0.38196601125\linewidth}
    \centering
    \includegraphics[width=0.499\linewidth]{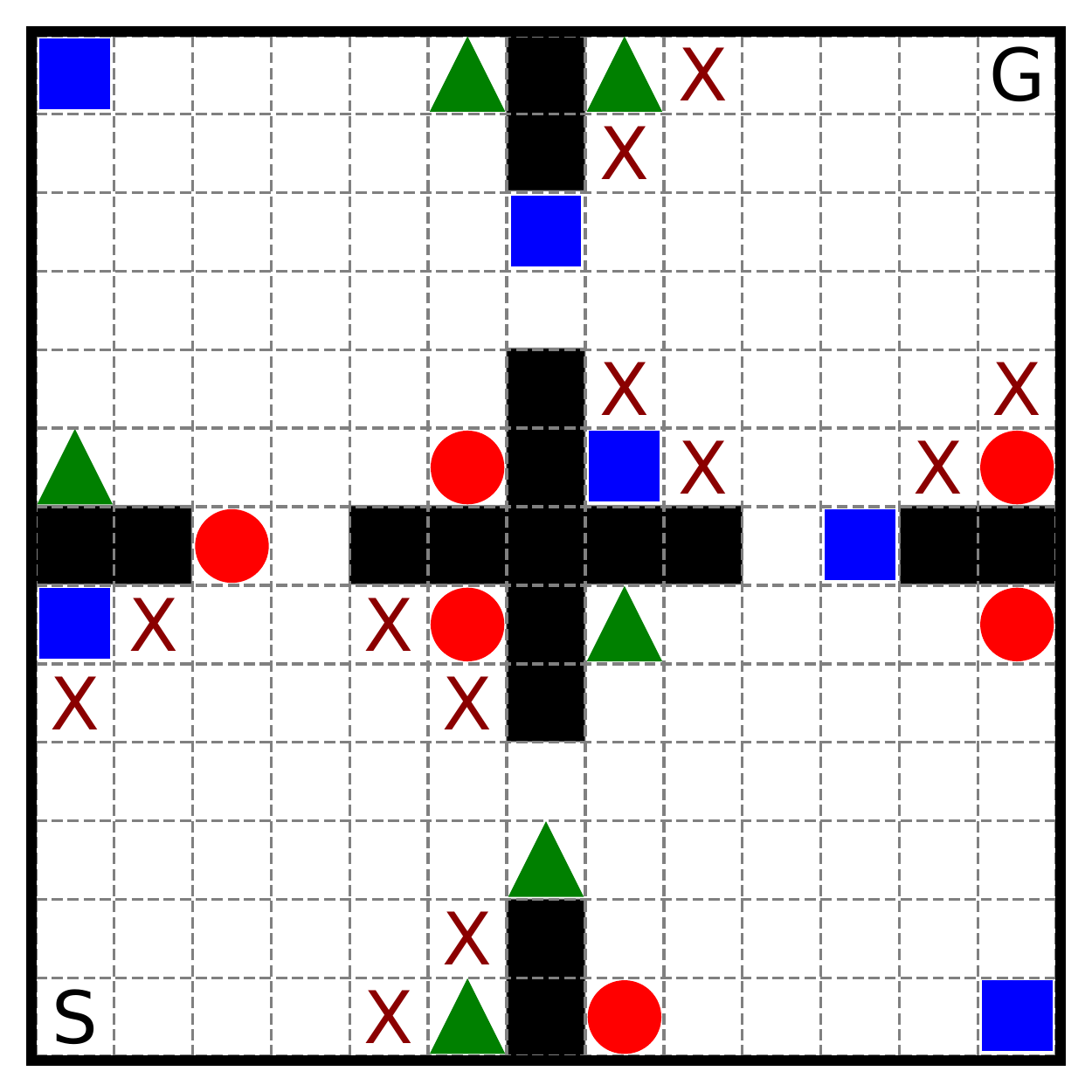}%
    \includegraphics[width=0.499\linewidth]{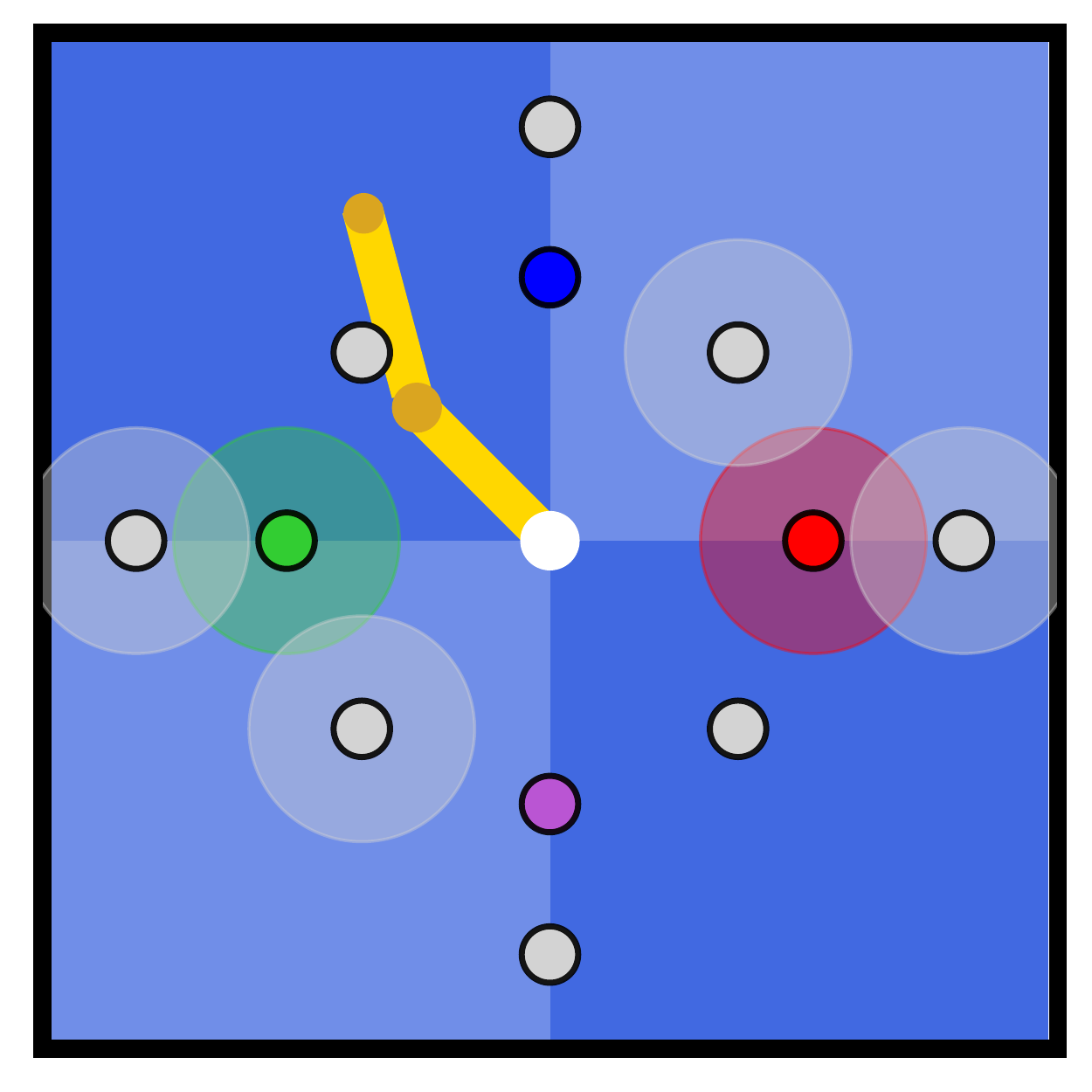}
    \caption{The four-room and reacher domains. {\bf{Four-Room:}} the shapes of the objects represent their classes, `S' is the start state and `G' is the final goal state, and (\textcolor{red}{X}) mark the states with high risk. {\bf{Reacher:}} colored and gray circles represent training and test targets, respectively, while shaded regions represent areas of high risk.}
    \label{fig:maps}
\end{wrapfigure}
The first domain consists of a set of navigation tasks defined on a discrete 2-D space divided into four rooms, as illustrated on the left in Figure \ref{fig:maps}. The environment has additional objects that can be picked up by the agent by occupying their cells. These objects belong to one of three possible classes, drawn as different shapes in Figure \ref{fig:maps}, which determine their reward. The position of the objects remains fixed, but the rewards of their classes are reset every $20,\!000$ transitions to random values sampled uniformly in $[-1, +1]$. To incorporate risk, traps are placed in fixed cells marked with \textcolor{red}{X}. For every time instant during which the agent occupies a trap cell, the trap activates spontaneously with a small probability, resulting in an immediate penalty and termination of the episode (we refer to this event as a failure). The goal is to maximize the total reward accumulated over 128 random task instances while minimizing the number of failures. 

\begin{figure}[b]
    \centering
    \includegraphics[width=0.999\linewidth]{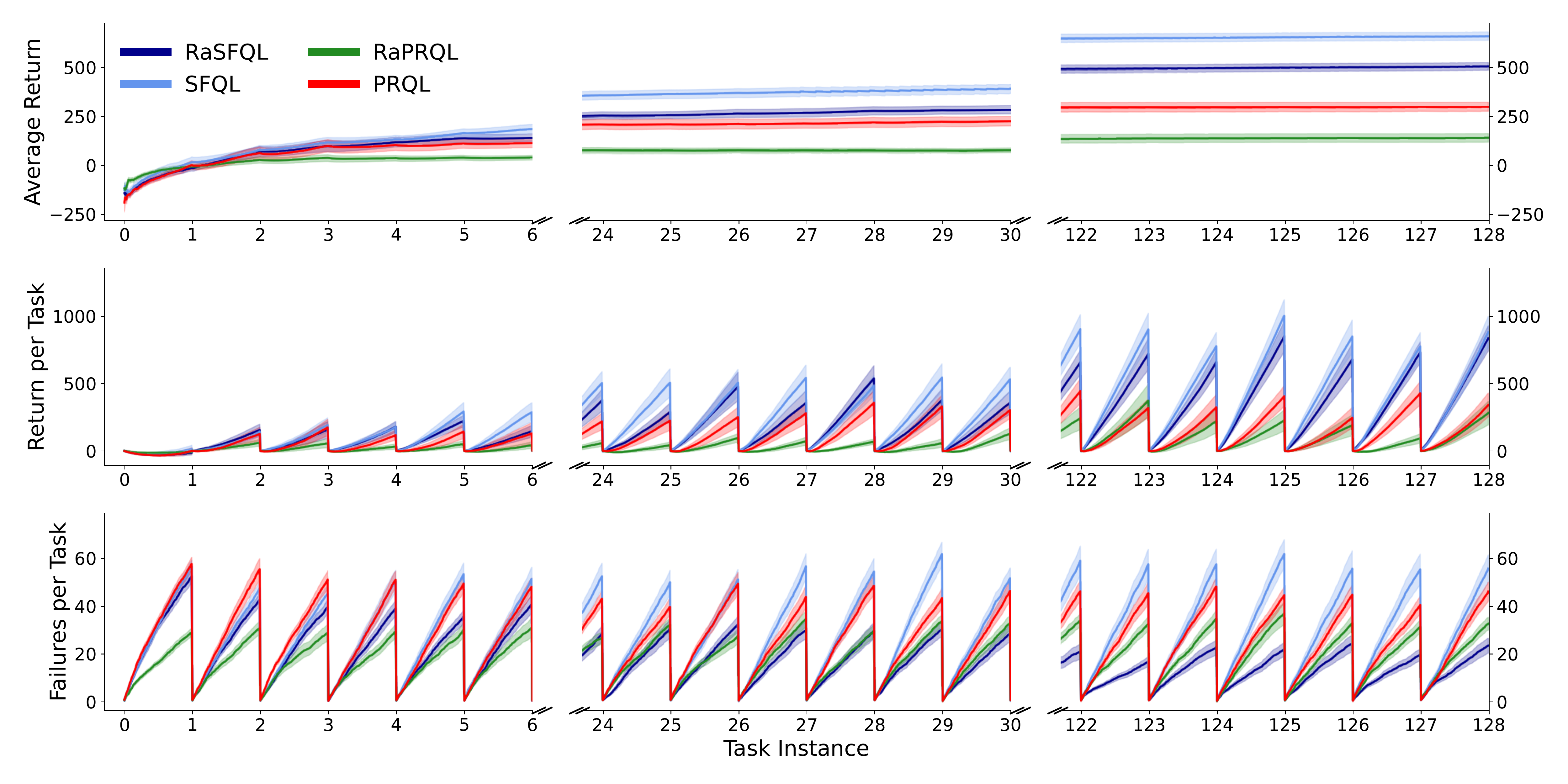}
    \caption{Average return, cumulative return and number of failures per task in the four-room domain, for $\beta = \omega = -2$. Shaded bands show one standard error over 30 independent runs.}
    \label{fig:shapes_main}
\end{figure}

In order to demonstrate the power of our approach in the absence of approximation errors, we define a simple instance of RaSFs in which $\bm{\psi}^{\pi_i}$ and $\Sigma^{\pi_i}$ are learned exactly using lookup tables and dynamic programming (equation (\ref{eqn:covariance_Bellman})). We also apply modest discounting of rewards to ensure that the Q-function converges, as is standard in RL and discussed further in Appendix \appendixref{subsec:discounted_setting}{A.2}. The $\mathbf{w}$ are also learned using immediate reward feedback and exact, sparse state features $\bm{\phi}$ provided to the agent. Due to its similarity to standard Q-learning, we call this approach RaSFQL. To provide a challenging baseline for comparison, we implemented another policy reuse algorithm (PRQL) \citep{fernandez2006probabilistic}. Further replacing the risk-neutral action selection mechanism of PRQL with \emph{smart exploration} \citep{gehring2013smart} allows PRQL to be sensitive to reward volatility, which we refer to as RaPRQL.

The performance of these algorithms is shown in Figure \ref{fig:shapes_main}. The cumulative reward obtained by RaSFQL is generally lower than SFQL, as expected since a risk-averse agent should avoid the objects in the bottom-left and top-right rooms and forgo their associated rewards. Interestingly, the performance of RaSFQL far exceeds that of RaPRQL and even PRQL, suggesting that the benefits of task generalization provided by GPI/GPE are quite strong. Furthermore, the number of failures observed by RaSFQL gradually decreases over the task instances, while the number of failures of SFQL slightly increases. This is consistent with Theorem \ref{thm:gpi_aux} that guarantees monotone improvement in the \emph{risk-adjusted} return of RaSFQL. On the other hand, while RaPRQL also learns to avoid risk, it fails slightly more often than RaSFQL. This suggests that the benefits of task generalization promised by GPI/GPE even allow risk-aware behaviors to emerge sooner than by using generic policy reuse methods that are unable to exploit the task structure, namely PRQL. This aspect becomes critical for minimizing failures when deploying a trained policy library on novel task instances in a real-world setting. Further analysis and ablation studies are provided in Appendix \appendixref{subsec:shapes_ablation}{D.1}. 

\begin{figure}[!tb]
    \centering
    \includegraphics[width=0.333\linewidth]{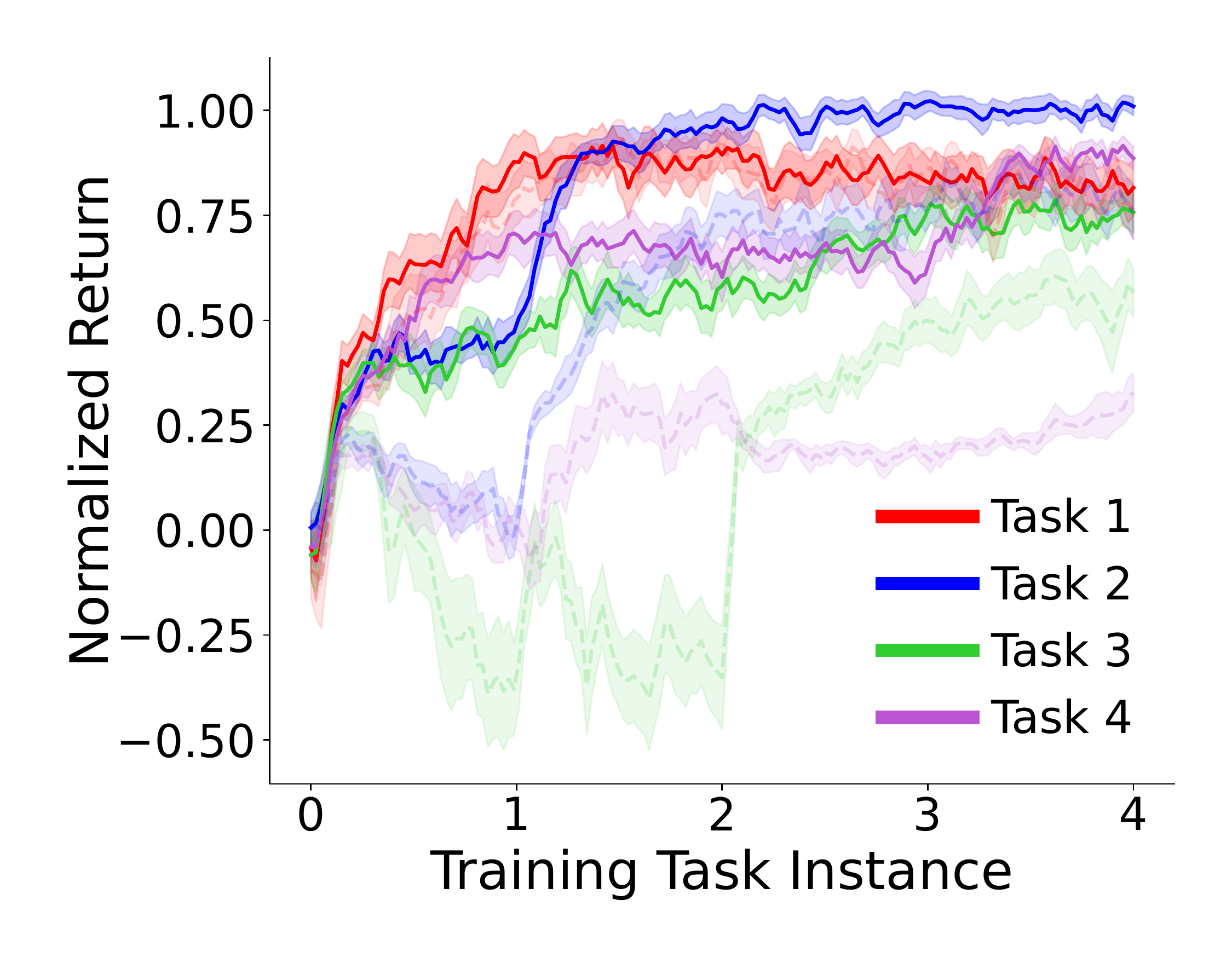}%
    \includegraphics[width=0.333\linewidth]{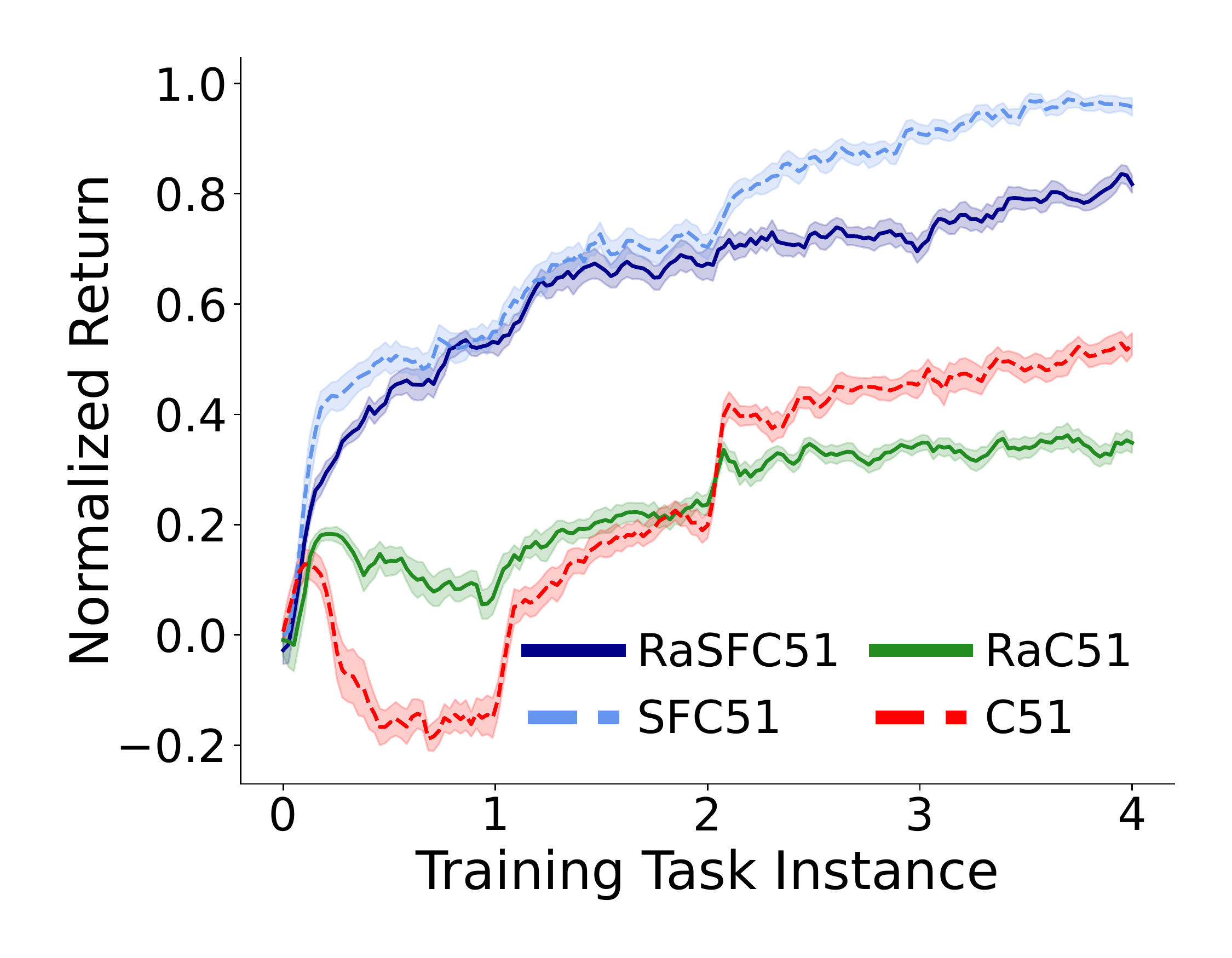}%
    \includegraphics[width=0.333\linewidth]{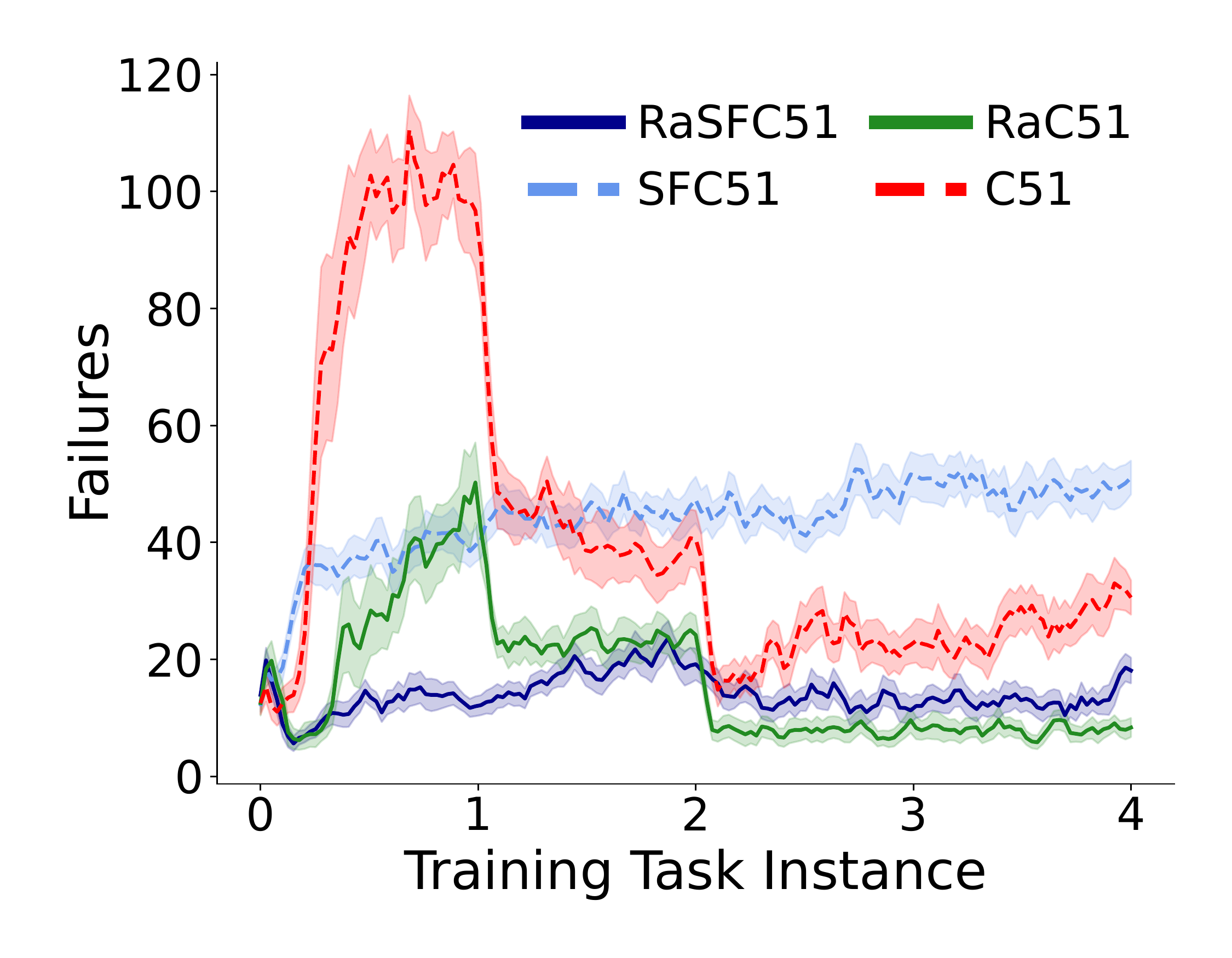}
    \caption{Performance on training and test tasks for the reacher domain. {\textbf{Left:}} Normalized return on training tasks. Faded curves correspond to C51 performance. {\textbf{Middle:}} Normalized return averaged across all test tasks. {\textbf{Right:}} Total failures across all test tasks. Shaded bands show one standard error over 10 independent runs with different seeds.}
    \label{fig:reacher_main}
\end{figure}

\paragraph{Reacher.}
The second domain consists of a set of tasks based on the MuJoCo physics engine \citep{todorov2012mujoco} that involve the maneuver of a robotic arm toward a fixed target location. As illustrated in the rightmost plot in Figure \ref{fig:maps}, the agent is only allowed to train on 4 tasks, whose target locations are indicated by colored circles, and must be able to perform well on 8 test tasks whose target locations are indicated by the grey circles. Furthermore, we incorporate two sources of reward volatility: (1) actions are perturbed by additive Gaussian noise; and (2) fixed regions around some of the target locations randomly incur negative rewards (failures), illustrated by faded circles in Figure \ref{fig:maps}. Please note that most of these high-volatility regions are centered on target locations of test tasks from which the agent never learns directly. This stresses the agent's ability to avoid unforeseen dangers in the environment, in additional to performing well on previously unseen task instances. 

\begin{wrapfigure}{r}{0.38196601125\linewidth}
    \centering
    \includegraphics[width=0.999 \linewidth]{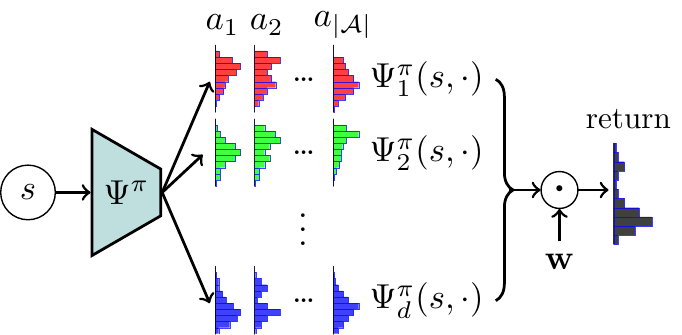}
    \caption{Architecture for $\Psi^\pi(s,a)$.}
    \label{fig:sfnetwork}
\end{wrapfigure}
As discussed earlier, it is difficult to compute $\Sigma^{\pi_i}$ using (\ref{eqn:covariance_Bellman}) in the deep RL setting. A computationally tractable way to avoid these issues is to first approximate the density of $\Psi^{\pi_i}(s,a)$ using the distributional RL framework, and then extract the moment information needed for the calculation of (\ref{eqn:covariance_Bellman}). Specifically, we apply C51 \citep{bellemare2017distributional} by modeling $\Psi_1^{\pi_i}(s,a), \dots \Psi_d^{\pi_i}(s,a)$ using histograms for each $(s,a)$. However, $\Psi^{\pi_i}$ are high-dimensional, so we avoid the curse of dimensionality by modeling $\Psi_j^{\pi_i}$ without their interaction effects. This still turns out to be an effective way of detecting high-variance scenarios in the environment. The final architecture is illustrated in Figure \ref{fig:sfnetwork}, where the marginal distributions of $\Psi^{\pi_i}$ are modeled as separate output heads with a shared state encoder. The rest of the training protocol is identical to the SFDQN of \citet{barreto2017successor}, except that DQN is replaced by the C51 architecture above, as further detailed in Appendix \appendixref{subsec:distributional_sf}{A.3}. The risk-averse and risk-neutral instances of this approach for modeling successor features are referred to as RaSFC51 and SFC51, respectively, while RaC51 and C51 replace successor features with \emph{universal value functions} \citep{schaul2015universal} for generalization across target locations.

\begin{figure}[!tb]
    \centering
    \begin{subfigure}[c]{0.32\textwidth}
        \centering
         \includegraphics[width=\linewidth]{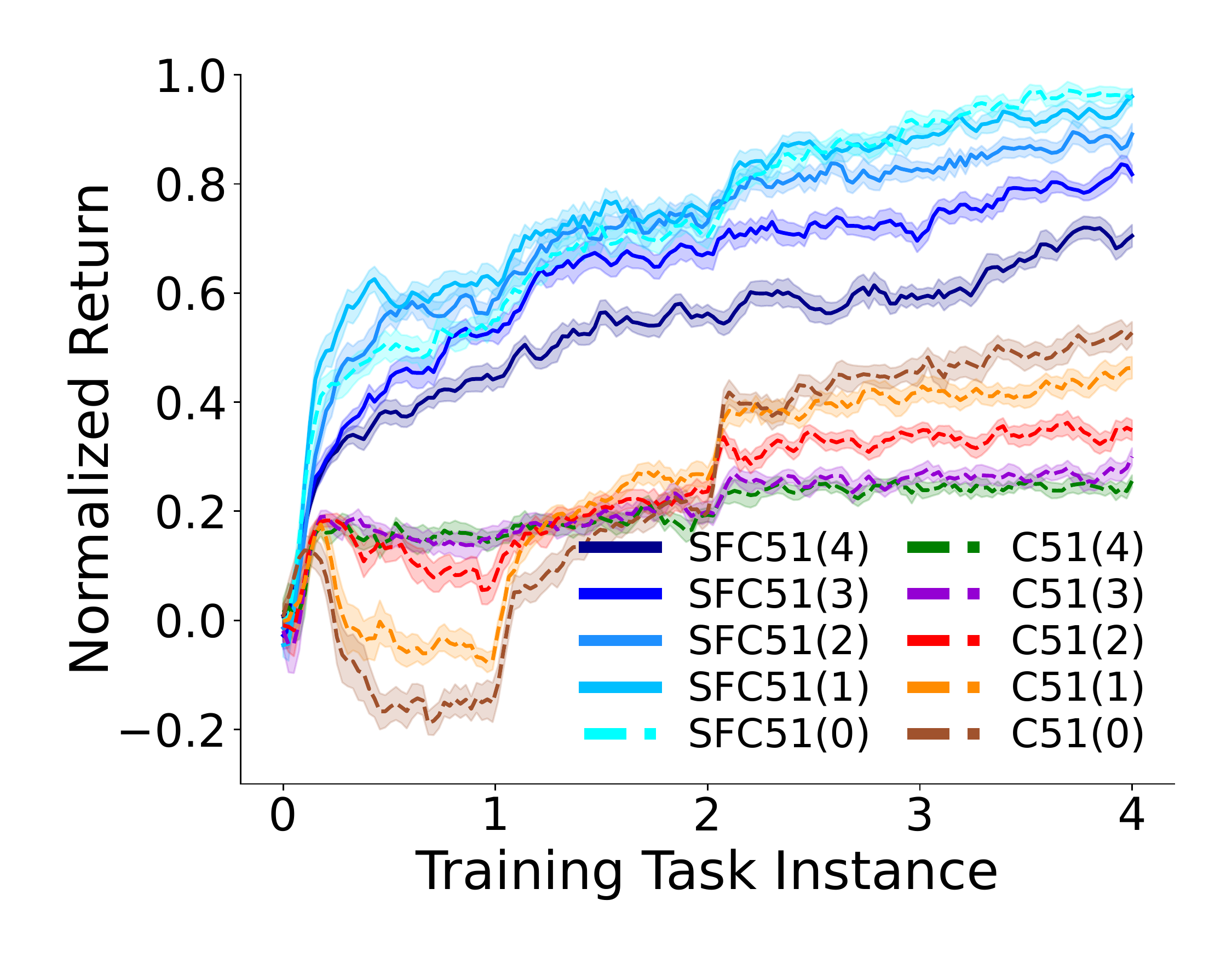} 
    \end{subfigure}%
    \begin{subfigure}[c]{0.32\textwidth}
        \centering
        \includegraphics[width=\linewidth]{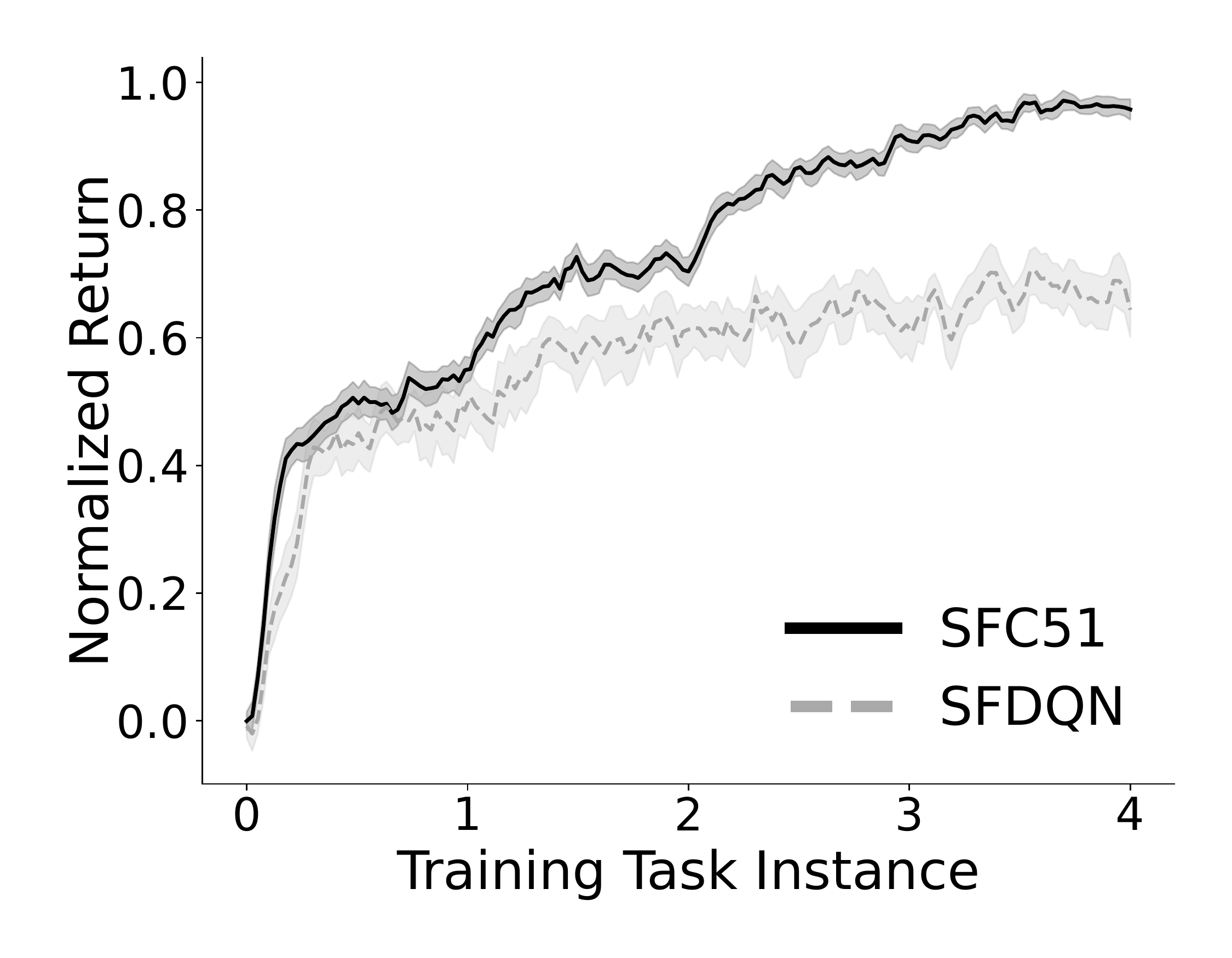}
    \end{subfigure}%
    \begin{subfigure}[c]{0.359\textwidth}
        \centering
        \includegraphics[width=0.33\linewidth]{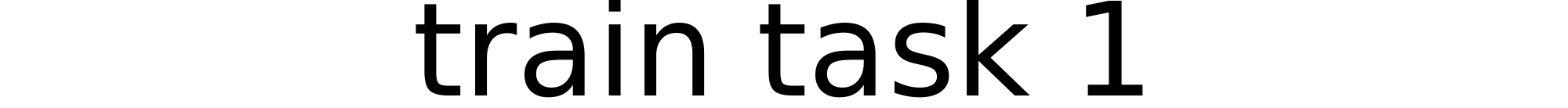}%
        \includegraphics[width=0.33\linewidth]{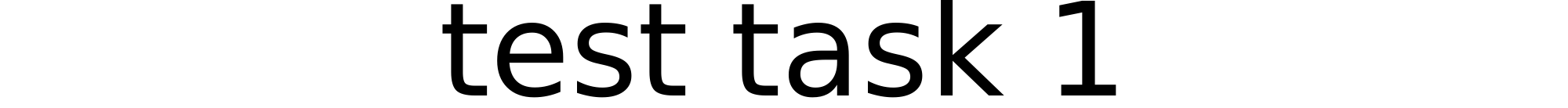}%
        \includegraphics[width=0.33\linewidth]{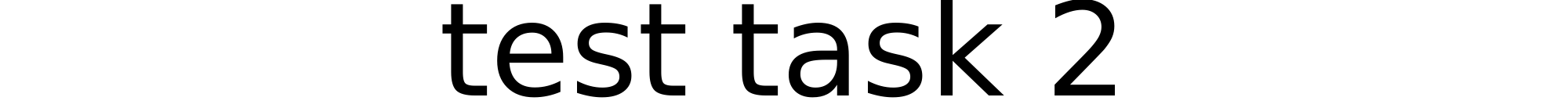}
        
        \includegraphics[width=0.33\linewidth]{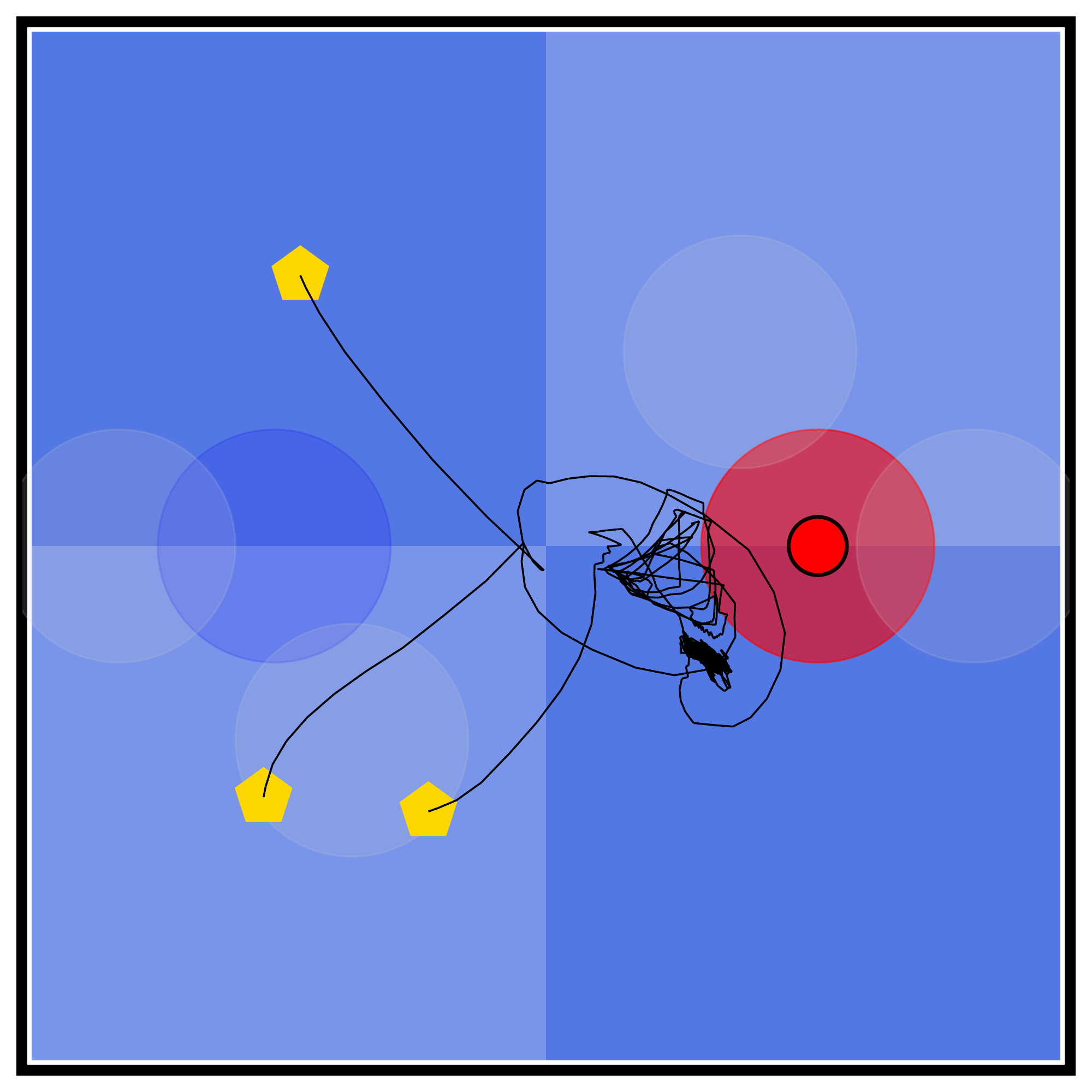}%
        \includegraphics[width=0.33\linewidth]{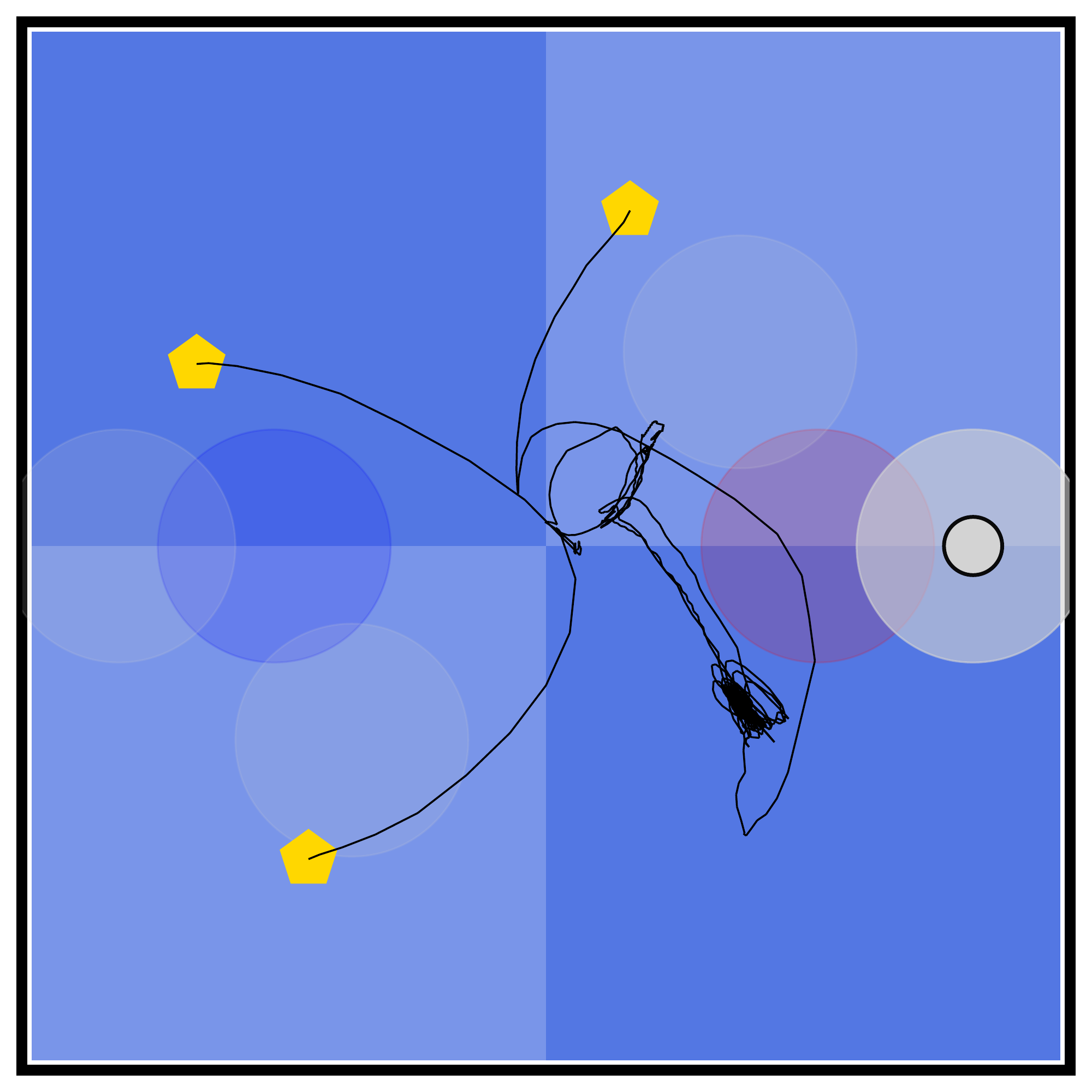}%
        \includegraphics[width=0.33\linewidth]{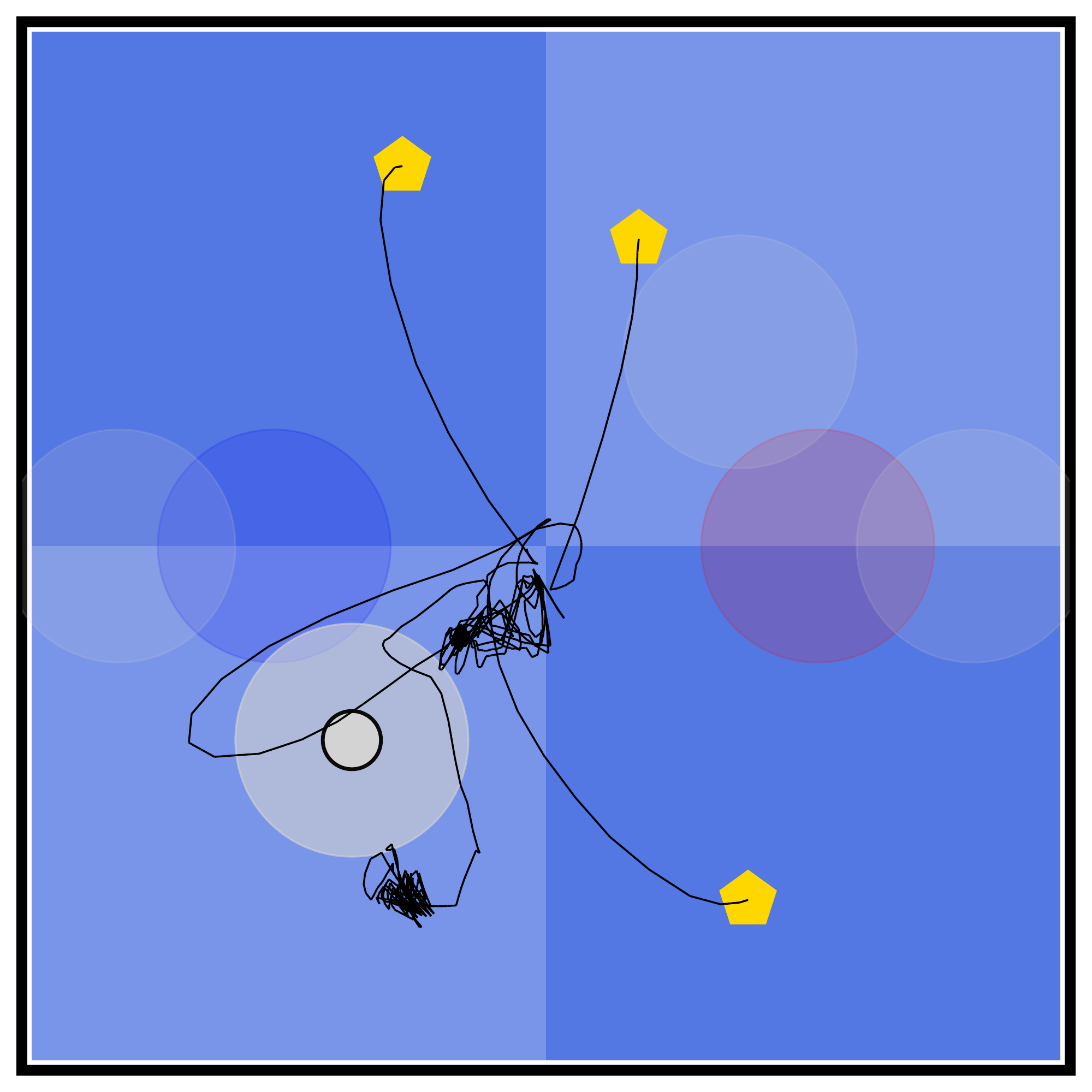}%
        \includegraphics[height=0.07\textheight]{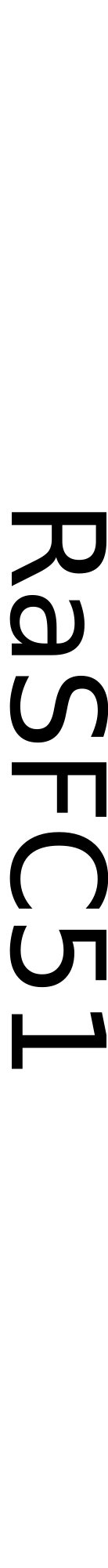}
        
        \includegraphics[width=0.33\linewidth]{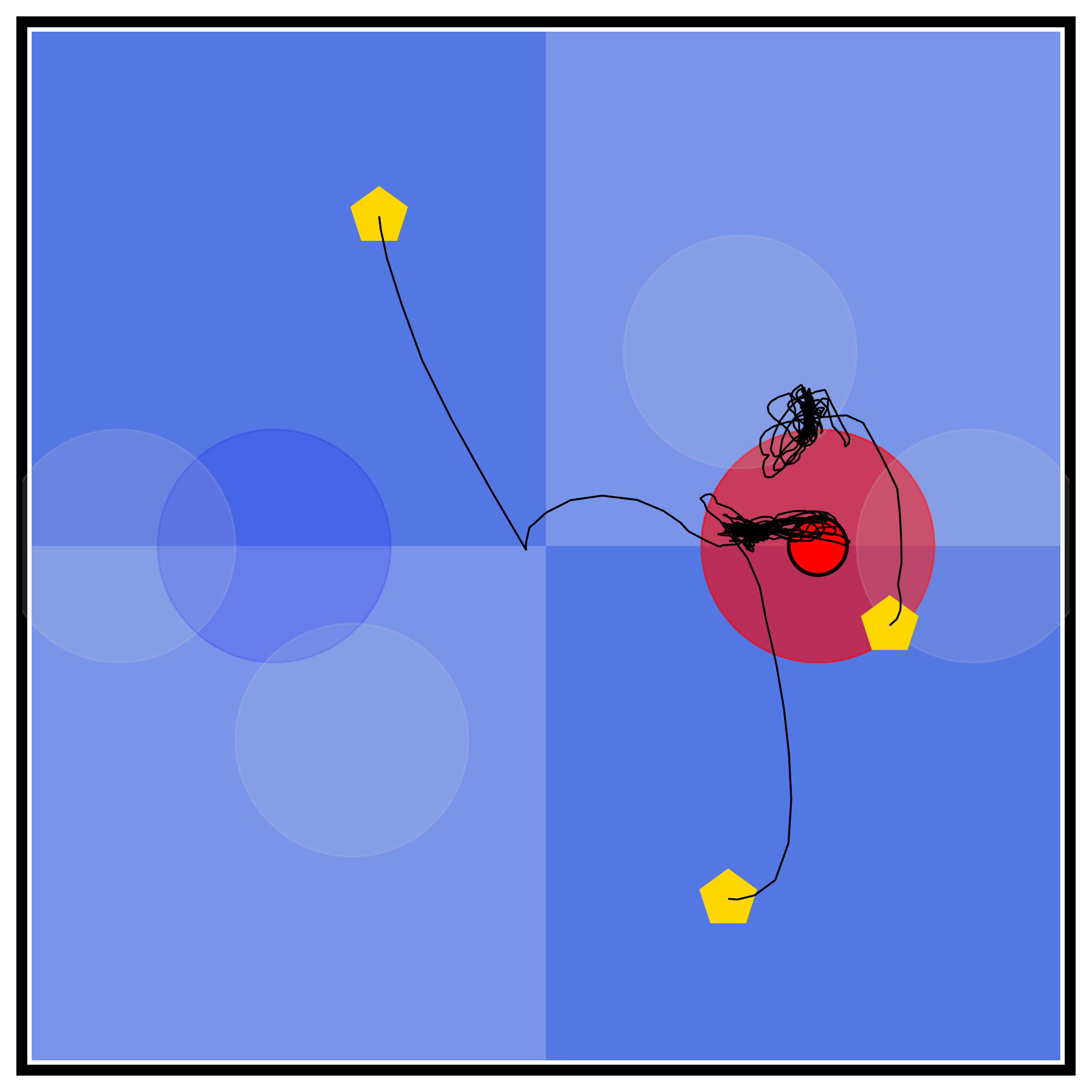}%
        \includegraphics[width=0.33\linewidth]{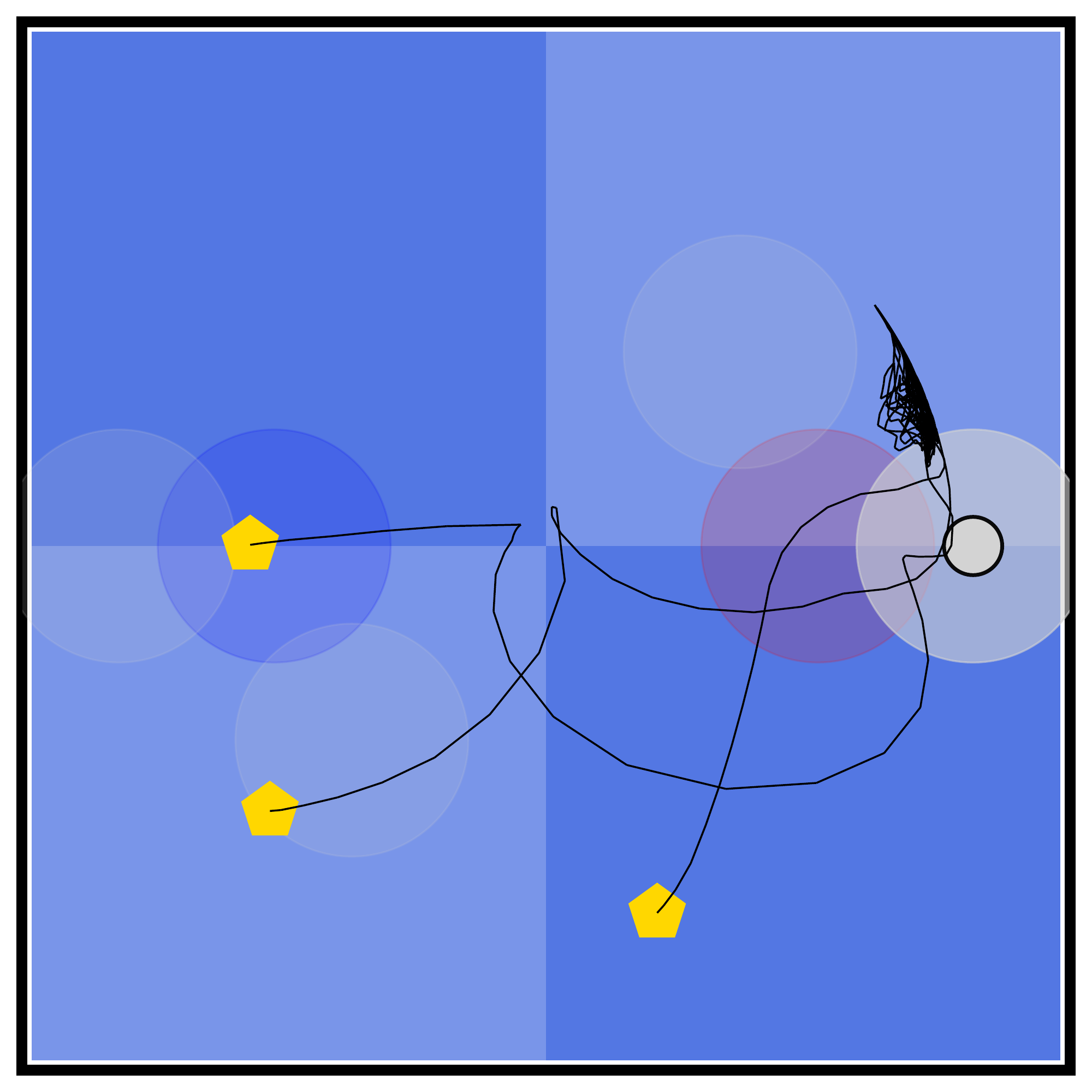}%
        \includegraphics[width=0.33\linewidth]{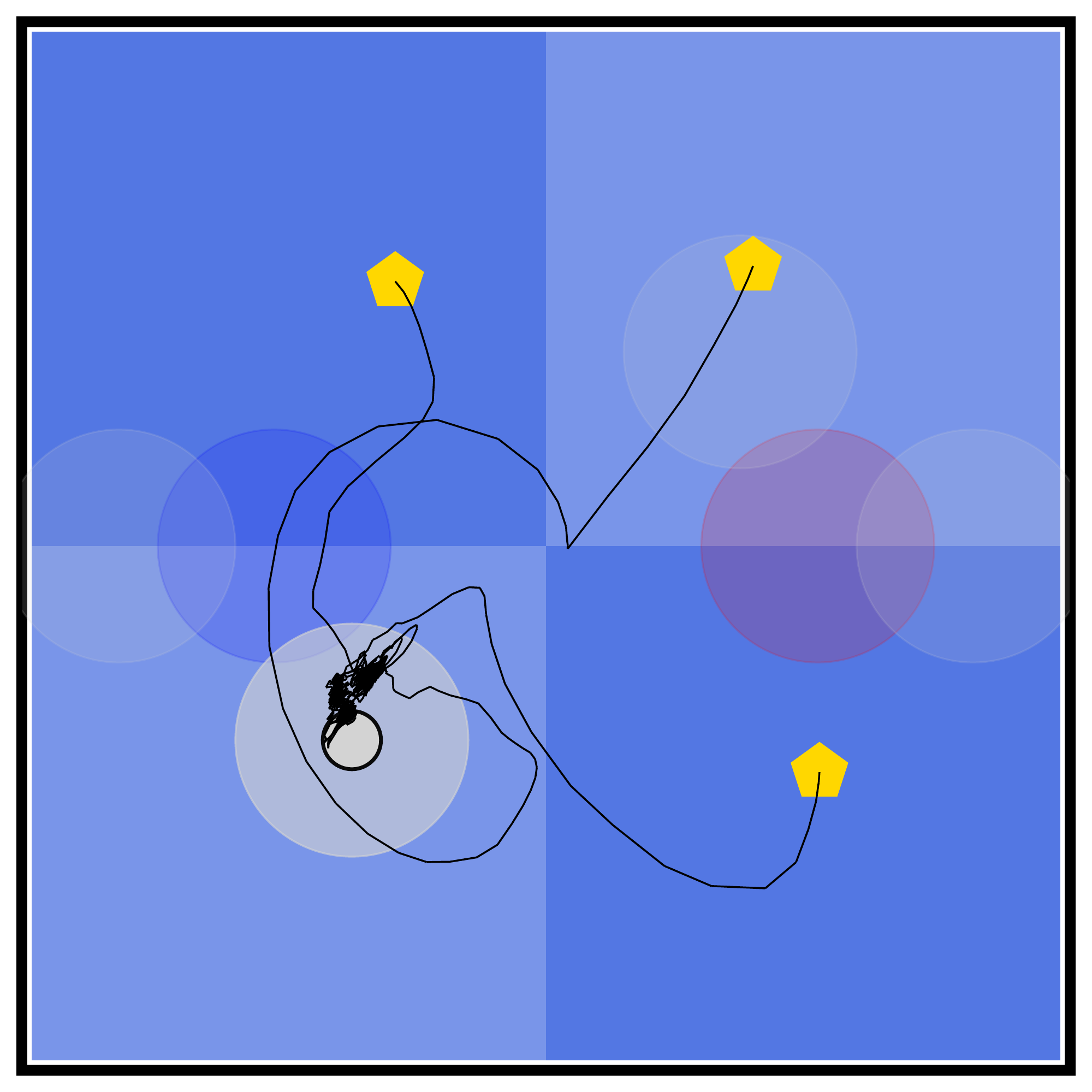}%
         \includegraphics[height=0.07\textheight]{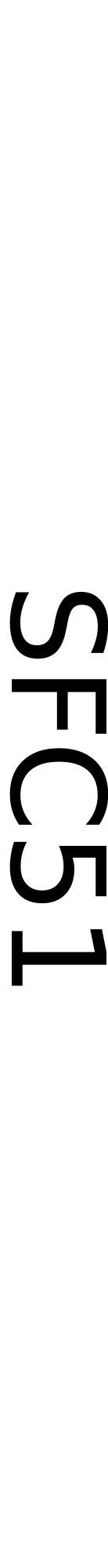}
    \end{subfigure}
    \caption{\textbf{Left:} Normalized average test return for the reacher domain, for different values of $\beta$ (legend values indicate the negative value of $\beta$). \textbf{Middle:} Normalized average test return for the reacher domain, showing the improvement obtained by replacing DQN by C51 as a function approximator for SFs. \textbf{Right:} Evolutions of the arm tip position in three rollouts of the reacher domain according to the GPI policy obtained after training on all 4 tasks (yellow pentagons indicate the initial states in each rollout). Only one training task and two test tasks are shown. The risk-averse agent learns to hover close to the goal while avoiding the high-volatility shaded regions.}
    \label{fig:reacher_ablation}
\end{figure}

The performance of these algorithms on the reacher domain is illustrated in Figure \ref{fig:reacher_main}. We first observe that the performance of all four training policies for RaSFC51 improves almost immediately, obtaining returns that exceed $75\%$ of the performance of a \emph{fully trained} C51 agent, an observation that correlates highly with the original SFDQN.  Interestingly, RaC51 is unable to achieve satisfactory performance on two out of the four training tasks, even after fully trained on all 4 tasks. Furthermore, the performance of RaSFC51 on the testing tasks far exceeds that of both RaC51 and C51, demonstrating the superior generalization ability of SFs in the risk-aware setting. Finally, the total number of failures across the test task instances is also considerably lower for RaSFC51 than it is for SFC51, and remains low during the entire training horizon. RaC51 and C51 fail frequently at the beginning of training, but less often later in training. While this may suggest that C51 is learning adequate risk-sensitive policies, it is mainly due to the fact that C51 is not able to generalize nearly as well as SFC51 for some of the test target locations, as elaborated in Appendix \appendixref{subsec:reacher_ablation}{D.2}.

We also conducted several ablation studies, summarized in Figure \ref{fig:reacher_ablation}. The performance of RaSFC51 and RaC51 decays gracefully as $\beta$ is increased in magnitude, but the performance of RaSFC51 uniformly outperforms RaC51 for \emph{every} value of $\beta$ tested. We also compared the benefit of replacing DQN by C51 as the architecture for learning SFs, and found that this simple modification can significantly improve SFs. It is likely that learning the full distribution of SF returns provides additional stability of the Bellman backups in stochastic domains, and thus allows SFs to inherit the advantages of distributional RL previously reported by \citet{bellemare2017distributional}. The final plot on the right shows that RaSFC51 learns safer policies than SFC51 on both training and test instances. Specifically, RaSFC51 learns to hover as close to the goal as possible in most cases, while still avoiding the high-volatility shaded regions. Further analysis of this domain can be found in Appendix \appendixref{subsec:reacher_ablation}{D.2}.

\section{Conclusion}

We presented Risk-aware Successor Features (RaSFs) for realizing policy transfer between tasks with shared dynamics and different goals, where the overall objective is to optimize the trade-off between expected return and risk, measured by the variance of return. We extended generalized policy improvement to the risk-aware setting, providing monotone guarantees and optimality of GPI, provided that source policies can be evaluated in a risk-aware manner. To facilitate policy evaluation, we extended the notion of generalized policy evaluation to the mean-variance objective. Together, risk-aware GPI and GPE inherit the superior task generalization abilities of successor features, while also learning to avoid dangerous and unpredictable situations in the environment, as analysis on discrete navigation and robotic control domains has showed. 

Our implementation relied heavily on modern deep reinforcement learning, thus requiring discounting, limitation to stationary policies, and the independence assumption in the SFC51 architecture for tractability. One way to mitigate some of these concerns is to pursue a model-based direction for better sample-efficiency and better policy search through planning. More generally, incorporating risk and safety in sequential decision-making is a complex problem. The entropic utility objective studied in this work does not capture other well-known utility functions such as CVaR, which would be difficult to incorporate into GPE due to their reliance on percentiles and lack of asymptotic expansions that are exploited in this work. Incorporating other notions of risk in the GPI/GPE framework could form interesting and challenging future extensions of our work.

\section*{Acknowledgements}

The authors would like to thank Daniel Mankowitz for suggesting relevant research in the area of robust and risk-aware reinforcement learning and for providing insightful comments throughout the development of the paper.

\small
\bibliographystyle{plainnat}
\bibliography{references}
\normalsize

\ifappendix

\clearpage
\appendix

\vspace{7mm} 
\begin{center}
\vspace{7mm} 
\noindent\makebox[\textwidth]{\rule{\textwidth}{4.0pt}} \\
\vspace{3mm}    
{\bf {\LARGE Risk-Aware Transfer in Reinforcement Learning \\ \vspace{1.5mm} using Successor Features} \\
\vspace{5mm} {\Large Supplementary Material} }
\vspace{5mm} 
\noindent\makebox[\textwidth]{\rule{\textwidth}{1.0pt}}
  {\bf Michael Gimelfarb, Andr\'e Barreto, Scott Sanner} and {\bf Chi-Guhn Lee}
  \vskip 0.2in
\end{center}

\begin{abstract}
This part of the paper discusses algorithmic details for the total reward episodic setting and the discounted setting that were not included in the main paper due to space limitations. It includes proofs of all main theoretical claims in the paper, as well as additional experimental details and parameter settings used to reproduce the experiments.
\end{abstract}

\newenvironment{narrow}[2]{%
  \begin{list}{}{%
    \setlength{\topsep}{0pt}%
    \setlength{\leftmargin}{#1}%
    \setlength{\rightmargin}{#2}%
    \setlength{\listparindent}{\parindent}%
    \setlength{\itemindent}{\parindent}%
    \setlength{\parsep}{\parskip}
  }%
  \item[]
}{\end{list}}

\renewcommand{\contentsname}{\centering Contents}
\addtocontents{toc}{\setcounter{tocdepth}{3}} 
\begin{narrow}{1.25cm}{1.25cm}
    \tableofcontents
 \end{narrow}

\renewcommand{\thesection}{\Alph{section}}

\section{Mathematical and Algorithmic Details}
\label{sec:algorithmic_detail}

In this section, we outline the ways in which successor features and their covariance matrices can be learned in practical RL settings. Specifically, we introduce Bellman updates and the distributional RL framework, and also discuss the mean-variance approximation under more general assumptions in parameteric density estimation.

\subsection{Mean-Variance Approximation for Episodic MDPs}
\label{subsec:algorithmic_total_reward}

\paragraph{Directly Computing Successor Features and Covariances.}
In the tabular setting, we compute estimates of the mean $\tilde{\bm{\psi}}_{h}^{\pi}(s,a)$ and covariance $\tilde{\Sigma}_h^\pi(s,a)$ by extending the analysis in \citet{sherstan2018comparing} to the $d$-dimensional setting. For a transition $(s, a, \bm{\phi}, s', a')$, where $\bm{\phi} = \bm{\phi}(s, a, s')$, and learning rate $\alpha > 0$, the update for successor features is:
\begin{equation}
\label{eqn:mvsf_mean}
    \begin{aligned}
        \tilde{\bm{\delta}}_h &= \bm{\phi} +  \tilde{\bm{\psi}}_{h+1}^\pi(s',a') - \tilde{\bm{\psi}}_h^\pi(s,a), \\
        \tilde{\bm{\psi}}_{h}^\pi(s,a) &= \tilde{\bm{\psi}}_{h}^\pi(s,a) + \alpha \tilde{\bm{\delta}}_h.
    \end{aligned}
\end{equation}
By virtue of the covariance Bellman equation (\ref{eqn:covariance_Bellman}), a per-sample update of the covariance matrix can be computed by using the Bellman residuals $\tilde{\bm{\delta}}_h$ as a pseudo-reward:
\begin{equation}
    \label{eqn:mvsf_covariance}
    \begin{aligned}
        \Delta_h &= \tilde{\bm{\delta}}_h \tr{\tilde{\bm{\delta}}_h} +  \tilde{\Sigma}_{h+1}^\pi(s',a') - \tilde{\Sigma}_h^\pi(s,a), \\
        \tilde{\Sigma}_{h}^\pi(s,a) &= \tilde{\Sigma}_h^\pi(s,a) + \bar{\alpha} \Delta_h.
    \end{aligned}
\end{equation}
In practice, $\bar{\alpha}$ is usually set much smaller than $\alpha$, e.g. $\bar{\alpha} = \rho \alpha$ for some positive $\rho \ll 1$.

\paragraph{Computing Reward Parameters.}
When the reward parameters $\mathbf{w}$ are unknown, they can be learned by solving a regression problem. Specifically, for known or estimated features $\bm{\phi}(s,a,s')$ and an observed reward $r$, the objective function to minimize is the \emph{mean-squared error},
\begin{equation*}
    \mathcal{L}(\mathbf{w}) = \frac{1}{2} \left(r - \tr{\bm{\phi}(s,a,s')} \mathbf{w} \right)^2,
\end{equation*}
that can be minimized using \emph{stochastic gradient descent} (SGD). Introducing a learning rate $\alpha_w > 0$, an update of SGD on a single transition $(s, a) \to s'$ is
\begin{equation*}
    \mathbf{w} \gets \mathbf{w} + \alpha_w (r - \tr{\bm{\phi}(s,a,s')} \mathbf{w}) \bm{\phi}(s,a,s').
\end{equation*}

\paragraph{Pseudocode.}
The general routine for performing risk-aware transfer learning in an online setting, which we call \emph{Risk-Aware Successor Feature Q-Learning} (RaSFQL), can now be fully described. Pseudocode adapted for the total-reward episodic MDP setting is given as Algorithm \ref{alg:mvsfql}. Please note that our approach closely follows the risk-neutral SFQL in \citet{barreto2017successor}.

\begin{algorithm}[!tb]
  \caption{RaSFQL with Mean-Variance Approximation}
  \label{alg:mvsfql}
\begin{algorithmic}[1]
    \State {\bfseries Requires} $m, T, N_e \in \mathbb{N}, \,  \varepsilon \in [0, 1], \, \alpha, \bar{\alpha}, \alpha_w > 0, \, \beta \in \mathbb{R}, \, \bm{\phi} \in \mathbb{R}^d$, $M_1, \dots M_m \in \mathcal{M}$
  \For{$t=1, 2\dots m$}
        \IndentLineComment{Initialize successor features and covariance for the current task}
        \State {\bf{if}} $t = 1$ {\bf{then}} Initialize $\tilde{\bm{\psi}}^t, \tilde{{\Sigma}}^t$ to small random values {\bf{else}} Initialize $\tilde{\bm{\psi}}^t, \tilde{{\Sigma}}^t$ to $\tilde{\bm{\psi}}^{t-1}, \tilde{{\Sigma}}^{t-1}$
        \State Initialize $\tilde{\mathbf{w}}_t$ to small random values
        \LineComment{Commence training on task $M_t$}
        \For{$n_e = 1, 2 \dots N_e$}
            \State Initialize task $M_t$ with initial state $s$
        \For{$h=0,1\dots T$} 
            \IndentLineComment{Select the source task $M_c$ using GPI}
            \State $c \gets \argmax_j \max_b \lbrace \tr{\tilde{\bm{\psi}}_h^j(s,b)} \tilde{\mathbf{w}}_t - \beta {\tr{\tilde{\mathbf{w}}_t} \tilde{{\Sigma}}_h^j(s,b) \tilde{\mathbf{w}}_t} \rbrace$
            \LineComment{Sample action from the epsilon-greedy policy based on $\pi_c$}
            \State $\mathrm{random\_a} \sim \mathrm{Bernoulli}(\varepsilon)$
            \State \parbox[t]{\dimexpr\textwidth-\leftmargin-\labelsep-\labelwidth}{%
            \bf{if} $\mathrm{random\_a}$ \bf{then} $a \sim \mathrm{Uniform}(\mathcal{A})$ \bf{else} \\ $a \gets \argmax_b \lbrace \tr{\tilde{\bm{\psi}}_h^c(s,b)} \tilde{\mathbf{w}}_t - \beta {\tr{\tilde{\mathbf{w}}_t} \tilde{{\Sigma}}_h^c(s,b) \tilde{\mathbf{w}}_t} \rbrace$ \strut}
            \State Take action $a$ in $M_t$ and observe $r$ and $s'$
            \LineComment{Update reward parameters for the current task}
            \State $\tilde{\mathbf{w}}_t \gets \tilde{\mathbf{w}}_t + \alpha_w (r - \tr{\bm{\phi}(s,a,s')} \tilde{\mathbf{w}}_t) \bm{\phi}(s,a,s')$
            \LineComment{Update the successor features and covariance for the current task}
            \State $a' \gets \argmax_b \max_j \lbrace \tr{\tilde{\bm{\psi}}_h^j(s',b)} \tilde{\mathbf{w}}_t - \beta {\tr{\tilde{\mathbf{w}}_t} \tilde{\Sigma}_h^j(s',b) \tilde{\mathbf{w}}_t} \rbrace$
            \State Update $\tilde{\bm{\psi}}_{h}^t,  \tilde{\Sigma}_{h}^t$ on $(s,a,\bm{\phi}, s',a')$ using (\ref{eqn:mvsf_mean}) and (\ref{eqn:mvsf_covariance})
            \LineComment{Update the successor features and covariance for task $M_c$}
            \If{$c \not= t$}
                \State $a' \gets \argmax_b \lbrace \tr{\tilde{\bm{\psi}}_h^c(s',b)} \tilde{\mathbf{w}}_c - \beta {\tr{\tilde{\mathbf{w}}_c} \tilde{\Sigma}_h^c(s',b) \tilde{\mathbf{w}}_c} \rbrace$
                \State Update $\tilde{\bm{\psi}}_{h}^c, \tilde{\Sigma}_{h}^c$ on $(s,a,\bm{\phi}, s',a')$ using (\ref{eqn:mvsf_mean}) and (\ref{eqn:mvsf_covariance})
            \EndIf
            \State $s \gets s'$
        \EndFor
        \EndFor
  \EndFor
\end{algorithmic}
\end{algorithm}

Both the discounted and total reward episodic settings are amenable to function approximation. However, as discussed in the main text, this ``residual" method is usually not advisable as the approximation errors in the residuals $\bm{\tilde{\delta}}_h$ can dominate the environment uncertainty. While this could be useful for handling \emph{epistemic} or model uncertainty in successor features \citep{janz2018successor}, the intent of our work is to learn the \emph{aleatory} or environment uncertainty. Therefore, a more precise method --- based on the projected Bellman equation --- will be introduced in Appendix \appendixref{subsec:distributional_sf}{A.3}.

\subsection{Mean-Variance Approximation for Discounted MDPs}
\label{subsec:discounted_setting}

\paragraph{Bellman Principle and Augmented MDP.}
The utility objective in the discounted infinite-horizon setting becomes
\begin{equation*}
    \mathcal{Q}_{\beta,\gamma}^\pi(s,a) = U_\beta\left[\sum_{t=0}^\infty \gamma^{t} r(s_t, a_t, s_{t+1})\right],
\end{equation*}
where $\gamma \in (0, 1)$ is a discount factor for future rewards.

In the discounted setting, it is necessary to accumulate and keep track of the discounting over time. This can be implemented by augmenting the state space of the original MDP \citep{bauerle2014more}. Specifically, define $\mathcal{Z} = [0, 1]$ and let $z \in \mathcal{Z}$ denote the state of discounting. For a given MDP $\langle \mathcal{S}, \mathcal{A}, r, P \rangle$, we define the augmented MDP $\langle \mathcal{S}', \mathcal{A}, r', P' \rangle$, with state space $\mathcal{S}' = \mathcal{S} \times \mathcal{Z}$, action space $\mathcal{A}$, reward function 
\begin{equation*}
    r'((s,z),a,(s',z')) = z r(s,a,s'),    
\end{equation*}
and dynamics 
\begin{equation*}
    P'((s',z') | (s, z), a) = P(s' | s, a) \delta_{\gamma z}(z'),
\end{equation*}
where $\delta$ is the Dirac delta function. Applying this augmentation transformation to a set of MDPs with common transition function and common discount factor implies that the set of augmented MDPs will also have the same transition functions. 

Moreover, the following Bellman equation can be derived for the augmented MDP \citep{bauerle2014more}:
\begin{equation}
\label{eqn:entropic_bellman_discounted}
    \begin{aligned}
        \mathcal{J}_{\beta}^\pi(s,a,z) 
        &= U_\beta\left[z r(s,a,s') + \mathcal{J}_\beta^\pi(s',\pi(s',\gamma z), \gamma z) \right] \\
        &= \frac{1}{\beta}\log \mathbb{E}_{s'\sim P(\cdot | s, a)}\left[\exp{\left\lbrace \beta\left( z r(s,a,s') + \mathcal{J}_{\beta}^\pi(s',\pi(s',\gamma z),\gamma z) \right)\right\rbrace} \right].
    \end{aligned}
\end{equation}
Then, we can recover the original utility with $\mathcal{Q}_{\beta,\gamma}^\pi(s,a) = \mathcal{J}_{\beta}^\pi(s,a,1)$. Furthermore, the Bellman equation above converges to a unique fixed point, and so the search for optimal policies can be restricted to stationary Markov policies $\pi : \mathcal{S} \times \mathcal{Z} \to \mathcal{A}$.

However, learning general policies $\pi : \mathcal{S} \times \mathcal{Z} \to \mathcal{A}$ introduces additional difficulties in the function approximation setting. In this case, successor features and their covariance matrices would have to be functions of $z$. For a single transition, their corresponding updates would also require a sweep over all possible values of $z$, e.g. $z = 1, \gamma, \gamma^2, \dots$, and would be computationally demanding. On the other hand, restricting the search to stationary policies $\pi : \mathcal{S} \to \mathcal{A}$ alleviates this computational burden, making the overall time and space complexity per update comparable to the risk-neutral SF representation, and also allows off-the-shelf RL algorithms to be used to learn successor features. This also facilitates more precise estimation of risk using the distributional framework discussed in the next section. 

Fortunately, the restriction to $z$-independent source policies does not affect the validity of Theorem \ref{thm:gpi_aux}, since policy improvement was shown for \emph{arbitrary} admissible policies. This implies that monotone policy improvement is guaranteed even for $z$-independent policies, provided that their utilities can be estimated. In the case of Theorem \ref{thm:gpi}, the approximation error $\varepsilon$ generally arises from two sources of additive error, namely that of restricting optimal policies $\pi_i^*$ to $z$-independent optimal policies $\bar{\pi}_i^*$, and that of approximating utilities using function approximation, e.g.
\begin{align*}
    \varepsilon 
    &= \left|\tilde{\mathcal{J}}_\beta^{\bar{\pi}_i^*}(s,a,1) - {\mathcal{J}}_\beta^{\pi_i^*}(s,a,1) \right| \\
    &= \left|\tilde{\mathcal{J}}_\beta^{\bar{\pi}_i^*}(s,a,1) - \mathcal{J}_\beta^{\bar{\pi}_i^*}(s,a,1) + \mathcal{J}_\beta^{\bar{\pi}_i^*}(s,a,1) - {\mathcal{J}}_\beta^{\pi_i^*}(s,a,1) \right| \\
    &\leq \left|\tilde{\mathcal{J}}_\beta^{\bar{\pi}_i^*}(s,a,1) - \mathcal{J}_\beta^{\bar{\pi}_i^*}(s,a,1) \right| + \left| \mathcal{J}_\beta^{\bar{\pi}_i^*}(s,a,1) - {\mathcal{J}}_\beta^{\pi_i^*}(s,a,1) \right| \\
    &= \left\lbrace\textrm{approximation error of } \mathcal{J}_\beta^{\bar{\pi}_i^*}\right\rbrace + \left\lbrace \textrm{absolute difference between utilities of } \bar{\pi}_i^* \textrm{ and } \pi_i^* \right\rbrace.
\end{align*}
The first source of error arises solely due to the method of function approximation, and can be reduced by using architectures whose training parameters and capacity are well-calibrated for each problem. The second source of error is in general irreducible, but whether it can be tolerated should be traded-off against the difficulty of learning $z$-dependent policies. In general, the learning of $z$-dependent policies tractably is a challenging problem, which we leave for future investigation.

\paragraph{Incorporating Moment Information into GPE in Discounted MDPs.}
We now apply the idea of generalized policy evaluation to discounted objectives. First, observe that, for fixed $\pi : \mathcal{S} \to \mathcal{A}$:
\begin{equation}
\label{eqn:utility_linear_discounted}
    \mathcal{J}_{\beta}^\pi(s,a,1) = U_\beta\left[\sum_{t=0}^\infty \gamma^{t} r(s_t, \pi(s_t), s_{t+1})\right] = U_\beta\left[ \tr{\Psi^\pi(s,a)}\mathbf{w} \right],
\end{equation}
corresponding to the random vector $\Psi^\pi(s,a) = \sum_{t=0}^\infty \gamma^{t} \bm{\phi}_t$ of unrealized feature returns. Thus, we have again transformed the problem of estimating the utility of rewards into the problem of estimating the moments of the random variable $\tr{\Psi^\pi(s,a)}\mathbf{w}$. 

Next, computing the Taylor expansion of $U_\beta$:
\begin{align}
\label{eqn:mvsf_discounted}
    \mathcal{J}_{\beta}^\pi(s,a,1) 
    &= \mathbb{E}_P[ \tr{\Psi^\pi(s,a)}\mathbf{w} ] + \frac{\beta}{2}\mathrm{Var}_P[ \tr{\Psi^\pi(s,a)}\mathbf{w}] + O(\beta^2) \nonumber \\
    &\approx  \tr{\bm{\psi}^\pi(s,a)}\mathbf{w} + \frac{\beta}{2} \tr{\mathbf{w}}\mathrm{Var}_P[\Psi^\pi(s,a)]\mathbf{w}= \mathcal{\tilde{J}}_{\beta}^\pi(s,a,1).
\end{align}
From a practical point of view, the mean-variance approximation in the discounted setting is identical to the episodic total-reward setting, with the exception that the successor features and covariance are discounted (and also time-independent). As in the undiscounted case, (\ref{eqn:mvsf_discounted}) induces an error of $O(\beta^2)$, but is now another instantiation of GPE. However, restricting the search to $z$-independent policies introduces additional approximation error that can also be absorbed into $\varepsilon$, as discussed previously. Crucially, the theoretical results proved for the discounted setting (Appendix \appendixref{subsec:proofs_discounted}{B.3}) will now also hold for $z$-independent stationary policies.

\paragraph{Bellman Updates for Covariance in Discounted MDPs.}
The covariance matrix satisfies the covariance Bellman equation
\begin{equation}
    \label{eqn:covariance_Bellman_discounted}
    \Sigma_h^\pi(s,a) = \mathbb{E}_{s' \sim P(\cdot | s, a)}\left[\bm{\delta}_h \tr{\bm{\delta}_h} +  \gamma^2 \Sigma_{h+1}^\pi(s', \pi_{h+1}(s')) \,|\, s_h = s,\, a_h = a \right],
\end{equation}
Similar to (\ref{eqn:mvsf_mean}) and (\ref{eqn:mvsf_covariance}), in the discounted setting the successor features can be computed as \citep{barreto2017successor}:
\begin{equation}
\label{eqn:mvsf_mean_discounted}
    \begin{aligned}
        \tilde{\bm{\delta}}_h &= \bm{\phi} +  \gamma \tilde{\bm{\psi}}_{h+1}^\pi(s',a') - \tilde{\bm{\psi}}_h^\pi(s,a), \\
        \tilde{\bm{\psi}}_{h}^\pi(s,a) &= \tilde{\bm{\psi}}_{h}^\pi(s,a) + \alpha \tilde{\bm{\delta}}_h.
    \end{aligned}
\end{equation}
Once again, the covariance matrix can be updated per sample following (\ref{eqn:covariance_Bellman_discounted}):
\begin{equation}
    \label{eqn:mvsf_covariance_discounted}
    \begin{aligned}
        \Delta_h &= \tilde{\bm{\delta}}_h \tr{\tilde{\bm{\delta}}_h} +  \gamma^2 \tilde{\Sigma}_{h+1}^\pi(s',a') - \tilde{\Sigma}_h^\pi(s,a), \\
        \tilde{\Sigma}_{h}^\pi(s,a) &= \tilde{\Sigma}_h^\pi(s,a) + \bar{\alpha} \Delta_h.
    \end{aligned}
\end{equation}
In the context of Algorithm \ref{alg:mvsfql}, all calls to (\ref{eqn:mvsf_mean}) and (\ref{eqn:mvsf_covariance}) would be replaced with (\ref{eqn:mvsf_mean_discounted}) and (\ref{eqn:mvsf_covariance_discounted}), respectively.

The convergence of the covariance matrix in the discounted setting (\ref{eqn:covariance_Bellman_discounted}) is established in the following result that can be easily proved using the techniques in Appendix \appendixref{subsec:proofs_covariance}{B.4} for the episodic setting.

\begin{theorem}[\bfseries Convergence of Covariance]
\label{thm:sigma_convergence_discounted}
    Let $\|\cdot\|$ be a matrix-compatible norm, and suppose there exists $\varepsilon : \mathcal{S} \times \mathcal{A} \times \mathcal{T} \to [0, \infty)$ such that $\|\tilde{\bm{\psi}}_h^\pi(s,a) - {\bm{\psi}}_h^\pi(s,a) \|^2 \leq \varepsilon_h(s,a)$ and $\|\mathbb{E}_{s'\sim P(\cdot | s, a)}[\gamma \tilde{\bm{\delta}}_h \tr{(\tilde{\bm{\psi}}_h^\pi(s',\pi_{h+1}(s')) - {\bm{\psi}}_h^\pi(s',\pi_{h+1}(s')))}]\| \leq \varepsilon_h(s,a)$. Then,
    \begin{equation*}
        \left\| {\Sigma}_h^\pi(s,a) - \mathbb{E}_{s' \sim P(\cdot |s,a)}\left[\tilde{\bm{\delta}}_h \tr{\tilde{\bm{\delta}}_h} + \gamma^2 \tilde{\Sigma}_{h+1}^\pi(s',\pi_{h+1}(s')) \right] \right\| \leq 3 \varepsilon_h(s,a).
    \end{equation*}
\end{theorem}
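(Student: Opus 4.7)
The plan is to mirror the proof of Theorem~\ref{thm:sigma_convergence}, treating the $\gamma$-factors inside the discounted Bellman residual $\bm{\delta}_h = \bm{\phi} + \gamma \bm{\psi}_{h+1}^\pi(s', \pi_{h+1}(s')) - \bm{\psi}_h^\pi(s,a)$ carefully. First, I would substitute the discounted covariance Bellman equation (\ref{eqn:covariance_Bellman_discounted}) for ${\Sigma}_h^\pi(s,a)$ into the expression inside the norm, which reduces the task to bounding $\|\mathbb{E}_{s'\sim P(\cdot|s,a)}[\bm{\delta}_h \tr{\bm{\delta}_h} - \tilde{\bm{\delta}}_h \tr{\tilde{\bm{\delta}}_h}]\|$. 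The $\gamma^2$ covariance contribution at step $h+1$ is handled exactly as in the episodic statement, where the target expression implicitly identifies $\tilde{\Sigma}_{h+1}^\pi$ with its true value for the one-step Bellman residual.

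Next, I would apply the symmetric outer-product identity
\begin{equation*}
    \bm{\delta}_h \tr{\bm{\delta}_h} - \tilde{\bm{\delta}}_h \tr{\tilde{\bm{\delta}}_h} = (\bm{\delta}_h - \tilde{\bm{\delta}}_h)\tr{\bm{\delta}_h} + \tilde{\bm{\delta}}_h \tr{(\bm{\delta}_h - \tilde{\bm{\delta}}_h)},
\end{equation*}
and plug in $\bm{\delta}_h - \tilde{\bm{\delta}}_h = \gamma(\bm{\psi}_{h+1}^\pi - \tilde{\bm{\psi}}_{h+1}^\pi)(s',\pi_{h+1}(s')) - (\bm{\psi}_h^\pi - \tilde{\bm{\psi}}_h^\pi)(s,a)$. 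After distributing, three cross-terms survive in expectation, which I would bound individually: the two terms involving the $h$-step $\bm{\psi}$ error reduce to assumption~(i) through Cauchy--Schwarz and the compatible norm, while the $\gamma$-weighted cross-term of $\tilde{\bm{\delta}}_h$ with the $h{+}1$-step $\bm{\psi}$ error is exactly assumption~(ii). Summing via the triangle inequality yields the claimed bound of $3\varepsilon_h(s,a)$.

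The main obstacle is not the algebra itself but the bookkeeping: one must verify that the $\gamma$-scaling in assumption~(ii) precisely absorbs the discount arising from the expansion of $\bm{\delta}_h - \tilde{\bm{\delta}}_h$, and that the $\bm{\psi}$-approximation terms at level $h$ combine through sub-multiplicativity of $\|\cdot\|$ with the squared-norm form of assumption~(i) to produce a clean $\varepsilon_h(s,a)$ bound without extra stray factors. Once this alignment is checked, the proof reduces to three short inequalities, exactly as in the undiscounted case, and the extension from Theorem~\ref{thm:sigma_convergence} is essentially mechanical.
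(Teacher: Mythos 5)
There is a genuine gap: your decomposition does not line up with the two hypotheses, and the surviving cross terms cannot all be bounded by $\varepsilon_h(s,a)$. Write $\bm{e}_h = \tilde{\bm{\psi}}_h^\pi(s,a) - \bm{\psi}_h^\pi(s,a)$ and $\bm{e}_{h+1} = \tilde{\bm{\psi}}_{h+1}^\pi(s',\pi_{h+1}(s')) - \bm{\psi}_{h+1}^\pi(s',\pi_{h+1}(s'))$, so that $\bm{\delta}_h - \tilde{\bm{\delta}}_h = \bm{e}_h - \gamma \bm{e}_{h+1}$. After the one term annihilated by $\mathbb{E}[\bm{\delta}_h \,|\, s_h=s, a_h=a] = 0$, your outer-product identity leaves $\gamma\,\mathbb{E}[\bm{e}_{h+1}\tr{\bm{\delta}_h}]$, $\gamma\,\mathbb{E}[\tilde{\bm{\delta}}_h\tr{\bm{e}_{h+1}}]$ and $\mathbb{E}[\tilde{\bm{\delta}}_h]\tr{\bm{e}_h}$ (up to signs). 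Only the second of these is hypothesis (ii). The first pairs $\bm{e}_{h+1}$ with the \emph{true} residual; converting $\bm{\delta}_h$ back to $\tilde{\bm{\delta}}_h$ reintroduces $\gamma^2\,\mathbb{E}[\bm{e}_{h+1}\tr{\bm{e}_{h+1}}]$, which is controlled by $\varepsilon_{h+1}$ at the successor state, not by $\varepsilon_h(s,a)$. The third requires a bound on $\|\mathbb{E}[\tilde{\bm{\delta}}_h]\| = \|\bm{e}_h - \gamma\,\mathbb{E}[\bm{e}_{h+1}]\|$, which again drags in the step-$(h{+}1)$ error and, via Cauchy--Schwarz, gives $\|\bm{e}_h\|\cdot\|\cdot\|$ rather than the $\|\bm{e}_h\|^2$ that hypothesis (i) controls. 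So the claimed accounting of ``two terms from (i), one from (ii)'' does not close, and stray $\varepsilon_{h+1}$ factors remain. Identifying $\tilde{\Sigma}_{h+1}^\pi$ with the true $\Sigma_{h+1}^\pi$ is also not what the target expression requires.

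The missing ingredient is the orthogonality property of Lemma \ref{lem:bellman_zero}: since $\tilde{\bm{\delta}}_h$ depends only on $(s_h, a_h, s_{h+1})$, we have $\mathbb{E}[\tilde{\bm{\delta}}_h \tr{(\Psi_{h+1}^\pi - \bm{\psi}_{h+1}^\pi)}] = 0$. The paper's proof never forms $\bm{\delta}_h\tr{\bm{\delta}_h} - \tilde{\bm{\delta}}_h\tr{\tilde{\bm{\delta}}_h}$. Instead it centers the true covariance about the \emph{approximate} mean, $\Sigma_h^\pi(s,a) = \mathbb{E}[\tilde{\bm{\xi}}_h\tr{\tilde{\bm{\xi}}_h}] - \bm{e}_h\tr{\bm{e}_h}$ with $\tilde{\bm{\xi}}_h = \Psi_h^\pi - \tilde{\bm{\psi}}_h^\pi$, the correction term being exactly what hypothesis (i) bounds; it then unrolls $\tilde{\bm{\xi}}_h = \tilde{\bm{\delta}}_h + \gamma\tilde{\bm{\xi}}_{h+1}$, so that $\gamma^2\,\mathbb{E}[\tilde{\bm{\xi}}_{h+1}\tr{\tilde{\bm{\xi}}_{h+1}}]$ \emph{is} $\gamma^2\,\mathbb{E}[\tilde{\Sigma}_{h+1}^\pi]$ by definition (the second moment about $\tilde{\bm{\psi}}_{h+1}^\pi$, not the true covariance), and splits the cross term $\gamma\,\mathbb{E}[\tilde{\bm{\delta}}_h\tr{\tilde{\bm{\xi}}_{h+1}}]$ into a piece killed by the orthogonality lemma plus exactly the quantity of hypothesis (ii). This yields $2\varepsilon_h(s,a) + \varepsilon_h(s,a) = 3\varepsilon_h(s,a)$ with no $\varepsilon_{h+1}$ leakage; the only genuinely new feature of the discounted case is the factor $\gamma$ (resp.\ $\gamma^2$) multiplying the cross (resp.\ quadratic) terms, which is already absorbed into hypothesis (ii). You correctly anticipated that the extension is mechanical, but the mechanism has to be the paper's centering-plus-orthogonality argument, not a difference of residual outer products.
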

Please note that this result is identical to Theorem \ref{thm:sigma_convergence}, with the exception of the discount factor. 

\subsection{Histogram Representations for Successor Features}
\label{subsec:distributional_sf}

The theoretical framework for distributional RL is discussed in details in the relevant literature \citep{bellemare2017distributional}. In this appendix, we discuss how this framework can be applied to learn distributions over successor features, and how to use these distributions to select actions in a risk-aware manner.

\paragraph{Learning Distributions over Successor Features.}
As discussed in the main text, the goal is to estimate the distribution of each component in the discounted infinite horizon setting
\begin{equation*}
    \Psi_i^\pi(s,a) = \sum_{t=0}^\infty \gamma^t \phi_i(s_t, a_t, s_{t+1}), 
\end{equation*}
starting from $s_0 = s,\, a_0 = a$, where $a_t = \pi(s_t)$ is selected according to a policy $\pi$. Treating $\Psi_1^\pi, \dots \Psi_d^\pi$ as value functions, we are now able to apply distributional RL.

Specifically, suppose that each state feature component is bounded in a compact interval, e.g. $\phi_i(s,a,s') \in [\phi_i^{min}, \phi_i^{max}]$. Then, we may define corresponding bounds on $\Psi_i^\pi(s,a)$ by bounding the terms of its geometric series representation above:
\begin{equation*}
   \Psi_i^{min} = \frac{\phi_i^{min}}{1-\gamma} \leq \Psi_i^\pi(s,a) \leq \frac{\phi_i^{max}}{1-\gamma} = \Psi_i^{max}.
\end{equation*}
Now, we may model each $\Psi_h^\pi(s,a)$ by using a discrete distribution parameterized by $N \in \mathbb{N}$ and $[\Psi_i^{min}, \Psi_i^{max}]$, whose support is defined by a set of atoms 
\begin{equation*}
    \mathcal{Z}_i = \left\lbrace z_{j,i} = \Psi_i^{min} + j \Delta z_i : 0 \leq j < N \right\rbrace, \, \Delta z_i = \frac{\Psi_i^{max} - \Psi_i^{min}}{N -1},\, \forall i = 1, \dots d.
\end{equation*}
Finally, the atom probabilities for $z_{j,i}$ are given by a parameteric model $\theta_{j,i} : \mathcal{S} \times \mathcal{A} \to \mathbb{R}^N$, e.g.
\begin{equation}
\label{eqn:c51_sf_distribution}
    Z_{\theta,i}(s,a) = z_{j,i} \quad \mathrm{w.p.} \,\, p_{j,i}(s,a) = \frac{e^{\theta_{j,i}(s,a)}}{\sum_j e^{\theta_{j,i}(s,a)}},
\end{equation}
where the softmax layer ensures that probabilities are non-negative and sum to one.

In order to update $p_{j,i}$ on environment transitions $(s,a,\phi_i,s')$, we project the Bellman updates for each $i$ onto the support of $\mathcal{Z}_i$. To do this, given a sample $(s,a,\phi_i,s')$, we compute the projected Bellman update, clipped to the interval $[\Psi_i^{min}, \Psi_i^{max}]$
\begin{equation*}
    \hat{\mathcal{T}}_i z_{j,i} = \mathrm{clip}\left(\phi_i + \gamma z_{j,i}; [\Psi_i^{min}, \Psi_i^{max}] \right),
\end{equation*}
and then distribute its probability $p_{j,i}(s',\pi(s'))$ to the immediate neighbors of $\hat{\mathcal{T}}_i z_{j,i}$. Here, we again follow \citet{bellemare2017distributional} and define the projected operator $\Phi$ with $j$-th component equal to
\begin{equation*}
    (\Phi \hat{\mathcal{T}}_i Z_{\theta,i}(s,a))_j = \sum_{k=0}^{N-1} \mathrm{clip}\left(1 - \frac{\left|\hat{\mathcal{T}}_i z_{k,i} - z_{j,i} \right|}{\Delta z_i}; [0,1] \right) p_{k,i}(s',\pi(s')).
\end{equation*}
As standard in deep RL, we view the target distribution $p_{k,i}(s',\pi(s'))$ as parameterized by a set of frozen parameters $\theta'$. Then, the loss function to optimize for the sample $(s,a,\phi_i,s')$ is given as the cross-entropy term
\begin{equation*}
    \mathcal{L}_i(\theta) = D_{KL}\left(\Phi \hat{\mathcal{T}}_i Z_{\theta',i}(s,a) \,\middle\|\, Z_{\theta,i}(s,a) \right),
\end{equation*}
that can be easily optimized using gradient descent.

\paragraph{Calculating Utilities.}
The calculation of (\ref{eqn:mvsf}) is a trivial matter given the distribution (\ref{eqn:c51_sf_distribution}). In particular, we have:
\begin{equation}
\label{eqn:c51_compute_moments}
\begin{aligned}
    \mathbb{E}[Z_{\theta,i}(s,a)^p] &= \sum_{j=0}^{N-1} {(z_{j,i})}^p p_{j,i}(s,a), \quad p \in \mathbb{N},
\end{aligned}
\end{equation}
from which we can easily compute the variance
\begin{equation}
\label{eqn:c51_compute_var}
    \mathrm{Var}[\Psi_i^\pi(s,a)] = \mathrm{Var}[Z_{\theta,i}(s,a)] = \mathrm{E}[Z_{\theta,i}(s,a)^2] - \mathbb{E}[Z_{\theta,i}(s,a)]^2.
\end{equation}
Recall that by the independence assumption, the cross-covariance terms are ignored in these calculations, and thus $\Sigma_i^\pi(s,a)$ is represented as a diagonal matrix with entries on the $i$-th diagonal term equal to $\mathrm{Var}[\Psi_i^\pi(s,a)]$. 

Another possibility is to compute the entropic utility $U_\beta$ exactly. In particular, using the independence assumption of $\Psi_i^\pi(s,a)$ again, we have:
\begin{align}
    U_\beta[\tr{\Psi^\pi(s,a)} \mathbf{w}] 
    &= \frac{1}{\beta} \log \mathbb{E}\left[e^{\beta \tr{\Psi^\pi(s,a)} \mathbf{w}}\right]
    \approx \sum_{i=1}^d \frac{1}{\beta} \log \mathbb{E}\left[e^{\beta \Psi_i^\pi(s,a) w_i}\right] \nonumber \\
    \label{eqn:c51_utility}
    &= \sum_{i=1,\, w_i \not= 0}^d w_i \frac{1}{\beta w_i} \log \mathbb{E}\left[e^{(\beta w_i) \Psi_i^\pi(s,a) }\right] = \sum_{i=1}^d w_i U_{\beta w_i}[\Psi_i^\pi(s,a)],
\end{align}
and can be seen as another risk-sensitive instantiation of GPE. Crucially, the utility terms in (\ref{eqn:c51_utility}) can be calculated efficiently in the C51 framework using (\ref{eqn:c51_sf_distribution})
\begin{equation*}
    U_{\beta}[\Psi_i^\pi(s,a)] = \frac{1}{\beta}\log  \mathbb{E}\left[e^{\beta Z_{\theta,i}(s,a)}\right] 
    = \frac{1}{\beta} \log \sum_{j=0}^{N-1} e^{\beta z_{j,i}} p_{j,i}(s,a).
\end{equation*}
However, this quantity is difficult to compute numerically, since for negative $\beta$, the terms $e^{\beta z_{j,i}}$ often suffer from overflow at $z_{j,i}$ close to $\Psi_i^{min}$, and underflow for $z_{j,i}$ close to $\Psi_i^{max}$. This becomes considerably more problematic for $\beta$ of larger magnitude, such as when risk-awareness is a priority, or for rewards $\mathbf{w}$ of larger magnitude. We also find that the log-sum-exp trick, a standard computational device used for calculations of this form, offers relatively little improvement. A similar issue has also been previously pointed out in other work using the entropic utility \citepsupp{gosavi2014beyond}. For this reason, we use the mean-variance approximation, which provides an excellent approximation to the entropic utility for various values of $\beta$, as we demonstrated experimentally, and without suffering from the aforementioned issues above.

\paragraph{Pseudocode.}
The approach described above can be applied to compute the distribution of successor features for every component $i = 1, \dots d$ across all training task instances. This results in a new algorithm that we call SFC51. Generally, the training procedure of SFC51 is identical in structure to SFDQN in \citet{barreto2017successor}, except the deterministic DQN update of successor features \citepsupp{mnih2015human} is replaced by the distributional C51 update described above. Therefore, the overall training procedure is similar to Algorithm \ref{alg:mvsfql}, but with a few subtle differences. First, instead of learning $\mathbf{w}$, it is provided to the agent as done in SFDQN. Second, every sample $(s,a,\bm{\phi},s')$ collected from any training is used to update all successor feature distributions simultaneously, as also done in SFDQN. Finally, the utility of returns can be used to select actions, rather than the expected return as done in DQN. Applying this last modification to SFC51 leads our proposed algorithm, which we call \emph{Risk-aware SFC51} (RaSFC51). Of course, SFC51 can be recovered by simply setting $\beta = 0$. A complete description of RaSFC51 with the mean-variance approximation is provided in Algorithm \ref{alg:sfc51}\footnote{In practice, the double for loop starting in lines 18 and 19 can be implemented efficiently by vectoring the computation of $m_{j,i}$, in languages that support vectorized arithmetic operations}.

\begin{algorithm}[!tb]
  \caption{RaSFC51 with Mean-Variance Approximation}
  \label{alg:sfc51}
\begin{algorithmic}[1]
    \State {\bfseries Requires} $m, T, N, N_e \in \mathbb{N}, \,  \varepsilon \in [0, 1], \, \beta, \phi_1^{min}, \phi_1^{max}, \dots \phi_d^{min}, \phi_d^{max} \in \mathbb{R}, \, \bm{\phi} \in \mathbb{R}^d$, $\gamma \in (0, 1)$, $M_1, \dots M_m \in \mathcal{M}$ with $\mathbf{w}_1, \dots \mathbf{w}_m \in \mathbb{R}^d$
    \LineComment{Initialize atoms and their probability distributions}
    \State Initialize ${\bm{\theta}}^1(s,a), \dots {\bm{\theta}}^m(s,a)$ to random values
    \State {\bf{for}} $i = 1, 2\dots d$ {\bf{do}} $\Psi_i^{min} \gets \frac{\phi_i^{min}}{1 - \gamma},\, \Psi_i^{max} \gets \frac{\phi_i^{max}}{1 - \gamma}$, $\Delta z_i \gets \frac{\Psi_i^{max} - \Psi_i^{min}}{N - 1}$
    \State {\bf{for}} $i = 1, 2 \dots d$ {\bf{do}} {\bf{for}} $j = 0, 1 \dots N - 1$ {\bf{do}} $z_{j,i}\gets \Psi_i^{min} + j \Delta z_i$
    \LineComment{Main training loop}
    \For{$t=1, 2\dots m$}
        \IndentLineComment{Commence training on task $M_t$}
        \For{$n_e = 1, 2 \dots N_e$}
        \State Initialize task $M_t$ with initial state $s$
        \For{$h=0,1\dots T$} 
            \IndentLineComment{Extract sufficient statistics from $\bm{\theta}^t(s,\cdot)$ and select the source task $M_c$ using GPI}
            \State {\bf{for}} $j = 1, 2 \dots m$ {\bf{do}} Compute $\tilde{\bm{\psi}}^j(s,\cdot), \, \tilde{\Sigma}^j(s,\cdot)$ using $\bm{\theta}^j(s,\cdot)$ and (\ref{eqn:c51_compute_moments}) and (\ref{eqn:c51_compute_var})
            \State $c \gets \argmax_j \max_b \lbrace \tr{\tilde{\bm{\psi}}^j(s,b)} \mathbf{w}_t - \beta {\tr{\mathbf{w}_t} \tilde{{\Sigma}}^j(s,b) \mathbf{w}_t} \rbrace$
            \LineComment{Sample action from the epsilon-greedy policy based on $\pi_c$}
            \State $\mathrm{random\_a} \sim \mathrm{Bernoulli}(\varepsilon)$
            \State \parbox[t]{\dimexpr\textwidth-\leftmargin-\labelsep-\labelwidth}{%
            \bf{if} $\mathrm{random\_a}$ \bf{then} $a \sim \mathrm{Uniform}(\mathcal{A})$ \bf{else} \\ $a \gets \argmax_b \lbrace \tr{\tilde{\bm{\psi}}^c(s,b)} \mathbf{w}_t - \beta {\tr{\mathbf{w}_t} \tilde{{\Sigma}}^c(s,b) \mathbf{w}_t} \rbrace$ \strut}
            \State Take action $a$ in $M_t$ and observe $r$ and $s'$
            \LineComment{Update ${\bm{\theta}}^1(s,a), \dots {\bm{\theta}}^m(s,a)$}
            \For{$c = 1,2\dots m$}
                \IndentLineComment{Extract sufficient statistics from $\bm{\theta}^c(s',\cdot)$ and select action $a' = \pi_c(s')$}
                \State Compute $\tilde{\bm{\psi}}^c(s',\cdot), \, \tilde{\Sigma}^c(s',\cdot)$ using $\bm{\theta}^c(s',\cdot)$ and (\ref{eqn:c51_compute_moments}) and (\ref{eqn:c51_compute_var})
                \State $a' \gets \argmax_b \lbrace \tr{\tilde{\bm{\psi}}^c(s',b)} \mathbf{w}_c - \beta {\tr{\mathbf{w}_c} \tilde{\Sigma}^c(s',b) \mathbf{w}_c} \rbrace$
                \LineComment{Apply Categorical Algorithm to update $\bm{\theta}^c(s,a)$}
                \State {\bf{for}} $j = 0, 1\dots N - 1$ {\bf{do}} {\bf{for}} $i = 1, \dots d$ {\bf{do}} $m_{j,i} \gets 0$
                \For{$i=1,2\dots d$}
                \For{$j=0,1\dots N-1$}
                    \IndentLineComment{Compute the projection of $\hat{\mathcal{T}}_i z_{j,i}$ onto the support $\mathcal{Z}_i$}
                    \State $\hat{\mathcal{T}}_i z_{j,i} \gets \mathrm{clip}\left(\phi_i(s,a,s') + \gamma z_{j,i}; [\Psi_i^{min}, \Psi_i^{max}] \right)$
                    \State $b_{j,i} \gets (\hat{\mathcal{T}}_i z_{j,i} - \Psi_i^{min})/ \Delta z_i$
                    \State $l \gets \lfloor b_{j,i} \rfloor, \, u \gets \lceil b_{j,i} \rceil$
                    \LineComment{Distribute probability of $\hat{\mathcal{T}}_i z_{j,i}$}
                    \State $m_{l,i} \gets m_{l,i} + p_{j,i}^c(s',a')(u-b_{j,i})$
                    \State $m_{u,i} \gets m_{u,i} + p_{j,i}^c(s',a')(b_{j,i} - l)$
                \EndFor
                \EndFor
                \State Backpropagate through $-\sum_{j,i} m_{j,i} \log p_{j,i}^c(s,a)$ to update $\bm{\theta}^c(s,a)$
            \EndFor
            \State $s \gets s'$
        \EndFor
        \EndFor
  \EndFor
\end{algorithmic}
\end{algorithm}

\subsection{Possible Generalizations of the Mean-Variance Approximation}
\label{subsec:parametric}

\paragraph{Cumulant-Generating Functions.}
The quantity $K_R(\beta) = \log \mathbb{E}[e^{\beta R}]$ in (\ref{eqn:entropic}) is often referred to as the \emph{cumulant-generating function}. The cumulant generating function admits the well-known Taylor expansion:
\begin{equation}
\label{eqn:entropic_utility_expansion}
    U_\beta[R] = \frac{1}{\beta} K_R(\beta) = \frac{1}{\beta} \sum_{n=1}^\infty \kappa_R(n) \frac{\beta^n}{n!} = \sum_{n=1}^\infty \kappa_R(n) \frac{\beta^{n-1}}{n!},
\end{equation}
where $\kappa_R(n)$ is the $n$-th \emph{cumulant} of the random variable $R$ \citepsupp{weisstein}. The mean-variance approximation (\ref{eqn:mvsf}) then follows directly from (\ref{eqn:entropic_utility_expansion}) by ignoring all terms of order $n \geq 3$. Another way to look at the mean-variance approximation is that it is the result of applying a \emph{Laplace approximation} to the return distribution prior to calculating its utility \citepsupp{bishop2006pattern}. While it is also possible to approximate (\ref{eqn:entropic_utility_expansion}) using orders of $n$ greater than 2, such approximations would no longer provide ``instantaneous" GPE. In particular, cumulants are much harder to compute as functions of $\mathbf{w}$ for $n = 3$ \citepsupp{boudt2020coskewness}, and no closed formulas are even known to us for $n \geq 4$. 

\paragraph{Elliptical Distributions.} 
The mean-variance approximation (\ref{eqn:mvsf}) results from making the distributional assumption $\Psi_h^\pi(s,a) \sim \mathcal{N}(\bm{\psi}_h^\pi(s,a), \Sigma_h^\pi(s,a))$. Since the normal distribution is a member of the class of {elliptical distributions}, a natural question to ask is whether GPE can apply to other members of this class of distributions as well. 

Formally, a random variable $X$ has an \emph{elliptical distribution} on $\mathbb{R}^d$ if there exists $\bm{\mu} \in \mathbb{R}^d$, positive definite $\Sigma \in \mathbb{R}^{d\times d}$ and a positive-valued function $\xi : \mathbb{R} \to \mathbb{R}$, and the characteristic function of $X$ has the form
    \begin{equation}
    \label{eqn:elliptical_characteristic}
        \mathbb{E}[e^{i \tr{\mathbf{t}} X}] = e^{i \tr{\mathbf{t}} \bm{\mu}} \xi(\tr{\mathbf{t}} \Sigma \mathbf{t}), \quad \forall \mathbf{t} \in \mathbb{R}^d.
    \end{equation}
Equivalently, for any random variable with characteristic function (\ref{eqn:elliptical_characteristic}), there exists a positive function $g_d : \mathbb{R} \to \mathbb{R}$ such that the density of $X$ is 
\begin{equation*}
    f_X(\mathbf{x}) \propto | \Sigma |^{-1/2} g_d\left(\tr{(\mathbf{x} - \bm{\mu})} \Sigma^{-1}(\mathbf{x} - \bm{\mu}) \right).
\end{equation*}
In either case, we write $X \sim \mathcal{E}_d(\bm{\mu}, \Sigma, \xi)$. One advantage of this parameterization is that $\bm{\mu}$ corresponds exactly to the mean of $X$, e.g. $\mathbb{E}[X] = \bm{\mu}$. Furthermore, if the covariance of $X$ exists, then it is equal to $\Sigma$ up to a positive multiplicative constant, e.g. $\mathrm{Var}[X] = c\Sigma$ for some $c > 0$\footnote{This implies that the Bellman updates (\ref{eqn:mvsf_covariance}) or (\ref{eqn:mvsf_covariance_discounted}) can still be used to learn $\Sigma_h^\pi$, but now the resulting estimates must be scaled by $c$ when computing the utilities, if $c$ is not one.}.

In order to connect this to the SF framework, we parameterize $\Psi_h^\pi(s,a) \sim \mathcal{E}_d(\bm{\psi}_h^\pi(s,a), \Sigma_h^\pi(s,a), \xi)$. Then, using the linearity property (\ref{eqn:utility_linear}, \ref{eqn:utility_linear_discounted}), GPE evaluates the entropic utilities of the random variables $\tr{\Psi_h^\pi(s,a)} \mathbf{w}$. Fortunately, affine transforms of elliptically distributed random variables are univariate elliptically distributed \citepsupp{owen1983class}.

\begin{lemma}
\label{lem:elliptical_properties}
    Let $X \sim \mathcal{E}_d(\bm{\mu}, \Sigma, \xi)$ and $\mathbf{w} \in \mathbb{R}^d$. Then, $\tr{X}\mathbf{w} \sim \mathcal{E}_1(\tr{\bm{\mu}} \mathbf{w}, \tr{\mathbf{w}}\Sigma \mathbf{w}, \xi)$.
\end{lemma}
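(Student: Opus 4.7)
The plan is to verify the claim by computing the characteristic function of $Y = \tr{X}\mathbf{w}$ directly and checking that it matches the form prescribed by (\ref{eqn:elliptical_characteristic}) for a univariate elliptical distribution with the stated parameters. This is the cleanest route since the elliptical family is defined in the excerpt through its characteristic function, and the linearity of inner products under characteristic functions makes this a one-line reduction.

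Concretely, I would start by fixing $s \in \mathbb{R}$ and writing
\begin{equation*}
    \mathbb{E}[e^{i s Y}] = \mathbb{E}[e^{i s \tr{X}\mathbf{w}}] = \mathbb{E}[e^{i \tr{(s\mathbf{w})} X}].
\end{equation*}
Then I would apply the hypothesis that $X \sim \mathcal{E}_d(\bm{\mu}, \Sigma, \xi)$ to the vector $\mathbf{t} := s\mathbf{w} \in \mathbb{R}^d$, obtaining
\begin{equation*}
    \mathbb{E}[e^{i s Y}] = e^{i \tr{(s\mathbf{w})} \bm{\mu}} \, \xi\bigl(\tr{(s\mathbf{w})} \Sigma (s\mathbf{w})\bigr) = e^{i s (\tr{\bm{\mu}}\mathbf{w})} \, \xi\bigl(s^2\, \tr{\mathbf{w}} \Sigma \mathbf{w}\bigr).
\end{equation*}
Matching this with the defining form (\ref{eqn:elliptical_characteristic}) in dimension $d=1$ (where $\mathbf{t}=s$ and the quadratic form $\tr{\mathbf{t}}\Sigma\mathbf{t}$ becomes $s^2 \cdot (\tr{\mathbf{w}}\Sigma\mathbf{w})$) yields $Y \sim \mathcal{E}_1(\tr{\bm{\mu}}\mathbf{w}, \tr{\mathbf{w}}\Sigma\mathbf{w}, \xi)$, which is exactly the claim. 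Uniqueness of distributions given their characteristic functions then closes the argument.

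There is essentially no hard step here; the only minor subtlety is the degenerate case $\mathbf{w} = \mathbf{0}$ (or more generally $\tr{\mathbf{w}}\Sigma\mathbf{w} = 0$), in which $Y$ is almost surely the constant $\tr{\bm{\mu}}\mathbf{w}$ and the ``scale'' parameter vanishes; this still fits the elliptical framework as a point mass, so no additional care beyond a brief remark is needed. The fact that the generator $\xi$ is inherited unchanged from $X$ is what makes this lemma useful downstream: it means that any moment or utility functional of $\tr{\Psi_h^\pi(s,a)}\mathbf{w}$ can be expressed analytically in the two sufficient statistics $\tr{\bm{\psi}_h^\pi(s,a)}\mathbf{w}$ and $\tr{\mathbf{w}}\Sigma_h^\pi(s,a)\mathbf{w}$, preserving the instantaneous-GPE property of successor features.
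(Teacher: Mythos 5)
Your argument is correct: substituting $\mathbf{t} = s\mathbf{w}$ into the defining characteristic function (\ref{eqn:elliptical_characteristic}) and matching the result against the $d=1$ form is exactly the standard proof of this closure property, and your handling of the degenerate case $\tr{\mathbf{w}}\Sigma\mathbf{w}=0$ is a reasonable aside. The paper does not actually prove this lemma but instead cites it from the literature on elliptical distributions, so your one-line characteristic-function computation supplies precisely the argument that the cited reference contains.
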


\begin{wrapfigure}{r}{0.3\linewidth}
\centering
    \includegraphics[width=0.999\linewidth]{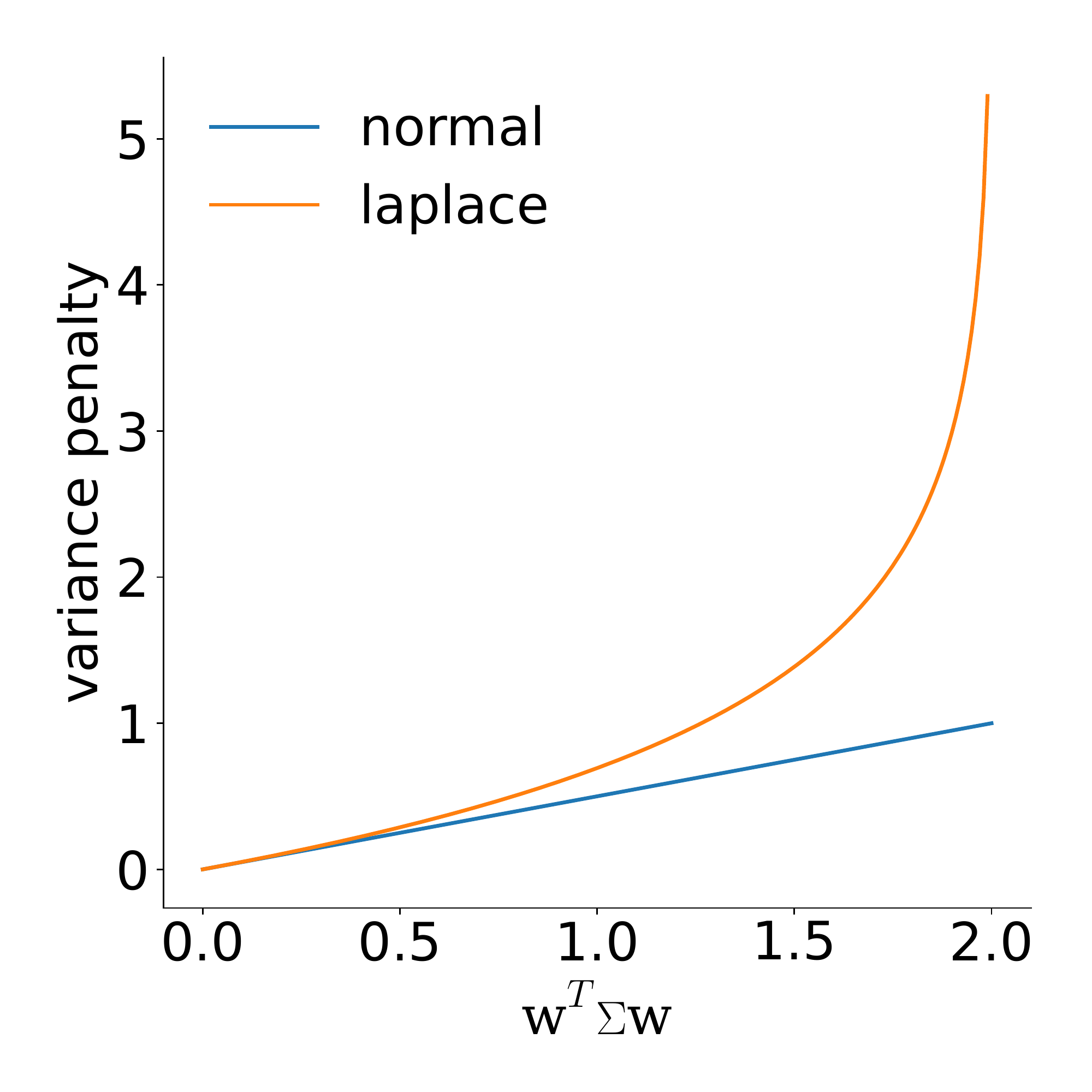}
    \caption{Comparing $\frac{1}{\beta}\log \circ \xi$ in (\ref{eqn:elliptical_mvsf}) for normal and Laplace distributions, for $\beta = 1$.}
    \label{fig:elliptical_utilities}
    \vspace{-2.5em}
\end{wrapfigure}

Applying Lemma \ref{lem:elliptical_properties} and then substituting $t = -i \beta$, the entropic utility becomes
\begin{equation}
\label{eqn:elliptical_mvsf}
    U_\beta[\tr{\Psi_h^\pi(s,a)}\mathbf{w}] = \tr{\bm{\psi}_h^\pi(s,a)} \mathbf{w} + \frac{1}{\beta}\log \xi\left(-\beta^2 \tr{\mathbf{w}} \Sigma_h^\pi(s,a) \mathbf{w} \right),
\end{equation}
and is also a mean-variance approximation. However, unlike (\ref{eqn:mvsf}) in which $\frac{1}{\beta}\log \circ \xi$ is the identity mapping, (\ref{eqn:elliptical_mvsf}) is allowed to depend \emph{non-linearly} on the variance of returns. For heavy-tailed distributions, $\xi$ should increase super-linearly for sufficiently large return variances, and thus (\ref{eqn:elliptical_mvsf}) will often be more sensitive to variance than (\ref{eqn:mvsf}). This phenomenon is clearly illustrated in Figure \ref{fig:elliptical_utilities} by comparing the variance penalties of the normal and Laplace distributions. Another advantage of this generalization is that the methodologies for estimating successor features and their covariances (Appendix \appendixref{subsec:algorithmic_total_reward}{A.1} and \appendixref{subsec:distributional_sf}{A.3}) can be directly applied to this more general setting.

\begin{table}[!tb]
\centering
    \begin{tabular}{ccc} \toprule
        Name & Parameters & $\frac{1}{\beta}\log \circ \xi$ \\ \midrule
        multivariate normal & $\bm{\mu}, \bm{\Sigma}$ & $\frac{\beta}{2} \tr{\mathbf{w}} \Sigma \mathbf{w}$ \\
        multivariate Student & $\bm{\mu}, \bm{\Sigma}, \nu$ & does not exist \\
        multivariate Laplace & $\bm{\mu}, \bm{\Sigma}$ & $-\frac{1}{\beta} \log(1 - \frac{\beta^2}{2}\tr{\mathbf{w}} \Sigma \mathbf{w})$ \\
        multivariate logistic & $\bm{\mu}, \bm{\Sigma}$ & $\frac{1}{\beta} \log \mathrm{B}(1-\beta \sqrt{\tr{\mathbf{w}} \Sigma \mathbf{w}}, 1 + \beta \sqrt{\tr{\mathbf{w}} \Sigma \mathbf{w}})$ \\
        \bottomrule
    \end{tabular}
\caption{Table of common elliptical distributions with corresponding variance penalties. Here $\mathrm{B}$ denotes the Beta function.}
\label{table:elliptical}
\end{table}

\paragraph{Skew-Elliptical Distributions.}
The elliptical distributions represent a well-known class of symmetric probability distributions, containing both heavy-tailed and light-tailed members as special cases. However, they cannot capture skew in the return distribution that often arises in strictly discounted MDPs. The class of \emph{generalized skew-elliptical distributions} (GSE) can model skew by extending the characteristic function (\ref{eqn:elliptical_characteristic}) to 
\begin{equation}
    \label{eqn:skew_elliptical_characteristic}
        \mathbb{E}[e^{i \tr{\mathbf{t}} X}] = e^{i \tr{\mathbf{t}} \bm{\mu}} \xi(\tr{\mathbf{t}} \Sigma \mathbf{t}) k_d(\mathbf{t}), \quad \forall \mathbf{t} \in \mathbb{R}^d,
\end{equation}
where $k_d : \mathbb{R}^d \to \mathbb{R}$ is some positive-valued function. In this case, we write $X \sim \mathcal{SE}_d(\bm{\mu}, \Sigma, \xi)$.

As for the elliptical distributions (Lemma \ref{lem:elliptical_properties}), it is possible to show that GSE distributions are also closed under affine transforms \citepsupp{shushi2018generalized}.

\begin{lemma}
\label{lem:skew_elliptical_properties}
    Let $X \sim \mathcal{SE}_d(\bm{\mu}, \Sigma, \xi)$ and $\mathbf{w} \in \mathbb{R}^d$. Then, $\tr{X}\mathbf{w}$ is univariate GSE with characteristic function $\mathbb{E}[e^{it\tr{X}\mathbf{w}}] = e^{it\tr{X}\bm{\mu}} \xi\left(t^2 \tr{\mathbf{w}}\Sigma \mathbf{w}\right) k(t;\mathbf{w},\Sigma)$ for some real-valued function $k$.
\end{lemma}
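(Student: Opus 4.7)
The plan is to obtain the characteristic function of $Y = \tr{X}\mathbf{w}$ by a direct one-dimensional substitution into the multivariate characteristic function (\ref{eqn:skew_elliptical_characteristic}), and then identify the resulting expression with the univariate GSE template. This mirrors how Lemma \ref{lem:elliptical_properties} is proved in the symmetric (elliptical) case; the only novelty is tracking the extra skew factor $k_d$.

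First, I would observe that for any $t \in \mathbb{R}$,
\begin{equation*}
    \mathbb{E}\bigl[e^{i t Y}\bigr] = \mathbb{E}\bigl[e^{i t \tr{\mathbf{w}} X}\bigr] = \mathbb{E}\bigl[e^{i \tr{(t\mathbf{w})} X}\bigr],
\end{equation*}
which is precisely the multivariate characteristic function of $X$ evaluated at $\mathbf{t} = t\mathbf{w} \in \mathbb{R}^d$. Substituting into (\ref{eqn:skew_elliptical_characteristic}) gives
\begin{equation*}
    \mathbb{E}\bigl[e^{i t Y}\bigr] = e^{i t \tr{\mathbf{w}} \bm{\mu}} \, \xi\bigl(t^2 \tr{\mathbf{w}} \Sigma \mathbf{w}\bigr) \, k_d(t \mathbf{w}),
\end{equation*}
where I have used the identities $\tr{(t\mathbf{w})} \bm{\mu} = t \tr{\mathbf{w}} \bm{\mu}$ and $\tr{(t\mathbf{w})} \Sigma (t\mathbf{w}) = t^2 \tr{\mathbf{w}} \Sigma \mathbf{w}$.

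Finally, I would define $k(t; \mathbf{w}, \Sigma) = k_d(t\mathbf{w})$, which is a positive-valued function of $t$ alone once $\mathbf{w}$ and $\Sigma$ are held fixed. Substituting this into the previous display matches the univariate GSE template in (\ref{eqn:skew_elliptical_characteristic}) with location $\tr{\mathbf{w}}\bm{\mu}$, scale $\tr{\mathbf{w}}\Sigma\mathbf{w}$, same shape generator $\xi$, and skew function $k$, completing the argument.

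There is really no hard step here: the lemma is essentially a restricted restatement of the defining characteristic function along a one-dimensional subspace. The only point worth highlighting in the write-up is that the shape generator $\xi$ is preserved exactly (which is why the same $\xi$ appears in the statement), whereas the skew function unavoidably changes because $k_d$ was defined on $\mathbb{R}^d$ and must be restricted to the line $\{t\mathbf{w} : t \in \mathbb{R}\}$. This is also why the statement only asserts the existence of some $k$, rather than identifying it with $k_d$ itself.
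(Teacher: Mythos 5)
Your proof is correct and is exactly the intended argument: the paper does not spell out a proof for this lemma (it simply cites \citetsupp{shushi2018generalized}), and the direct substitution $\mathbf{t} = t\mathbf{w}$ into (\ref{eqn:skew_elliptical_characteristic}) followed by setting $k(t;\mathbf{w},\Sigma) = k_d(t\mathbf{w})$ is precisely the one-line verification that the citation stands in for, mirroring Lemma \ref{lem:elliptical_properties}. As a minor aside, your derivation also implicitly corrects a typo in the statement: the location term should read $e^{it\tr{\mathbf{w}}\bm{\mu}}$ rather than $e^{it\tr{X}\bm{\mu}}$, which is what your computation produces.
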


By using Lemma \ref{lem:skew_elliptical_properties} and the substitution $t = -i \beta$,
\begin{equation*}
    U_\beta[\tr{\Psi_h^\pi(s,a)}\mathbf{w}] = \tr{\bm{\psi}_h^\pi(s,a)} \mathbf{w} + \frac{1}{\beta}\log \xi\left(-\beta^2 \tr{\mathbf{w}} \Sigma_h^\pi(s,a) \mathbf{w} \right) + \frac{1}{\beta}\log k(-i \beta; \mathbf{w}, \Sigma_h^\pi(s,a)).
\end{equation*}
This new approximation generalizes (\ref{eqn:elliptical_mvsf}) through the introduction of the term $\log k$, which intuitively captures the skew of the return distribution. 

\paragraph{Mixtures Densities.}
One significant limitation of elliptical (and skew-elliptical) distributions to model returns is that they are unimodal, and can fail to capture multimodal risks in the environment. Consider the following \emph{mixture of elliptical distributions} on $\mathbb{R}^d$:
\begin{equation*}
    \begin{aligned}
        I &\sim \mathrm{Categorical}_K(\bm{\pi}), \\
        X \,|\, I = k &\sim \mathcal{E}_d(\bm{\mu}_k, \Sigma_k, \xi_k),
    \end{aligned}
\end{equation*}
where $\bm{\pi} \in \mathbb{R}^K$ satisfies $\pi_k \geq 0$ and $\sum_k \pi_k = 1$ and $k = 1, \dots K$ define the possible modes of the distribution. In other words, each component of the mixture is a member of an elliptical distribution. This model extends the standard Gaussian mixture model, and can approximate any continuous distribution to arbitrary accuracy provided $K$ is chosen sufficiently large \citepsupp{scott2015multivariate,titterington1985statistical}. In the context of risk-aware transfer (Theorem \ref{thm:gpi_aux}, Theorem \ref{thm:gpi}) this means that the approximation error terms in $\varepsilon$ associated with approximating $U_\beta$ could in principle be driven to zero.

Applying Lemma \ref{lem:elliptical_properties} to each component, $\tr{X}\mathbf{w} \,|\, I = k \sim \mathcal{E}_1(\tr{\bm{\mu}_k}\mathbf{w}, \tr{\mathbf{w}}\Sigma_k\mathbf{w}, \xi_k)$, and so $\tr{X}\mathbf{w}$ is a mixture of univariate elliptical distributions. Now, (\ref{eqn:utility_linear}, \ref{eqn:utility_linear_discounted}) can be computed using the law of total expectation,
\begin{align*}
    U_\beta[\tr{\Psi_h^\pi(s,a)}\mathbf{w}] 
    = \frac{1}{\beta} \log \sum_{k=1}^K \pi_{k}^\pi(s,a) e^{\beta \tr{\psi_{k}^\pi(s,a)}\mathbf{w}} \xi_k(-\beta^2 \tr{\mathbf{w}} \Sigma_{k}^\pi(s,a)\mathbf{w}).
\end{align*}
This expression does not simplify further unless $K = 1$ and is thus \emph{not} a mean-variance approximation, although it \emph{is} a generalization of (\ref{eqn:elliptical_mvsf}). It can be computed numerically by using the log-sum-exp trick, and the parameters of its associated mixture density could be learned in the Bellman framework using expectation propagation \citepsupp{morimura2010parametric}.

\section{Proofs of Theoretical Results}
\label{sec:proofs}

In this section, we verify all the theoretical claims stated in the main paper for the episodic MDPs. For ease of exposition and due to space limitations, we also state and prove similar results for discounted MDPs in this section\footnote{While the analysis of GPI in the risk-neutral setting \citep{barreto2017successor} follows \citetsupp{strehl2005theoretical}, our analysis of risk-aware GPI is also inspired by \citetsupp{huang2020stochastic}.}.

\subsection{Proof of Lemma \ref{lem:entropic_properties}}
\label{subsec:proof_entropic_properties}

\begin{customlemma}{1}
    Let $\beta \in \mathbb{R}$ and $X, Y$ be arbitrary random variables on $\Omega$. Then:
    \begin{enumerate}[label=\textbf{A\arabic*},noitemsep]
        \item (monotonicity) if $\mathbb{P}(X \geq Y) = 1$ then $U_\beta[X] \geq U_\beta[Y]$
        \item (cash invariance) $U_\beta[X + c] = U_\beta[X] + c$ for every $c \in \mathbb{R}$
        \item (convexity) if $\beta < 0$ ($\beta > 0$) then $U_\beta$ is a concave (convex) function
        \item (non-expansion) for $f, g : \Omega \to \Omega$, it follows that 
        \begin{equation*}
            \left|U_\beta[f(X)] - U_\beta[g(X)]\right| \leq \sup_{P \in \mathscr{P}_X(\Omega)}\mathbb{E}_{P}|f(X)-g(X)|,
        \end{equation*}
        where $\mathscr{P}_X(\Omega)$ is the set of all probability distributions on $\Omega$ that are absolutely continuous w.r.t. the true distribution of $X$.
    \end{enumerate}
\end{customlemma}

\begin{proof}
    The first three properties are derived in \citet{follmer2002convex}. As for the fourth property, we use {\bfseries A3} and convex duality \citep{follmer2002convex}, \citepsupp{maccheroni2006ambiguity} to write
    \begin{equation}
    \label{eqn:convex_duality}
        U_\beta[R] = \sup_{P \in \mathscr{P}_R(\Omega)} \left\lbrace \mathbb{E}_{P}[R] - \frac{1}{\beta}D(P || P^*) \right\rbrace,
    \end{equation}
    where $\mathscr{P}_R(\Omega)$ is the set of all probability distributions on $\Omega$ absolutely continuous w.r.t. the true distribution $P^*$ of $R$, $D$ is the KL-divergence between $P$ and $P^*$. Now, for $f, g : \Omega \to \Omega$, $f$ and $g$ are bounded and hence $P$-integrable for any $P \in \mathscr{P}_R(\Omega)$, and using (\ref{eqn:convex_duality}):
    \begin{align*}
        &\left|U_\beta[f(X)] - U_\beta[g(X)] \right|\\
        &= \left|\sup_{P \in \mathscr{P}_X(\Omega)} \left\lbrace \mathbb{E}_{P}[f(X)] - \frac{1}{\beta}D(P || P^*) \right\rbrace - \sup_{P \in \mathscr{P}_X(\Omega)} \left\lbrace \mathbb{E}_{P}[g(X)] - \frac{1}{\beta}D(P || P^*) \right\rbrace \right| \\
        &\leq \sup_{P \in \mathscr{P}_X(\Omega)} \left|\mathbb{E}_{P}[f(X)] -  \mathbb{E}_{P}[g(X)] \right| \\
        &\leq \sup_{P \in \mathscr{P}_X(\Omega)} \mathbb{E}_{P}| f(X) - g(X)|.
    \end{align*}
    This completes the proof.
\end{proof}

\subsection{Proofs of Theorem \ref{thm:gpi_aux} and \ref{thm:gpi} for Episodic MDPs}
\label{subsec:proofs_episodic}

\begin{customtheorem}{1}
        Let $\pi_1, \dots \pi_n$ be arbitrary deterministic Markov policies with utilities $\tilde{\mathcal{Q}}_{h,\beta}^{\pi_1}, \dots \tilde{\mathcal{Q}}_{h,\beta}^{\pi_n}$ evaluated in an arbitrary task $M$, such that $|\tilde{\mathcal{Q}}_{h,\beta}^{\pi_i}(s,a) - \mathcal{Q}_{h,\beta}^{\pi_i}(s,a)| \leq \varepsilon$ for all $s \in \mathcal{S}$, $a \in \mathcal{A}$, $i = 1 \dots n$ and $h \in \mathcal{T}$. Define
    \begin{equation*}
        \pi_h(s) \in \argmax_{a \in \mathcal{A}} \max_{i=1\dots n} \tilde{\mathcal{Q}}_{h,\beta}^{\pi_i}(s,a), \quad \forall s \in \mathcal{S}.
    \end{equation*}
    Then,
    \begin{equation*}
        \mathcal{Q}_{h,\beta}^{\pi}(s, a) \geq  \max_{i}\mathcal{Q}_{h,\beta}^{\pi_i}(s, a) - 2 ({T - h + 1}) \varepsilon, \quad h \leq T.
    \end{equation*}
\end{customtheorem}

\begin{proof}
    We have for all $h$ that
    \begin{align*}
        |\max_i \mathcal{Q}_{h}^{\pi_i}(s, a) - \max_i \tilde{\mathcal{Q}}_{h}^{\pi_i}(s, a)|\leq \max_i | \mathcal{Q}_{h}^{\pi_i}(s, a) - \tilde{\mathcal{Q}}_{h}^{\pi_i}(s, a) | \leq \varepsilon
    \end{align*}
    We proceed by induction on $h$. Clearly, the desired result holds for $h = T + 1$ since $\mathcal{Q}_{T+1,\beta}^{\pi_i}(s,a) = \tilde{\mathcal{Q}}_{T+1,\beta}^{\pi_i}(s,a) = 0$ uniformly. Next, suppose that $\mathcal{Q}_{h+1,\beta}^\pi(s,a) \geq \max_i \mathcal{Q}_{h+1,\beta}^{\pi_i}(s,a) - 2 \varepsilon (T - h)$ holds uniformly at time $h + 1$. Using {\bfseries A1} and {\bfseries A2} of Lemma \ref{lem:entropic_properties}:
    \begin{align*}
        \mathcal{Q}_{h,\beta}^\pi(s,a)
        &= U_\beta[r(s,a,s') + \mathcal{Q}_{h +1,\beta}^\pi(s',\pi_{h+1}(s'))] \\
        &\geq U_\beta[r(s,a,s') + \max_i \mathcal{Q}_{h +1,\beta}^{\pi_i}(s',\pi_{h+1}(s')) - 2\varepsilon(T-h)] \\
        &\geq U_\beta[r(s,a,s') + \max_i \tilde{\mathcal{Q}}_{h +1,\beta}^{\pi_i}(s',\pi_{h+1}(s')) - 2\varepsilon(T-h) - \varepsilon] \\
        &= U_\beta[r(s,a,s') + \max_{a'}\max_i \tilde{\mathcal{Q}}_{h +1,\beta}^{\pi_i}(s',a') - 2\varepsilon(T-h) - \varepsilon] \\
        &\geq U_\beta[r(s,a,s') + \max_{a'}\max_{i} {\mathcal{Q}}_{h+1,\beta}^{\pi_i}(s',a') - 2\varepsilon(T-h + 1)] \\
        &\geq U_\beta[r(s,a,s') + \max_{i}\max_{a'} {\mathcal{Q}}_{h+1,\beta}^{\pi_i}(s',a')] - 2\varepsilon(T-h + 1) \\
        &\geq U_\beta[r(s,a,s') + \max_{i} {\mathcal{Q}}_{h+1,\beta}^{\pi_i}(s',\pi_{i,h+1}(s'))] - 2\varepsilon(T-h + 1) \\
        &\geq U_\beta[r(s,a,s') + {\mathcal{Q}}_{h+1,\beta}^{\pi_i}(s',\pi_{i,h+1}(s'))] - 2\varepsilon(T-h + 1) \\
        &= \mathcal{Q}_{h,\beta}^{\pi_i}(s,a) - 2\varepsilon(T-h + 1).
    \end{align*}
    Since $i$ is arbitrary, the proof is complete.
\end{proof}

\begin{lemma}
\label{lem:aux1}
    Let $\mathcal{Q}_{h}^{ij}$ be the utility of policy $\pi_i^*$ evaluated in task $j$ at time $h$. Then for all $i, j$,
    \begin{equation*}
        \sup_{s,a} \left|\mathcal{Q}_{h}^{ii}(s,a) - \mathcal{Q}_{h}^{jj}(s,a) \right| \leq (T - h + 1)\delta_{ij}.
    \end{equation*}
\end{lemma}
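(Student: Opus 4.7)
\textbf{Proof plan for Lemma \ref{lem:aux1}.} The natural strategy is to introduce an auxiliary ``cross'' quantity, namely the utility of the optimal policy $\pi_i^*$ of task $i$ evaluated under task $j$'s reward, and bridge $\mathcal{Q}_h^{ii}$ and $\mathcal{Q}_h^{jj}$ through it using two independent facts: (i) \emph{optimality in the home task}, which pins down inequalities like $\mathcal{Q}_h^{ii}(s,a) \geq \mathcal{Q}_h^{ji}(s,a)$ and $\mathcal{Q}_h^{jj}(s,a) \geq \mathcal{Q}_h^{ij}(s,a)$; and (ii) \emph{reward similarity}, which bounds the difference between evaluating the \emph{same} policy in two different tasks in terms of $\delta_{ij} := \sup_{s,a,s'}|r_i(s,a,s')-r_j(s,a,s')|$.

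The core intermediate claim to prove first is: for any deterministic Markov policy $\pi$, and any pair of tasks $i,j$,
\begin{equation*}
    \sup_{s,a}\bigl|\mathcal{Q}_{h,\beta}^{\pi,i}(s,a) - \mathcal{Q}_{h,\beta}^{\pi,j}(s,a)\bigr| \leq (T-h+1)\,\delta_{ij},
\end{equation*}
where the superscript indicates the task whose reward is used in evaluation. I would prove this by backward induction on $h$, with the base case $h=T+1$ trivial. For the inductive step, apply the Bellman equation (\ref{eqn:entropic_bellman}) to both quantities with the shared transition $P$, so that the two inner random variables differ only in the reward and in the next-step utility. The non-expansion property \textbf{A4} of Lemma \ref{lem:entropic_properties} then converts the gap between the $U_\beta$'s into a pointwise supremum; splitting that supremum via the triangle inequality into a reward term (bounded by $\delta_{ij}$) and a continuation term (bounded by $(T-h)\delta_{ij}$ by induction) yields the claim.

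Given this intermediate bound, the lemma follows from a short two-sided argument. For the upper side, write $\mathcal{Q}_h^{ii}(s,a) - \mathcal{Q}_h^{jj}(s,a) \leq \mathcal{Q}_h^{ii}(s,a) - \mathcal{Q}_h^{ij}(s,a)$ using optimality of $\pi_j^*$ in task $j$, and then apply the intermediate claim to the same policy $\pi_i^*$. For the lower side, symmetrically, $\mathcal{Q}_h^{ii}(s,a) - \mathcal{Q}_h^{jj}(s,a) \geq \mathcal{Q}_h^{ji}(s,a) - \mathcal{Q}_h^{jj}(s,a)$ by optimality of $\pi_i^*$ in task $i$, and again the intermediate claim (now for $\pi_j^*$) gives the bound. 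Taking absolute values and then the supremum over $(s,a)$ completes the argument.

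The main obstacle is the inductive step of the intermediate claim: care is needed when invoking \textbf{A4}, because the two random variables being compared share the transition randomness $s'\sim P(\cdot\mid s,a)$ but involve both a reward discrepancy and a continuation-utility discrepancy. The clean way to handle this is to view both $U_\beta[\cdot]$ expressions as $U_\beta[f(s')]$ and $U_\beta[g(s')]$ for suitable bounded functions $f,g$, so that \textbf{A4} directly controls $|U_\beta[f]-U_\beta[g]|$ by $\sup_{s'}|f(s')-g(s')|$; cash invariance \textbf{A2} is not strictly needed once \textbf{A4} is used in this form, though it offers an alternative route of pulling the reward out first and then applying a weaker contraction bound. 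Everything else is routine arithmetic.
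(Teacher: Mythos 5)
Your argument is correct, but it takes a genuinely different route from the paper's. The paper proves Lemma \ref{lem:aux1} by a single backward induction directly on the two Bellman \emph{optimality} equations: it writes $\mathcal{Q}_h^{ii}(s,a) = U_\beta[r_i(s,a,s') + \max_{a'}\mathcal{Q}_{h+1}^{ii}(s',a')]$ and likewise for $jj$, applies the non-expansion property \textbf{A4} of Lemma \ref{lem:entropic_properties}, splits off the reward term with the triangle inequality, and absorbs the $\max_{a'}$ via the elementary bound $|\max_{a'}f - \max_{a'}g| \le \sup_{a'}|f-g|$, yielding the recursion $\Delta_{ij}(h) \le \delta_{ij} + \Delta_{ij}(h+1)$. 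You instead prove a fixed-policy evaluation bound --- the same policy evaluated under two reward functions differs by at most $(T-h+1)\delta_{ij}$ --- and then sandwich $\mathcal{Q}_h^{ii}-\mathcal{Q}_h^{jj}$ between $\mathcal{Q}_h^{ji}-\mathcal{Q}_h^{jj}$ and $\mathcal{Q}_h^{ii}-\mathcal{Q}_h^{ij}$ using optimality of $\pi_i^*$ and $\pi_j^*$ in their home tasks. Your intermediate claim is essentially the paper's Lemma \ref{lem:aux2} (stated there for $\pi_j^*$ but proved by exactly the induction you describe), so your route has the virtue of deriving both auxiliary lemmas from one statement and of never needing to push \textbf{A4} through a nested $\max_{a'}$; the paper's route is a self-contained induction that avoids invoking optimality of the $\pi_i^*$ at every $(s,a,h)$, which your sandwich step does require (and which the paper's setup does guarantee, since the optimal deterministic Markov policy dominates all policies pointwise in $h$, $s$, $a$). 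Both arguments produce the same constant, and your observation that cash invariance \textbf{A2} is dispensable once \textbf{A4} is applied to the full integrands matches how the paper actually carries out the estimate.
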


\begin{proof}
    Let $\Delta_{ij}(h) = \sup_{s,a} | \mathcal{Q}_{h}^{ii}(s,a) - \mathcal{Q}_{h}^{jj}(s,a)|$. Define $\mathscr{P}_{s,a}$ to be the set of probability distributions that are absolutely continuous with respect to $P(\cdot | s, a)$. Then, using {\bfseries A4} from Lemma \ref{lem:entropic_properties}:
    \begin{align*}
        &\Delta_{ij}(h) \\
        &= \sup_{s,a}\left| \mathcal{Q}_{h}^{ii}(s,a) - \mathcal{Q}_{h}^{jj}(s,a) \right| \\
        &= \sup_{s,a}\left| U_\beta[r_i(s,a,s') + \max_{a'} \mathcal{Q}_{h+1}^{ii}(s', a')] -  U_\beta[r_j(s,a,s') + \max_{a'} \mathcal{Q}_{h+1}^{jj}(s',a')] \right| \\
        &\leq \sup_{s,a} \sup_{P' \in \mathscr{P}_{s,a}} \mathbb{E}_{s'\sim P'(\cdot | s, a)}\left|r_i(s,a,s') -r_j(s,a,s') + \max_{a'} \mathcal{Q}_{h+1}^{ii}(s', a') - \max_{a'} \mathcal{Q}_{h+1}^{jj}(s', a')\right| \\
        &\leq \sup_{s,a} \sup_{s'} \left|r_i(s,a,s') -r_j(s,a,s') + \max_{a'} \mathcal{Q}_{h+1}^{ii}(s', a') - \max_{a'} \mathcal{Q}_{h+1}^{jj}(s', a')\right| \\
        &\leq \sup_{s,a,s'} \left|r_i(s,a,s') - r_j(s,a,s')\right| + \sup_{s,a,s'} \left|\max_{a'} \mathcal{Q}_{h+1}^{ii}(s', a') - \max_{a'} \mathcal{Q}_{h+1}^{jj}(s', a')\right| \\
        &= \delta_{ij} + \sup_{s,a} \left|\mathcal{Q}_{h+1}^{ii}(s,a) - \mathcal{Q}_{h+1}^{jj}(s,a)\right|\\
        &= \delta_{ij} + \Delta_{ij}(h+1).
    \end{align*}
    Starting with $\Delta_{ij}(T+1) = 0$ and proceeding by backward induction, we have $\Delta_{ij}(h) \leq \delta_{ij} (T - h + 1)$ for all $h$. 
\end{proof}

\begin{lemma}
    \label{lem:aux2}
    \begin{equation*}
        \sup_{s,a} \left|\mathcal{Q}_{h}^{jj}(s,a) - \mathcal{Q}_{h}^{ji}(s,a) \right| \leq (T - h + 1)\delta_{ij}.
    \end{equation*}
\end{lemma}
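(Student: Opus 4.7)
The plan is to mirror the proof of Lemma \ref{lem:aux1} almost verbatim, exploiting the fact that here both quantities use the \emph{same} policy $\pi_j^*$, so the $\max_{a'}$ step disappears and the argument becomes strictly simpler. First I would set $\Delta_{ij}(h) = \sup_{s,a} |\mathcal{Q}_{h}^{jj}(s,a) - \mathcal{Q}_{h}^{ji}(s,a)|$ and expand both sides using the entropic Bellman equation (\ref{eqn:entropic_bellman}) applied to policy $\pi_j^*$: the only differences between the two recursions are the reward function inside $U_\beta$ (either $r_j$ or $r_i$) and the inherited continuation utilities $\mathcal{Q}_{h+1}^{jj}$ vs.\ $\mathcal{Q}_{h+1}^{ji}$, evaluated at the \emph{same} action $\pi_{j,h+1}^*(s')$.

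Next I would invoke the non-expansion property \textbf{A4} of Lemma \ref{lem:entropic_properties} with $f(s') = r_j(s,a,s') + \mathcal{Q}_{h+1}^{jj}(s', \pi_{j,h+1}^*(s'))$ and $g(s') = r_i(s,a,s') + \mathcal{Q}_{h+1}^{ji}(s', \pi_{j,h+1}^*(s'))$ under the transition kernel $P(\cdot \mid s,a)$. This yields
\begin{equation*}
    \Delta_{ij}(h) \leq \sup_{s,a}\sup_{P' \in \mathscr{P}_{s,a}} \mathbb{E}_{s'\sim P'}\bigl|r_j(s,a,s') - r_i(s,a,s') + \mathcal{Q}_{h+1}^{jj}(s', \pi_{j,h+1}^*(s')) - \mathcal{Q}_{h+1}^{ji}(s', \pi_{j,h+1}^*(s'))\bigr|.
\end{equation*}
A triangle inequality inside the expectation, followed by passing to pointwise suprema over $s'$ (justified since expectations are dominated by the essential supremum), separates the bound into $\delta_{ij} + \Delta_{ij}(h+1)$, exactly as in Lemma \ref{lem:aux1} but without needing the $|\max_{a'} X(a') - \max_{a'} Y(a')| \leq \max_{a'}|X(a')-Y(a')|$ step.

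Finally, I would close with backward induction on $h$: the terminal condition $\mathcal{Q}_{T+1,\beta}^{\pi}(s,a) = 0$ gives $\Delta_{ij}(T+1) = 0$, and the recursion $\Delta_{ij}(h) \leq \delta_{ij} + \Delta_{ij}(h+1)$ unrolls to $\Delta_{ij}(h) \leq (T-h+1)\delta_{ij}$. There is no real obstacle here — the key structural work (namely the dual representation of $U_\beta$ underlying \textbf{A4}) has already been done in Lemma \ref{lem:entropic_properties}; the only minor subtlety is recognizing that absorbing the continuation value into the $U_\beta$ non-expansion bound is legitimate because $\mathcal{Q}_{h+1}^{jj}$ and $\mathcal{Q}_{h+1}^{ji}$ are bounded measurable functions of $s'$ (rewards are bounded and the horizon is finite), hence integrable under every $P' \in \mathscr{P}_{s,a}$.
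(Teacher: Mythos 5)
Your proposal is correct and matches the paper's proof essentially verbatim: both expand $\mathcal{Q}_h^{jj}$ and $\mathcal{Q}_h^{ji}$ via the entropic Bellman recursion under the common policy $\pi_j^*$, apply the non-expansion property \textbf{A4}, split off $\delta_{ij}$ by the triangle inequality, and unroll the recursion $\Gamma_{ij}(h) \leq \delta_{ij} + \Gamma_{ij}(h+1)$ from the terminal condition. Your observation that the $\max_{a'}$ step from Lemma \ref{lem:aux1} is not needed here is exactly right.
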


\begin{proof}
    Define $\Gamma_{ij}(h) = \sup_{s,a} | \mathcal{Q}_{h}^{jj}(s,a) - \mathcal{Q}_{h}^{ji}(s,a) |$. Then using {\bfseries A4} from Lemma \ref{lem:entropic_properties}:
    \begin{align*}
        &\Gamma_{ij}(h) \\
        &= \sup_{s,a}\left| \mathcal{Q}_{h}^{jj}(s,a) - \mathcal{Q}_{h}^{ji}(s,a) \right| \\
        &= \sup_{s,a}\left|U_\beta[r_j(s,a,s') +  \mathcal{Q}_{h+1}^{jj}(s',\pi_{j,h+1}^*(s'))] - U_\beta[r_i(s,a,s') + \mathcal{Q}_{h+1}^{ji}(s', \pi_{j,h+1}^*(s'))] \right| \\
        &\leq \sup_{s,a}\sup_{P' \in \mathscr{P}_{s,a}} \mathbb{E}_{s'\sim P'(\cdot | s, a)} \Big| r_i(s,a,s') -r_j(s,a,s') + \mathcal{Q}_{h+1}^{jj}(s', \pi_{j,h+1}^*(s')) - \mathcal{Q}_{h+1}^{ji}(s', \pi_{j,h+1}^*(s'))\Big| \\
        &\leq \sup_{s,a}\sup_{s'} \Big| r_i(s,a,s') -r_j(s,a,s') + \mathcal{Q}_{h+1}^{jj}(s', \pi_{j,h+1}^*(s')) - \mathcal{Q}_{h+1}^{ji}(s', \pi_{j,h+1}^*(s'))\Big| \\
        &\leq \sup_{s,a,s'} \left|r_i(s,a,s') - r_j(s,a,s')\right| + \sup_{s,a,s'}\left| \mathcal{Q}_{h+1}^{jj}(s', \pi_{j,h+1}^*(s')) - \mathcal{Q}_{h+1}^{ji}(s', \pi_{j,h+1}^*(s'))\right| \\
        &\leq \delta_{ij} + \sup_{s,a} \left| \mathcal{Q}_{h+1}^{jj}(s,a) - \mathcal{Q}_{h+1}^{ji}(s,a) \right| \\
        &= \delta_{ij} + \Gamma_{ij}(h+1).
    \end{align*}
    Thus, $\Gamma_{ij}(h) \leq \delta_{ij}(T - h + 1)$ as claimed.
\end{proof}

\begin{customtheorem}{2}
     Let $\mathcal{Q}_{h,\beta}^{\pi_i^*}$ be the utilities of optimal Markov policies $\pi_i^*$ evaluated in task $M$. Furthermore, let $\tilde{\mathcal{Q}}_{h,\beta}^{\pi_i^*}$ be such that $|\tilde{\mathcal{Q}}_{h,\beta}^{\pi_i^*}(s, a) - \mathcal{Q}_{h,\beta}^{\pi_i^*}(s,a) | < \varepsilon$ for all $s \in \mathcal{S}, \, a \in \mathcal{A}$, $h \in \mathcal{T}$ and $i = 1 \dots n$. Similarly, let $\pi$ be the corresponding policy in (\ref{eqn:gpi_policy_aux}). Finally, let $\delta_r = \min_{i=1\dots n} \sup_{s,a,s'}| r(s,a,s') - r_i(s,a,s')|$. Then,
    \begin{equation*}
    \begin{aligned}
        \left|\mathcal{Q}_{h,\beta}^{\pi}(s,a) - \mathcal{Q}_{h,\beta}^*(s,a) \right| \leq 2(T - h + 1) (\delta_r + \varepsilon) , \quad h \leq T.
    \end{aligned}
    \end{equation*}
\end{customtheorem}

\begin{proof}
Using Theorem \ref{thm:gpi_aux} and the triangle inequality:
\begin{align*}
    \left|\mathcal{Q}_{h,\beta}^{{\pi}}(s,a) - \mathcal{Q}_{h,\beta}^{*}(s,a) \right| \leq \left|\mathcal{Q}_{h,\beta}^{{\pi}_j^*}(s,a) - \mathcal{Q}_{h,\beta}^{*}(s,a) \right| + 2 (T - h + 1) \varepsilon.
\end{align*}

The goal now is to bound the first term. By the triangle inequality and Lemma \ref{lem:aux1} and Lemma \ref{lem:aux2}, $| \mathcal{Q}_{h}^{ii}(s,a) - \mathcal{Q}_{h}^{ji}(s,a)| \leq |\mathcal{Q}_{h}^{ii}(s,a) - \mathcal{Q}_{h}^{jj}(s,a)| + | \mathcal{Q}_{h}^{jj}(s,a) - \mathcal{Q}_{h}^{ji}(s,a)| = 2 (T - h + 1) \delta_{ij}$. Finally, designating $j$ as source task $j$ and $i$ as target task, and substituting this bound into the first inequality above yields the desired result.
\end{proof}

\subsection{Proofs of Theorem \ref{thm:gpi_aux} and \ref{thm:gpi} for Discounted MDPs}
\label{subsec:proofs_discounted}

\begin{theorem}
\label{thm:gpi_aux_discounted}
    Let $\pi_1, \dots \pi_n$ be arbitrary deterministic Markov policies with utilities $\tilde{\mathcal{J}}_{\beta}^{\pi_1}, \dots \tilde{\mathcal{J}}_{\beta}^{\pi_n}$ evaluated in an arbitrary task $M$, such that $|\tilde{\mathcal{J}}_{\beta}^{\pi_i}(s,a,z) - \mathcal{J}_{\beta}^{\pi_i}(s,a,z)| \leq \varepsilon z$ for all $s, a, z$ and $i$. Define
    \begin{equation}
    \label{eqn:gpi_policy_discounted}
        \pi(s,z) \in \argmax_{a \in \mathcal{A}} \max_{i=1\dots n} \tilde{\mathcal{J}}_{\beta}^{\pi_i}(s,a,z), \quad \forall s \in \mathcal{S},\, z \in \mathcal{Z}.
    \end{equation}
    Then,
    \begin{equation*}
        \mathcal{J}_{\beta}^{\pi}(s, a,z) \geq \max_{i}\mathcal{J}_{\beta}^{\pi_i}(s, a,z) - \frac{2 \varepsilon}{1 - \gamma} z.
    \end{equation*}
\end{theorem}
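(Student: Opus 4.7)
The plan is to mirror the proof of Theorem \ref{thm:gpi_aux} from the episodic case, but to replace the finite backward induction by an iterative/fixed-point argument, carefully tracking how the discount factor $z$ scales through the augmented Bellman equation (\ref{eqn:entropic_bellman_discounted}). The key observation is that both the reward term $z r(s,a,s')$ and the approximation error bound $|\tilde{\mathcal{J}}_\beta^{\pi_i}(s,a,z) - \mathcal{J}_\beta^{\pi_i}(s,a,z)| \leq \varepsilon z$ are homogeneous of degree one in $z$, and each unfolding of the Bellman equation multiplies $z$ by $\gamma$; thus the total accumulated error should be a geometric series summing to $\tfrac{2\varepsilon z}{1-\gamma}$.

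I would proceed as follows. First I would argue that, because rewards are bounded, the quantity $C_0 := \sup_{s,a,z}\tfrac{1}{z}\bigl[\max_i \mathcal{J}_\beta^{\pi_i}(s,a,z) - \mathcal{J}_\beta^{\pi}(s,a,z)\bigr]$ is finite, providing an initial (loose) bound of the form $\mathcal{J}_\beta^\pi(s,a,z) \geq \max_i \mathcal{J}_\beta^{\pi_i}(s,a,z) - C_0 z$. The inductive step is the core lemma: if the bound holds with some constant $C$, I show it holds with the improved constant $(C + 2\varepsilon)\gamma$. Starting from the Bellman equation
\[
\mathcal{J}_\beta^\pi(s,a,z) = U_\beta\!\left[z\,r(s,a,s') + \mathcal{J}_\beta^\pi\bigl(s',\pi(s',\gamma z),\gamma z\bigr)\right],
\]
I apply monotonicity (\textbf{A1}) and cash invariance (\textbf{A2}) from Lemma \ref{lem:entropic_properties} to substitute the inductive lower bound at $\gamma z$, pay $\varepsilon\gamma z$ to pass from $\mathcal{J}_\beta^{\pi_i}$ to $\tilde{\mathcal{J}}_\beta^{\pi_i}$, use the defining property of $\pi$ in (\ref{eqn:gpi_policy_discounted}) that $\max_i \tilde{\mathcal{J}}_\beta^{\pi_i}(s',\pi(s',\gamma z),\gamma z) = \max_{a',i} \tilde{\mathcal{J}}_\beta^{\pi_i}(s',a',\gamma z)$, pay another $\varepsilon\gamma z$ to revert to $\mathcal{J}_\beta^{\pi_i}$, and finally drop the $\max_{a'}$ by specializing $a' = \pi_i(s',\gamma z)$ so that the resulting expression collapses via the Bellman equation for $\pi_i$ back to $\mathcal{J}_\beta^{\pi_i}(s,a,z) - (C+2\varepsilon)\gamma z$, which holds for every $i$.

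Iterating the map $C \mapsto (C+2\varepsilon)\gamma$ starting from $C_0$ yields $C_n = \gamma^n C_0 + 2\varepsilon\gamma\,\tfrac{1-\gamma^n}{1-\gamma}$, whose limit $\tfrac{2\varepsilon\gamma}{1-\gamma}$ is bounded above by $\tfrac{2\varepsilon}{1-\gamma}$, giving exactly the claim. The main obstacle is bookkeeping: the source policies $\pi_i$ may depend on $z$, and each time one unrolls the Bellman equation the discount state updates from $z$ to $\gamma z$, so the factors of $\gamma$ produced by switching $\mathcal{J}\leftrightarrow\tilde{\mathcal{J}}$ and by carrying the inductive hypothesis must be matched consistently. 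A small auxiliary point is ensuring that the initial constant $C_0$ in the iteration is finite, which follows from the boundedness assumption on $r$ together with cash invariance applied to the Bellman equation. The convexity property (\textbf{A3}) and non-expansion (\textbf{A4}) of Lemma \ref{lem:entropic_properties} are not needed here; only (\textbf{A1}) and (\textbf{A2}) are invoked, exactly as in the episodic proof.
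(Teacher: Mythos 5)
Your argument is correct, and the single-step inequality at its core is the same chain of \textbf{A1}/\textbf{A2} manipulations the paper uses: unroll the Bellman equation for $\pi$, pay $\varepsilon\gamma z$ twice to move between $\mathcal{J}$ and $\tilde{\mathcal{J}}$ around the $\argmax$ defining $\pi$, and collapse back via the Bellman equation for $\pi_i$. The only real difference is organizational. The paper applies the operator $T_\beta^\pi$ repeatedly to $\tilde{\mathcal{J}}_\beta^{max} = \max_i \tilde{\mathcal{J}}_\beta^{\pi_i}$, shows $T_\beta^\pi \tilde{\mathcal{J}}_\beta^{max} \geq \tilde{\mathcal{J}}_\beta^{max} - (1+\gamma)\varepsilon z$, and invokes the fact (from B\"auerle--Rieder) that $(T_\beta^\pi)^k \tilde{\mathcal{J}}_\beta^{max} \to \mathcal{J}_\beta^\pi$, the unique fixed point, accumulating the geometric series along the way. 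You instead bootstrap a pointwise lower bound on $\mathcal{J}_\beta^\pi$ itself, contracting the gap constant via $C \mapsto (C+2\varepsilon)\gamma$; this only needs that $\mathcal{J}_\beta^\pi$ satisfies its own Bellman equation plus finiteness of the initial constant $C_0$ (which, as you note, follows from boundedness of $r$, giving $|\mathcal{J}_\beta^{\pi}(s,a,z)| \leq z R_{max}/(1-\gamma)$ by \textbf{A1} and \textbf{A2}), so you avoid explicitly relying on convergence of the operator iterates. Your limiting constant $\tfrac{2\varepsilon\gamma}{1-\gamma}$ is in fact marginally tighter than the paper's $\tfrac{(1+\gamma)\varepsilon}{1-\gamma} + \varepsilon = \tfrac{2\varepsilon}{1-\gamma}$, and both imply the stated bound. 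The bookkeeping you flag --- that the hypothesis $|\tilde{\mathcal{J}}_\beta^{\pi_i}(s,a,z) - \mathcal{J}_\beta^{\pi_i}(s,a,z)| \leq \varepsilon z$ is evaluated at discount state $\gamma z$ after one unrolling, contributing $\varepsilon\gamma z$ per switch --- is handled consistently, so there is no gap.
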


\begin{proof}
    Define $\mathcal{J}_{\beta}^{max}(s,a,z) = \max_i \mathcal{J}_{\beta}^{\pi_i}(s,a,z)$ and $\tilde{\mathcal{J}}_{\beta}^{max}(s,a,z) = \max_i \tilde{\mathcal{J}}_{\beta}^{\pi_i}(s,a,z)$. We have:
    \begin{align*}
        |\mathcal{J}_{\beta}^{max}(s, a, z) - \tilde{\mathcal{J}}_{\beta}^{max}(s, a,z) | \leq \max_i | \mathcal{J}_{\beta}^{\pi_i}(s, a,z) - \tilde{\mathcal{J}}_{\beta}^{\pi_i}(s, a,z) | \leq \varepsilon z.
    \end{align*}
    Let $T_\beta^\pi$ be the operator corresponding to (\ref{eqn:entropic_bellman_discounted}). Then using {\bfseries A1} and {\bfseries A2} of Lemma \ref{lem:entropic_properties} leads to:
    \begin{align*}
        T_\beta^\pi \tilde{\mathcal{J}}_{\beta}^{max}(s,a,z) 
        &= U_\beta\left[z r(s,a,s') + \tilde{\mathcal{J}}_{\beta}^{max}(s',\pi(s',\gamma z), \gamma z)\right] \\
        &= U_\beta\left[z r(s,a,s') + \max_{a'} \tilde{\mathcal{J}}_{\beta}^{max}(s',a', \gamma z)\right] \\
        &\geq U_\beta\left[z r(s,a,s') + \max_{a'} {\mathcal{J}}_{\beta}^{max}(s',a', \gamma z)\right] - \gamma \varepsilon z  \\
        &\geq U_\beta\left[z r(s,a,s') + {\mathcal{J}}_{\beta}^{max}(s',\pi_i(s',\gamma z), \gamma z)\right] - \gamma \varepsilon z  \\
        &\geq U_\beta\left[z r(s,a,s') + {\mathcal{J}}_{\beta}^{\pi_i}(s',\pi_i(s',\gamma z), \gamma z)\right] - \gamma \varepsilon z \\
        &= T_{\beta}^{\pi_i} {\mathcal{J}}_{\beta}^{\pi_i}(s, a, z) - \gamma \varepsilon z  \\
        &= {\mathcal{J}}_{\beta}^{\pi_i}(s, a, z) - \gamma \varepsilon z  \\
        &\geq \max_i{\mathcal{J}}_{\beta}^{\pi_i}(s, a, z) - \gamma \varepsilon z  \\
        &\geq \tilde{\mathcal{J}}_{\beta}^{max}(s,a,z) - \varepsilon z - \gamma \varepsilon z 
    \end{align*}
    Finally, using {\bfseries A1} of Lemma \ref{lem:entropic_properties} and the fact that $T_\beta^\pi$ has a unique fixed point \citep{bauerle2014more}:
    \begin{align*}
        \mathcal{J}_\beta^\pi(s,a,z) 
        &= \lim_{k \to \infty} (T_\beta^\pi)^k \tilde{\mathcal{J}}_{\beta}^{max}(s,a,z) 
        \geq \tilde{\mathcal{J}}_{\beta}^{max}(s,a,z) - (1 + \gamma) \frac{ \varepsilon}{1 - \gamma} z \\
        &\geq {\mathcal{J}}_{\beta}^{max}(s,a,z) - \frac{ 2 \varepsilon}{1 - \gamma} z,
    \end{align*}
    and is the desired result.
\end{proof}

\begin{lemma}
\label{lem:aux1_discounted}
    Define $\mathcal{J}_j^i(s,a,z)$ be the utility of optimal policy $\pi_i^*$ on the augmented MDP $i$ when evaluated in the augmented MDP $j$. Furthermore, let $\delta_{ij} = \sup_{s,a,s'} |r_i(s,a,s') - r_j(s,a,s')|$. Then,
    \begin{equation*}
        \sup_{s,a} \left|\mathcal{J}_i^i(s,a,z) - \mathcal{J}_j^j(s,a,z) \right| \leq \frac{ \delta_{ij}}{1 - \gamma} z.
    \end{equation*}
\end{lemma}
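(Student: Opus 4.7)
The plan is to mirror the episodic argument in Lemma \ref{lem:aux1}, but now exploit the multiplicative role of the augmented state $z$ in the reward of the augmented MDP, together with the contraction induced by $z \mapsto \gamma z$. First, since both $\pi_i^*$ and $\pi_j^*$ are optimal in their respective augmented MDPs, the values $\mathcal{J}_i^i$ and $\mathcal{J}_j^j$ satisfy the Bellman optimality form of \eqref{eqn:entropic_bellman_discounted},
\begin{equation*}
    \mathcal{J}_k^k(s,a,z) = U_\beta\!\left[z\, r_k(s,a,s') + \max_{a'} \mathcal{J}_k^k(s',a',\gamma z)\right], \quad k \in \{i,j\}.
\end{equation*}
Define $\Delta_{ij}(z) = \sup_{s,a}|\mathcal{J}_i^i(s,a,z) - \mathcal{J}_j^j(s,a,z)|$.

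Next, I would apply the non-expansion property \textbf{A4} of Lemma \ref{lem:entropic_properties} exactly as in the proof of Lemma \ref{lem:aux1}, with the argument of the inner expectation being $z(r_i - r_j) + \max_{a'}\mathcal{J}_i^i(s',a',\gamma z) - \max_{a'}\mathcal{J}_j^j(s',a',\gamma z)$. Using the triangle inequality, $\sup_{s,a,s'}|r_i - r_j| \le \delta_{ij}$, the fact that $|\max_{a'} f - \max_{a'} g| \leq \sup_{a'}|f-g|$, and taking a supremum over $s'$, this yields the recursion
\begin{equation*}
    \Delta_{ij}(z) \;\leq\; z\,\delta_{ij} \;+\; \Delta_{ij}(\gamma z).
\end{equation*}
Crucially, the factor $z$ in front of the reward persists through $U_\beta$ because only cash invariance and non-expansion are used, and the discount factor $\gamma$ enters only through the recursive call at $\gamma z$.

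Finally, I would unroll this recursion $K$ times to get $\Delta_{ij}(z) \leq \delta_{ij} z \sum_{k=0}^{K-1} \gamma^k + \Delta_{ij}(\gamma^K z)$. Since the augmented reward at $z = 0$ is identically zero, both $\mathcal{J}_i^i(s,a,0)$ and $\mathcal{J}_j^j(s,a,0)$ vanish (as the unique fixed points of their respective Bellman operators at $z=0$), so $\Delta_{ij}(0) = 0$; combined with the continuity of $\mathcal{J}_k^k(s,a,\cdot)$ inherited from the geometric decay of the augmentation, $\Delta_{ij}(\gamma^K z) \to 0$ as $K \to \infty$. Taking the limit gives $\Delta_{ij}(z) \leq \delta_{ij} z/(1-\gamma)$, which is the claim.

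The main obstacle is the step that turns the recursion into a closed-form bound: one must justify that $\Delta_{ij}(\gamma^K z) \to 0$, which requires arguing that $\mathcal{J}_k^k(s,a,0) = 0$ (i.e., pinning down the boundary condition of the augmented MDP) and controlling the behaviour of the value function as the augmented state collapses to $0$. This is essentially the discounted analogue of the terminal condition $\mathcal{Q}_{T+1,\beta}^{\pi}(s,a)=0$ that seeds the backward induction in the episodic proof, and it is what allows the contraction $z \mapsto \gamma z$ to close the geometric series cleanly.
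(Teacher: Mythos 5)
Your proposal is correct and follows essentially the same route as the paper: apply the non-expansion property \textbf{A4} to the Bellman optimality form of \eqref{eqn:entropic_bellman_discounted} to obtain the recursion $\Delta_{ij}(z) \leq z\,\delta_{ij} + \Delta_{ij}(\gamma z)$, then unroll it into the geometric series $\delta_{ij} z/(1-\gamma)$. Your explicit justification that the tail $\Delta_{ij}(\gamma^K z) \to 0$ is a welcome extra detail the paper leaves implicit (one can also get it directly from the bound $|\mathcal{J}_k^k(s,a,z)| \leq z\,\|r_k\|_\infty/(1-\gamma)$, which follows from monotonicity and cash invariance).
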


\begin{proof}
    Define $\Delta_{ij}(z) = \sup_{s,a}\left|\mathcal{J}_i^i(s,a,z) - \mathcal{J}_j^j(s,a,z) \right|$. Let $\mathscr{P}_{s,a}$ be the set of probability distributions for the one-step transitions of the augmented MDP, e.g. $P((s',z') | (s,z), a)$ that are absolutely continuous w.r.t. the true distribution. Since $P((s',z') | (s,z), a) = P(s'|s,a) \delta_{z\gamma}(z')$, and $\delta_{z\gamma}(z')$ is absolutely continuous only w.r.t. itself, the set $\mathscr{P}_{s,a}$ consists of all products $P(s'|s,a) \delta_{z\gamma}(z')$, where $P(s'|s,a)$ is absolutely continuous w.r.t. the true dynamics of the original MDP. 
    
    Now, using {\bfseries A4} from Lemma \ref{lem:entropic_properties}:
    \begin{align*}
        &\Delta_{ij}(z) \\
        &= \sup_{s,a}\left| \mathcal{J}_{i}^{i}(s,a,z) - \mathcal{J}_{j}^{j}(s,a,z) \right| \\
        &= \sup_{s,a}\left| U_\beta[z r_i(s,a,s') + \max_{a'} \mathcal{J}_{i}^{i}(s', a',\gamma z)] -  U_\beta[z r_j(s,a,s') + \max_{a'} \mathcal{J}_{j}^{j}(s',a',\gamma z)] \right| \\
        &\leq \sup_{s,a} \sup_{P \in \mathscr{P}_{s,a}} \mathbb{E}_{s'\sim P(\cdot | s, a)}\left|z r_i(s,a,s') - z r_j(s,a,s') + \max_{a'} \mathcal{J}_{i}^{i}(s', a', \gamma z) - \max_{a'} \mathcal{J}_{j}^{j}(s', a', \gamma z)\right| \\
        &\leq z \sup_{s,a,s'} \left|r_i(s,a,s') - r_j(s,a,s')\right| + \sup_{s,a,s'}\left|\max_{a'} \mathcal{J}_{i}^{i}(s', a', \gamma z) - \max_{a'} \mathcal{J}_{j}^{j}(s', a', \gamma z)\right| \\
        &= z \delta_{ij} + \sup_{s,a} \left|\mathcal{J}_{i}^{i}(s,a, \gamma z) - \mathcal{J}_{j}^{j}(s,a, \gamma z)\right|\\
        &= z \delta_{ij} + \Delta_{ij}(\gamma z).
    \end{align*}
    Repeating the above bounding procedure leads to:
    \begin{align*}
        \Delta_{ij}(z) 
        &\leq z \delta_{ij} + \Delta_{ij}(\gamma z) \\
        &\leq z \delta_{ij} + \gamma z \delta_{ij} +  \Delta_{ij}(\gamma^2 z) \\
        &\phantom{=} \vdots \\
        &\leq z \delta_{ij} + \gamma z \delta_{ij} + \gamma^2 z \delta_{ij} + \dots = \frac{\delta_{ij}}{1-\gamma} z,
    \end{align*}
    and completes the proof.
\end{proof}

\begin{lemma}
\label{lem:aux2_discounted}
    \begin{equation*}
        \sup_{s,a} \left| \mathcal{J}_j^j(s,a,z) - \mathcal{J}_i^j(s,a,z) \right| \leq \frac{\delta_{ij}}{1-\gamma} z.
    \end{equation*}
\end{lemma}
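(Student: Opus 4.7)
The plan is to mirror the proof of Lemma \ref{lem:aux2} from the episodic setting, adapted to the augmented-state discounted Bellman equation (\ref{eqn:entropic_bellman_discounted}), with the role of the horizon term $(T - h + 1)$ replaced by an iterated geometric series in $\gamma$. The key observation is that $\mathcal{J}_j^j$ and $\mathcal{J}_i^j$ both correspond to following the \emph{same} policy $\pi_j^*$, so when we unroll the Bellman equation on each, the action-selection step contributes nothing to the difference; only the reward discrepancy $r_i$ vs $r_j$ and the recursive value difference survive.

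Concretely, I would define $\Gamma_{ij}(z) = \sup_{s,a} \left| \mathcal{J}_j^j(s,a,z) - \mathcal{J}_i^j(s,a,z) \right|$ and write both quantities using (\ref{eqn:entropic_bellman_discounted}) with action $\pi_j^*(s',\gamma z)$ at the next step, the only difference being $z r_j(s,a,s')$ versus $z r_i(s,a,s')$ inside the $U_\beta$. Applying the non-expansion property \textbf{A4} of Lemma \ref{lem:entropic_properties} bounds the absolute difference of the two utilities by the supremum over the relevant admissible measures $\mathscr{P}_{s,a}$ of the expected absolute difference of their arguments. Splitting via the triangle inequality and invoking the definition $\delta_{ij} = \sup_{s,a,s'}|r_i - r_j|$ yields the one-step recursion
\begin{equation*}
    \Gamma_{ij}(z) \leq z\,\delta_{ij} + \Gamma_{ij}(\gamma z).
\end{equation*}

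Iterating this inequality exactly as in the proof of Lemma \ref{lem:aux1_discounted}, one gets $\Gamma_{ij}(z) \leq z\delta_{ij}(1 + \gamma + \gamma^2 + \cdots) = \frac{\delta_{ij}}{1-\gamma}\,z$, which is the claimed bound. The admissible-measures set $\mathscr{P}_{s,a}$ for the augmented dynamics has the product form $P(s'|s,a)\delta_{\gamma z}(z')$ noted just before the analogous step in Lemma \ref{lem:aux1_discounted}, so the supremum collapses cleanly to a supremum over next states $s'$ with $z' = \gamma z$ held fixed, which is precisely what lets us identify the residual as $\Gamma_{ij}(\gamma z)$.

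The only genuinely delicate point I anticipate is the application of \textbf{A4} when the two arguments inside $U_\beta$ involve not just different functions of $s'$ but also an $s'$-independent additive term ($z r_j$ vs $z r_i$). This is handled by viewing both as functions $f,g$ of the transition-random-variable $s'$ (holding $s,a,z$ fixed) and invoking cash-invariance \textbf{A2} implicitly within the duality bound; the triangle inequality then separates the reward contribution $z\delta_{ij}$ from the residual value contribution $\Gamma_{ij}(\gamma z)$. Everything else is a direct transcription from the episodic proof, with $(T-h+1)$ replaced by $\sum_{k\geq 0}\gamma^k = 1/(1-\gamma)$.
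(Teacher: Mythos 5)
Your proposal is correct and follows essentially the same route as the paper's proof: define $\Gamma_{ij}(z)=\sup_{s,a}|\mathcal{J}_j^j(s,a,z)-\mathcal{J}_i^j(s,a,z)|$, apply the non-expansion property \textbf{A4} to the two Bellman backups under the common policy $\pi_j^*$, split by the triangle inequality to obtain $\Gamma_{ij}(z)\le z\delta_{ij}+\Gamma_{ij}(\gamma z)$, and iterate the geometric series. The ``delicate point'' you flag is in fact a non-issue, since both reward terms are functions of $s'$ and \textbf{A4} applies directly to the two arguments viewed as functions of the next state, exactly as the paper does.
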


\begin{proof}
    Define $\Gamma_{ij}(z) = \sup_{s,a} | \mathcal{J}_{j}^{j}(s,a,z) - \mathcal{J}_{i}^{j}(s,a,z)|$. Then, using {\bfseries A4} from Lemma \ref{lem:entropic_properties} and the technique from Lemma \ref{lem:aux1_discounted}:
    \begin{align*}
        &\Gamma_{ij}(z) \\
        &= \sup_{s,a}\left| \mathcal{J}_{j}^{j}(s,a,z) - \mathcal{J}_{i}^{j}(s,a,z) \right| \\
        &= \sup_{s,a}\left|U_\beta[z r_j(s,a,s') +  \mathcal{J}_{j}^{j}(s',\pi_{j}^*(s',\gamma z),\gamma z)] - U_\beta[z r_i(s,a,s') + \mathcal{J}_{i}^{j}(s', \pi_{j}^*(s',\gamma z), \gamma z)] \right| \\
        &\leq \sup_{s,a}\sup_{P \in \mathscr{P}_{s,a}} \mathbb{E}_{s'\sim P(\cdot | s, a)} \Big| z r_i(s,a,s') - z r_j(s,a,s') + \mathcal{J}_{j}^{j}(s', \pi_{j}^*(s',\gamma z), \gamma z) - \mathcal{J}_{i}^{j}(s', \pi_{j}^*(s',\gamma z), \gamma z)\Big| \\
        &\leq z \sup_{s,a,s'} \left|r_i(s,a,s') - r_j(s,a,s')\right| + \sup_{s,a,s'}\left| \mathcal{J}_{j}^{j}(s', \pi_{j}^*(s',\gamma z), \gamma z) - \mathcal{J}_{i}^{j}(s', \pi_{j}^*(s', \gamma z), \gamma z)\right| \\
        &\leq z \delta_{ij} + \sup_{s,a} \left| \mathcal{J}_{j}^{j}(s,a,\gamma z) - \mathcal{J}_{i}^{j}(s,a, \gamma z) \right| \\
        &= z \delta_{ij} + \Gamma_{ij}(\gamma z) \\
        &\leq \frac{\delta_{ij}}{1 - \gamma} z.
    \end{align*}
    The proof is complete.
\end{proof}

\begin{theorem}
\label{thm:gpi_discounted}
    Let $\mathcal{J}_{\beta}^{\pi_i^*}$ be the utilities of optimal Markov policies $\pi_i^*$ evaluated in some task $M$. Furthermore, let $\tilde{\mathcal{J}}_{\beta}^{\pi_i^*}$ be such such that $|\tilde{\mathcal{J}}_{\beta}^{\pi_i^*}(s, a,z) - \mathcal{J}_{\beta}^{\pi_i^*}(s,a,z) | < \varepsilon z$ for all $s \in \mathcal{S}, \, a \in \mathcal{A}$, $z \in \mathcal{Z}$ and $i = 1 \dots n$, and $\pi$ be the corresponding policy in (\ref{eqn:gpi_policy_discounted}). Finally, let $\delta_r = \min_{i=1\dots n} \sup_{s,a,s'}| r(s,a,s') - r_i(s,a,s')|$. Then,
    \begin{equation*}
    \begin{aligned}
        \left|\mathcal{J}_{\beta}^{\pi}(s,a,z) - \mathcal{J}_{\beta}^*(s,a,z) \right| \leq \frac{2 (\delta_r + \varepsilon)}{1 - \gamma} z.
    \end{aligned}
    \end{equation*}
\end{theorem}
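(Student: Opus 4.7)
The plan is to mirror the episodic proof of Theorem \ref{thm:gpi}, substituting the discounted GPI bound (Theorem \ref{thm:gpi_aux_discounted}) and discounted auxiliary lemmas (Lemmas \ref{lem:aux1_discounted}, \ref{lem:aux2_discounted}) for their episodic counterparts. Since $\mathcal{J}_\beta^*$ is optimal in the target task $M$, we have $\mathcal{J}_\beta^\pi \leq \mathcal{J}_\beta^*$ and therefore $|\mathcal{J}_\beta^\pi - \mathcal{J}_\beta^*| = \mathcal{J}_\beta^* - \mathcal{J}_\beta^\pi$, which lets me dispense with the absolute value once and for all.

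First I would fix an index $j^* \in \argmin_j \sup_{s,a,s'}|r(s,a,s') - r_j(s,a,s')|$, so that $\delta_r = \sup_{s,a,s'}|r(s,a,s') - r_{j^*}(s,a,s')|$. Applying Theorem \ref{thm:gpi_aux_discounted} to the $n$ source policies $\pi_1^*, \ldots, \pi_n^*$ and their $\varepsilon z$-accurate approximations gives
\begin{equation*}
\mathcal{J}_\beta^\pi(s,a,z) \;\geq\; \max_i \mathcal{J}_\beta^{\pi_i^*}(s,a,z) - \frac{2\varepsilon}{1-\gamma}z \;\geq\; \mathcal{J}_\beta^{\pi_{j^*}^*}(s,a,z) - \frac{2\varepsilon}{1-\gamma}z.
\end{equation*}
Combined with $\mathcal{J}_\beta^\pi \leq \mathcal{J}_\beta^*$, this rearranges to
\begin{equation*}
\mathcal{J}_\beta^*(s,a,z) - \mathcal{J}_\beta^\pi(s,a,z) \;\leq\; \mathcal{J}_\beta^*(s,a,z) - \mathcal{J}_\beta^{\pi_{j^*}^*}(s,a,z) + \frac{2\varepsilon}{1-\gamma}z,
\end{equation*}
so it remains to control $\mathcal{J}_\beta^* - \mathcal{J}_\beta^{\pi_{j^*}^*}$, which is the sub-optimality on the target task of the source-optimal policy $\pi_{j^*}^*$.

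To bound that residual, I would use the notation of Lemma \ref{lem:aux1_discounted} with $i$ indexing the target task $M$ and write
\begin{equation*}
\bigl|\mathcal{J}_i^i(s,a,z) - \mathcal{J}_i^{j^*}(s,a,z)\bigr| \;\leq\; \bigl|\mathcal{J}_i^i(s,a,z) - \mathcal{J}_{j^*}^{j^*}(s,a,z)\bigr| + \bigl|\mathcal{J}_{j^*}^{j^*}(s,a,z) - \mathcal{J}_i^{j^*}(s,a,z)\bigr|,
\end{equation*}
bounding each summand on the right by $\frac{\delta_r}{1-\gamma}z$ via Lemma \ref{lem:aux1_discounted} and Lemma \ref{lem:aux2_discounted} respectively. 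Adding this $\frac{2\delta_r}{1-\gamma}z$ bound to the $\frac{2\varepsilon}{1-\gamma}z$ bound derived above yields the stated inequality.

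Nothing in this plan should be hard: all of the genuinely delicate work (the non-expansion property \textbf{A4}, the contraction argument for $T_\beta^\pi$ on the augmented MDP, and the characterization of the admissible one-step measures $\mathscr{P}_{s,a}$ under the deterministic discount transition $\delta_{\gamma z}$) has already been performed in Theorem \ref{thm:gpi_aux_discounted} and its two supporting lemmas. The only subtlety to watch is that Theorem \ref{thm:gpi_aux_discounted} gives a one-sided lower bound on $\mathcal{J}_\beta^\pi$, which is why I reduce the two-sided absolute value to a single-sided comparison at the very beginning by invoking the optimality of $\mathcal{J}_\beta^*$.
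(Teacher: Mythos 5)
Your proposal is correct and follows essentially the same route as the paper: apply Theorem \ref{thm:gpi_aux_discounted} to reduce to the sub-optimality of the best source policy $\pi_{j^*}^*$ on the target, then bound that residual by the triangle inequality via Lemmas \ref{lem:aux1_discounted} and \ref{lem:aux2_discounted}, each contributing $\frac{\delta_r}{1-\gamma}z$. Your version is in fact slightly tidier than the paper's, since you explicitly justify dropping the absolute value via $\mathcal{J}_\beta^\pi \leq \mathcal{J}_\beta^*$ and avoid the sign slip in the paper's first displayed inequality (which should read $\geq -\frac{2\varepsilon}{1-\gamma}z + \cdots$).
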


\begin{proof}
Using Theorem \ref{thm:gpi_aux_discounted}:
\begin{align*}
    \mathcal{J}_{\beta}^{\pi}(s,a,z) - \mathcal{J}_{\beta}^{*}(s,a,z)
        &= \mathcal{J}_{\beta}^{\pi}(s,a,z) - \mathcal{J}_{\beta}^{\pi_j^*}(s,a,z) + \mathcal{J}_{\beta}^{\pi_j^*}(s,a,z)  - \mathcal{J}_{\beta}^{*}(s,a,z) \\
        &\geq \frac{2\varepsilon}{1 - \gamma}z + \mathcal{J}_{\beta}^{\pi_j^*}(s,a,z) - \mathcal{J}_{\beta}^{*}(s,a,z).
\end{align*}

The goal now is to bound the difference between the last two terms. Let $\mathcal{J}_{j}^{i}(s,a)$ be the entropic utility of the optimal policy $\pi_i^*$ evaluated in the augmented MDP for task $j$. Then, by the triangle inequality, $| \mathcal{J}_{i}^{i}(s,a,z) - \mathcal{J}_{i}^{j}(s,a,z)| \leq |\mathcal{J}_{i}^{i}(s,a,z) - \mathcal{J}_{j}^{j}(s,a,z)| + | \mathcal{J}_{j}^{j}(s,a,z) - \mathcal{J}_{i}^{j}(s,a,z)|$. Applying Lemma \ref{lem:aux1_discounted} and Lemma \ref{lem:aux2_discounted}, we have $| \mathcal{J}_{i}^{i}(s,a,z) - \mathcal{J}_{i}^{j}(s,a,z)| \leq \frac{2 \delta_{ij}}{1 - \gamma} z$, and the result follows.
\end{proof}

\subsection{Proof of Theorem \ref{thm:sigma_convergence}}
\label{subsec:proofs_covariance}

The proofs depend on the following result adapted from \citet{sherstan2018comparing}.

\begin{lemma}
    \label{lem:bellman_zero}
    Let $X$ be a random vector in $\mathbb{R}^d$ that depends only on $s_h, a_h, r_{h}$ and $s_{h+1}$. Then,
    \begin{equation*}
        \mathbb{E}\left[X \tr{(\Psi_{h+1}^\pi(s',\pi_{h+1}(s')) - \bm{\psi}_{h+1}^\pi(s', \pi_{h+1}(s'))} \,|\, s_h = s,\, a_h = a\right] = 0.
    \end{equation*}
\end{lemma}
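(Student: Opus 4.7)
The plan is to prove the identity by iterated conditioning, exploiting the Markov structure of the trajectory together with the fact that $\bm{\psi}_{h+1}^\pi(s', \pi_{h+1}(s'))$ is, by construction, the conditional mean of the random feature return $\Psi_{h+1}^\pi(s', \pi_{h+1}(s'))$ when we fix $s_{h+1} = s'$ and act according to $\pi$ thereafter. The key observation is that $X$ is a function only of $(s_h, a_h, r_h, s_{h+1})$, so it becomes $\sigma(s_{h+1})$-measurable once we additionally condition on $s_h, a_h, r_h$, while $\Psi_{h+1}^\pi$ involves only the trajectory strictly after time $h+1$.

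First, I would insert a layer of conditional expectation by the tower property:
\begin{equation*}
    \mathbb{E}\!\left[X \tr{(\Psi_{h+1}^\pi - \bm{\psi}_{h+1}^\pi)} \,\big|\, s_h=s,\, a_h=a\right] = \mathbb{E}\!\left[\mathbb{E}\!\left[X \tr{(\Psi_{h+1}^\pi - \bm{\psi}_{h+1}^\pi)} \,\big|\, s_h, a_h, r_h, s_{h+1}\right] \,\Big|\, s_h=s,\, a_h=a\right],
\end{equation*}
where I suppress the arguments $(s_{h+1}, \pi_{h+1}(s_{h+1}))$ for brevity. Under the inner conditioning $X$ is measurable, so it factors out. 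By the Markov property of the MDP and the fact that $\pi$ is Markov, the future trajectory from $s_{h+1}$ onward depends on the past $(s_h, a_h, r_h)$ only through $s_{h+1}$. Hence the inner expectation of $\Psi_{h+1}^\pi(s_{h+1}, \pi_{h+1}(s_{h+1}))$ reduces to its expectation conditional on $s_{h+1}$ alone, which by definition equals $\bm{\psi}_{h+1}^\pi(s_{h+1}, \pi_{h+1}(s_{h+1}))$.

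Substituting this back in, the inner expectation becomes identically zero, and so does the outer one, which establishes the claim. The main thing to be careful about is articulating cleanly that $X$'s dependence on $s_{h+1}$ is harmless: once $s_{h+1}$ is fixed, $X$ is a deterministic constant with respect to the remaining randomness, while $\Psi_{h+1}^\pi - \bm{\psi}_{h+1}^\pi$ is a mean-zero residual in that same conditional distribution. No non-trivial obstacle arises; the entire argument is a clean application of the tower property together with the Markov property, and the statement is essentially a vector-valued restatement of the well-known fact that Bellman residuals are orthogonal to $\sigma(s_h, a_h, r_h, s_{h+1})$-measurable test functions.
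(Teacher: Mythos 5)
Your argument is correct: the tower property over $(s_h,a_h,r_h,s_{h+1})$, factoring out the measurable $X$, and invoking the Markov property of the dynamics and of $\pi$ to identify the inner conditional mean of $\Psi_{h+1}^\pi(s_{h+1},\pi_{h+1}(s_{h+1}))$ with $\bm{\psi}_{h+1}^\pi(s_{h+1},\pi_{h+1}(s_{h+1}))$ is exactly the right mechanism, and it is what the application in the covariance Bellman equation requires. Note that the paper itself does not prove this lemma --- it only states it as ``adapted from'' \citet{sherstan2018comparing} --- so your write-up supplies the standard orthogonality argument that the paper leaves implicit; there is no divergence to report.
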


We first demonstrate that the Bellman equation (\ref{eqn:covariance_Bellman}) is correct for our problem.

\begin{lemma}
\label{lem:covariance_bellman}
    \begin{equation*}
        \Sigma_h^\pi(s,a) = \mathbb{E}_{s' \sim P(\cdot | s, a)}\left[\bm{\delta}_h \tr{\bm{\delta}_h} +  \Sigma_{h+1}^\pi(s', \pi_{h+1}(s')) \,|\, s_h = s,\, a_h = a \right].
    \end{equation*}
\end{lemma}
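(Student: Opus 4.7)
The plan is to exploit the recursive structure $\Psi_h^\pi(s,a) = \bm{\phi}_h + \Psi_{h+1}^\pi(s_{h+1},\pi_{h+1}(s_{h+1}))$, rewrite the covariance as an outer-product expectation, and show that the cross terms vanish via Lemma \ref{lem:bellman_zero}. Concretely, starting from the identity
\begin{equation*}
    \Sigma_h^\pi(s,a) = \mathbb{E}\!\left[ (\Psi_h^\pi(s,a) - \bm{\psi}_h^\pi(s,a))\,\tr{(\Psi_h^\pi(s,a) - \bm{\psi}_h^\pi(s,a))} \,\big|\, s_h=s,\,a_h=a\right],
\end{equation*}
I would decompose the centered return vector as $\Psi_h^\pi(s,a) - \bm{\psi}_h^\pi(s,a) = \bm{\delta}_h + \bigl(\Psi_{h+1}^\pi(s_{h+1},\pi_{h+1}(s_{h+1})) - \bm{\psi}_{h+1}^\pi(s_{h+1},\pi_{h+1}(s_{h+1}))\bigr)$, where $\bm{\delta}_h = \bm{\phi}(s_h,a_h,s_{h+1}) + \bm{\psi}_{h+1}^\pi(s_{h+1},\pi_{h+1}(s_{h+1})) - \bm{\psi}_h^\pi(s_h,a_h)$ is the Bellman residual of $\bm{\psi}_h^\pi$. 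This identity is the algebraic core of the argument.

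Next, I would expand the resulting outer product into four terms: the self-product $\bm{\delta}_h \tr{\bm{\delta}_h}$, two cross terms of the form $\bm{\delta}_h \tr{(\Psi_{h+1}^\pi - \bm{\psi}_{h+1}^\pi)}$ and its transpose, and the self-product of $\Psi_{h+1}^\pi - \bm{\psi}_{h+1}^\pi$. The key step is to observe that $\bm{\delta}_h$ is a function only of $s_h, a_h, s_{h+1}$ (and therefore $r_h$), so Lemma \ref{lem:bellman_zero} applies with $X = \bm{\delta}_h$, killing both cross terms in expectation. For the last outer-product term, the tower property gives
\begin{equation*}
    \mathbb{E}\!\left[(\Psi_{h+1}^\pi - \bm{\psi}_{h+1}^\pi)\tr{(\Psi_{h+1}^\pi - \bm{\psi}_{h+1}^\pi)} \,\big|\, s_h=s,\,a_h=a\right] = \mathbb{E}_{s'\sim P(\cdot|s,a)}\!\left[\Sigma_{h+1}^\pi(s',\pi_{h+1}(s'))\right],
\end{equation*}
since conditional on $s_{h+1}=s'$ and action $\pi_{h+1}(s')$, the inner expectation is precisely $\Sigma_{h+1}^\pi(s',\pi_{h+1}(s'))$ by definition.

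Combining these pieces yields
\begin{equation*}
    \Sigma_h^\pi(s,a) = \mathbb{E}_{s'\sim P(\cdot|s,a)}\!\left[\bm{\delta}_h\tr{\bm{\delta}_h} + \Sigma_{h+1}^\pi(s',\pi_{h+1}(s'))\right],
\end{equation*}
which is the claim. The only delicate point is verifying that Lemma \ref{lem:bellman_zero} really does eliminate both cross terms; this reduces to checking that $\bm{\delta}_h$ is measurable with respect to $(s_h,a_h,s_{h+1})$ alone, which follows directly from its definition since $\bm{\psi}_h^\pi$ and $\bm{\psi}_{h+1}^\pi$ are deterministic functions of their arguments. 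Everything else is either a telescoping algebraic manipulation or a direct application of the tower property, so I expect no serious obstacle beyond bookkeeping.
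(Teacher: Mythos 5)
Your proof is correct and follows essentially the same route as the paper's: decompose the centered feature return as $\bm{\delta}_h$ plus the next-step centered return, expand the outer product, eliminate the cross terms via Lemma \ref{lem:bellman_zero} (applicable since $\bm{\delta}_h$ depends only on $s_h, a_h, s_{h+1}$), and identify the remaining term as $\mathbb{E}_{s'}[\Sigma_{h+1}^\pi(s',\pi_{h+1}(s'))]$ by the tower property. No gaps.
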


\begin{proof}
    Let $\Psi_h^\pi(s,a) = \bm{\phi}_h + \bm{\phi}_{h+1} + \dots$ and define $\bm{\xi}_h^\pi(s,a) = \Psi_h^\pi(s,a) - \bm{\psi}_h^\pi(s,a)$. By definition of successor features, we have:
    \begin{align*}
        \bm{\xi}_h^\pi(s,a) 
        &= \Psi_h^\pi(s,a) - {\bm{\psi}}_h^\pi(s,a) \\ 
        &= \bm{\phi}_h + \bm{\psi}_{h+1}^\pi(s', \pi_{h+1}(s')) - \bm{\psi}_h^\pi(s,a) + (\Psi_{h+1}^\pi(s', \pi_{h+1}(s')) - \bm{\psi}_{h+1}^\pi(s', \pi_{h+1}(s'))) \\
        &= \bm{\delta}_h + \bm{\xi}_{h+1}^\pi(s',\pi_{h+1}(s'))
    \end{align*}
    By definition, the covariance is:
    \begin{align*}
        \Sigma_h^\pi(s,a) 
        &= \mathbb{E}\left[(\Psi_h^\pi(s,a) - \bm{\psi}_h^\pi(s,a))\tr{(\Psi_h^\pi(s,a) - \bm{\psi}_h^\pi(s,a))} \, | \, s_h = s, \, a_h = a \right] \\
        &= \mathbb{E}\left[\bm{\xi}_h^\pi(s,a)\tr{\bm{\xi}_h^\pi(s,a)} \, | \, s_h = s, \, a_h = a \right] \\
        &= \mathbb{E}\left[(\bm{\delta}_h + \bm{\xi}_{h+1}^\pi(s',\pi_{h+1}(s')))\tr{(\bm{\delta}_h + \bm{\xi}_{h+1}^\pi(s',\pi_{h+1}(s')))} \, | \, s_h = s, \, a_h = a \right] \\
        &= \mathbb{E}\left[\bm{\delta}_h \tr{\bm{\delta}_h} +  \bm{\xi}_{h+1}^\pi(s',\pi_{h+1}(s')) \tr{\bm{\xi}_{h+1}^\pi(s',\pi_{h+1}(s'))} \, | \, s_h = s, \, a_h = a \right] \\
        &\phantom{=}+ \mathbb{E}\left[\bm{\delta}_h \tr{\bm{\xi}_{h+1}^\pi(s',\pi_{h+1}(s'))} \, | \, s_h = s, \, a_h = a \right] \\
        &\phantom{=}+ \mathbb{E}\left[\bm{\xi}_{h+1}^\pi(s',\pi_{h+1}(s')) \tr{\bm{\delta}_h} \, | \, s_h = s, \, a_h = a \right] \\
        &= \mathbb{E}\left[\bm{\delta}_h \tr{\bm{\delta}_h} +  \bm{\xi}_{h+1}^\pi(s',\pi_{h+1}(s')) \tr{\bm{\xi}_{h+1}^\pi(s',\pi_{h+1}(s'))} \, | \, s_h = s, \, a_h = a \right] \\
        &= \mathbb{E}\left[\bm{\delta}_h \tr{\bm{\delta}_h} +  \Sigma_{h+1}^\pi(s', \pi_{h+1}(s')) \, | \, s_h = s, \, a_h = a \right],
    \end{align*}
    where the second-last line follows from Lemma \ref{lem:bellman_zero}.
\end{proof}

\begin{customtheorem}{3}
    Let $\|\cdot\|$ be a matrix-compatible norm, and suppose there exists $\varepsilon : \mathcal{S} \times \mathcal{A} \times \mathcal{T} \to [0, \infty)$ such that $\|\tilde{\bm{\psi}}_h^\pi(s,a) - {\bm{\psi}}_h^\pi(s,a) \|^2 \leq \varepsilon_h(s,a)$ and $\|\mathbb{E}_{s'\sim P(\cdot | s, a)}[\tilde{\bm{\delta}}_h \tr{(\tilde{\bm{\psi}}_h^\pi(s',\pi_{h+1}(s')) - {\bm{\psi}}_h^\pi(s',\pi_{h+1}(s')))}]\| \leq \varepsilon_h(s,a)$. Then,
    \begin{equation*}
        \left\| {\Sigma}_h^\pi(s,a) - \mathbb{E}_{s' \sim P(\cdot |s,a)}\left[\tilde{\bm{\delta}}_h \tr{\tilde{\bm{\delta}}_h} +  \tilde{\Sigma}_{h+1}^\pi(s',\pi_{h+1}(s')) \right] \right\| \leq 3 \varepsilon_h(s,a).
    \end{equation*}
\end{customtheorem}

\begin{proof}
    We start by decomposing the true covariance matrix:
    \begin{align*}
        \Sigma_h^\pi(s,a)
        &= \mathbb{E}\Big[(\Psi_h^\pi(s,a) - \tilde{\bm{\psi}}_h^\pi(s,a) + \bm{\psi}_h^\pi(s,a) - \bm{\psi}_h^\pi(s,a))\\
        &\phantom{ }\indent\tr{(\Psi_h^\pi(s,a) - \tilde{\bm{\psi}}_h^\pi(s,a) + \bm{\psi}_h^\pi(s,a) - \bm{\psi}_h^\pi(s,a))} \,|\, s_h = s,\, a_h = a\Big] \\
        &= \mathbb{E}\left[(\Psi_h^\pi(s,a)-\tilde{\bm{\psi}}_h^\pi(s,a))\tr{(\Psi_h^\pi(s,a)-\tilde{\bm{\psi}}_h^\pi(s,a))} \,|\, s_h = s,\, a_h = a \right] \\
        &\phantom{=} + (\tilde{\bm{\psi}}_h^\pi(s,a) - {\bm{\psi}}_h^\pi(s,a))\tr{(\tilde{\bm{\psi}}_h^\pi(s,a) - {\bm{\psi}}_h^\pi(s,a))} \\
        &\phantom{=} + 2 \mathbb{E}\left[\Psi_h^\pi(s,a) - \tilde{\bm{\psi}}_h^\pi(s,a) \,|\, s_h = s,\, a_h = a \right] \tr{(\tilde{\bm{\psi}}_h^\pi(s,a) - {\bm{\psi}}_h^\pi(s,a))} \\
        &= \mathbb{E}\left[(\Psi_h^\pi(s,a)-\tilde{\bm{\psi}}_h^\pi(s,a))\tr{(\Psi_h^\pi(s,a)-\tilde{\bm{\psi}}_h^\pi(s,a))} \,|\, s_h = s,\, a_h = a \right] \\
        &\phantom{=} - (\tilde{\bm{\psi}}_h^\pi(s,a) - {\bm{\psi}}_h^\pi(s,a))\tr{(\tilde{\bm{\psi}}_h^\pi(s,a) - {\bm{\psi}}_h^\pi(s,a))} 
    \end{align*}
    where in the last step we use the identity $\mathbb{E}\left[\Psi_h^\pi(s,a) - \tilde{\bm{\psi}}_h^\pi(s,a) \,|\, s_h = s,\, a_h = a \right] = \mathbb{E}\left[\Psi_h^\pi(s,a) - {\bm{\psi}}_h^\pi(s,a) \,|\, s_h = s,\, a_h = a \right] + \bm{\psi}_h^\pi(s,a) - \tilde{\bm{\psi}}_h^\pi(s,a) = \bm{\psi}_h^\pi(s,a) - \tilde{\bm{\psi}}_h^\pi(s,a)$. Now, we define $\tilde{\bm{\xi}}_h^\pi(s,a) = \Psi_h^\pi(s,a)-\tilde{\bm{\psi}}_h^\pi(s,a)$, then follow the derivations in Lemma \ref{lem:covariance_bellman} to write the first term above as:
    \begin{align*}
        &\mathbb{E}\left[\tilde{\bm{\xi}}_h^\pi(s,a)\tr{\tilde{\bm{\xi}}_h^\pi(s,a)} \,|\, s_h = s,\, a_h = a \right] \\
        &= \mathbb{E}\left[\tilde{\bm{\delta}}_h \tr{\tilde{\bm{\delta}}_h} +  \tilde{\bm{\xi}}_{h+1}^\pi(s',\pi_{h+1}(s')) \tr{\tilde{\bm{\xi}}_{h+1}^\pi(s',\pi_{h+1}(s'))} \, | \, s_h = s, \, a_h = a \right] \\
        &\phantom{=}+ \mathbb{E}\left[ \tilde{\bm{\delta}}_h \tr{\tilde{\bm{\xi}}_{h+1}^\pi(s',\pi_{h+1}(s'))} \, | \, s_h = s, \, a_h = a \right] + \mathbb{E}\left[ \tilde{\bm{\xi}}_{h+1}^\pi(s',\pi_{h+1}(s')) \tr{\tilde{\bm{\delta}}_h} \, | \, s_h = s, \, a_h = a \right] \\
        &= \mathbb{E}\left[\tilde{\bm{\delta}}_h \tr{\tilde{\bm{\delta}}_h} +  \tilde{\Sigma}_{h+1}^\pi(s', \pi_{h+1}(s')) \, | \, s_h = s, \, a_h = a \right] \\
        &\phantom{=}+ \mathbb{E}\left[ \tilde{\bm{\delta}}_h \tr{\tilde{\bm{\xi}}_{h+1}^\pi(s',\pi_{h+1}(s'))} \, | \, s_h = s, \, a_h = a \right] + \mathbb{E}\left[ \tilde{\bm{\xi}}_{h+1}^\pi(s',\pi_{h+1}(s')) \tr{\tilde{\bm{\delta}}_h} \, | \, s_h = s, \, a_h = a \right].
    \end{align*}
    Finally, we norm bound the desired difference as follows:
    \begin{align*}
        &\left\|\Sigma_h^\pi(s,a) - \mathbb{E}\left[\tilde{\bm{\delta}}_h \tr{\tilde{\bm{\delta}}_h} +  \tilde{\Sigma}_{h+1}^\pi(s', \pi_{h+1}(s')) \, | \, s_h = s, \, a_h = a \right]\right\| \\
        &\leq 2 \left\|\mathbb{E}\left[ \tilde{\bm{\delta}}_h \tr{\tilde{\bm{\xi}}_{h+1}^\pi(s',\pi_{h+1}(s'))} \, | \, s_h = s, \, a_h = a \right]\right\| \\
        &\phantom{=} + \left\| (\tilde{\bm{\psi}}_h^\pi(s,a) - {\bm{\psi}}_h^\pi(s,a))\tr{(\tilde{\bm{\psi}}_h^\pi(s,a) - {\bm{\psi}}_h^\pi(s,a))} \right\| \\
        &\leq 2 \left\|\mathbb{E}\left[ \tilde{\bm{\delta}}_h \tr{(\Psi_{h+1}^\pi(s', \pi_{h+1}(s')) - \bm{\psi}_{h+1}^\pi(s', \pi_{h+1}(s')))} \, | \, s_h = s, \, a_h = a \right]\right\| \\
        &\phantom{=} + 2 \left\|\mathbb{E}\left[ \tilde{\bm{\delta}}_h \tr{(\bm{\psi}_{h+1}^\pi(s', \pi_{h+1}(s')) - \tilde{\bm{\psi}}_{h+1}^\pi(s', \pi_{h+1}(s')))} \, | \, s_h = s, \, a_h = a \right]\right\| \\
        &\phantom{=} + \left\| (\tilde{\bm{\psi}}_h^\pi(s,a) - {\bm{\psi}}_h^\pi(s,a))\tr{(\tilde{\bm{\psi}}_h^\pi(s,a) - {\bm{\psi}}_h^\pi(s,a))} \right\| \\
        &\leq 2 \left\|\mathbb{E}\left[ \tilde{\bm{\delta}}_h \tr{(\bm{\psi}_{h+1}^\pi(s', \pi_{h+1}(s')) - \tilde{\bm{\psi}}_{h+1}^\pi(s', \pi_{h+1}(s')))} \, | \, s_h = s, \, a_h = a \right]\right\| \\
        &\phantom{=} + \left\| \tilde{\bm{\psi}}_h^\pi(s,a) - {\bm{\psi}}_h^\pi(s,a)) \right\|^2 \\
        &\leq 2 \varepsilon_h(s,a) + \varepsilon_h(s,a) = 3 \varepsilon_h(s,a).
    \end{align*}
    This is the desired result.
\end{proof}

\section{Experiment Details}
\label{sec:experimental_detail}

In this section, we describe the setup of the domains discussed in the main paper in greater detail. We also provide detailed descriptions of baseline algorithms, as well as all hyper-parameters used and how they were selected.

\subsection{Motivating Example}

The motivating example is a 5-by-5 grid-world domain with discrete states and discrete actions described by the four possible directions of movement into an adjacent cell. The environment is made stochastic by introducing random action noise as follows. Desired actions are taken only with probability $0.8$, while the remaining time a random action is taken. Furthermore, transitions that would take the agent outside of the boundaries of the grid leave the agent in its current position. The cost structure is defined as follows. The goal state is terminal and provides a reward of $+20$. Each time step incurs a fixed penalty of $-1$, on top of any other rewards or costs incurred.

To recover the properties of risk-aware and risk-neutral GPI claimed in the main text, we first learn the source policies $\pi_1$ and $\pi_2$ and their utilities $\mathcal{Q}_\beta^{\pi_1}$ and $\mathcal{Q}_\beta^{\pi_2}$ using a variant of the classic value iteration algorithm adapted to maximize the entropic utility (see Algorithm \ref{alg:value_iteration}). We consider the non-discounted setting ($\gamma = 1$), and iterate until an absolute error less than $\varepsilon_{exit} = 10^{-12}$ is achieved between two consecutive iterations\footnote{Please note that convergence of value iteration is guaranteed due to the existence of absorbing states and because the underlying MDP is ergodic.}. The two source policies are then recovered by acting greedily with respect to the learned utilities. 

In order to implement GPI, we evaluate these two resulting policies on the target task by adapting the iterative procedure in Algorithm \ref{alg:value_iteration} for \emph{policy evaluation}. Essentially, line 10 of the algorithm is replaced by $r(s,a,s') + \gamma \mathcal{Q}(s',\pi(s'))$ for $\pi \in \lbrace \pi_1, \pi_2\rbrace$. We repeat this procedure twice to produce two sets of value functions: a set $\lbrace \mathcal{Q}_0^{\pi_1}, \mathcal{Q}_0^{\pi_2} \rbrace$ for $\beta = 0$\footnote{For $\beta = 0$, Algorithm \ref{alg:value_iteration} reduces to standard value iteration.} and a set $\lbrace \mathcal{Q}_{-0.1}^{\pi_1}, \mathcal{Q}_{-0.1}^{\pi_2} \rbrace$ for $\beta = -0.1$. The two GPI policies are then defined as $\pi_{\beta}(s) \in \argmax_{a} \max_{i=1,2} \mathcal{Q}_\beta^{\pi_i}(s,a)$ for $\beta \in [0, -0.1]$. Finally, we generate the histogram of returns by simulating episodes of length $T = 35$, throughout which actions are selected from $\pi_\beta$, and computing the cumulative reward obtained on each.

\begin{algorithm}[!tb]
  \caption{Value Iteration for Entropic Utility Maximization}
  \label{alg:value_iteration}
\begin{algorithmic}[1]
    \State {\bfseries Requires} $\varepsilon_{exit} > 0$, $\gamma \in [0,1]$, $\beta \in \mathbb{R}$, $\langle \mathcal{S}, \mathcal{A}, r, P \rangle \in \mathcal{M}$
    \State {\bfseries for} $s \in \mathcal{S}, \, a \in \mathcal{A}$ {\bfseries do} $\mathcal{Q}(s,a) \gets 0$
    \For{$n=1, 2\dots \infty$}
        \IndentLineComment{Update $\mathcal{Q}(s,a)$ for all state-action pairs}
        \For{$s \in \mathcal{S}, \, a \in \mathcal{A}$}
            \IndentLineComment{Perform one iteration of (\ref{eqn:entropic_bellman_discounted}) with the greedy policy derived from $\mathcal{Q}$}
            \State $\mathcal{Q}'(s,a) \gets 0$
            \For{$s' \in \mathcal{S}$}
                \If{$s'$ is terminal}
                    \State $\mathrm{target} \gets r(s,a,s')$
                \Else
                     \State $\mathrm{target} \gets r(s,a,s') + \gamma \max_b \mathcal{Q}(s',b)$
                \EndIf
                \State $\mathcal{Q}'(s,a) \gets \mathcal{Q}'(s,a) + P(s' | s, a) \, e^{\beta \times \mathrm{target}}$
            \EndFor
            \State $\mathcal{Q}'(s,a) \gets \frac{1}{\beta} \log \mathcal{Q}'(s,a)$
        \EndFor
        \LineComment{Check for convergence in utility values}
        \State $\varepsilon \gets \max_{s,a} | \mathcal{Q}'(s,a) - \mathcal{Q}(s,a) |$
        \State {\bfseries if} $\varepsilon < \varepsilon_{exit}$ {\bfseries then return} $\mathcal{Q}'$
        \LineComment{If not converged, then continue with value iteration}
        \State {\bfseries for} $s \in \mathcal{S}, \, a \in \mathcal{A}$ {\bfseries do} $\mathcal{Q}(s,a) \gets \mathcal{Q}'(s,a)$
    \EndFor
\end{algorithmic}
\end{algorithm}

\subsection{Four-Room}
\label{subsec:shapes_experiment}

The four-room domain consists of a family of discrete-state discrete-action MDPs $\mathcal{M}$ defined as follows. The world is defined as a set of discrete cells arranged in a grid of dimensions 13-by-13, such that at each time instant, the agent occupies a specific cell with some $x$- and $y$-coordinates $(p_x, p_y) \in \lbrace 0, \dots 12 \rbrace^2$. 

As the agent explores the space, it can collect objects belonging to one of 3 possible classes. While the initial positions of these objects remains fixed throughout the experiment, their existence is determined by whether or not they have already been collected by the agent in a given episode (the same object cannot be picked up multiple times in a given episode). In our configuration, there are 6 instances of objects belonging to each class, for a total of $n_o = 18$ collectible objects. Therefore, the state space $\mathcal{S} = \lbrace 0, 1 \rbrace^{n_o} \times \lbrace 0, \dots 12 \rbrace^2$ consists of the concatenation of the agent's current position $(p_x, p_y)$ and a set of binary variables indicating whether or not each object has already been picked up by the agent. All objects are reset at the beginning of each episode. Actions are defined as $\mathcal{A} = \lbrace \mathrm{left}, \mathrm{up}, \mathrm{right}, \mathrm{down} \rbrace$ that move the agent to an adjacent cell in the corresponding direction. In the case that the destination cell lies outside the grid, then the agent remains in the current cell at the next time instant. 

The goal cell `G' provides a fixed reward of $+1$ and immediately terminates the episode upon entry. The reward $r_c$ associated with each object class $c \in \lbrace 1, 2, 3 \rbrace$ is reset every time a new task begins, and is sampled from a uniform distribution on $[-1, +1]$. Occupying a trap cell that triggers at a particular time instant defines a failure, and is communicated to the agent by incurring a penalty of $-2$ and immediately terminating the episode. However, occupying a trap cell does not automatically guarantee a failure. Instead, a failure is only triggered with probability $0.05$ independently at every time instant during which the agent occupies a trap cell. This additional reward stochasticity can be implemented without breaking the existing successor feature framework by introducing a fictitious terminal state $s_f$ to indicate failure, which is reached at random when in cells marked `X'. This state augmentation induces a modified MDP with a deterministic reward of $-2$ on arrival to state $s_f$, whose associated transitions are stochastic in nature. Crucially, this state augmentation transformation applies uniformly to all task instances, and thus does not break our assumptions about $\mathcal{M}$. We use a discount factor of $\gamma = 0.95$.

Exact state features $\bm{\phi}(s,a,s')$ are provided directly to the agent. Specifically, we define $\phi_c(s,a,s')$ for every class of objects $c \in \lbrace 1, 2, 3 \rbrace$ to take the value 1 if the agent occupies a cell with an object of class $c$ in state $s'$ and 0 otherwise. Similarly, we define $\phi_g(s,a,s')$ to take the value 1 if $s'$ corresponds to the goal cell and 0 otherwise. Unlike \citet{barreto2017successor}, the four-room domain also contains an additional failure state with non-zero reward, as described above, and this must also be incorporated into the SF representation. This can be done by defining $\phi_f(s,a,s')$ that takes the value 1 if $s'$ corresponds to the state $s_f$ and 0 otherwise\footnote{It is not practical to redefine the task space with the augmented state $s_f$ in an actual implementation. Instead, we simulate this by providing the state features $\bm{\phi}$ with a binary variable indicating failure. This does not change the SF implementation, since the occurrence of a failure event can be deduced using the $\mathrm{done}$ flag (indicating arrival in a terminal state) and the state $s'$.}. The state features $\bm{\phi} \in \mathbb{R}^5$ are then the concatenation of $\phi_c$, $\phi_g$ and $\phi_f$. These features are sparse, but can represent the reward functions of all possible task instances in $\mathcal{M}$ exactly. Finally, we define $\mathbf{w}_c = r_c$, $\mathbf{w}_g = 1$ and $\mathbf{w}_f = -2$, and it is now clear that $r(s,a,s') = \tr{{\bm{\phi}}(s,a,s')} \mathbf{w}$ holds. 

Each time a new task is created, a new $\tilde{\bm{\psi}}^\pi$ and $\tilde{\Sigma}^\pi$ are created. The training loop of RaSFQL then proceeds according to Algorithm \ref{alg:mvsfql}. We set $\alpha = 0.5$ and $\varepsilon = 0.12$, based on preliminary experiments for Q-learning. We also set $\bar{\alpha} = 0.1$ for learning $\tilde{\Sigma}^\pi$ and $\alpha_w = 0.5$ for learning $\mathbf{w}$ with gradient descent. Rollouts are limited to $T = 200$ steps for all algorithms. 

The baseline used for comparison is the \emph{probabilistic policy reuse} framework of \citet{fernandez2006probabilistic} (PRQL), here adapted for learning risk-sensitive behaviors. In order to do this, we incorporate the \emph{smart exploration} strategy of \citet{gehring2013smart}. This strategy is fundamentally similar to our mean-variance approach, since it also incorporates second-moment or reward-variance information into action selection in a similar way. The controllability bonus $C^\pi(s,a)$ in each state-action pair is learned using a Q-learning approach by using the negative of the absolute Bellman residuals $-|\delta|$ as pseudo-rewards, and learned in parallel to the Q-values in practical implementations. The penalty for $C(s,a)$ is denoted as $\omega$, and is fundamentally similar to $\beta$ used by SFQL. The resulting algorithm, which we call RaPRQL, is described in detail in Algorithm \ref{alg:smart}. 

\begin{algorithm}[!tb]
  \caption{RaPRQL with Smart Exploration}
  \label{alg:smart}
\begin{algorithmic}[1]
    \State {\bfseries Requires} $m, T, N_e \in \mathbb{N}, \,  \varepsilon, \eta \in [0, 1], \, \alpha, \rho, \tau > 0, \, \omega \in \mathbb{R}$, $M_1, \dots M_m \in \mathcal{M}$
    
  \For{$t=1, 2\dots m$}
        \State Initialize $Q^t(s,a), C^t(s,a)$ to small random or zero values
        \State {\bf{for}} {$k = 1, 2 \dots t$} {\bf{do}} $\mathrm{score}_k \gets 0$, $\mathrm{used}_k \gets 0$
        \State $c \gets t$
        \State $R \gets 0$
        \LineComment{Commence training on task $M_t$}
        \For{$n_e = 1, 2 \dots N_e$}
        \State Initialize $M_t$ with initial state $s$
        \For{$h=0,1\dots T$} 
            \IndentLineComment{Select actions according to Q-values plus controllability bonus}
            \State {\bf{if}} $c \not= t$ {\bf{then}} $\mathrm{use\_prev\_policy} \sim \mathrm{Bernoulli}(\eta)$ {\bf{else}} $\mathrm{use\_prev\_policy} \gets \mathrm{false}$
            \If{$\mathrm{use\_prev\_policy}$} 
                \IndentLineComment{Action is selected from $\pi_c$, the source policy being used}
                \State $a \gets \argmax_b \lbrace Q_h^c(s,b) + \omega C_h^c(s,b) \rbrace$
            \Else
                \IndentLineComment{Action is selected from $\pi_t$, the policy being learned}
                \State $\mathrm{random\_a} \sim \mathrm{Bernoulli}(\varepsilon)$
                \State \parbox[t]{\dimexpr\textwidth-\leftmargin-\labelsep-\labelwidth}{%
                \bf{if} $\mathrm{random\_a}$ \bf{then} $a \sim \mathrm{Uniform}(\mathcal{A})$ \bf{else} $a \gets \argmax_b \lbrace Q_h^t(s,b) + \omega C_h^t(s,b) \rbrace$ \strut}    
            \EndIf
            \State Take action $a$ in $M_t$ and observe $r$ and $s'$
            \LineComment{Update the Q-values for the current task}
            \State $\delta_h \gets r + \max_b Q_{h+1}^t(s',b) - Q_h^t(s,a)$
            \State $Q_h^t(s,a) \gets Q_h^t(s,a) + \alpha \delta_h$
            \LineComment{Update the controllability bonus for the current task}
            \State $C_h^t(s,a) \gets C_h^t(s,a) + \alpha \rho (-|\delta_h| - C_h^t(s,a))$
            \State $R \gets R + r$
            \State $s \gets s'$
        \EndFor
        \LineComment{Update average return obtained by following policy $\pi_c$}
        \State $\mathrm{score}_c \gets \frac{\mathrm{score}_c \times \mathrm{used}_c + R}{\mathrm{used}_c + 1}$
        \LineComment{Sample a new source policy}
        \State {\bf{for}} $k = 1, 2 \dots t$ {\bf{do}} $p_k \gets \frac{e^{\tau \times \mathrm{score}_k}}{\sum_j e^{\tau \times \mathrm{score}_j}}$
        \State $c \sim \mathrm{Multinomial}(p_1, p_2, \dots p_t)$
        \State $\mathrm{used}_c \gets \mathrm{used}_c + 1$
        \State $R \gets 0$
        \EndFor
  \EndFor
\end{algorithmic}
\end{algorithm}

Similar to RaSFQL, every time a new task is created, a new $Q^\pi$ and $C^\pi$ are created for RaPRQL for learning new policies. We set $\alpha = 0.5$ for fair comparison with RaSFQL, and $\rho = 0.1$ based on the original implementation \citep{gehring2013smart}. The performance is highly sensitive to the parameters $\eta$ and $\tau$ used by PRQL. To select these two hyper-parameters, we follow \citet{barreto2017successor} and run a grid search for $\eta \in \lbrace 0.1, 0.3, 0.5 \rbrace$ and $\tau \in \lbrace 1, 10, 100 \rbrace$, selecting the combination of $\eta$ and $\tau$ that resulted in the highest cumulative return over 128 task instances. This validation experiment is repeated for every value of $\omega$. 

\subsection{Reacher}
\label{subsec:reacher_experiment}

The physics simulator used for the reacher domain is provided by the open-source \texttt{pybullet} and \texttt{pybullet-gym} packages \citepsupp{coumans2021,benelot2018}. We adapted the Python environment in the latter package to handle multiple target goal locations as required in our problem setting. Please note that this package is released under the MIT license.

The state space $\mathcal{S} \subset \mathbb{R}^4$ consists of the angles and angular velocities of the robotic arm's two joints. The two-dimensional action space $\mathcal{A} \subset [-1, +1]^2$ is discretized using 3 values per dimension, corresponding to maximum positive ($+1$) and negative ($-1$) and zero torque for each actuator, resulting in a total of 9 possible actions. At the beginning of each episode, the angle of the central joint is sampled from a uniform distribution on $[-\pi, +\pi]$, while the angle of the outer joint is sampled from a uniform distribution on $[-\pi/2, +\pi/2]$, and the angular velocities are initialized to zero. Furthermore, state transitions are made stochastic by adding zero-mean Gaussian noise to actions with standard deviation $0.03$, and then clipping the actions to $[-1, +1]$. 

The reward received at each time step is $1 - 4\delta$, where $\delta$ is the Euclidean distance between the target position and the tip of the robotic arm. We define 12 target locations, of which 4 are used for training and the remaining 8 for testing. Furthermore, circular regions of radius $\delta_f = 0.06$ are placed around 6 of the 12 target locations (2 training and 4 testing) in which failures occur spontaneously with probability $p_f = 0.035$. Once a failure occurs, a cost of $c_f = 3$ is incurred and the episode continues without termination. This implies that the expected reward, as a function of the distance $\delta$, is\footnote{The reasoning here has simplified some of the aspects of the environment, ignoring the effects of multiple risk regions that could alter the trajectories, limited-length episodes and discounting.}
\begin{equation*}
   R(\delta) = 
   \begin{cases}
        1 - 4 \delta &\mbox{ if } \delta > \delta_f \\
        1 - 4 \delta - c_f \times p_f &\mbox { if } \delta \leq \delta_f.
   \end{cases}
\end{equation*}
Therefore, a rational\footnote{Of course, a rational agent would want to keep the tip as close to the target location as possible, and so would want $\delta = 0$.} risk-neutral agent would prefer to enter inside the failure region if it holds that $1 - c_f \times p_f \geq 1 - 4 \delta_f$, or in other words if
\begin{equation*}
    c_f \times p_f \leq 4 \delta_f.
\end{equation*}
Clearly, given our choice of values for $c_f, p_f$ and $\delta_f$, the above condition holds in our setting. Setting up the reward structure and risk in this way makes it possible to control the trade-off between risk and reward, and thus the anticipated behavior of the agents, in a principled way. We also apply discounting of future reward using $\gamma = 0.9$. 

The state features are vectors $\bm{\phi}(s,a,s') \in \mathbb{R}^{13}$, in which the first 12 components consist of $1 - 4 \delta_g$, where $\delta_g$ are the Euclidean distances to each of the goal locations $g$. The last component takes the value 1 if a failure event occurs and 0 otherwise. As done in the four-room experiment, state features are provided to the agent. However, target goal locations $\mathbf{w}\in \mathbb{R}^{13}$ are not learned in this instance, but provided directly to the agent as well. Specifically, we set $\mathbf{w}_g = 1$ for the goal with index $g$ and $\mathbf{w}_{13} = c_f = -3$, and set all other elements to zero. This recovers the correct reward function $r(s,a,s')$ for all task instances as described above.

The overall training and testing procedures closely mimic \citet{barreto2017successor}. The successor features $\bm{\psi}^\pi$ and their distribution $\bm{\Psi}^\pi$ are represented as multi-layer perceptrons (MLP) with two hidden layers of size 256 and $\mathrm{tanh}$ non-linearities. The SFC51 and RaSFC51 architectures are generally identical and require output layers of dimensions $\mathbb{R}^{9\times 51 \times 13}$, with a $\mathrm{softmax}$ activation function applied with respect to the second dimension. Similarly, C51 and RaC51 also require output layers but of dimensions $\mathbb{R}^{9 \times 51}$ and $\mathrm{softmax}$ applied with respect to the second dimension. For SFDQN, the output of the network is linear with dimensions $\mathbb{R}^{9\times 13}$. We also use target networks for both SFC51/RaSFC51 and C51/RaC51, which are updated every $1,\!000$ transitions by copying weights from the learning networks. For SFC51, RaSFC51 and SFDQN, separate MLPs are used to learn each policy. To allow C51 and RaC51 to generalize across target locations, we apply \emph{universal value function approximation} \citep{schaul2015universal} and incorporate the target position into the state. This makes C51 essentially identical to the DQN baseline in \citet{barreto2017successor}, except that DQN is replaced by C51. For C51-based agents, recall that the range of possible values of $\bm{\phi}$ must also be specified. For SFC51 and RaSFC51, we use $\phi_{min}^d = -1$ and $\phi_{max}^d = 1$ for $d = 1, 2 \dots 12$ and use $\phi_{min}^{13} = 0$ and $\phi_{max}^{13} = 1$, which corresponds to a relatively tight bound for state features described in the previous paragraph. For C51 and RaC51, we set the bounds to $V_{min} = -30$ and $V_{max} = 10$, which corresponds to a tight bound for the discounted return. These intervals are discretized into $N = 51$ atoms for learning histograms, as recommended in the original paper \citep{bellemare2017distributional}. 

Agents are trained on all 4 training task instances sequentially one at a time, for $200,\!000$ time steps per task using an epsilon-greedy policy with $\varepsilon = 0.1$. Analogous to \citet{barreto2017successor}, every sample is used to train all 4 policies simultaneously for SFC51, RaSFC51 and SFDQN. A randomized replay buffer of infinite capacity stores all previously-observed transitions $(s,a,\bm{\phi}, s')$ from all 4 training tasks, to avoid ``catastrophic forgetting" of previously learned task instances. Each update of the network is based on a mini-batch of size 32 sampled uniformly from the replay buffer, and uses the Adam optimizer with a learning rate of $10^{-3}$.  Please note that these parameters, and those in the previous paragraph, are generally identical to those used in \citet{barreto2017successor}. Testing follows an epsilon-greedy policy with $\varepsilon = 0.03$ and greedy actions are selected according to risk-aware GPI, e.g. $a^* \in \argmax_a \max_{i \in \lbrace 1, \dots 4 \rbrace} \lbrace \tr{\bm{\tilde{\psi}}^{\pi_i}(s,a)}\mathbf{w}_j + \beta \tr{\mathbf{w}_j} \tilde{\Sigma}^{\pi_i}(s,a) \mathbf{w}_j \rbrace$. Recall that test rewards $\mathbf{w}_j$ are provided to the agent. We set the episode length to $T = 500$ time steps for training and testing. All visualizations are based on estimating the test return at regular intervals of $5,\!000$ time steps, calculated as the average performance of 5 independent rollouts.

Since the performance varies for different target locations, \citet{barreto2017successor} applies a normalization procedure to compare the performance between tasks in a fair manner. We apply the same procedure, by first training a standard C51 agent from scratch on each training and test task 10 times, and recording the average performance at the beginning and end of training, $\bar{G}_b$ and $\bar{G}_a$, respectively. The normalized return illustrated in all figures is then calculated as $G_n = (G - \bar{G}_b) / (\bar{G}_a - \bar{G}_b)$. 

\subsection{Additional Details for Reproducibility}
\label{subsec:machines}

\paragraph{Reproducing Four-Room.}
The four-room experiment was run on an Alienware m17 R3, whose software and hardware specifications are provided in Table \ref{table:laptop_config}. Please note that while this machine has a GPU and tensorflow installed, neither were used in this experiment.

\begin{table}[!htb]
\centering
    \begin{tabular}{ccc} \toprule
        Component & Description & Quantity \\ \midrule
        Operating System & Windows 10 Home & \\
        Python & 3.8.5 (Anaconda) & \\
        tensorflow & 2.3.1 & \\ \midrule
        System Memory & 32 GB &  \\
        Hard Disk & 953.9GB & 1 \\
        CPU & Intel i7-10875H @ 2.30GHz (turbo-boost @ 5.1GHz) & 1 \\
        GPU & Nvidia RTX 2080 Super 8GB & 1\\
        \bottomrule
    \end{tabular}
\caption{Software and hardware configuration used to run all experiments for the four-room domain.}
\label{table:laptop_config}
\end{table}

\paragraph{Reproducing Reacher.}
The reacher experiment was run on a Lenovo ThinkStation P920 workstation, whose software and hardware specifications are described in Table \ref{table:workstation_config}. 

\begin{table}[!htb]
\centering
    \begin{tabular}{ccc} \toprule
        Component & Description & Quantity \\ \midrule
        Operating System & Ubuntu 18.04 & \\
        Python & 3.8.5 & \\
        tensorflow & 2.4.0 & \\ \midrule
        System Memory & 187 GB &  \\
        Hard Disk & 953.9GB & 5 \\
        CPU & Intel Xeon Gold 6234 @ 3.30GHz (turbo-boost @ 4GHz) & 32 \\
        GPU & Nvidia Quadro RTX 8000 48GB & 2\\
        \bottomrule
    \end{tabular}
\caption{Software and hardware configuration used to run all experiments for the reacher domain.}
\label{table:workstation_config}
\end{table}

\paragraph{Other Factors.}
Please note that seeds were not fixed during the experiment but generated in each trial using Python's default seed generation algorithm. This allows us to average the performance of all algorithms over different seed values and initializations. No internal modifications to the Python environment nor to any of its installed packages were made. No effort to overclock the machines' CPUs or GPUs beyond their factory settings were made in order to decrease the overall computation time (see below).

\paragraph{Computation Time.} The majority of the computation time in running the experiment was allocated to the reacher domain, partially because of the size of the network architectures required to learn meaningful policies (2 hidden layers consisting of 256 neurons), and the number of samples required to draw meaningful conclusions for all baselines. The computation time is considerably greater for RaSFC51 (around 28-36 hours per trial) than it is for RaC51 (around 6-8 hours per trial), which is expected since the former must train 4 neural networks while the latter must train only one. This could potentially lead to negative environmental impacts if the model is to be deployed on complex problems in real-world settings. At the same time, the potential speed-ups demonstrated by RaSFC51 as compared to RaC51 could reduce the \emph{overall} training time considerably and offset the total energy requirement of learning policies with a satisfactory variance-adjusted return. Parallelization of the training loop could also be beneficial and provide significant time and cost savings in practice.

\section{Additional Ablation Studies and Plots}
\label{sec:additional_plots}

In this section, we include the full details and results of the ablation studies described in the main text, and additional analysis that had to be left out of the main paper due to space limitations. 

\subsection{Four-Room}
\label{subsec:shapes_ablation}

We can study the effect of $\beta$ on the return performance and risk-sensitivity of the learned behaviors by repeating the four-room experiment (Appendix \appendixref{subsec:shapes_experiment}{C.2}) for various values of $\beta$. In particular, we trained RaSFQL for $\beta \in \lbrace 0, -1/2, -1, -2, -4 \rbrace$ ($\omega$ for RaPRQL), and recorded the cumulative reward and number of failures across all 128 training task instances. The results of these experiments are summarized in Figure \ref{fig:shapes_ablation_beta}. We see that the performance of RaSFQL degrades gracefully as $\beta$ decreases (a relative drop in cumulative reward of approximately $25\%$ is observed when $\beta$ is decreased from $0$ to $-4$), while the corresponding degradation for RaPRQL is considerably more pronounced (a relative drop in cumulative reward of roughly $75\%$ is observed for an identical change in $\omega$). Meanwhile, the number of cumulative failures of RaSFQL is generally lower than RaPRQL for every pair of identical values of $\beta$ and $\omega$. In fact, for $\beta \in \lbrace -1, -2, -4\rbrace$, the cumulative numbers of failures are increasing at sub-linear rates, which implies that risk-avoidance behavior is becoming more prominent as the number of training task instances increases. 

\begin{figure}[!tb]
    \centering
    \includegraphics[width=0.499\linewidth]{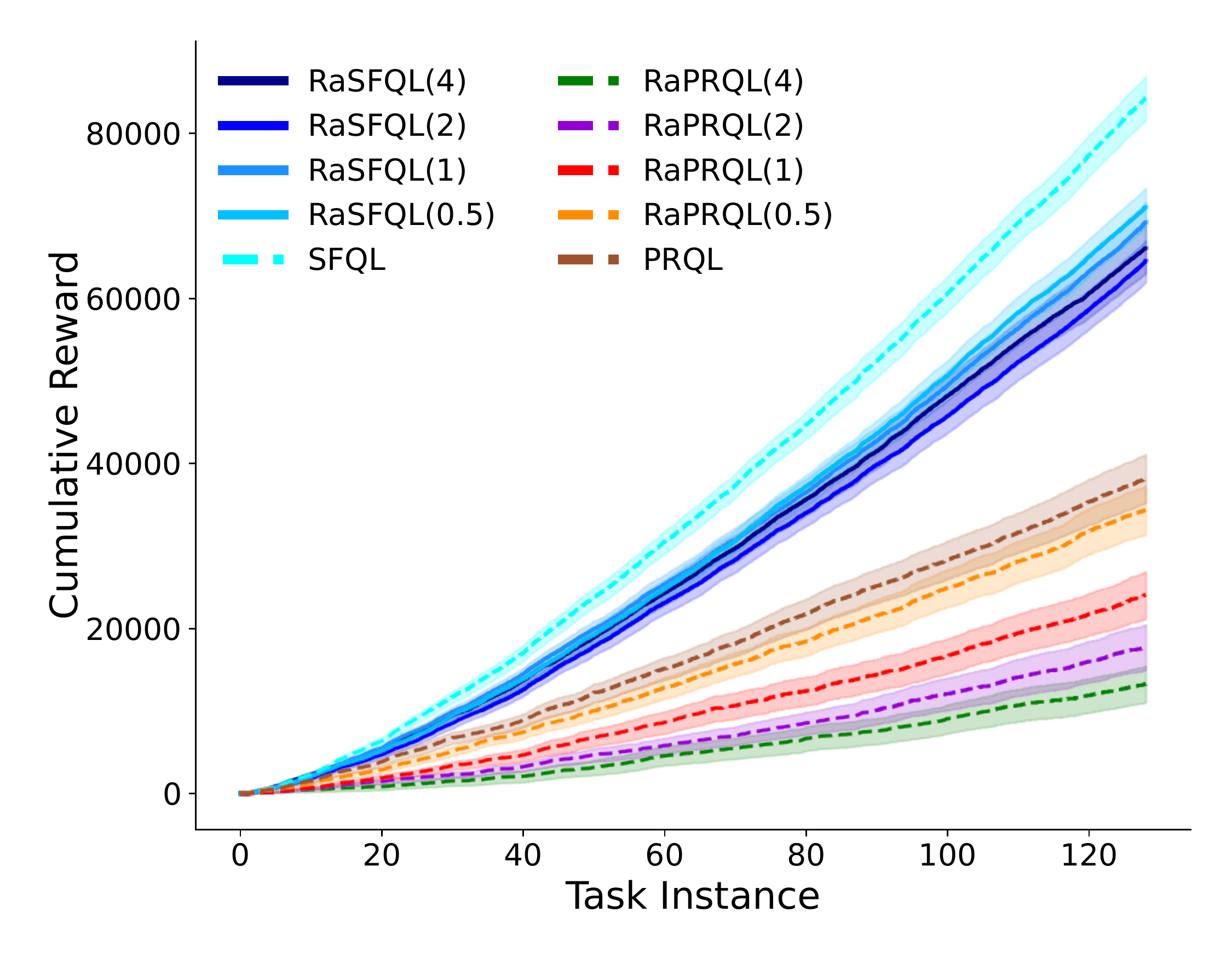}%
    \includegraphics[width=0.499\linewidth]{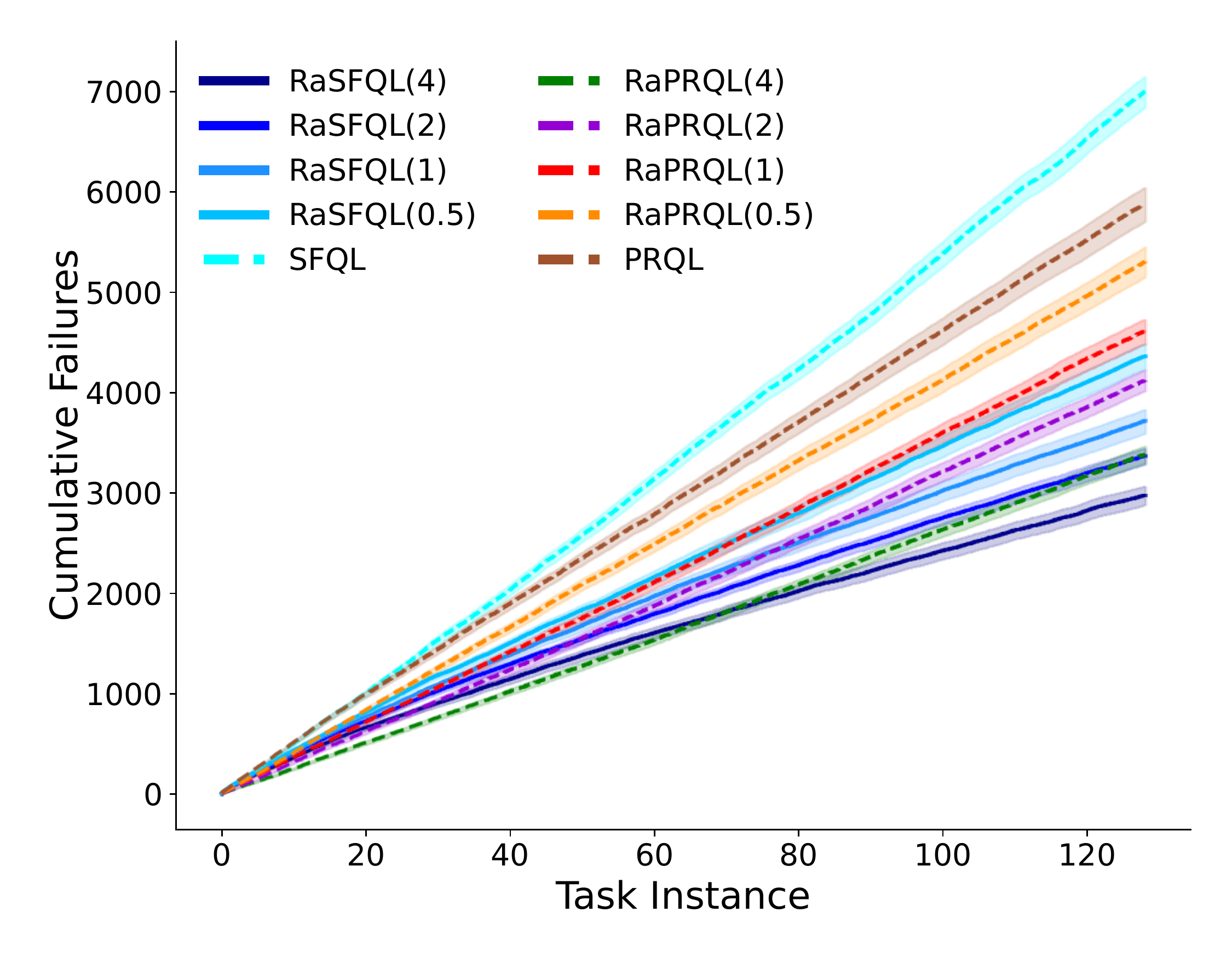}
    \caption{\textbf{Left:} cumulative reward collected across all training tasks in the four-room domain, for various values of $\beta$ for RaSFQL ($\omega$ for RaPRQL). \textbf{Right:} cumulative number of failures across all training tasks in the four-room domain, for various values of $\beta, \omega$. Please note that legend entries in parentheses indicate the negative values of $\beta$ and $\omega$. Shaded error bars indicate one standard error over 30 independent runs of each algorithm.}
    \label{fig:shapes_ablation_beta}
\end{figure}

In order to better understand the kind of risk-averse behaviors being learned, we instantiated 27 novel test task instances by enumerating $\mathbf{w}_i \in \lbrace -1, 0, 1 \rbrace$ for every object class $i = 1, 2, 3$. We then tested the performance of the GPI policy obtained from the training procedure described in the main paper, by simulating 100 rollouts following the epsilon-greedy policy with $\varepsilon = 0.1$ on each of the test tasks. Please note that no training was ever performed on the test tasks. The state visitation counts across all 100 trajectories were computed for every task instance and arranged in a 3D-lattice as indicated in Figure \ref{fig:shapes_rollouts}. We repeated this procedure twice: once for RaSFQL with $\beta = -2$ and once for SFQL. Interestingly, RaSFQL and SFQL learn behaviors that are similar to each other when looking at the same task, but each of them exploits different regions of the state space depending on the reward. However, RaSFQL almost always learns to avoid the dangerous objects in the bottom-left and top-right rooms, whereas SFQL does not necessarily do so. This discrepancy is most evident, for instance, when $\mathbf{w}_2 = 1$ and for $(\mathbf{w}_1, \mathbf{w}_2, \mathbf{w}_3) = (1, -1, -1)$ and $(\mathbf{w}_1, \mathbf{w}_2, \mathbf{w}_3) = (1, -1, 0)$. 

\begin{figure}[!tb]
    \centering
    \includegraphics[width=0.499\linewidth]{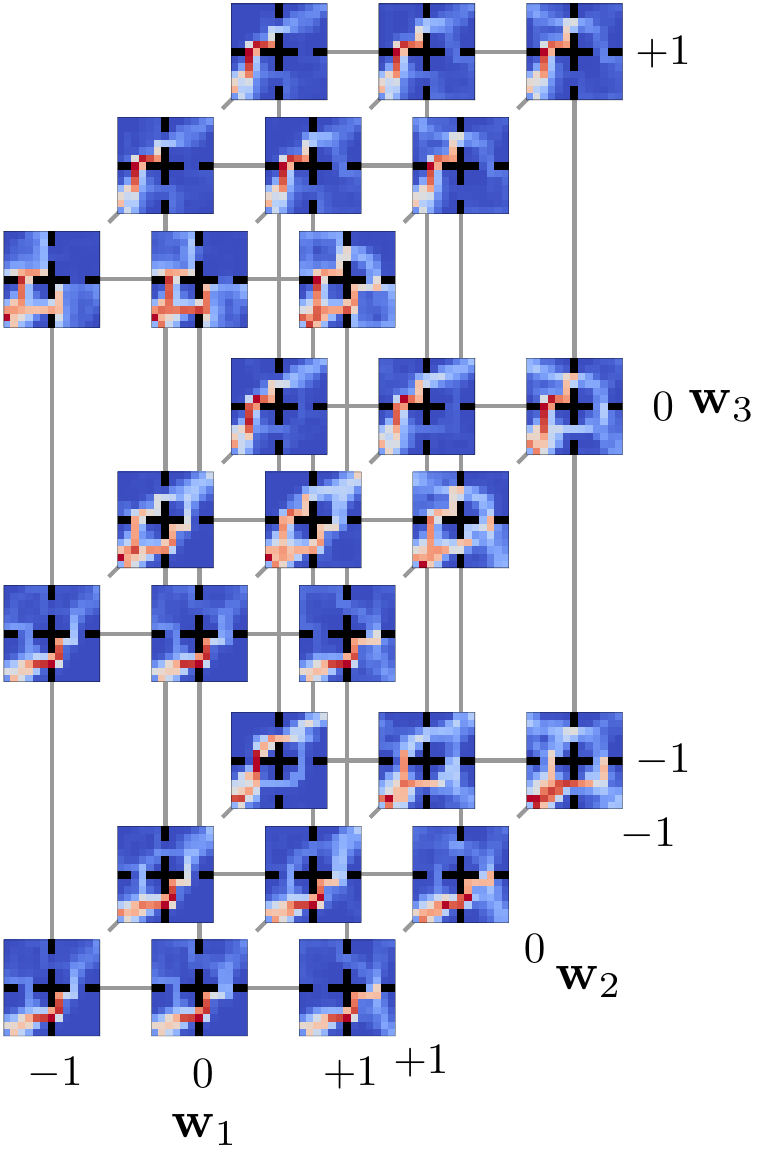}%
    \includegraphics[width=0.499\linewidth]{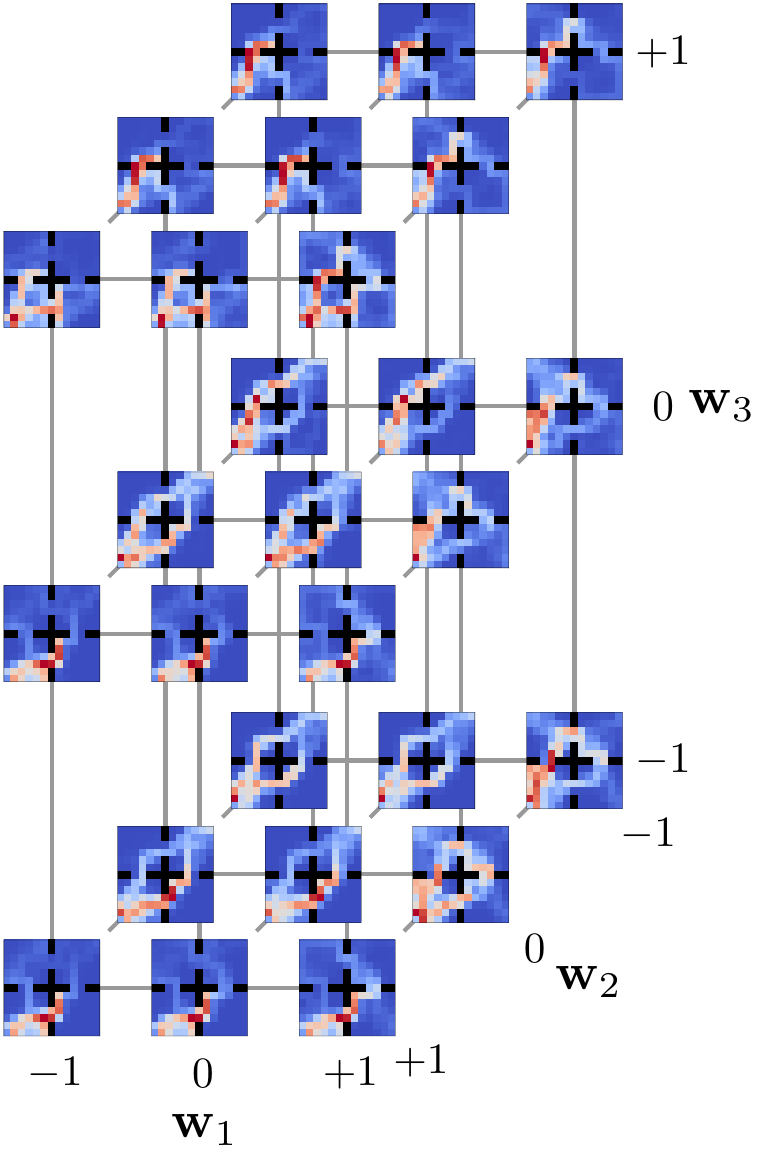}
    \caption{Visitation counts over 100 rollouts from behavior/training policies (epsilon-greedy with $\varepsilon=0.12$) derived from GPI after training on all 128 task instances. The behavior policies are illustrated on 27 novel task instances in which the reward $\mathbf{w}$ varies, e.g. $\mathbf{w}_1, \mathbf{w}_2, \mathbf{w}_3 \in \lbrace -1, 0, 1 \rbrace$. \textbf{Left:} Behavior policies derived from GPI for RaSFQL with $\beta = -2$. \textbf{Right:} Behavior policies derived from GPI for standard SFQL. Visitation counts are averaged over 30 independent runs for each algorithm.}
    \label{fig:shapes_rollouts}
\end{figure}

\subsection{Reacher}
\label{subsec:reacher_ablation}

For the reacher domain, we conducted a similar analysis of the risk-avoidance behaviors elicited by the entropic utility by repeating the experiment (Appendix \appendixref{subsec:reacher_experiment}{C.3}) for different values of $\beta$. The left plot in Figure \ref{fig:reacher_ablation_beta} illustrates the total number of failures across all test tasks for RaSFC51 and RaC51 for $\beta \in \lbrace 0, -1, -2, -3, -4 \rbrace$. The number of breakdowns decreases predictably as $\beta$ is decreased from zero. Unlike the four-room domain however, the number of failures of RaSFC51 is modestly \emph{greater} than RaC51 for the same values of $\beta$. In order to better understand how efficiently the trade-off between risk and reward is handled by these two approaches, we decided to compute an alternative measure of return by dividing the normalized return by the total number of failures. Intuitively, this quantity provides an estimate of the expected reward collected between between successive failure events. The right plot contained in Figure \ref{fig:reacher_ablation_beta}, which compares this quantity for RaSFC51 and RaC51 for different values of $\beta$, shows that RaSFC51 is actually much more efficient at managing the trade-off between risk and reward for larger-magnitude values of $\beta$. This is not surprising, given that RaSFC51 can obtain much high return than RaC51 for a comparable number of failures, when $\beta = -3$ and $\beta = -4$. In fact, for $\beta = -4$, the number of failures of RaSFC51 and RaC51 become equivalent as both methods learn sufficiently conservative policies. Even in this case, successor features combined with GPI allow RaSFC51 to generalize much better on novel tasks than RaC51.

\begin{figure}[!tb]
    \centering
    \includegraphics[width=0.499\linewidth]{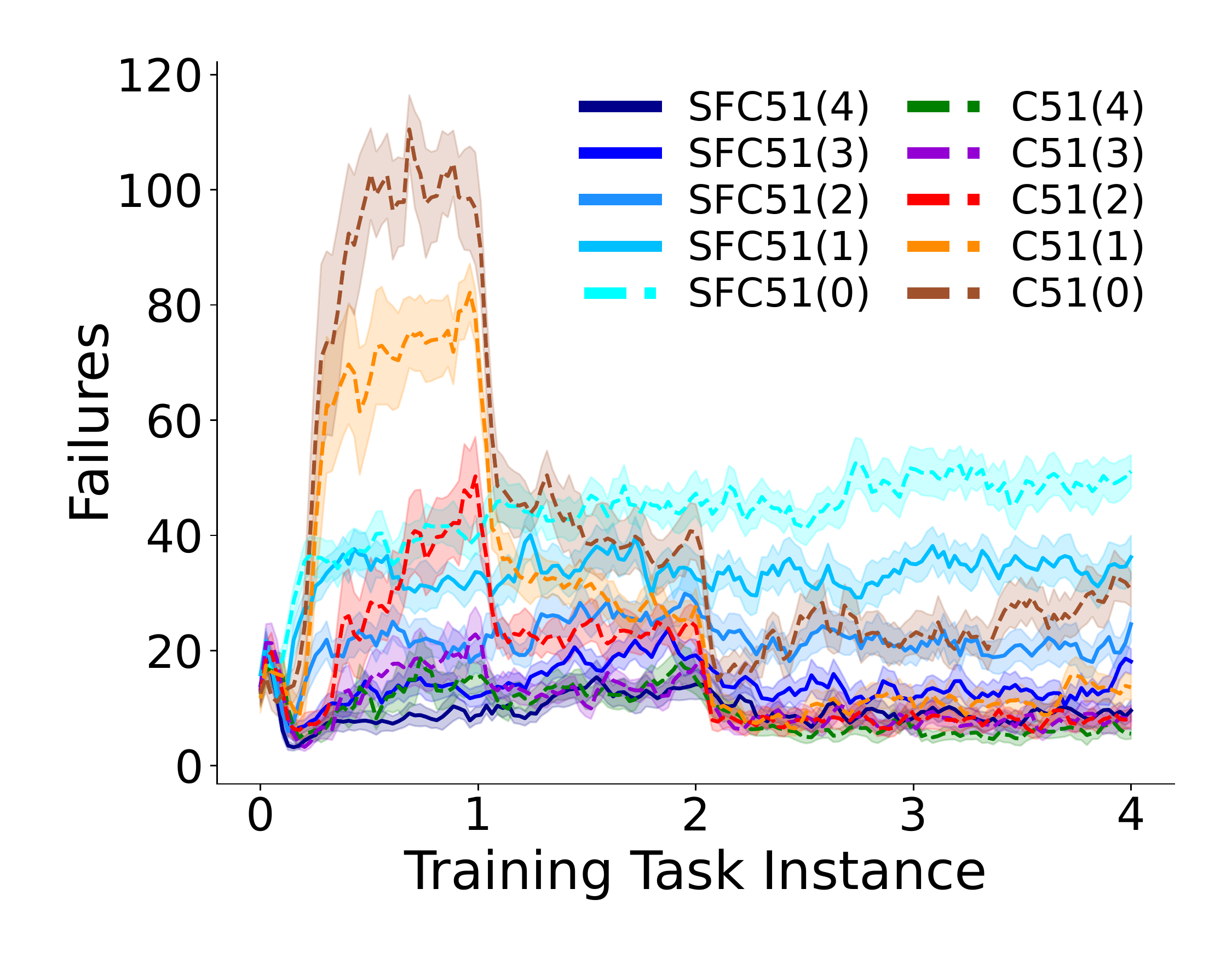}%
    \includegraphics[width=0.499\linewidth]{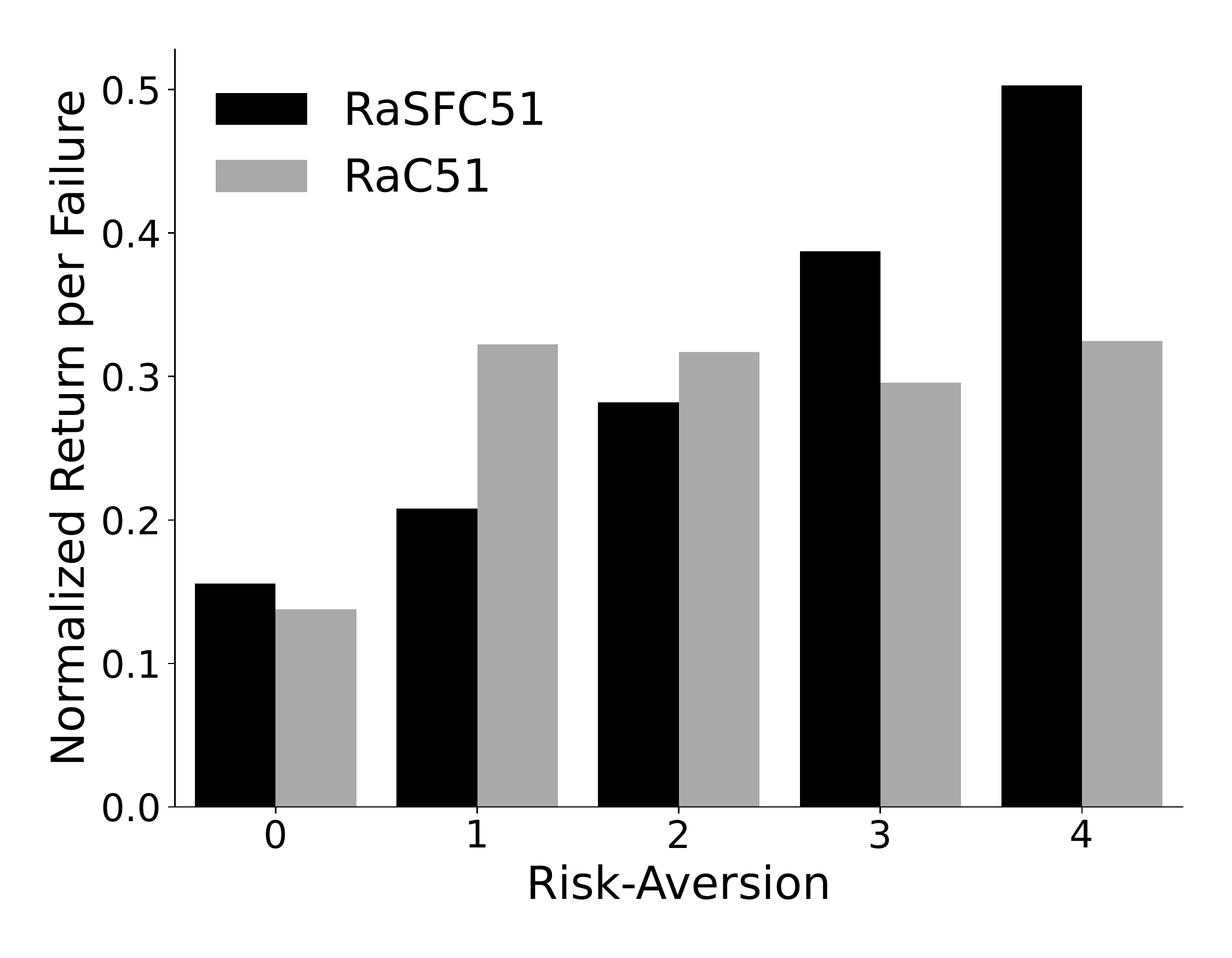}
    \caption{Ablation study for the number of failures on test tasks in the reacher domain, as a function of $\beta$. \textbf{Left:} Total failures across all test tasks for various values of $\beta$. Legend entries in parentheses indicate the negative values of $\beta$. \textbf{Right:} In order to assess the trade-off between return and possibility of failure, we divide the normalized return, averaged across all test tasks, by the total number of failures for each value of $\beta$. The resulting measure is compared between RaSFC51 and RaC51. The x-axis indicates the negative values of $\beta$.}
    \label{fig:reacher_ablation_beta}
\end{figure}

As suggested in the main text, one possible conjecture is that RaSFC51 learns to correctly solve the test tasks, requiring the robotic arm to hover closer to the edge of the risky areas, while RaC51 does not. The presence of environment stochasticity, errors in function approximation, and the stochasticity of the epsilon-greedy policy used during testing could exacerbate this. Comparing the rollouts produced by RaSFC51 and RaC51 in training tasks in Figure \ref{fig:reacher_rollouts_train}, and testing tasks in Figure \ref{fig:reacher_rollouts_test}, confirms that RaSFC51 is much better at task generalization than RaC51. Here, RaSFC51 learns to hover right at the boundaries of the high-variance regions, preferring not to enter them whenever possible. On the other hand, risk-neutral SFC51 is completely unaware of the risky areas, focusing exclusively on minimizing the distance to the target location, but is able to successfully locate all targets. RaC51 demonstrates similar risk-aware behaviors as RaSFC51, but cannot reliably locate the target on some of the test task instances. 

\begin{figure}[!tb]
    \centering
    \includegraphics[width=0.18\linewidth]{figs/figs_ablation/reacher_rollouts_task_0_risk_30.pdf}%
    \includegraphics[width=0.18\linewidth]{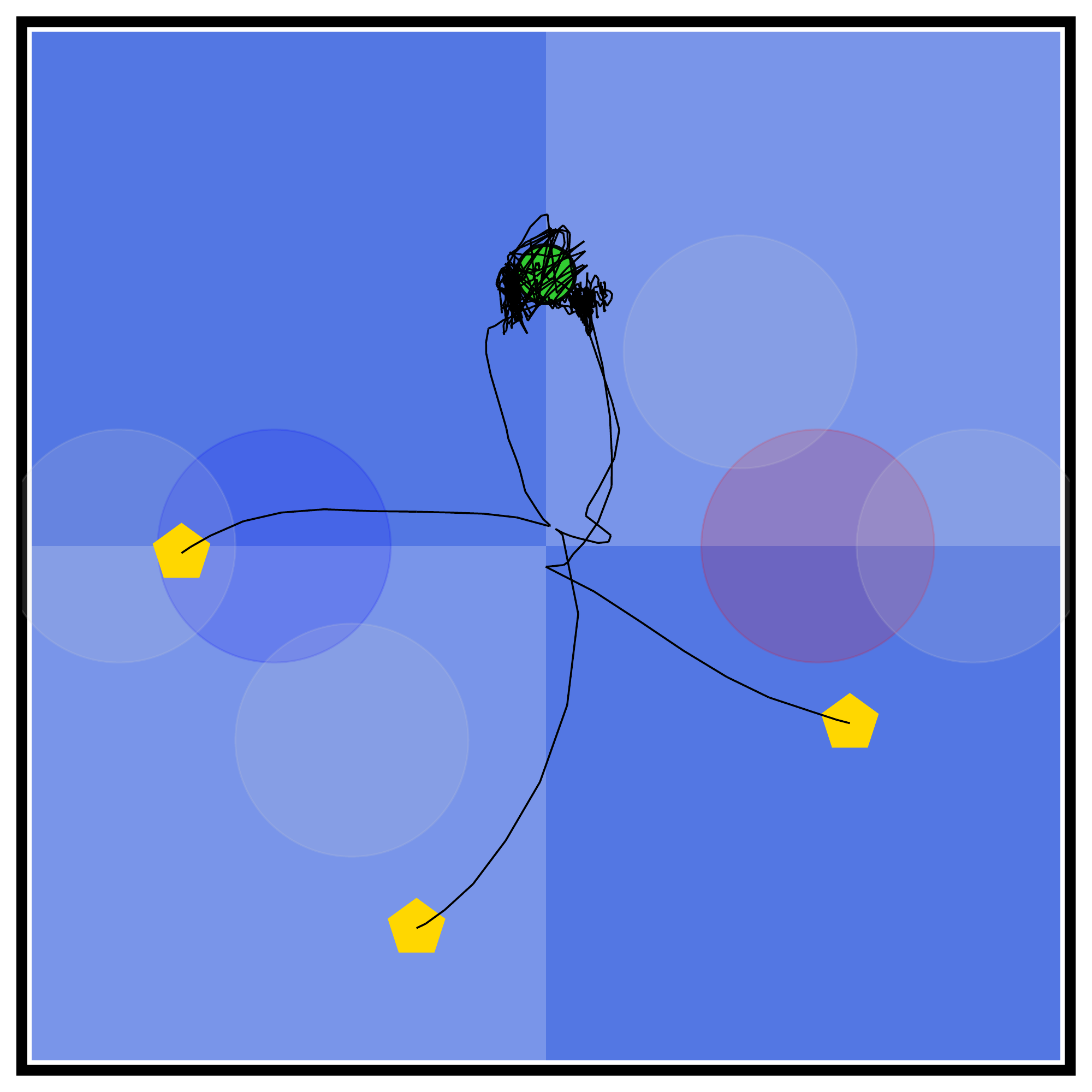}%
    \includegraphics[width=0.18\linewidth]{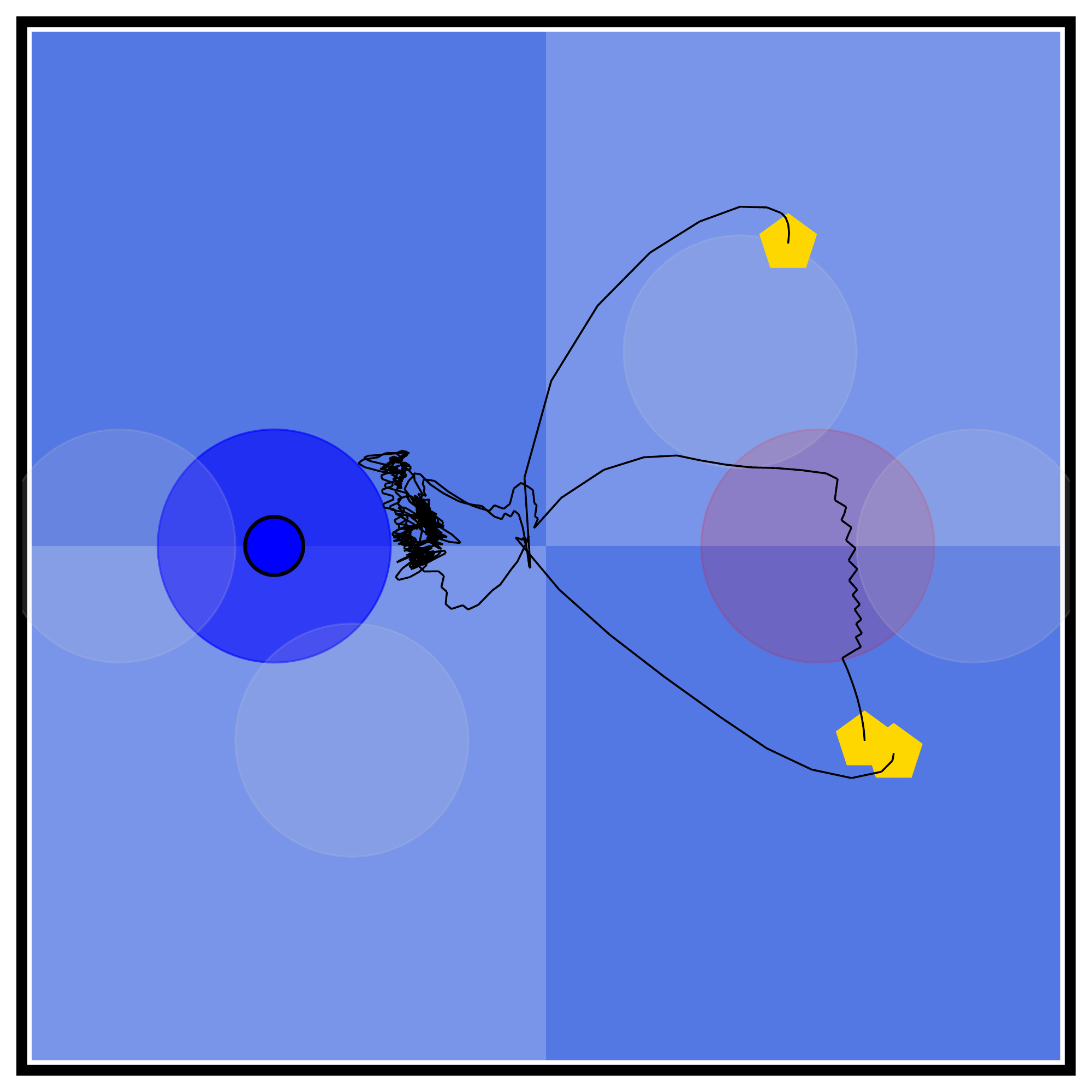}%
    \includegraphics[width=0.18\linewidth]{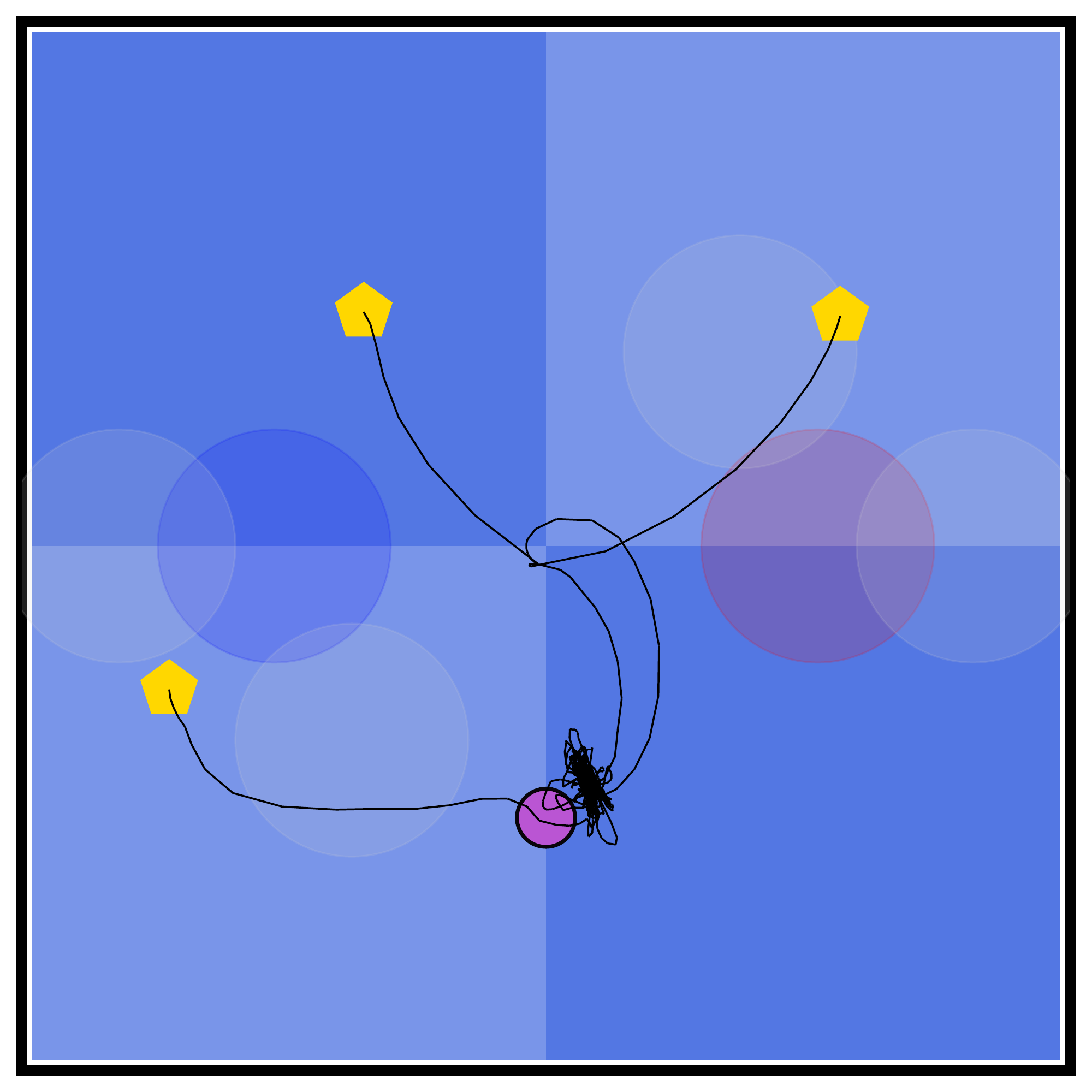}%
    \includegraphics[width=0.012\linewidth]{figs/figs_ablation/reacher_beta_1_axis.pdf}
    
    \includegraphics[width=0.18\linewidth]{figs/figs_ablation/reacher_rollouts_task_0_risk_00.pdf}%
    \includegraphics[width=0.18\linewidth]{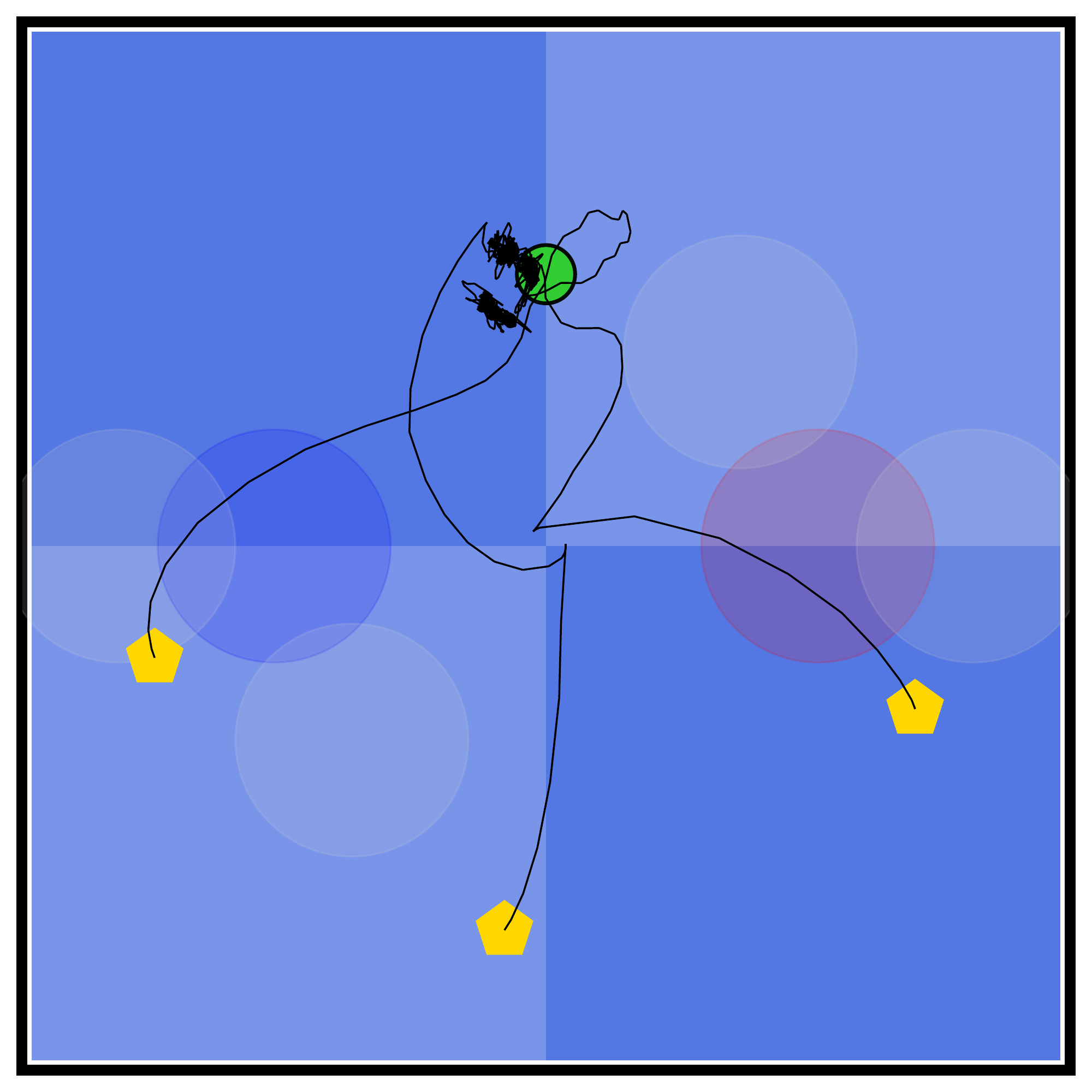}%
    \includegraphics[width=0.18\linewidth]{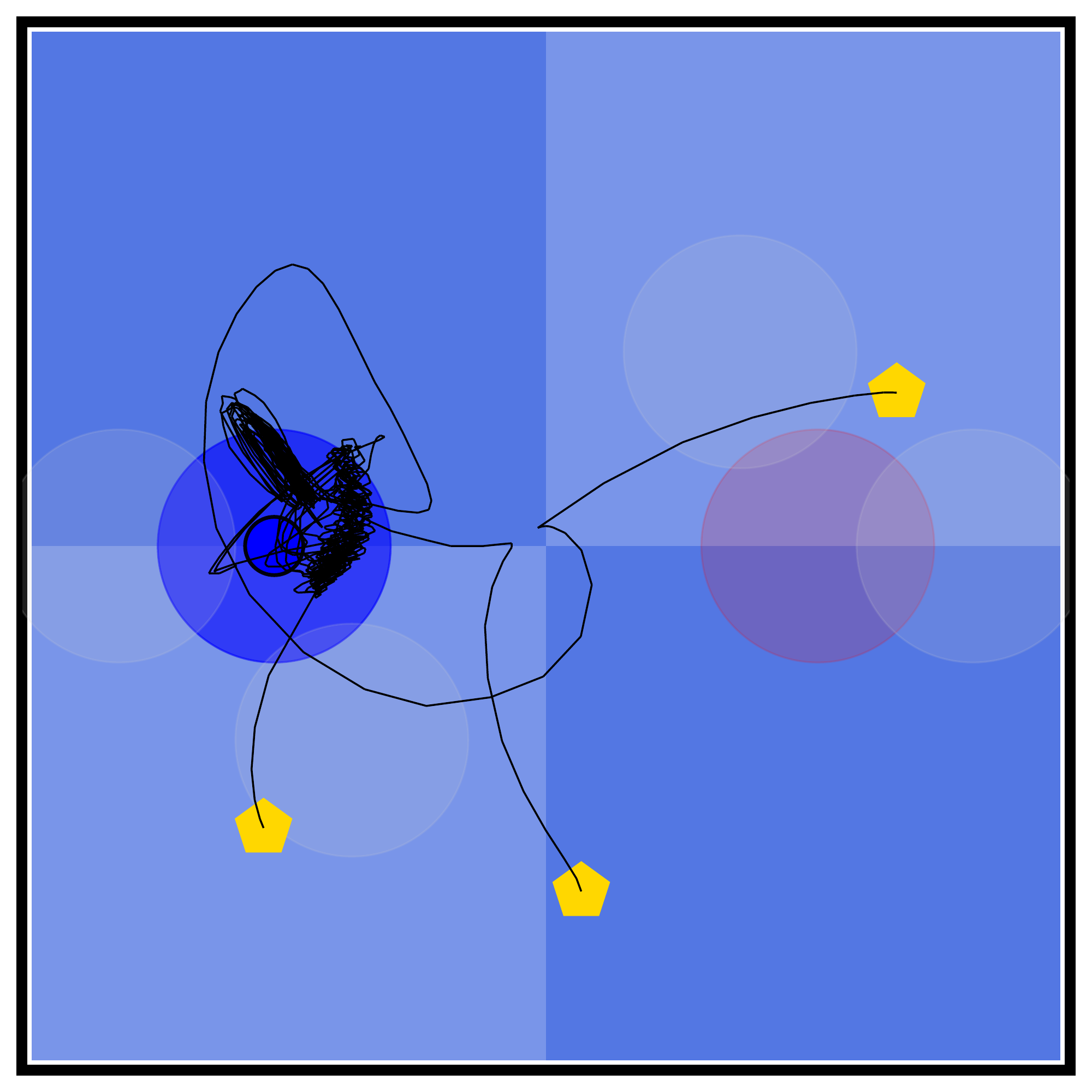}%
    \includegraphics[width=0.18\linewidth]{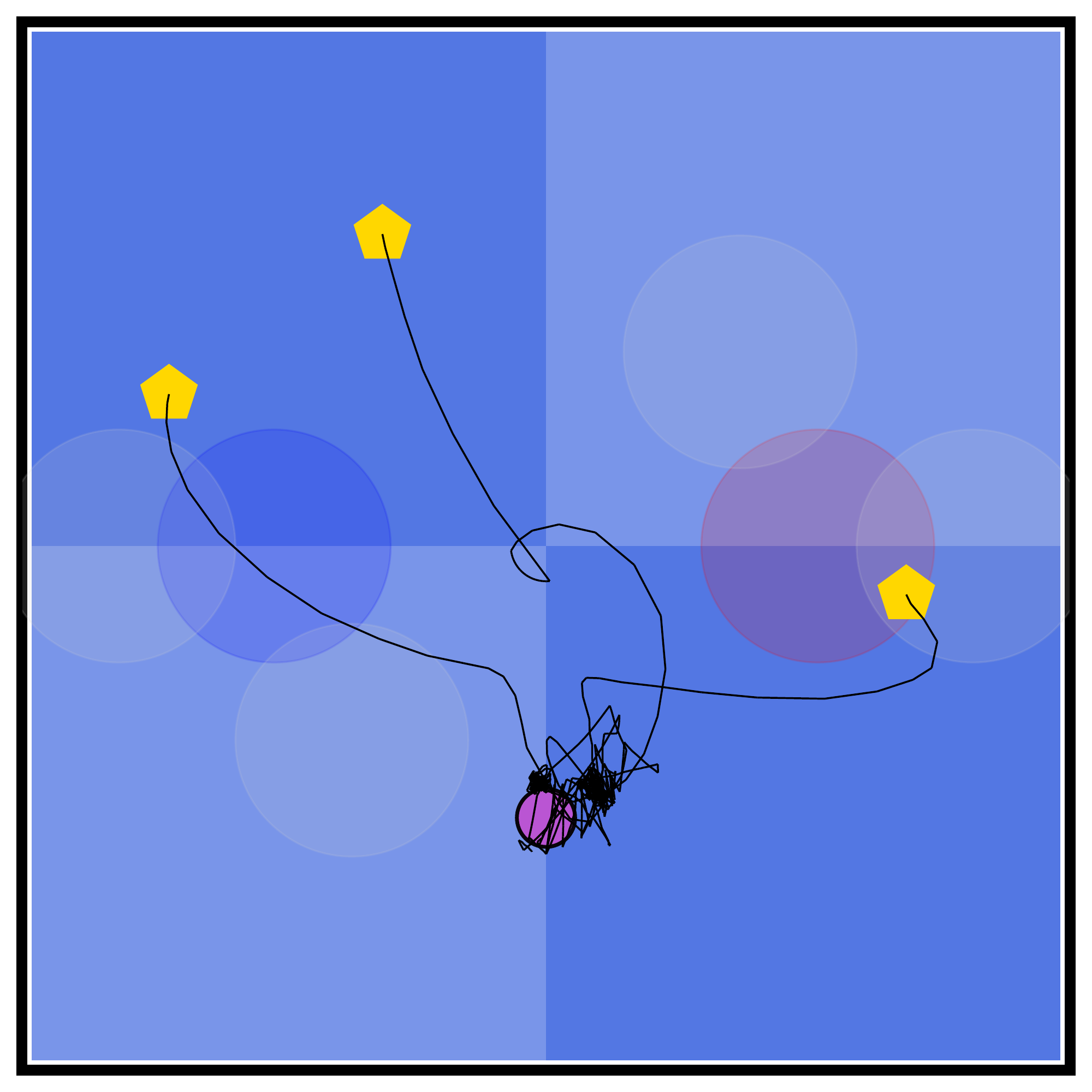}%
    \includegraphics[width=0.012\linewidth]{figs/figs_ablation/reacher_beta_2_axis.pdf}
    
    \includegraphics[width=0.18\linewidth]{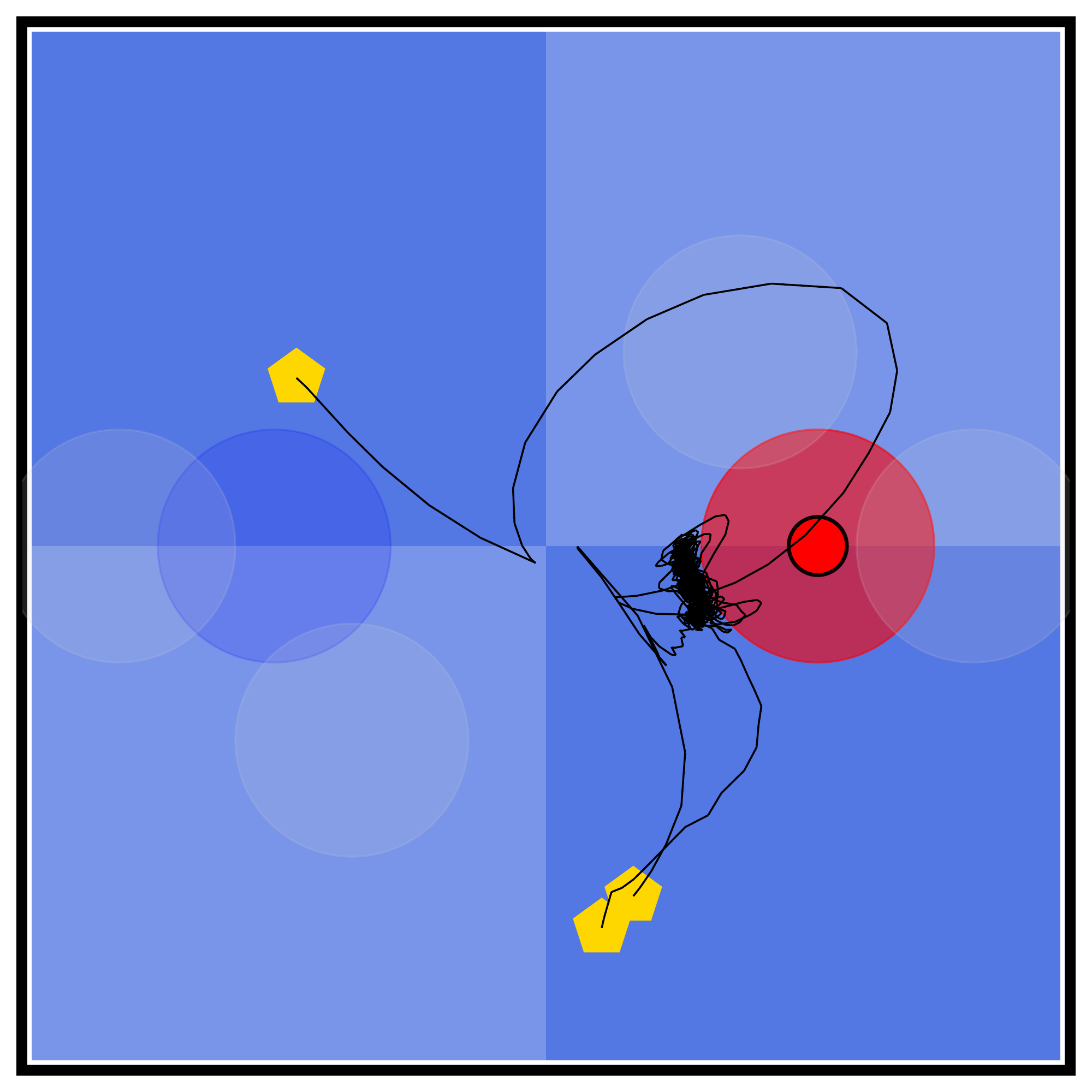}%
    \includegraphics[width=0.18\linewidth]{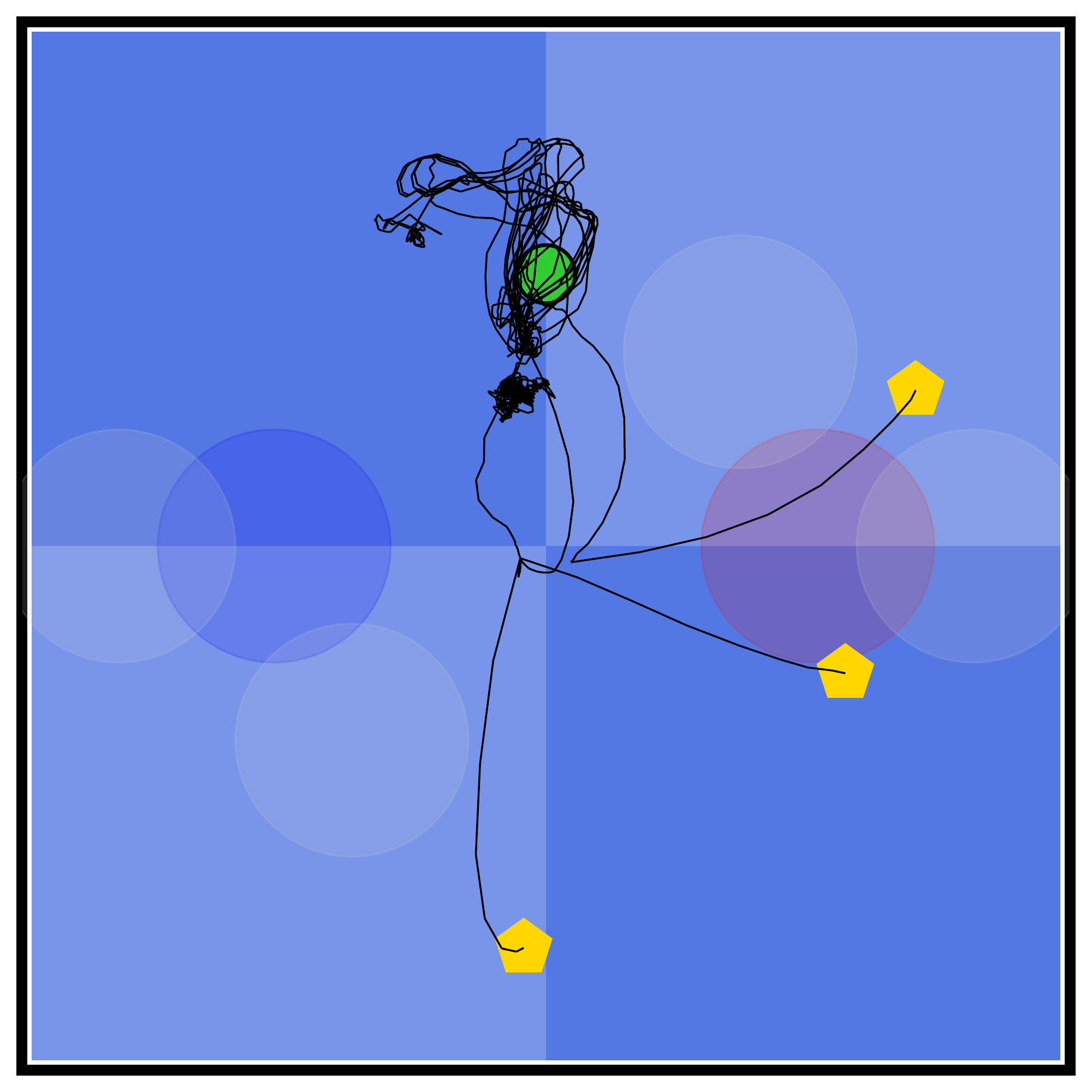}%
    \includegraphics[width=0.18\linewidth]{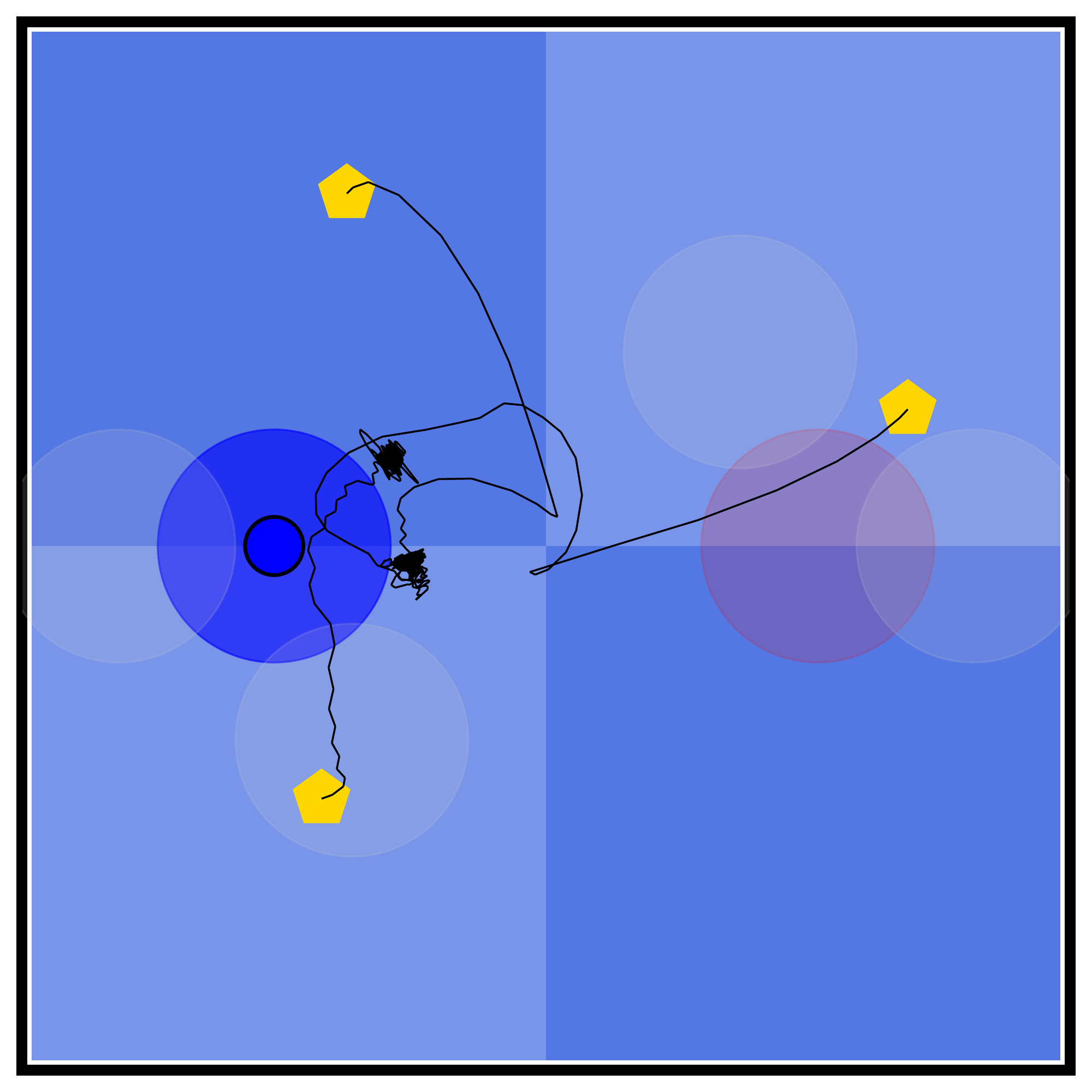}%
    \includegraphics[width=0.18\linewidth]{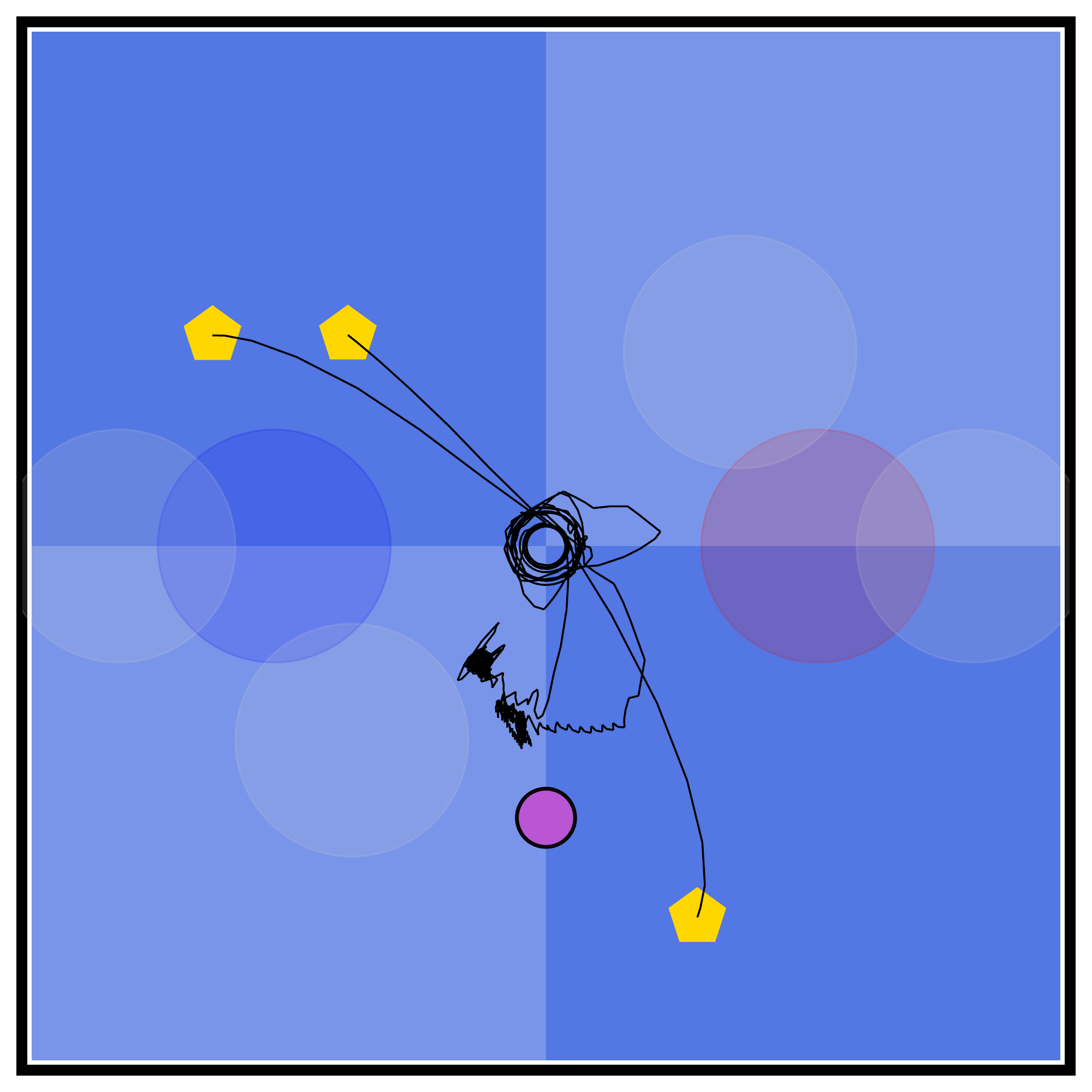}%
    \includegraphics[width=0.012\linewidth]{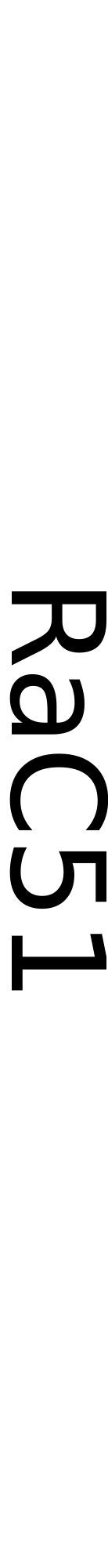}
    \caption{Evolutions of the robotic arm tip position in three rollouts of the reacher domain according to the GPI policy obtained after training on all 4 tasks. Here, all 4 training tasks are shown.}
    \label{fig:reacher_rollouts_train}
\end{figure}

\begin{figure}[!tb]
    \centering
    \includegraphics[width=0.125\linewidth]{figs/figs_ablation/reacher_rollouts_task_4_risk_30.pdf}%
    \includegraphics[width=0.125\linewidth]{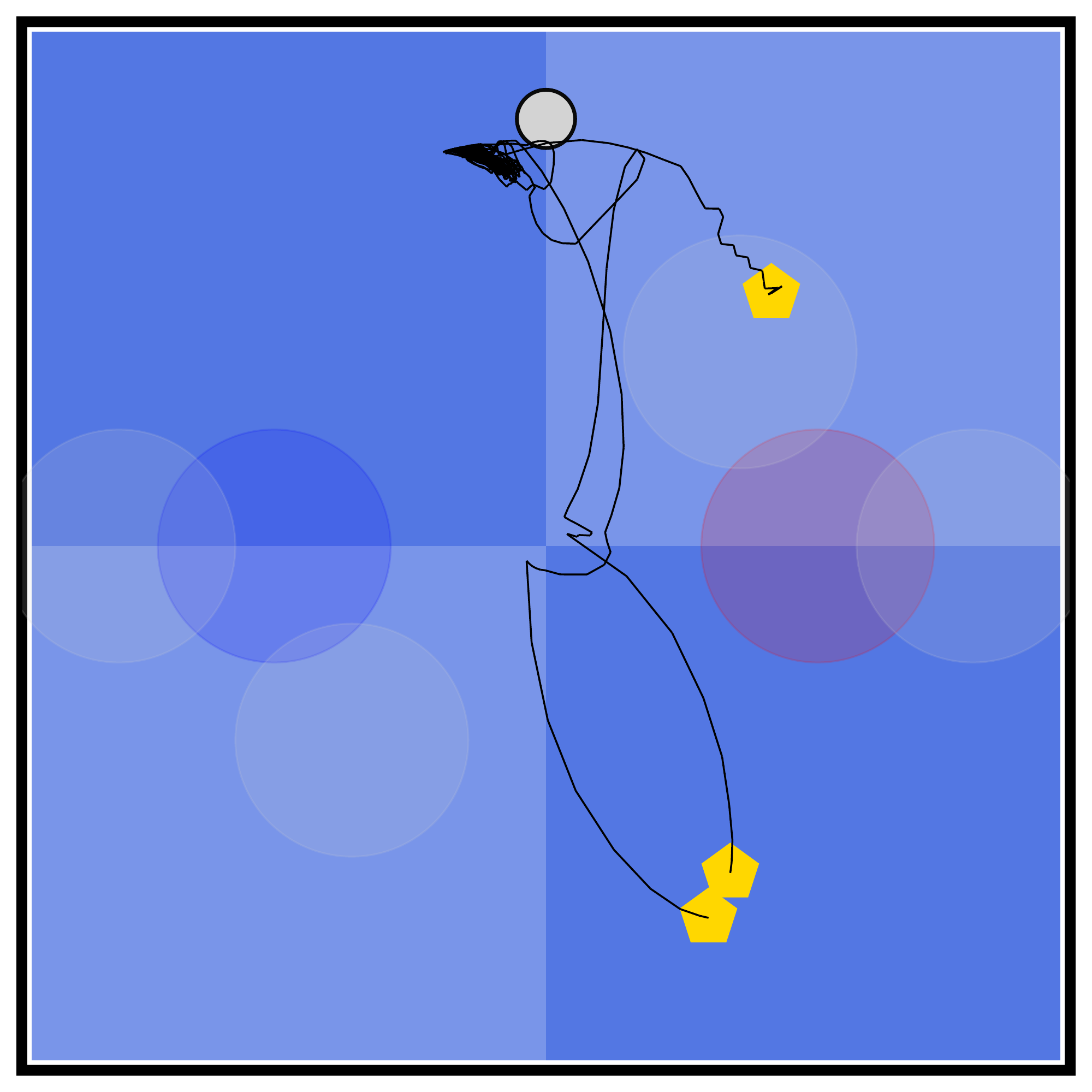}%
    \includegraphics[width=0.125\linewidth]{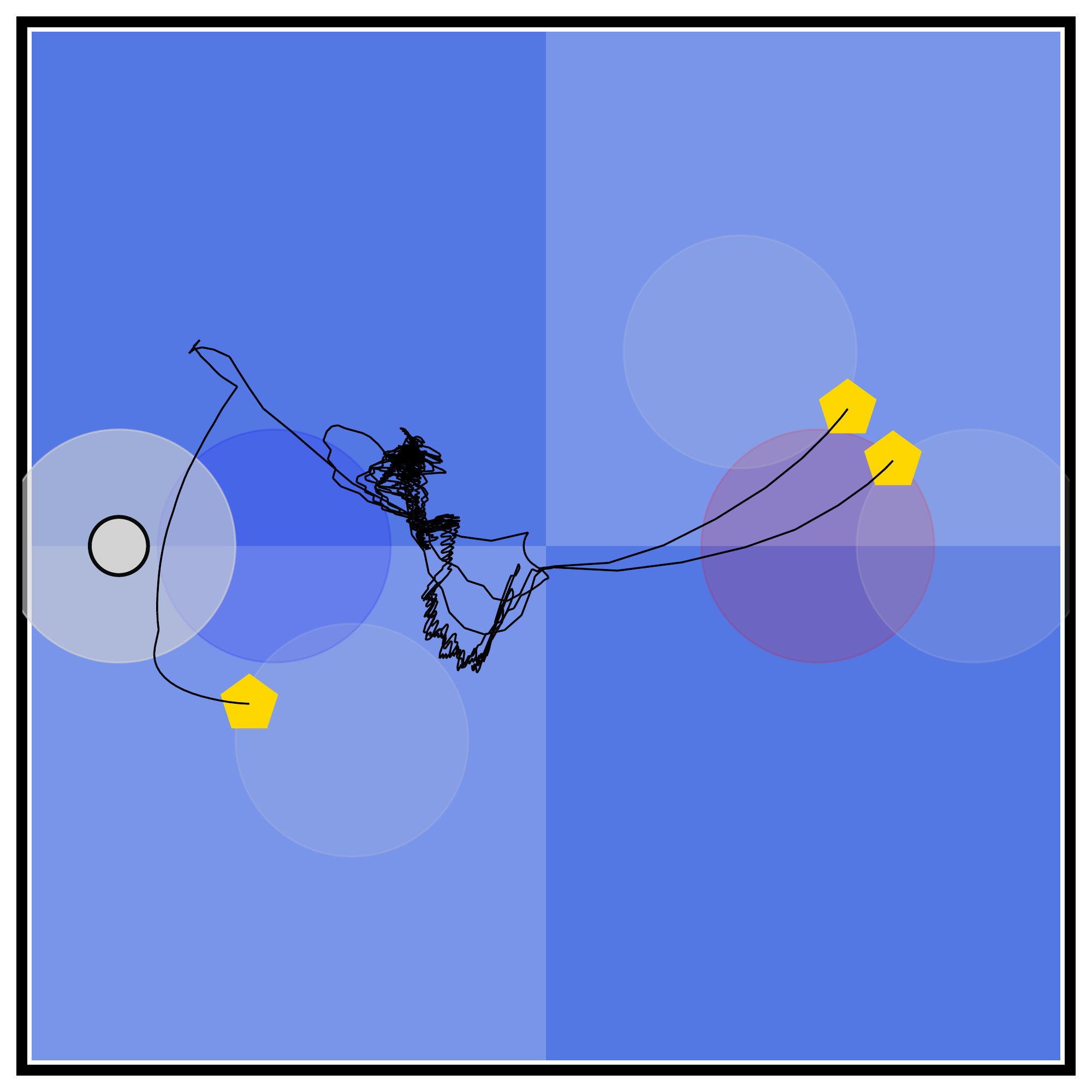}%
    \includegraphics[width=0.125\linewidth]{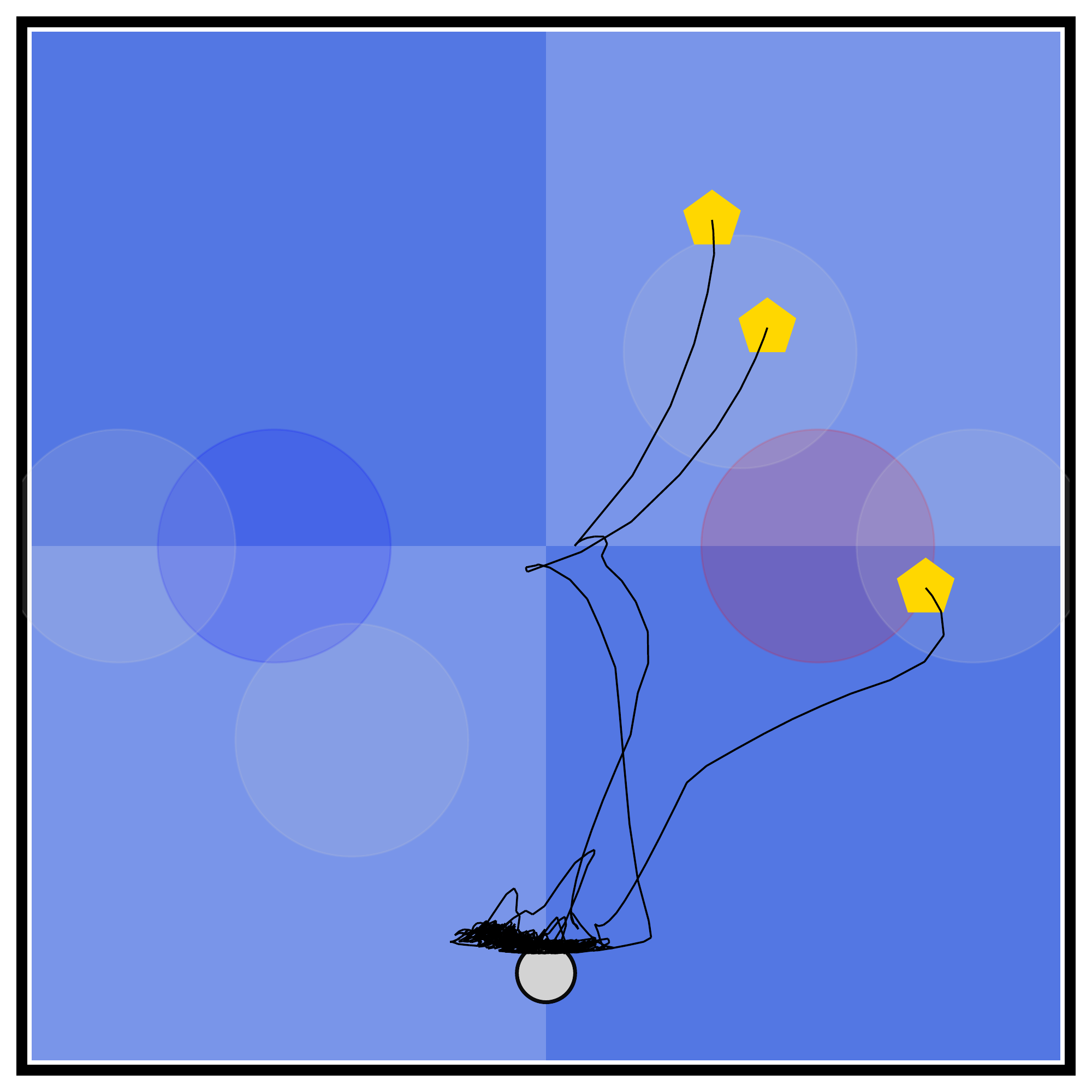}%
    \includegraphics[width=0.125\linewidth]{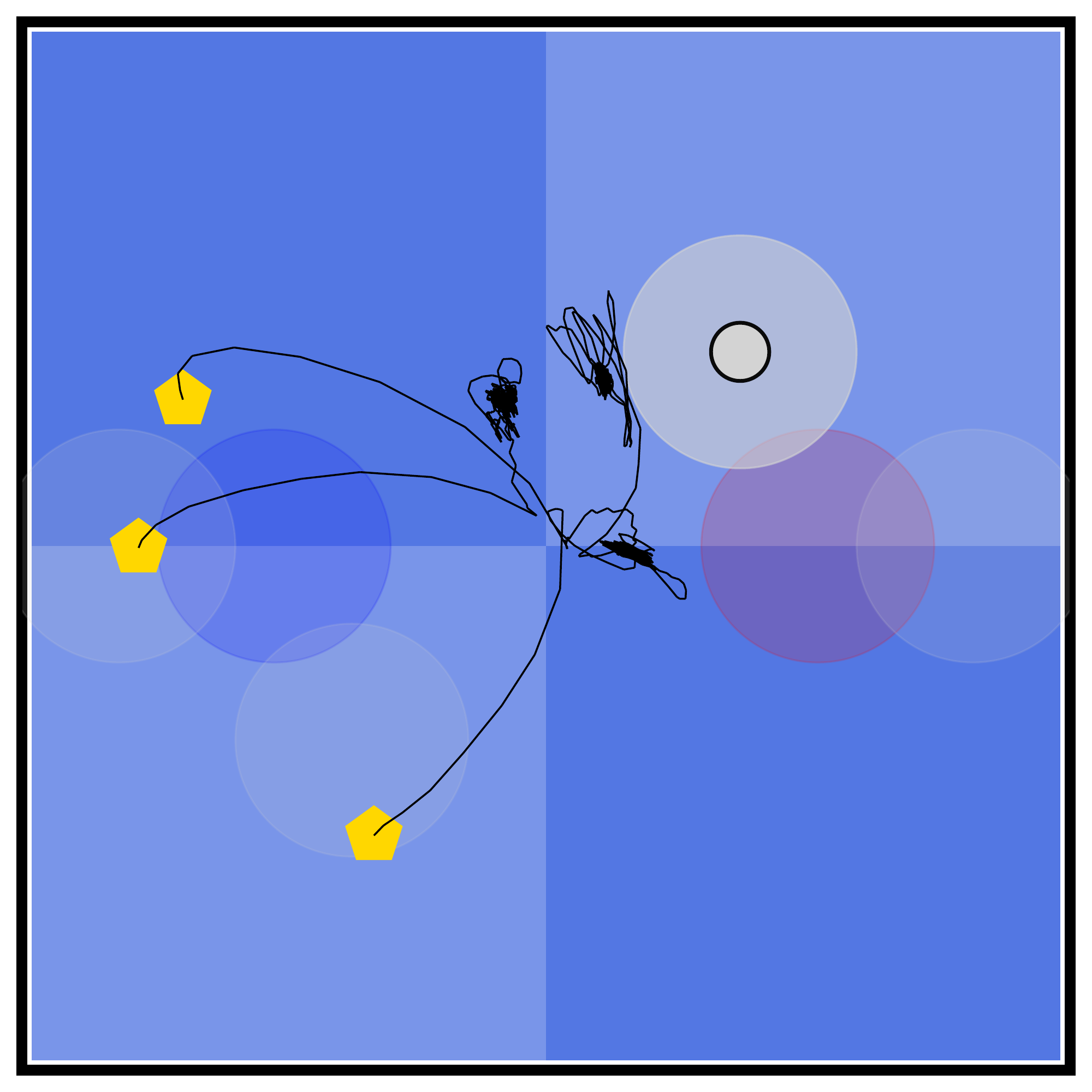}%
    \includegraphics[width=0.125\linewidth]{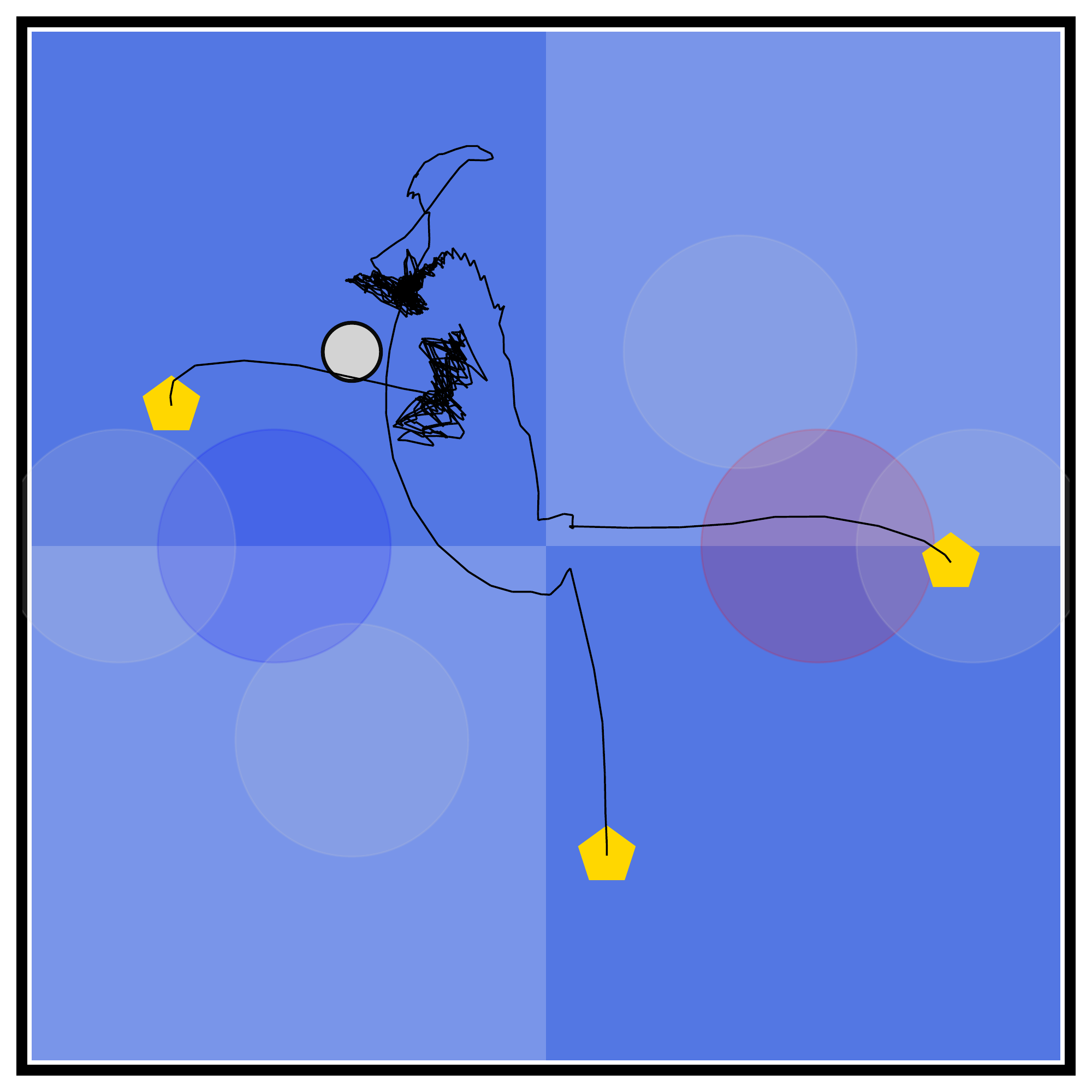}%
    \includegraphics[width=0.125\linewidth]{figs/figs_ablation/reacher_rollouts_task_10_risk_30.pdf}%
    \includegraphics[width=0.125\linewidth]{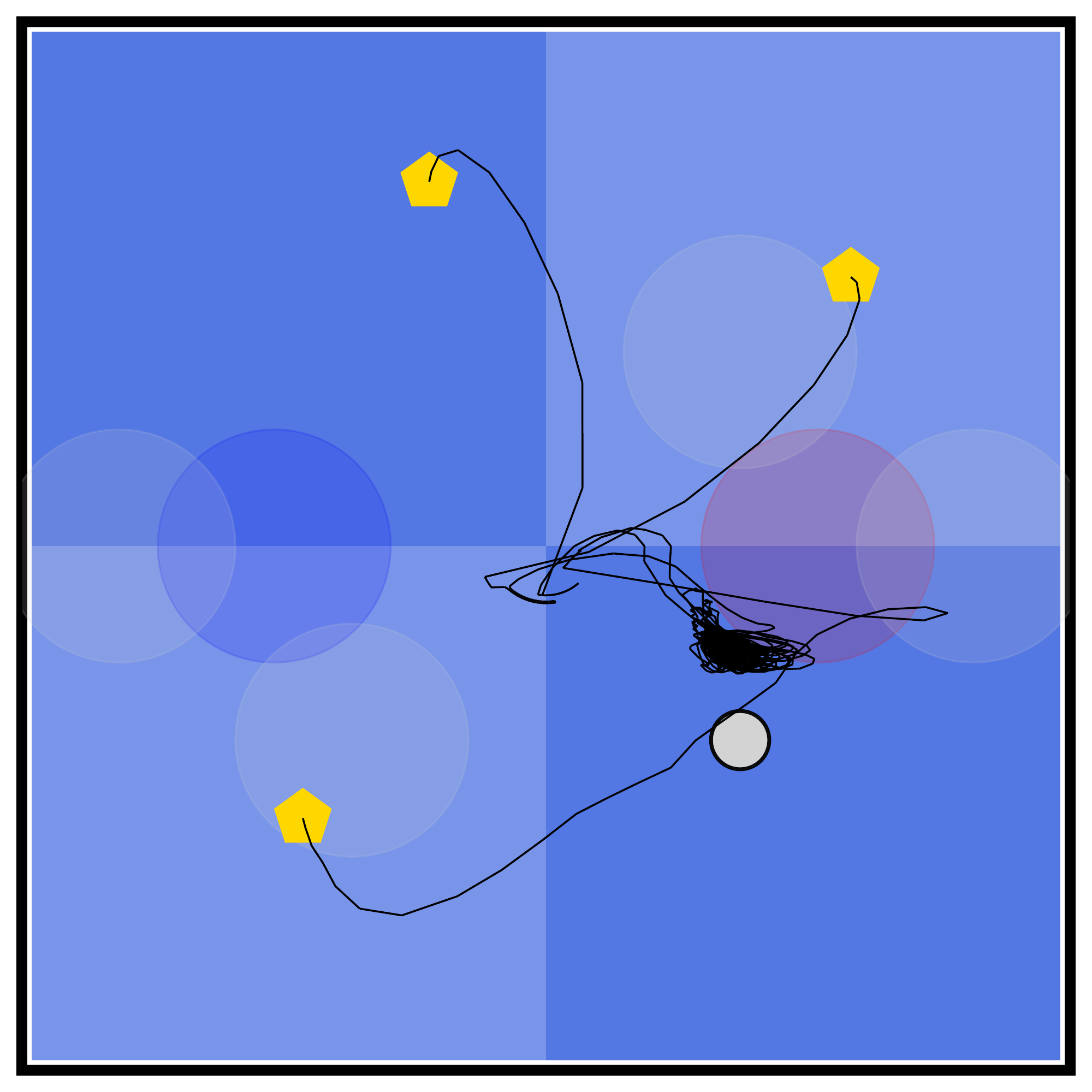}%
    \includegraphics[width=0.009\linewidth]{figs/figs_ablation/reacher_beta_1_axis.pdf}
    
    \includegraphics[width=0.125\linewidth]{figs/figs_ablation/reacher_rollouts_task_4_risk_00.pdf}%
    \includegraphics[width=0.125\linewidth]{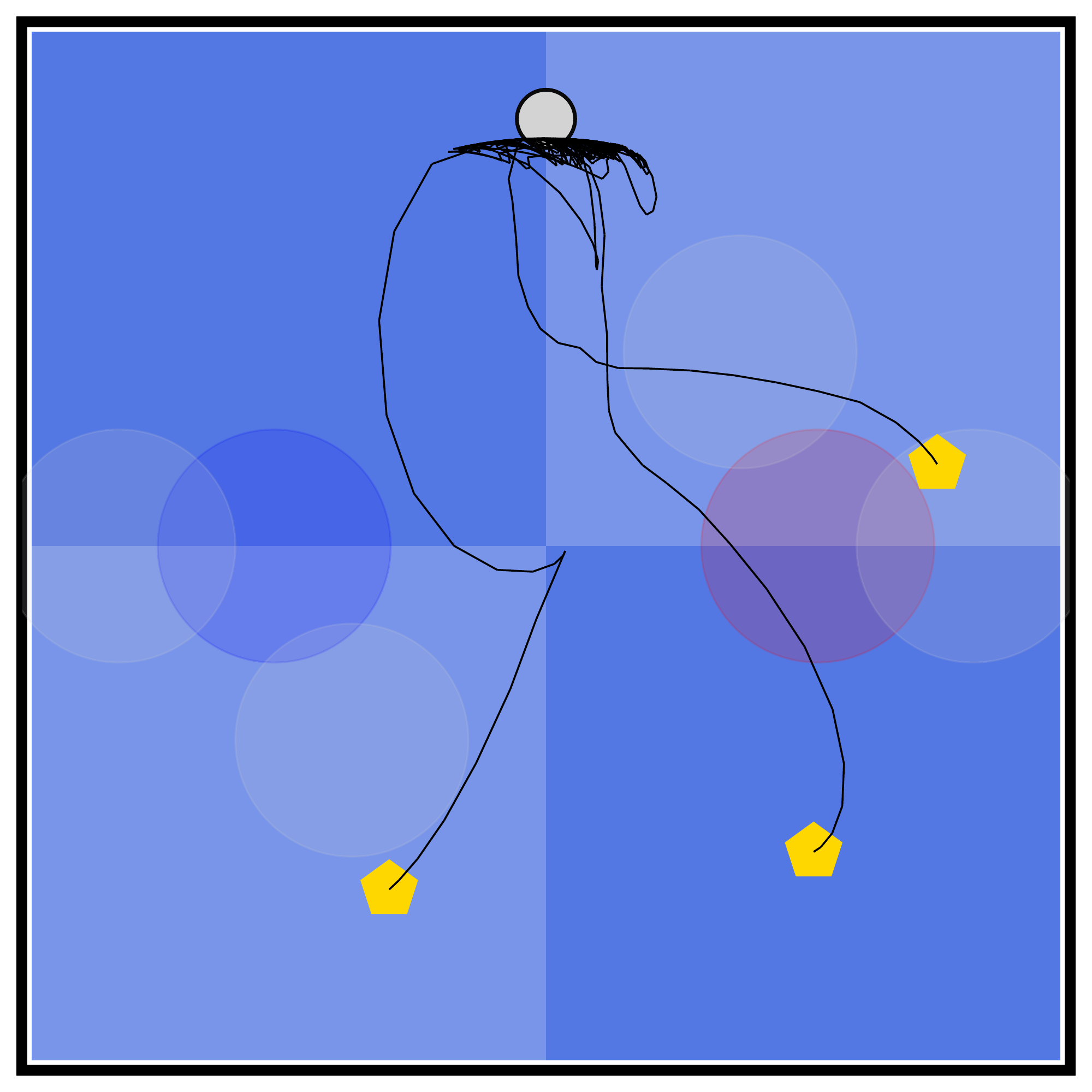}%
    \includegraphics[width=0.125\linewidth]{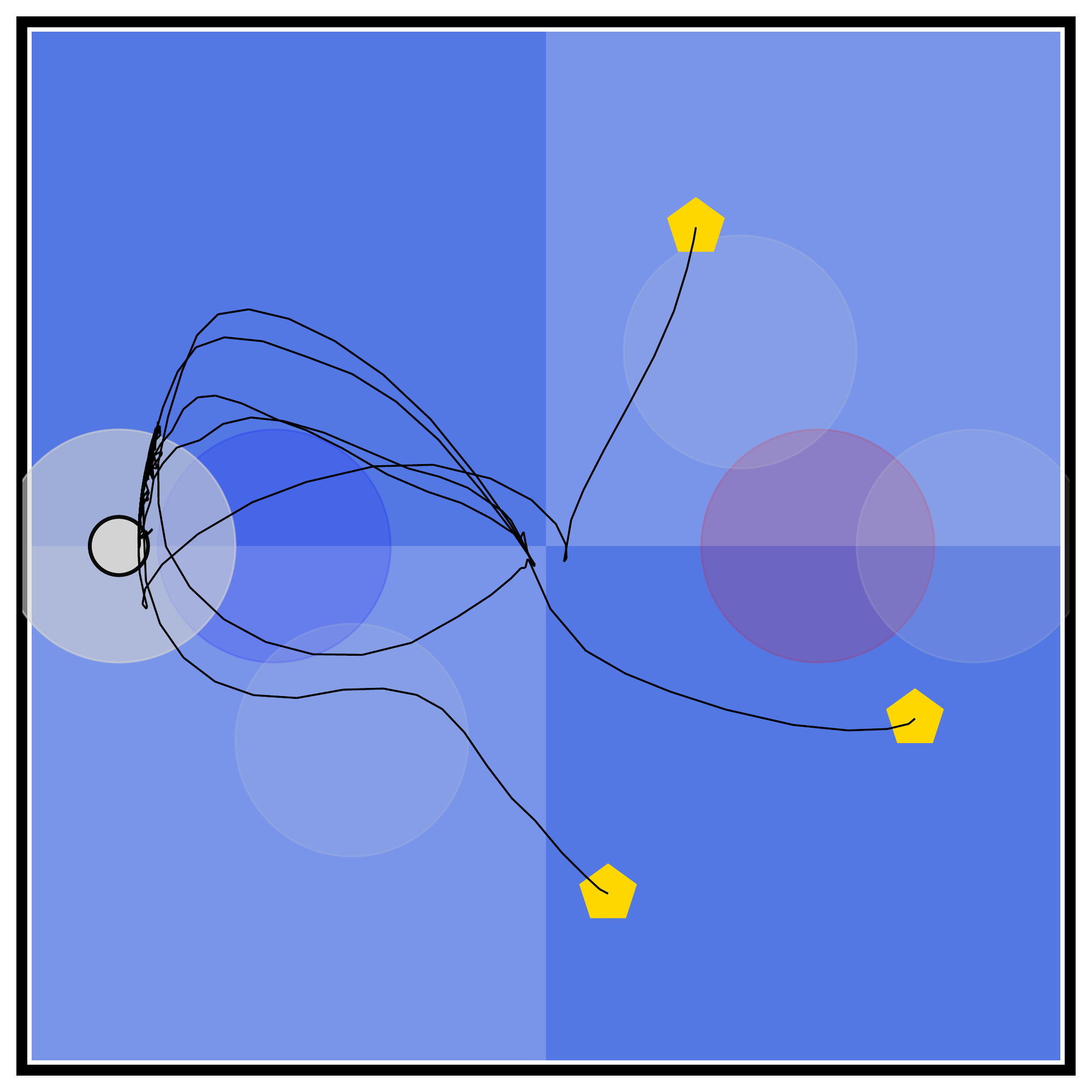}%
    \includegraphics[width=0.125\linewidth]{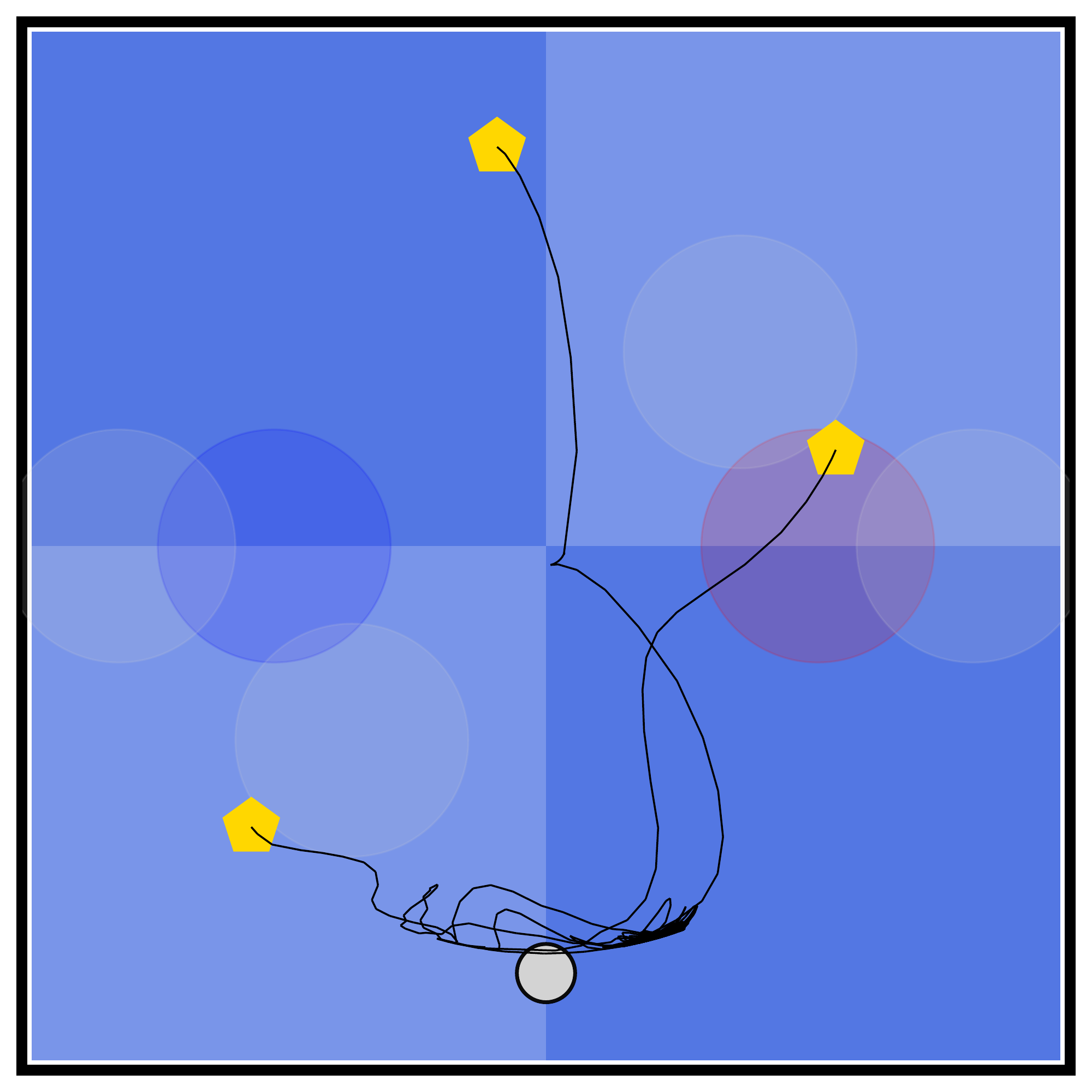}%
    \includegraphics[width=0.125\linewidth]{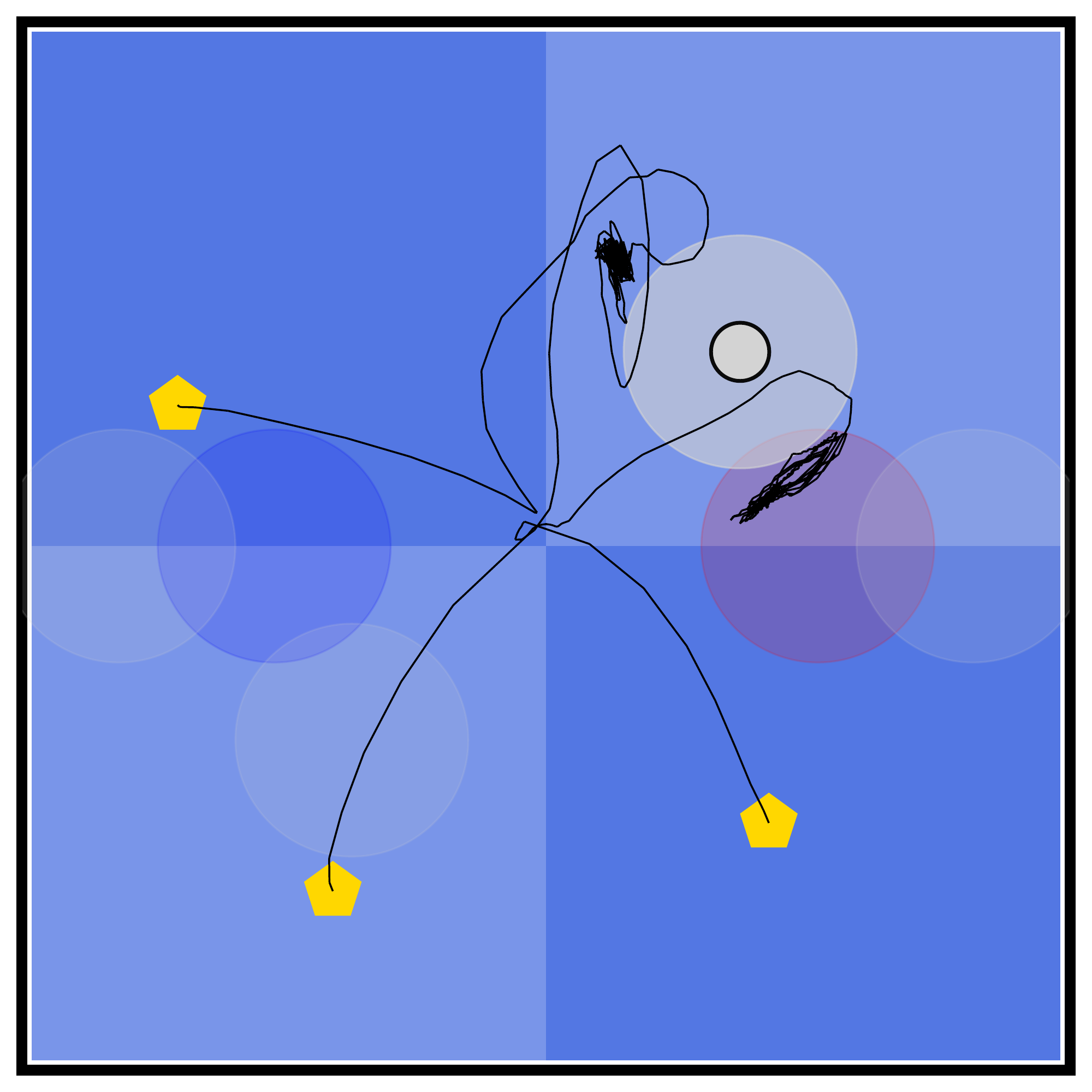}%
    \includegraphics[width=0.125\linewidth]{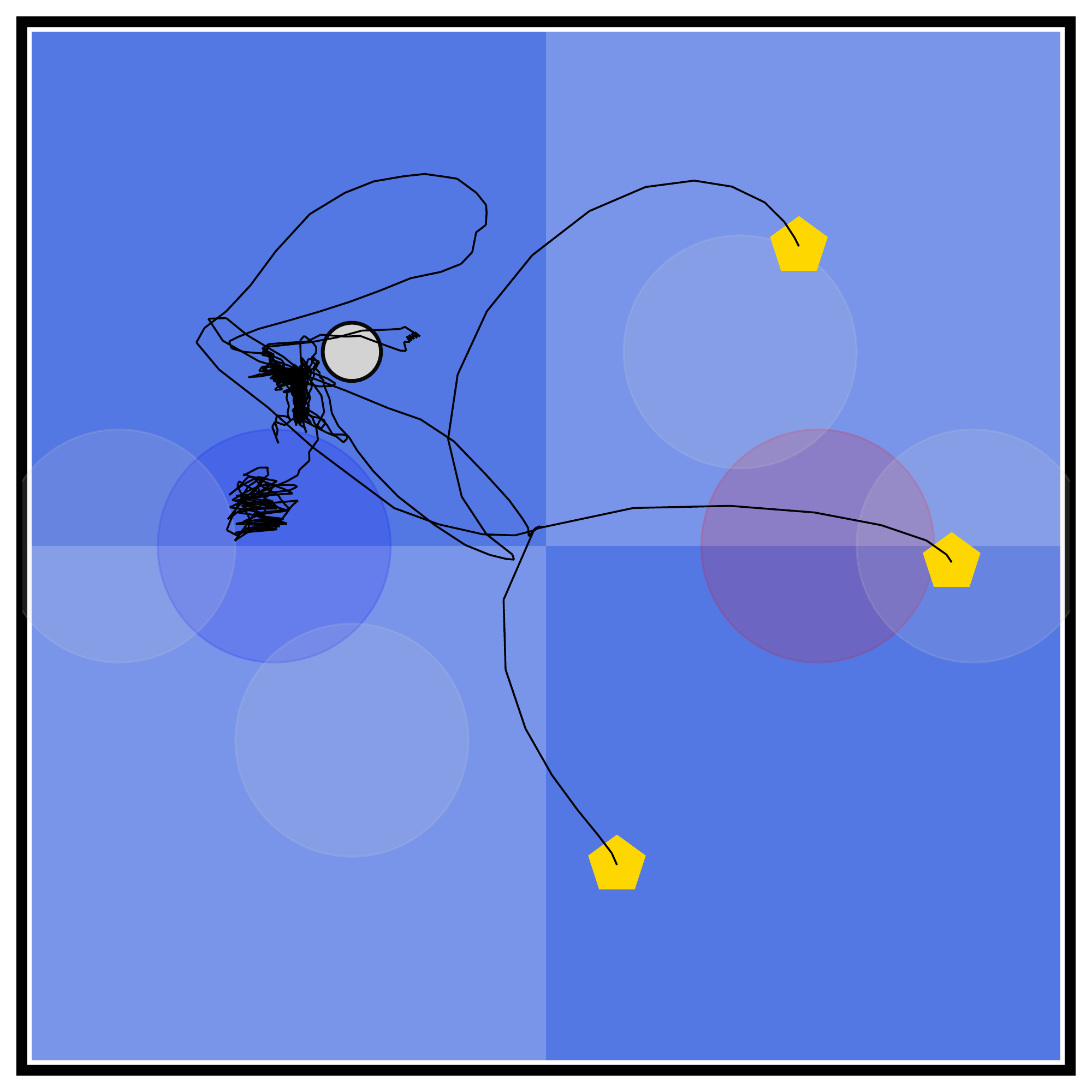}%
    \includegraphics[width=0.125\linewidth]{figs/figs_ablation/reacher_rollouts_task_10_risk_00.pdf}%
    \includegraphics[width=0.125\linewidth]{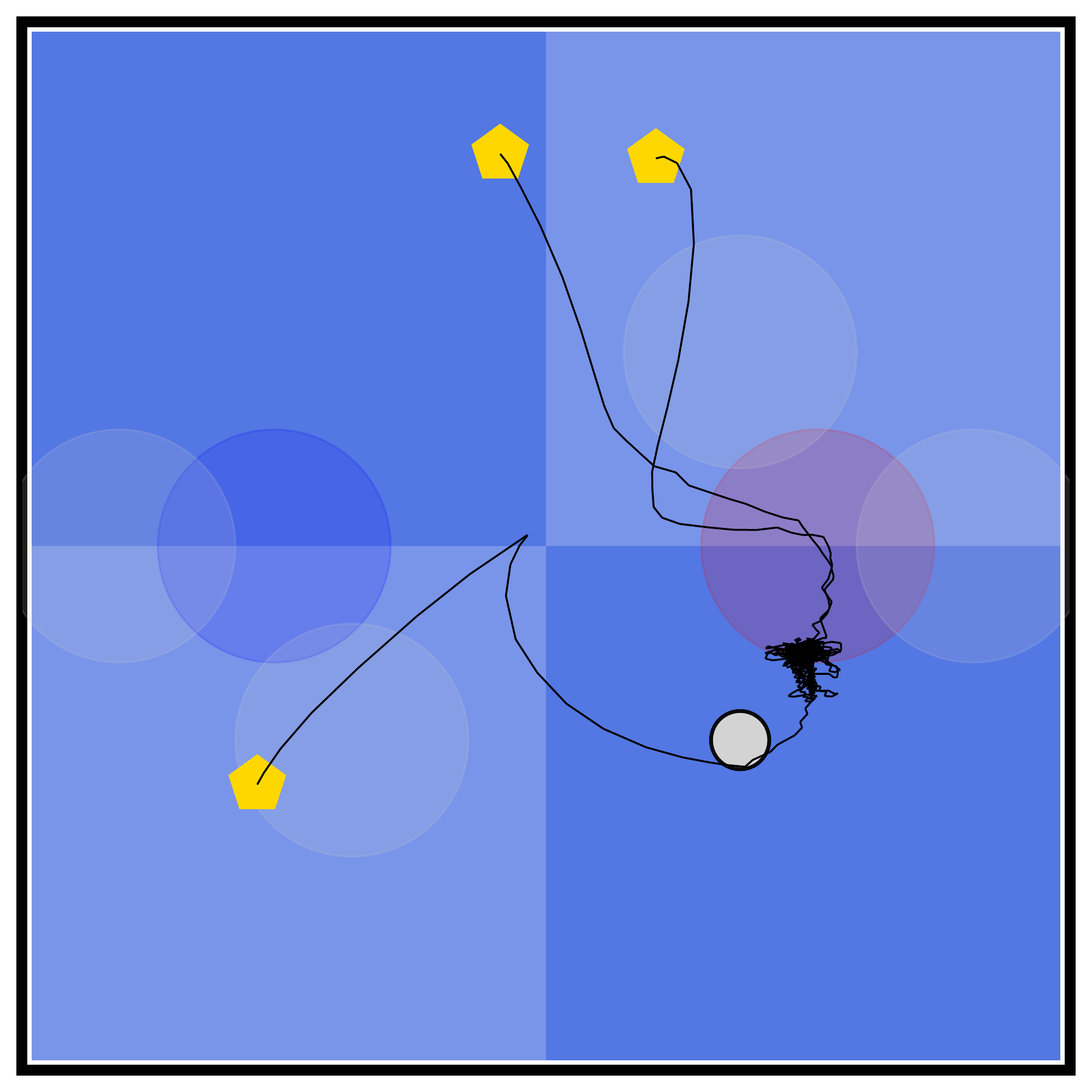}%
    \includegraphics[width=0.009\linewidth]{figs/figs_ablation/reacher_beta_2_axis.pdf}
    
    \includegraphics[width=0.125\linewidth]{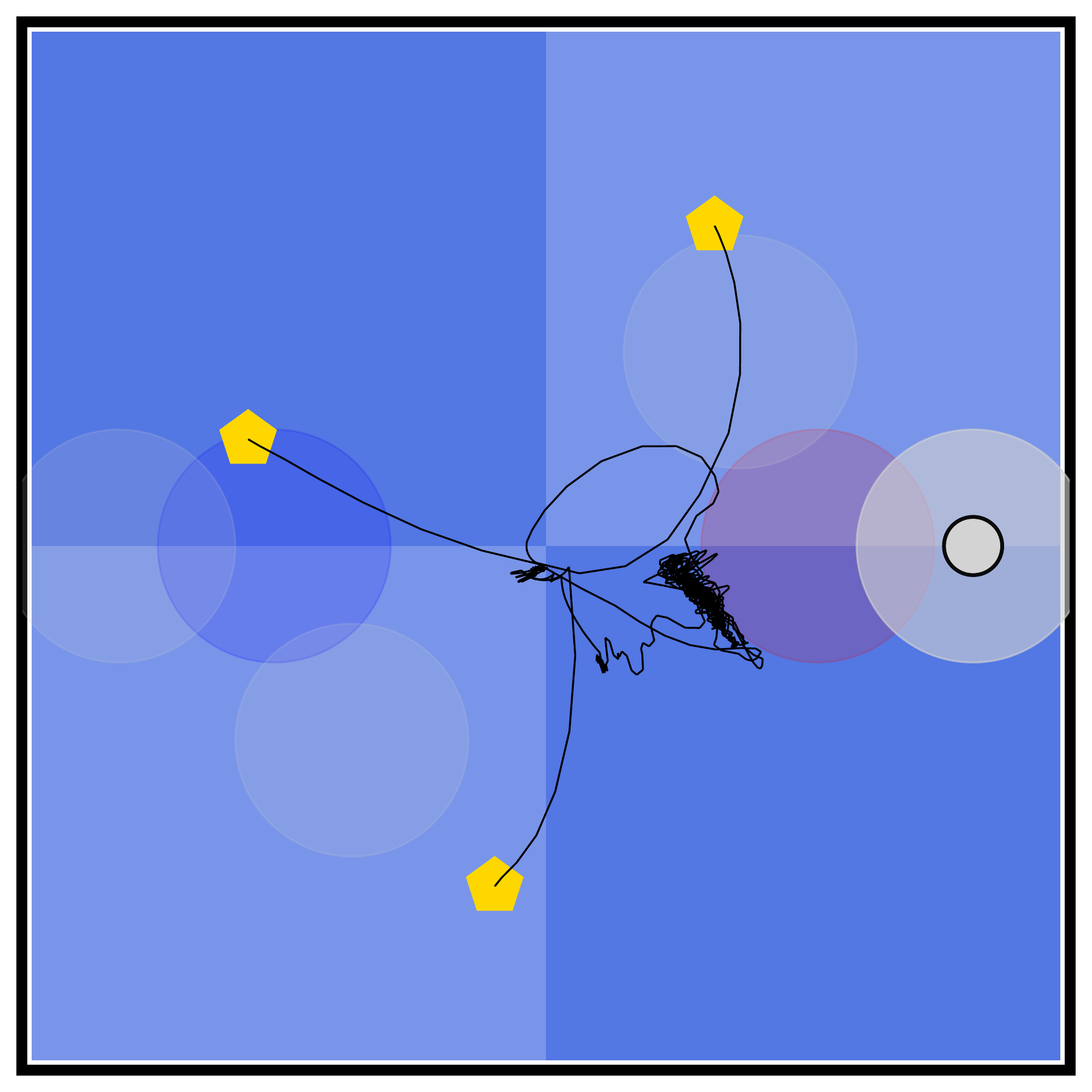}%
    \includegraphics[width=0.125\linewidth]{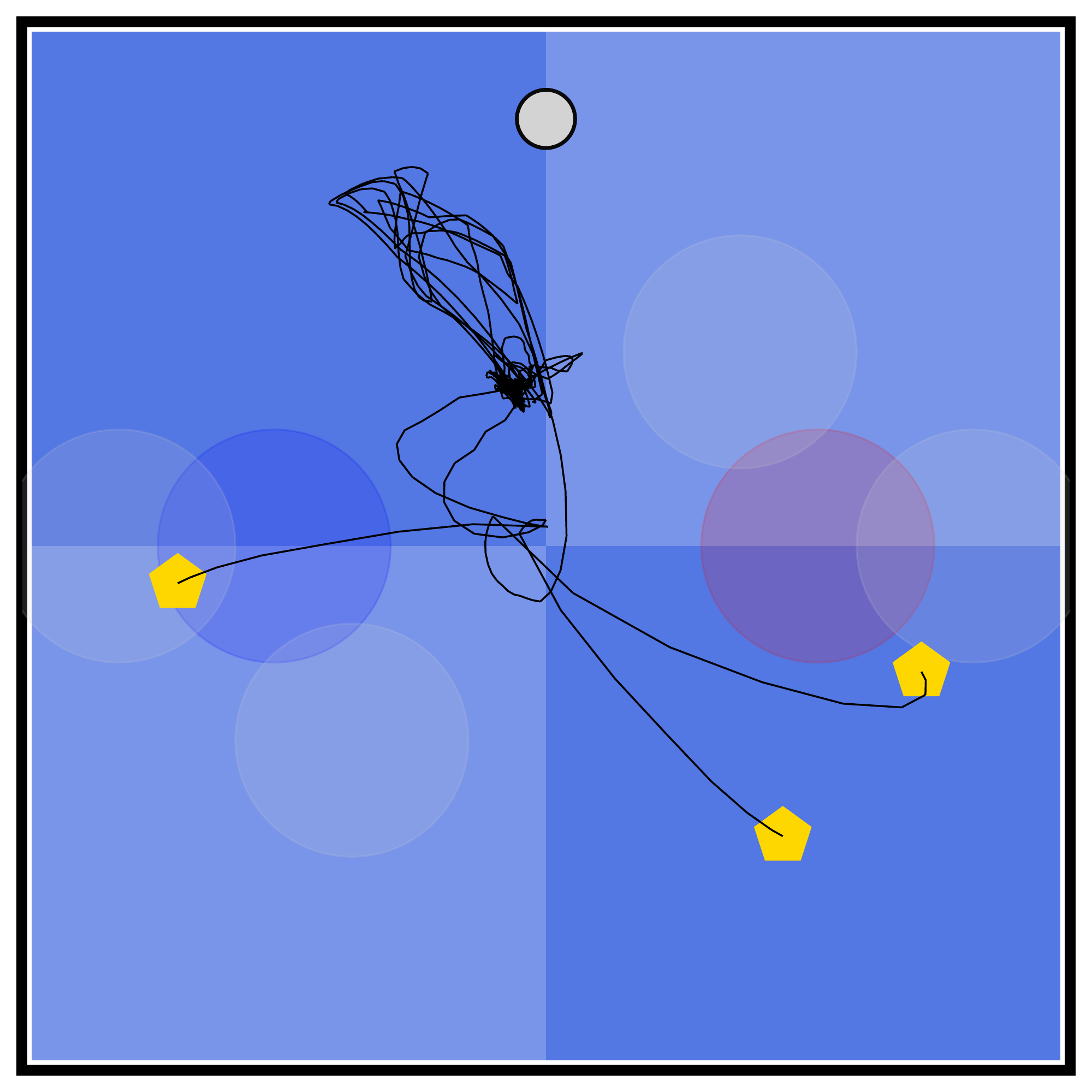}%
    \includegraphics[width=0.125\linewidth]{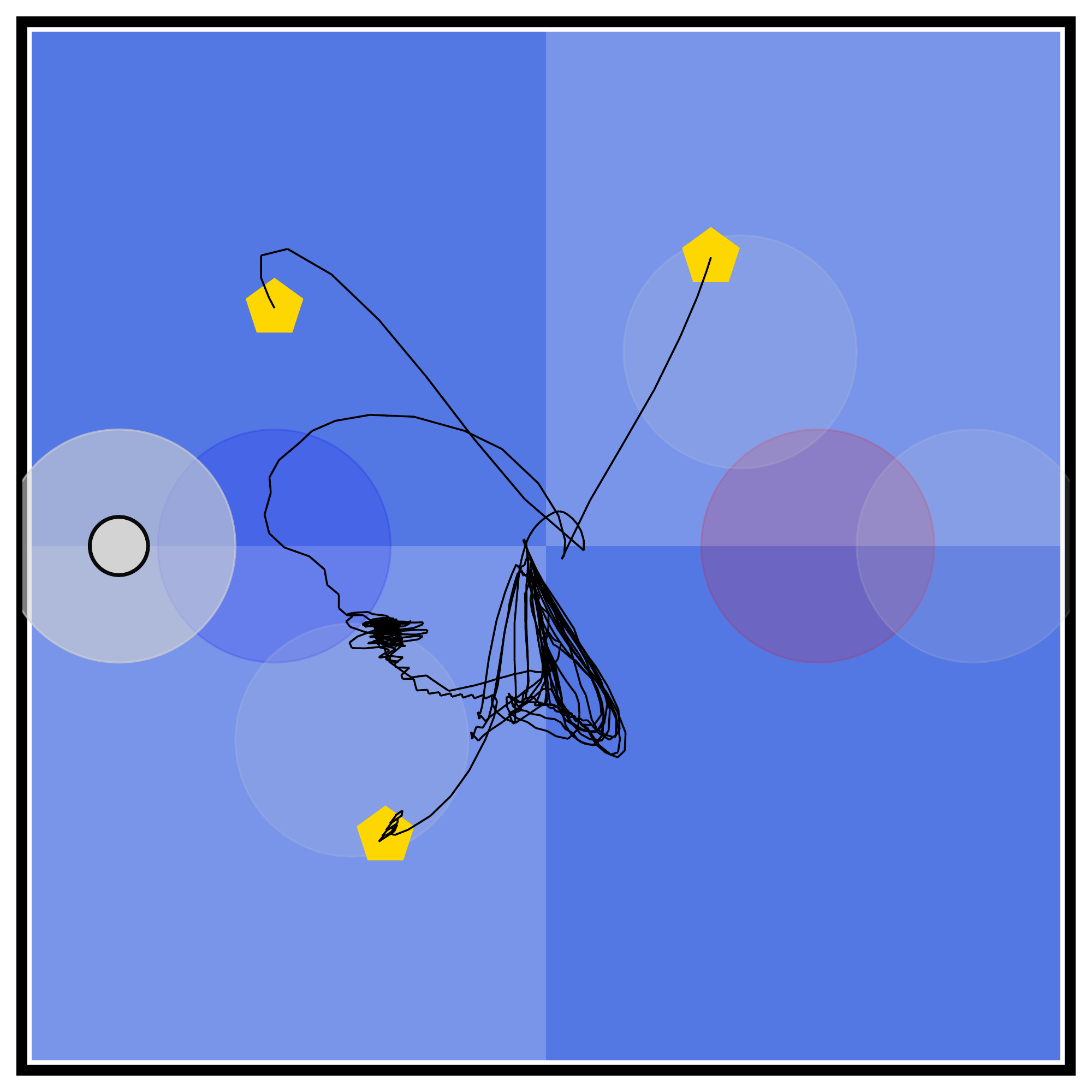}%
    \includegraphics[width=0.125\linewidth]{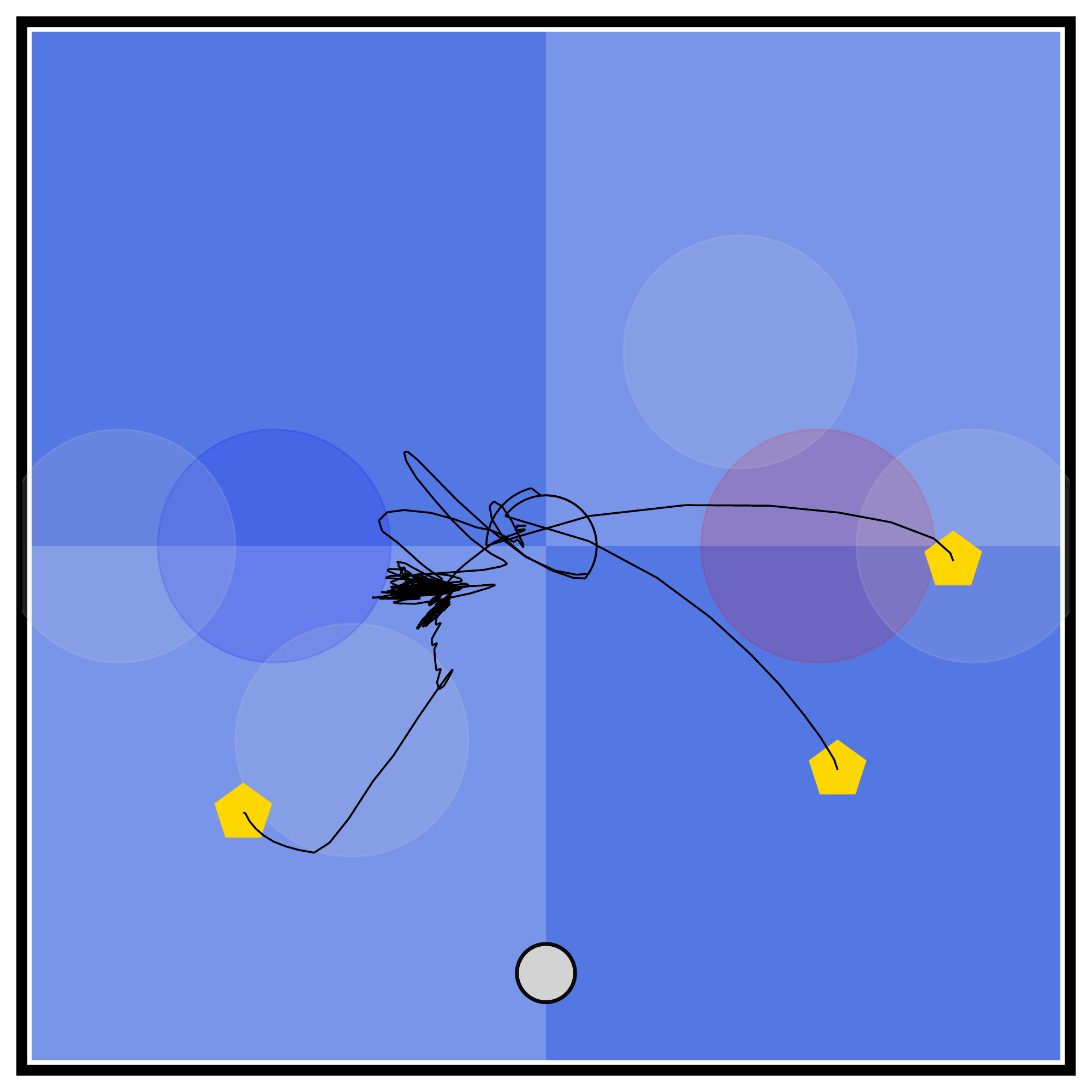}%
    \includegraphics[width=0.125\linewidth]{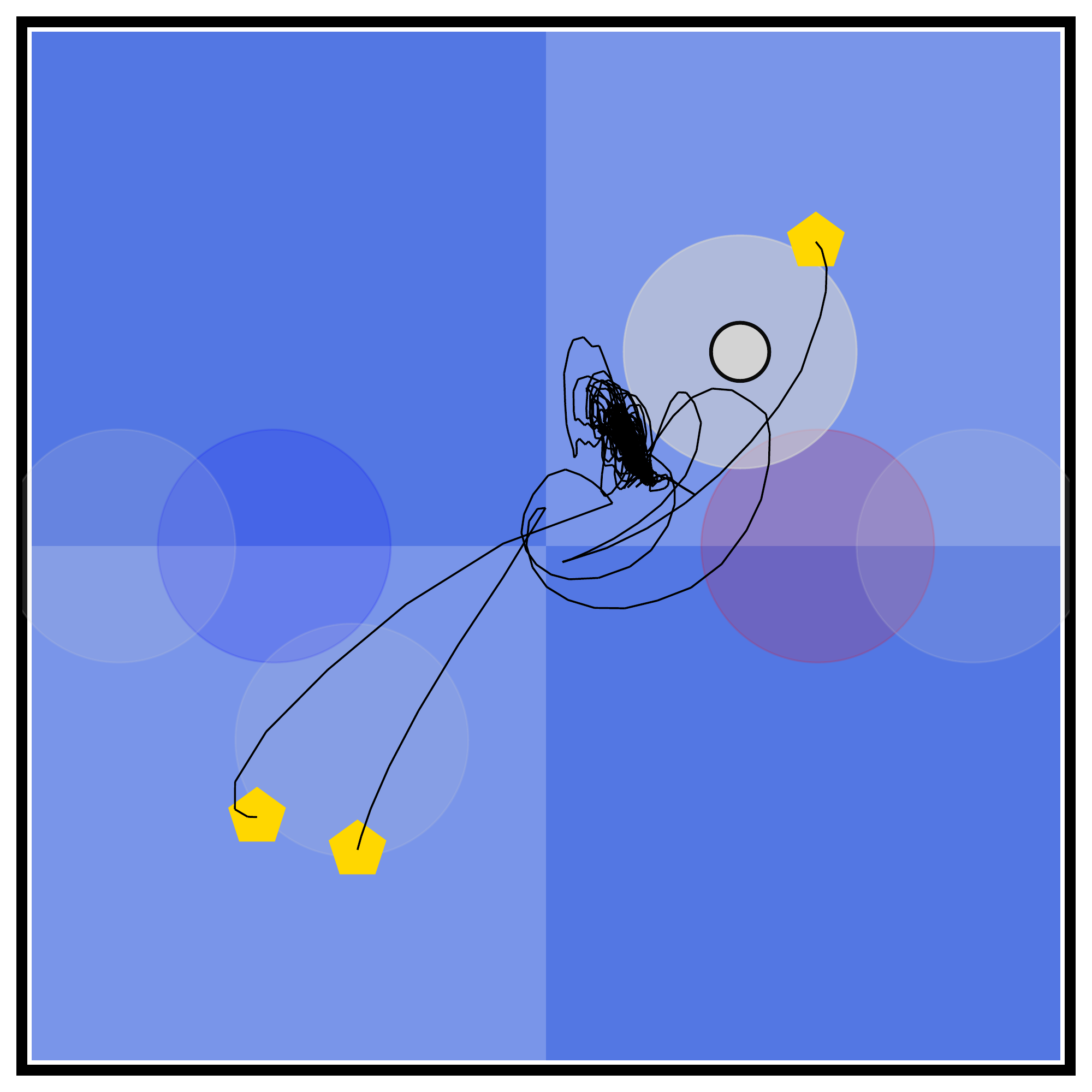}%
    \includegraphics[width=0.125\linewidth]{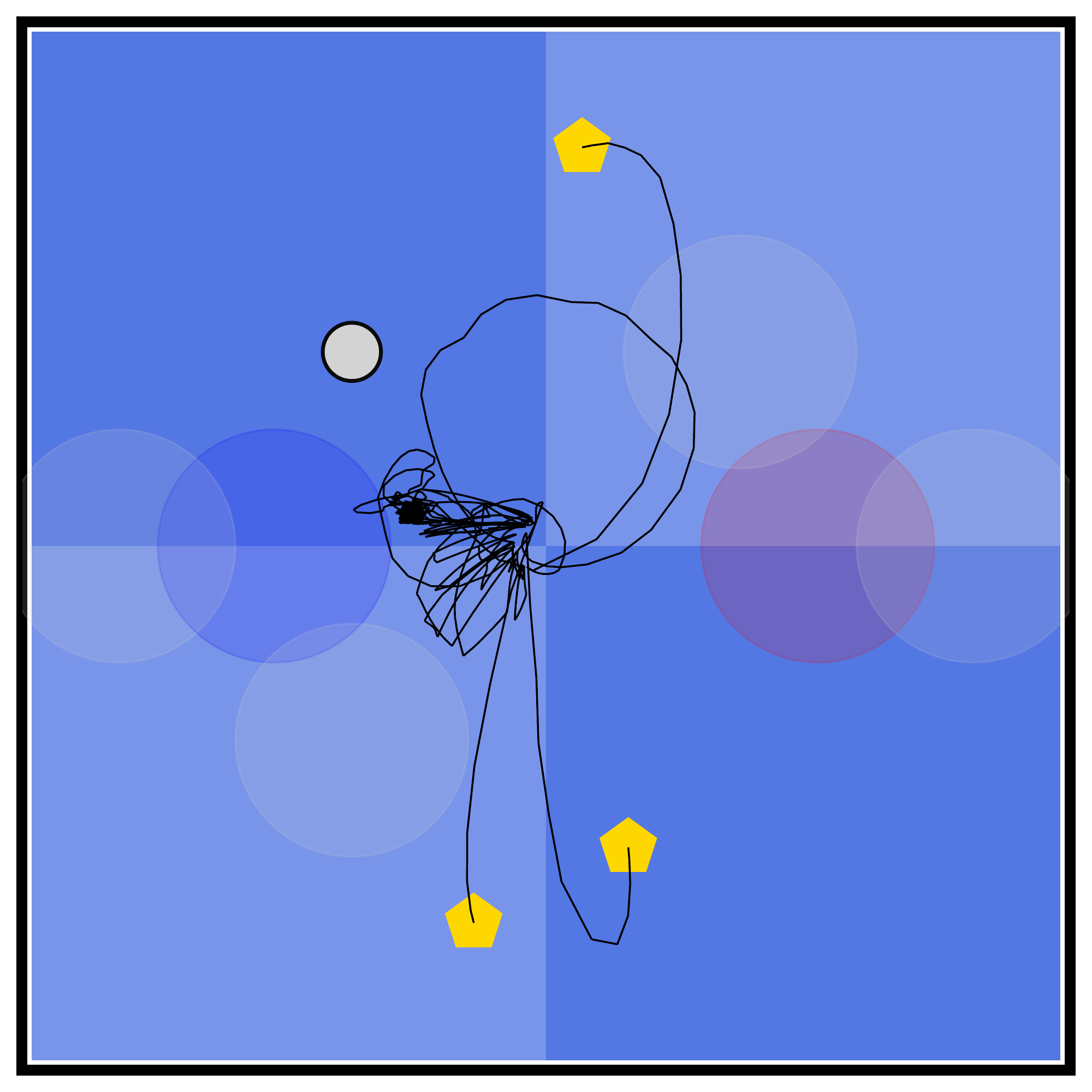}%
    \includegraphics[width=0.125\linewidth]{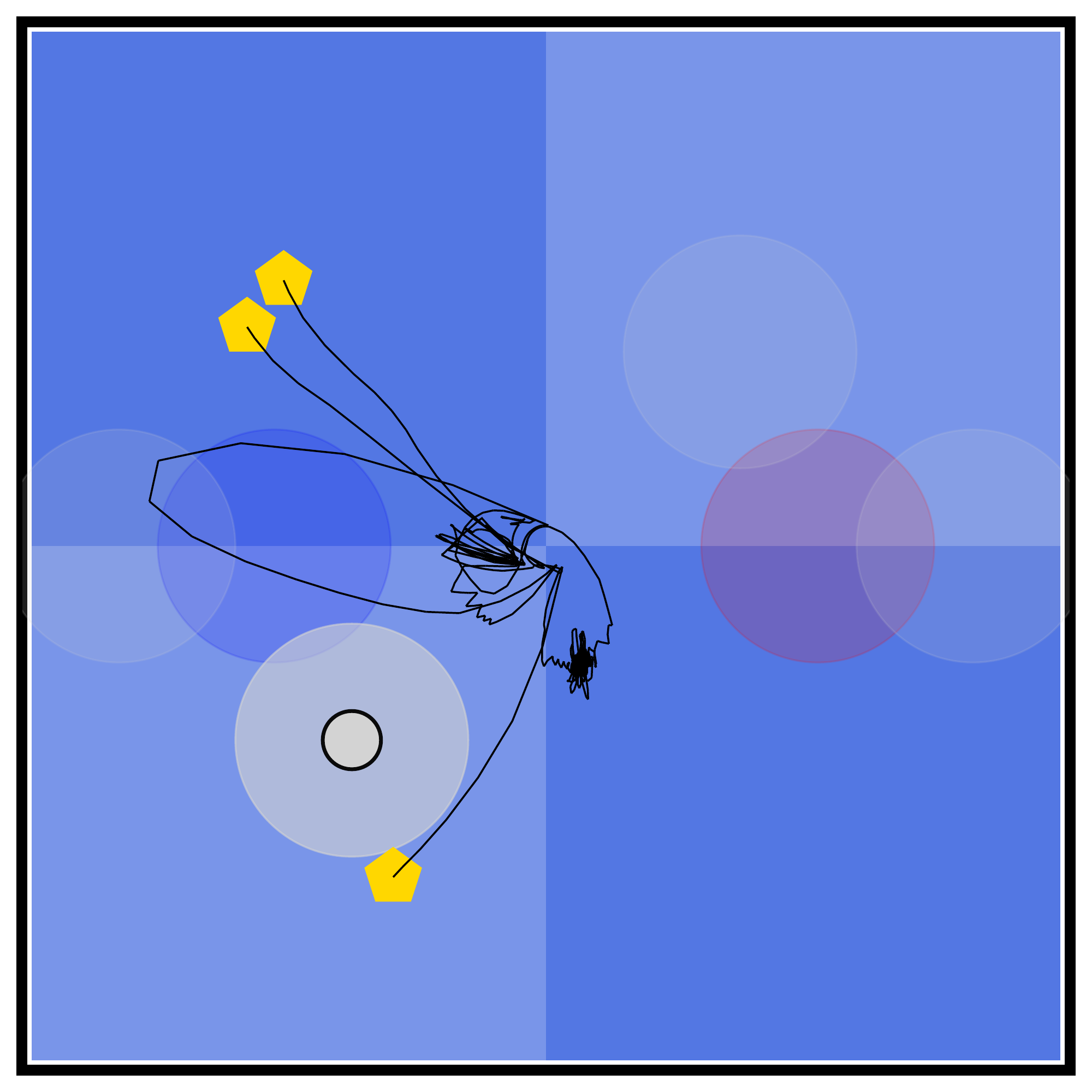}%
    \includegraphics[width=0.125\linewidth]{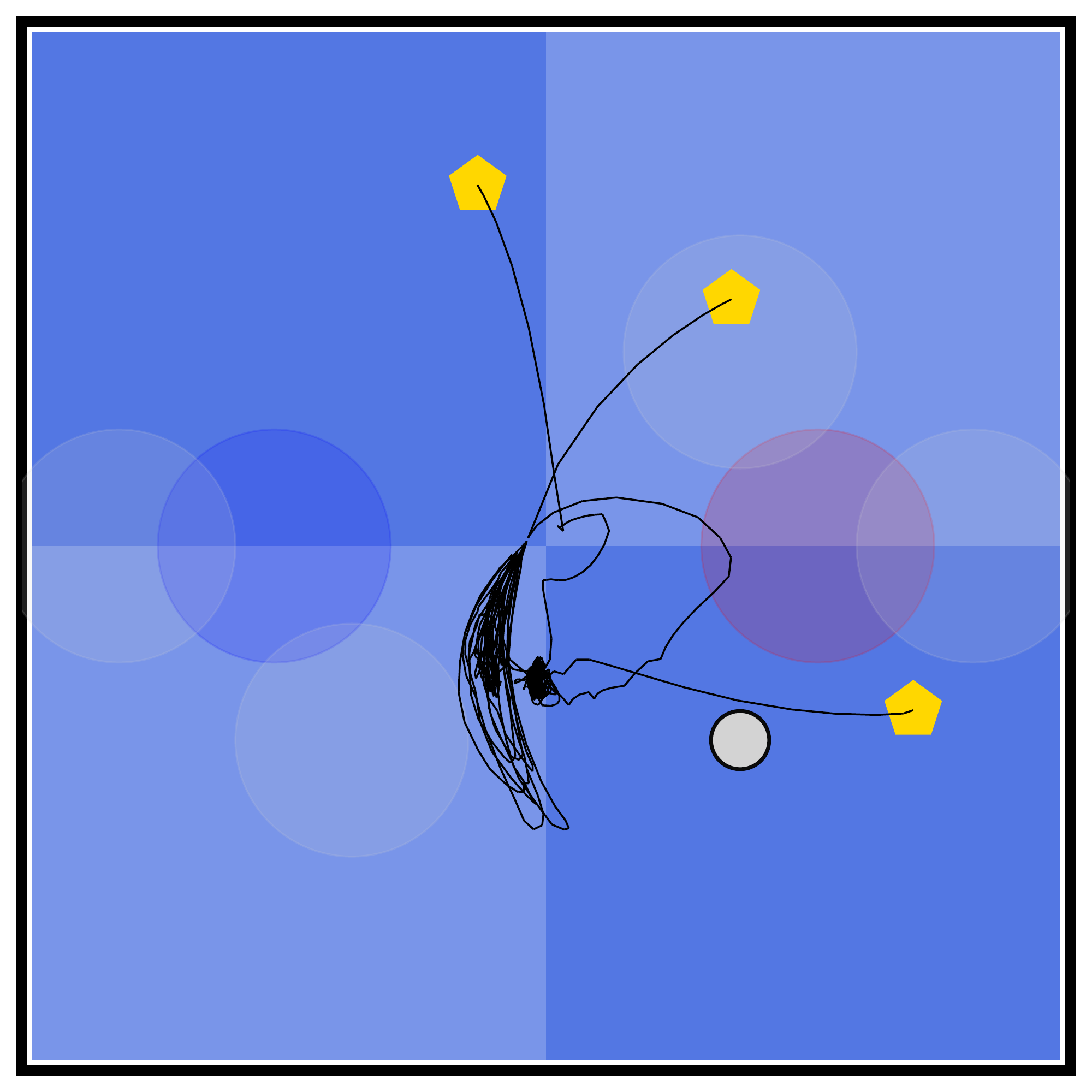}%
    \includegraphics[width=0.009\linewidth]{figs/figs_appendix/reacher_beta_3_axis.pdf}
    \caption{Evolutions of the robotic arm tip position in three rollouts of the reacher domain according to the GPI policy obtained after training on all 4 tasks. Here, all 8 test tasks are shown.}
    \label{fig:reacher_rollouts_test}
\end{figure}

A similar conclusion can also be drawn by observing the heat-maps of the learned mean-variance objectives in Figure \ref{fig:reacher_value_functions}. For SFC51, these objectives take the highest values precisely at the target locations, whereas for RaSFC51 these take the highest values slightly away from the targets in regions of low volatility. This is expected as the utility of hovering very close to a target location centered in a risky region should be lower than hovering outside the risky region, for a sufficiently risk-averse agent. Moreover, the first 4 rows correspond to training task values and the last 8 correspond to test task values. Because a similar pattern described above can also be observed in test tasks, the ability of SFs to generalize expected return estimates to novel task instances also extends to higher-order sufficient statistics, namely the variance of return. Finally, the aggregated plots located in the top half in Figure \ref{fig:reacher_variance_plots} show that RaSFC51 learns the return variance correctly after having trained on all 4 task instances. On the other hand, the SFDQN architecture that learns the covariance using the residual method (\ref{eqn:covariance_Bellman}) is unable to learn the variance correctly, likely due to the propagation of errors and overestimation bias in $\tilde{\bm{\psi}}^{\pi_i}(s,a)$ as discussed in the main paper.

\begin{figure}[!tb]
    \centering
    \includegraphics[width=0.46\linewidth]{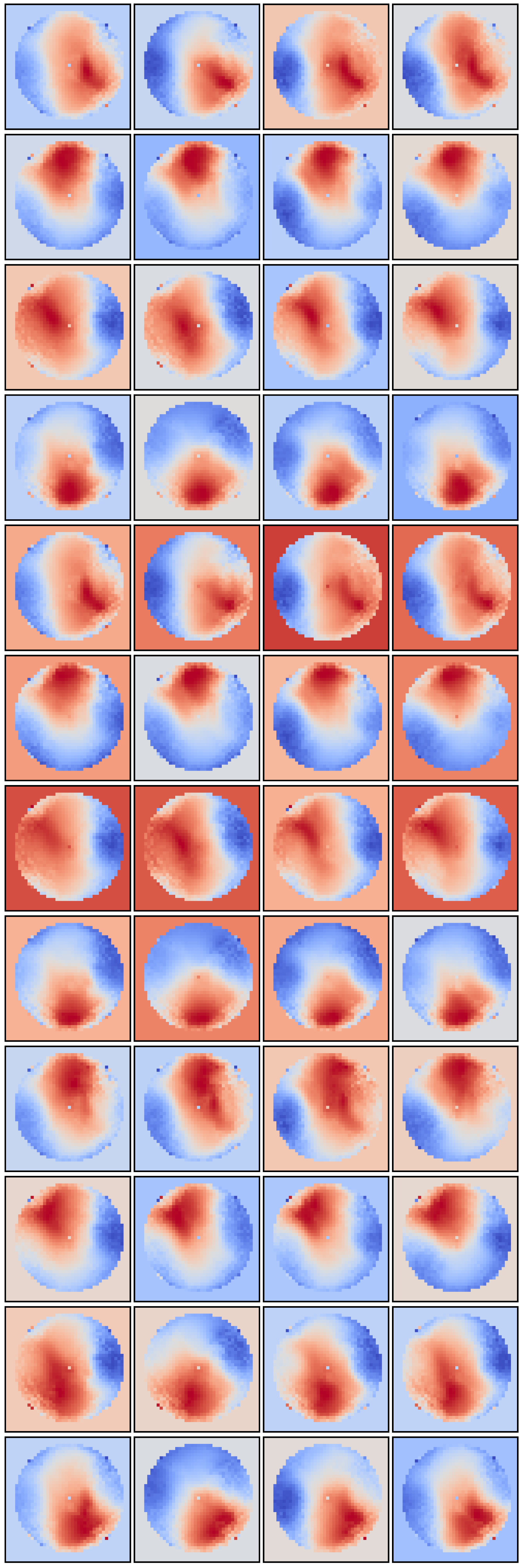}\hspace{0.05\linewidth}%
    \includegraphics[width=0.46\linewidth]{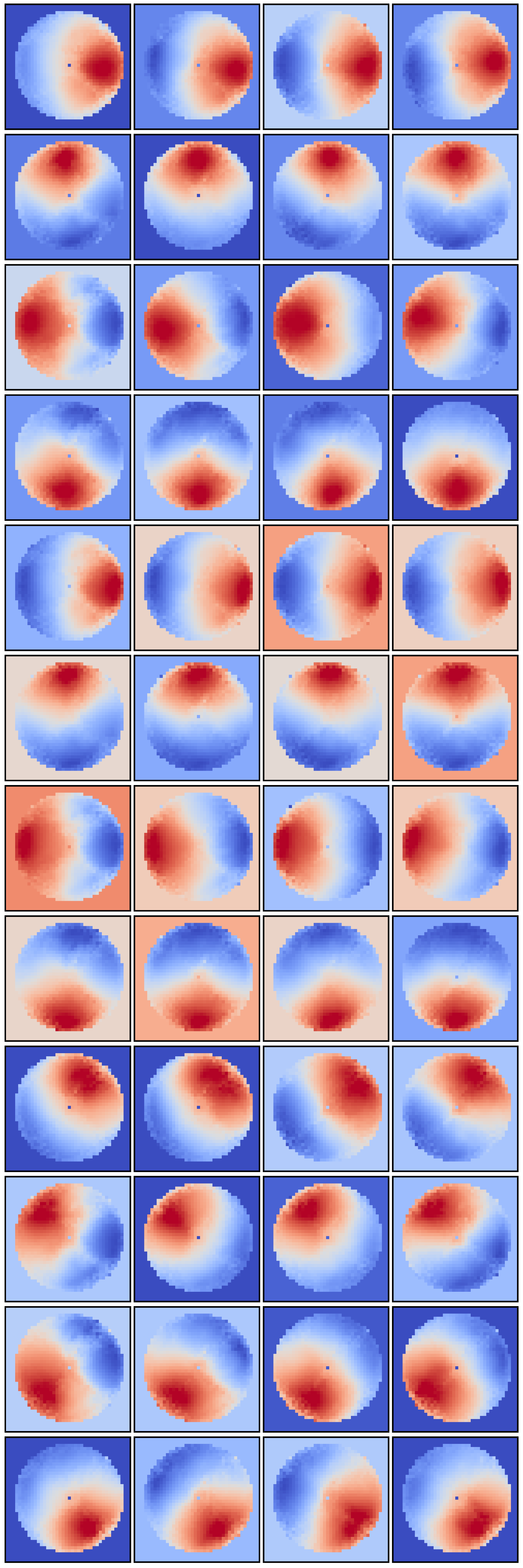}
    \caption{Each plot located in column $i$ and row $j$ illustrates the value of the mean-variance objective $\tr{\bm{\psi}^{\pi_i}(s,a)}\mathbf{w}_j - \tr{\mathbf{w}_j} \Sigma^{\pi_i}(s,a) \mathbf{w}_j$ as a function of the robotic arm tip position in $(x,y)$ coordinates for the reacher domain, after training each agent on all 4 tasks. In other words, the first 4 rows illustrate the value functions learned on the training task instances, while the last 8 rows illustrate the value functions learned on the test tasks. \textbf{Left:} mean-variance objective computed by RaSFC51 with $\beta = -3$. \textbf{Right:} mean-variance objective computed by SFC51.}
    \label{fig:reacher_value_functions}
\end{figure}

\begin{figure}[!tb]
    \centering
    \includegraphics[width=0.24\linewidth]{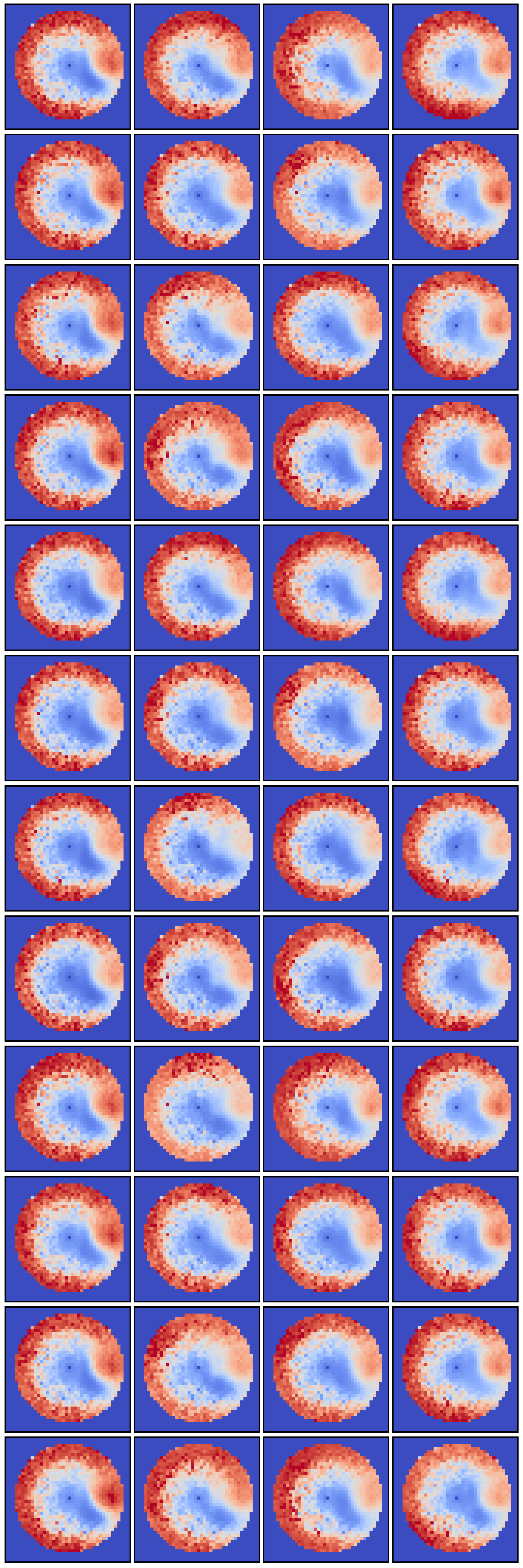}\hspace{0.01\linewidth}%
    \includegraphics[width=0.24\linewidth]{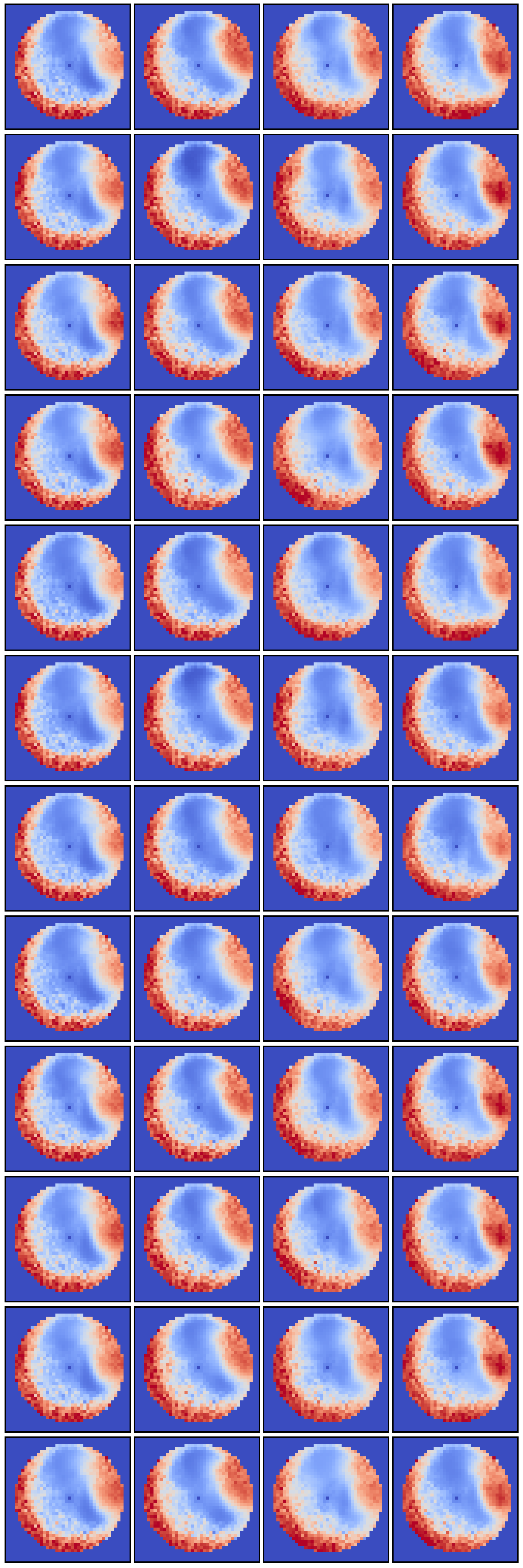}\hspace{0.01\linewidth}%
    \includegraphics[width=0.24\linewidth]{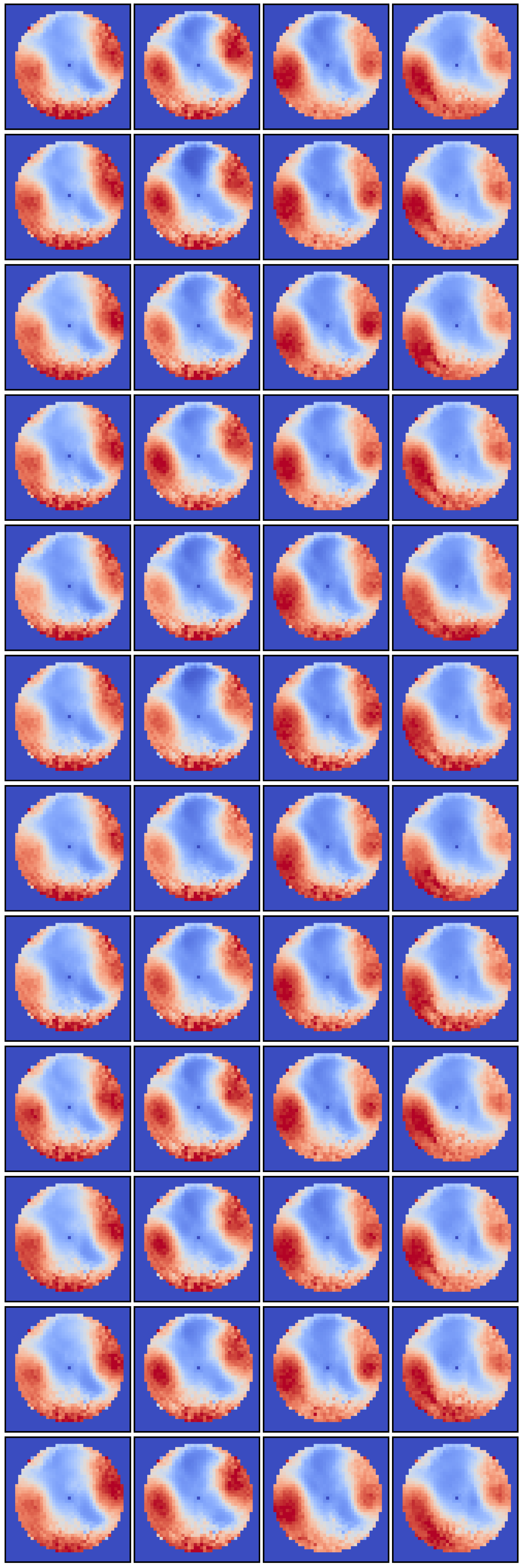}\hspace{0.01\linewidth}%
    \includegraphics[width=0.24\linewidth]{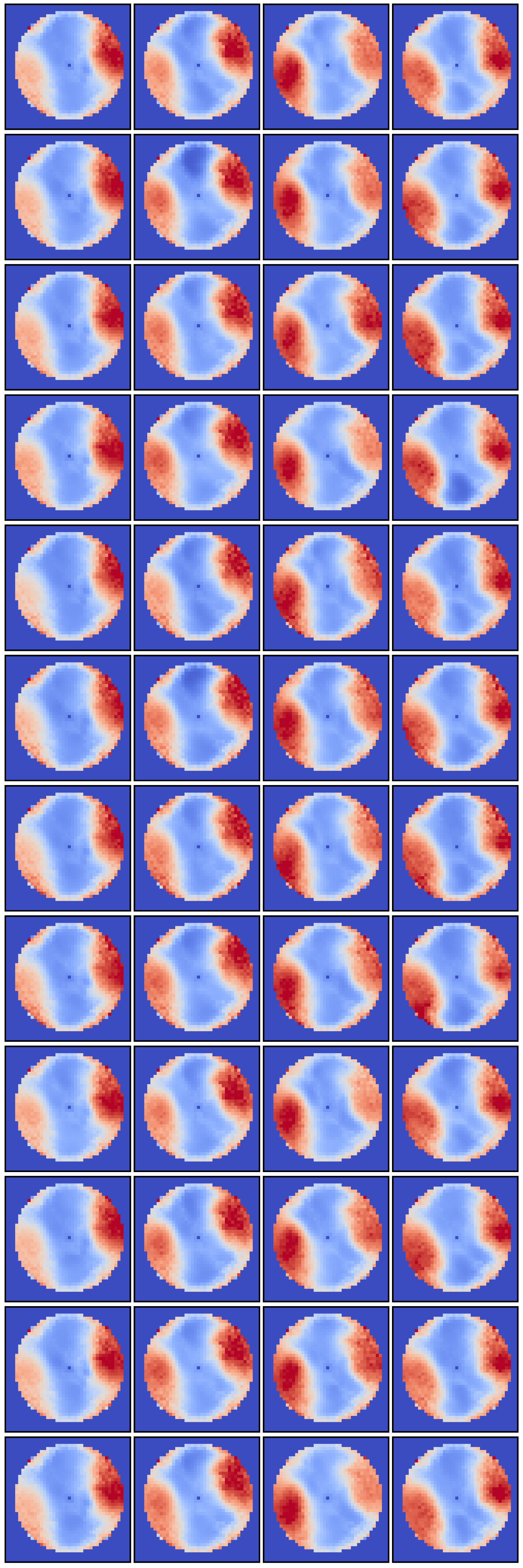}
    
    \includegraphics[width=0.24\linewidth]{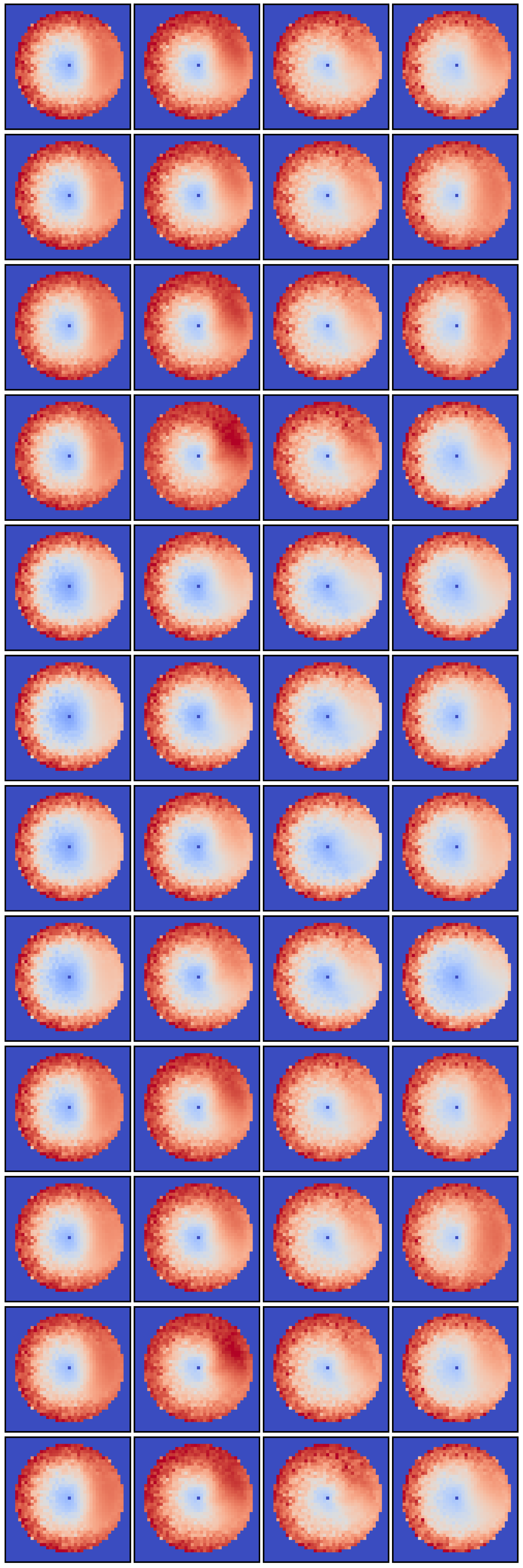}\hspace{0.01\linewidth}%
    \includegraphics[width=0.24\linewidth]{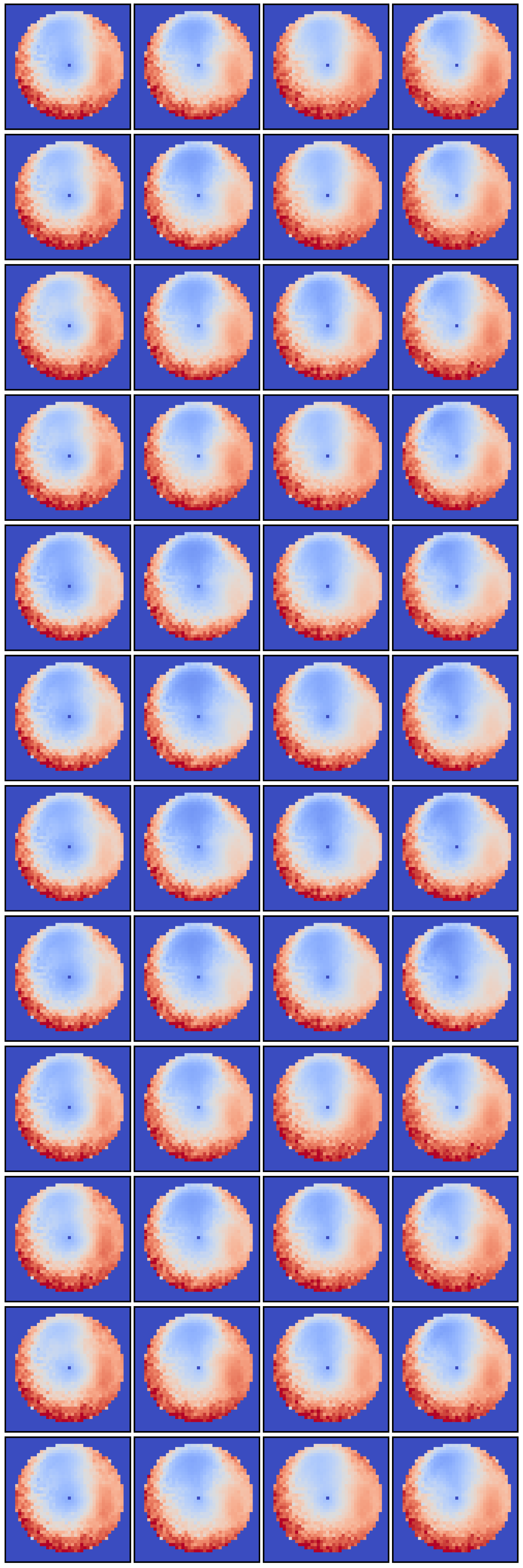}\hspace{0.01\linewidth}%
    \includegraphics[width=0.24\linewidth]{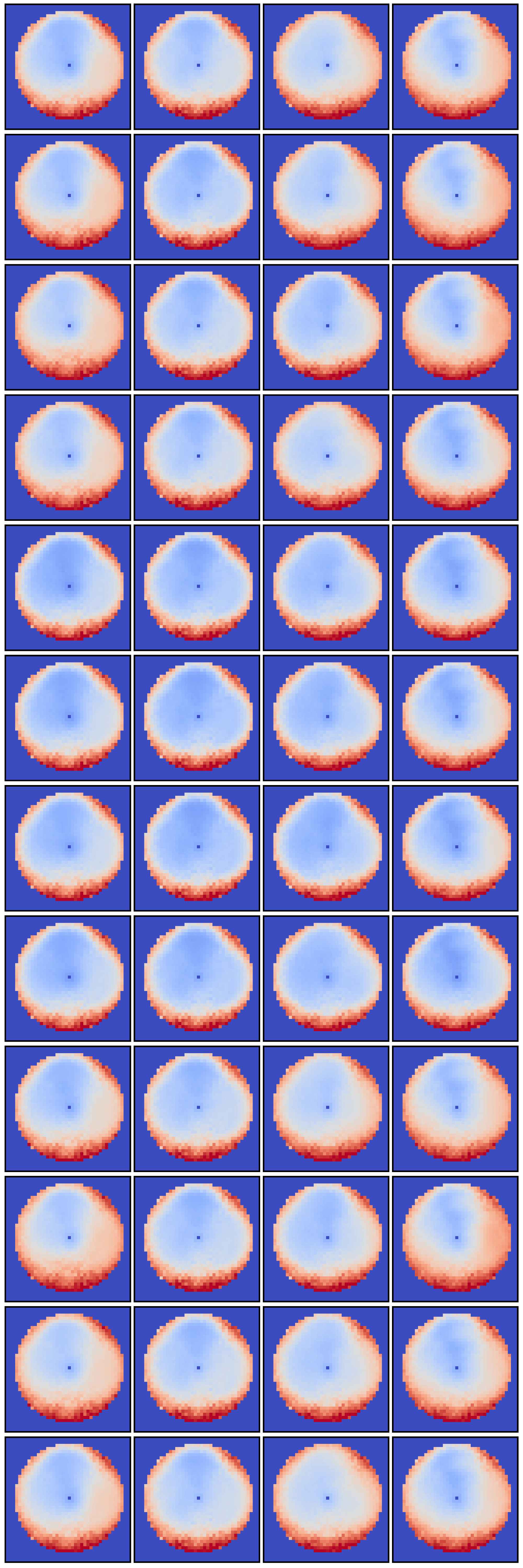}\hspace{0.01\linewidth}%
    \includegraphics[width=0.24\linewidth]{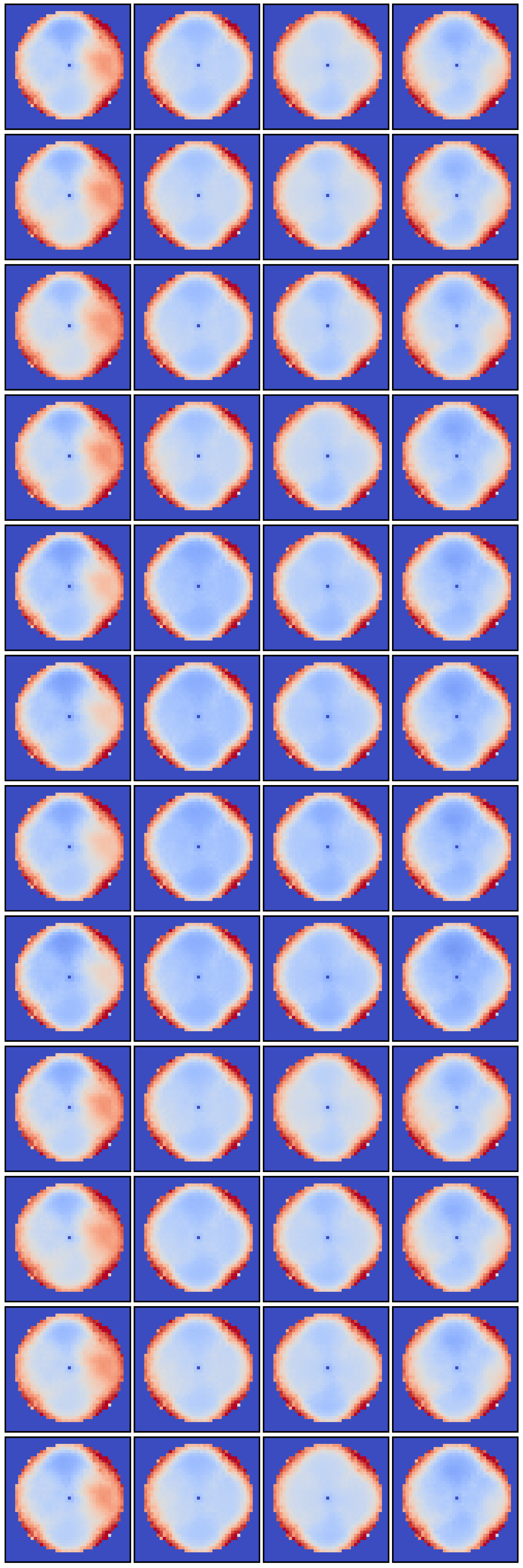}
    \caption{Each plot located in column $i$ and row $j$ illustrates the value of the variance $\tr{\mathbf{w}_j} \Sigma^{\pi_i}(s,a) \mathbf{w}_j$ as a function of the robotic arm tip position in $(x,y)$ coordinates for the reacher domain, after training on 1, 2, 3 and 4 source tasks (respectively, left to right). \textbf{Top:} variance computed by RaSFC51 with $\beta = -3$. \textbf{Bottom:} variance computed by SFDQN using (\ref{eqn:covariance_Bellman}).}
    \label{fig:reacher_variance_plots}
\end{figure}

\small
\bibliographystylesupp{plainnat}
\bibliographysupp{references}
\normalsize
\fi

\end{document}